\documentclass[11pt]{amsart}
\usepackage[letterpaper,left=0.9in,right=0.9in,top=1in,bottom=1in]{geometry}

\usepackage[utf8]{inputenc}
\usepackage[T1]{fontenc}
\usepackage{microtype}
\usepackage{mathtools}
\usepackage{amssymb,amsfonts}
\usepackage{graphicx}
\usepackage{booktabs}
\usepackage{tabularx}
\usepackage{enumitem}
\usepackage{xcolor}
\usepackage{tikz}
\usepackage{nicefrac}
\usepackage{url}
\usepackage{subcaption}
\usepackage{hyperref}

\usetikzlibrary{arrows.meta,positioning,calc,fit,backgrounds,decorations.pathreplacing,matrix}
\numberwithin{equation}{section}
\allowdisplaybreaks

\theoremstyle{plain}
\newtheorem{theorem}{Theorem}[section]
\newtheorem{proposition}[theorem]{Proposition}
\newtheorem{lemma}[theorem]{Lemma}
\newtheorem{corollary}[theorem]{Corollary}

\theoremstyle{definition}
\newtheorem{definition}[theorem]{Definition}
\newtheorem{assumption}[theorem]{Assumption}
\newtheorem{remark}[theorem]{Remark}

\newcommand{\R}{\mathbb{R}}
\newcommand{\E}{\mathbb{E}}
\newcommand{\Pp}{\mathbb{P}}
\newcommand{\cX}{\mathcal{X}}
\newcommand{\cV}{\mathcal{V}}
\newcommand{\cW}{\mathcal{W}}
\newcommand{\cZ}{\mathcal{Z}}
\newcommand{\cH}{\mathcal{H}}
\newcommand{\bN}{\mathbf{N}}
\newcommand{\subop}{\operatorname{sub}}
\newcommand{\templ}{\mu_{\mathrm{tmplt}}}
\newcommand{\loss}{\ell_{\log}}
\newcommand{\norm}[1]{\left\lVert #1\right\rVert}

\newcommand{\diag}{\operatorname{diag}}

 \newcommand{\Sub}{\operatorname{sub}}
\newcommand{\1}{\mathbf 1}

\newcommand{\Var}{\operatorname{Var}}
 
\newcommand{\KL}{\operatorname{KL}}  
\newcommand{\fr}{\mathrm{fr}}
\newcommand{\id}{\mathrm{id}}
\newcommand{\op}{\mathrm{op}}
\newcommand{\lit}{\mathrm{lit}} % literal-token events, not a proof-path name
\newcommand{\corr}{\mathrm{corr}}
\newcommand{\bdens}{\mathrm{bd}}      % block-density / quotient concentration
\newcommand{\degact}{\mathrm{deg}}    % weighted degree-action
\newcommand{\cspec}{\mathrm{cs}}      % curvature-spectral
\newcommand{\hanova}{\mathrm{anova}}  % Hoeffding--ANOVA
\newcommand{\bfdec}{\mathrm{bf}}      % bias--fluctuation / backbone correction
\newcommand{\ew}{\mathrm{ew}}      % edge--wedge envelope

\providecommand{\cA}{\mathcal A}

\providecommand{\cY}{\mathcal Y}

\providecommand{\subop}{\operatorname{sub}}
\providecommand{\templ}{\mu_{\mathrm{tmplt}}}

\providecommand{\Anc}{\operatorname{Anc}}

\newcommand{\eff}{\mathrm{eff}}

\newcommand{\Ccol}{\mathsf C}
\newcommand{\Wcol}{\mathsf W}
\newcommand{\Bad}{\mathsf{Bad}}

\newcommand{\Gcol}{\mathcal G_{\mathrm{coll}}}
\newcommand{\Back}{\mathsf B}
\newcommand{\Res}{\mathsf R}
\newcommand{\supp}{\operatorname{supp}}
\newtheorem{mainthm}{Main Theorem}
\usepackage[nameinlink,noabbrev]{cleveref}

\makeatletter
\def\@setauthors{%
  \begingroup
  \trivlist
  \centering\footnotesize
  \@topsep30\p@\relax
  \advance\@topsep by -\baselineskip
  \item\relax
  \andify\authors
  \uppercasenonmath\authors
  \authors
  \par\medskip
  {\normalfont\small
  Department of Statistics and Data Science, Cornell University \\
  \texttt{wg285@cornell.edu} \quad \texttt{jelena.bradic@cornell.edu}\par}
  \endtrivlist
  \endgroup
}
\makeatother

\tikzset{
  tok/.style={draw,rounded corners=1pt,minimum height=4.6mm,minimum width=7.6mm,inner sep=1pt,font=\scriptsize},
  lit/.style={tok,fill=black!5},
  wild/.style={tok,fill=blue!7},
  seqframe/.style={draw,rounded corners,inner sep=4pt},
  tinylabel/.style={font=\scriptsize},
  pmatrixbox/.style={draw,rounded corners,inner sep=5pt,fill=white}
}

\title[When Symbol Names Should Not Matter]{When Symbol Names Should Not Matter: \\
A Logistic Theory of Fresh-Symbol Classification}

\author{Wenjie Guan}
\address{Department of Statistics and Data Science, Cornell University, Ithaca, NY 14850}
\email{wg285@cornell.edu}

\author{Jelena Bradic}
\address{Department of Statistics and Data Science, Cornell University, Ithaca, NY 14850}
\email{jelena.bradic@cornell.edu}

\subjclass[2020]{68T07, 62J12, 68Q32}
\keywords{fresh-symbol generalization, template tasks, kernel logistic regression, transformer kernels, collision graphs}
\date{}

\begin{document}

% The table of contents is printed at the start of the appendix.
% Hide all main-text section entries from that appendix contents list;
% appendix entries are re-enabled immediately after \appendix below.
\addtocontents{toc}{\protect\setcounter{tocdepth}{-10}}

\maketitle

\begin{abstract}
	Template tasks have emerged as a clean testbed for asking whether transformers
	reason with abstract symbols rather than concrete token names. We study the
	fixed-label classification version of this problem, where train and test
	examples share latent templates but may use disjoint vocabularies. Unlike
	next-token prediction, the model need not emit unseen symbols; it must learn a
	decision rule invariant to symbol renaming. We analyze regularized kernel
	logistic classification in the transformer-kernel regime. Our main result
	decomposes the learned predictor into an ideal template-level classifier and a
	finite-sample perturbation caused by accidental token overlaps in the training
	data. We encode these overlaps by a colored collision graph and prove
	high-probability margin-transfer guarantees for fresh-symbol classification.
	This perspective extends template-based analyses to logistic classification and
	refines scalar diversity conditions: vocabulary size controls the average rate
	of collisions, but collision geometry controls whether the ideal classification
	margin is preserved. Synthetic template experiments illustrate the predicted
	roles of regularization, sample size, and transformer-kernel structure.
	
%We study unseen-symbol generalization: test examples are generated from the same latent templates as the training data, but use symbols that never appear during training. This setting isolates a clean form of systematic out-of-distribution generalization beyond memorizing token identities. We establish classification guarantees for ridge-regularized kernel logistic regression on template tasks. The analysis combines a template-space separation result, a perturbation theorem for kernel logistic regression, and a diversity-based comparison between the empirical kernel and an ideal block model over templates. Under token symmetry, nondegenerate template geometry, and sufficiently diverse substitutions, these ingredients yield high-probability margin and correct-classification guarantees on unseen-symbol test inputs. We further show that the perturbation method extends to multiclass softmax losses and connects to frozen-feature transformer kernels on fixed finite datasets.
\end{abstract}

\section{Introduction}\label{sec:introduction}

The Transformer attention mechanism is a fundamental component that allows large language models (LLMs) to capture context information \cite{vaswani2017attention,bahdanau2014neural}.  Yet a recurring finding is that sequence models can rely on surface patterns and token identity, leading to brittle behavior under small perturbations \cite{mirzadeh2025gsm,jiang2024peek,Dziri2023,HosseiniEtAl2022ICL}.  This concern is continuous with the classical systematicity and compositionality debate \cite{FodorPylyshyn1988,LakeBaroni2018} and with recent taxonomies and surveys of generalization in neural language models \cite{HupkesEtAl2023,McCurdyEtAl2024}.  Template-based tasks provide a controlled way to isolate the issue: examples are generated from latent structures, while concrete symbols are randomized \cite{Boix2024,Keysers2020,KimLinzen2020,RuisEtAl2020}.

In this work, we focus on a particularly clean instance: \textbf{unseen-symbol (fresh-symbol) generalization}, where test examples follow the same latent templates as training but use entirely new symbols \cite{Boix2024}.  This setting differs from broad compositional splits because the quotient rule is shared across train and test; the support shift is only in the nuisance names.  Prior work shows that transformers can succeed in this regime under sufficient data and symmetry assumptions, and also identifies copying or emission failures when the output itself is an unseen symbol \cite{Boix2024,LazicEtAl2026Unseen}.  Existing explanations based on token symmetry or vocabulary diversity \cite{Boix2024,AbbeBengioLotfiRizk2024} do not fully account for the finite-sample effects caused by concrete substitutions.

We consider template-based classification tasks, where each example is generated by substituting symbols into latent templates. For instance, in a binary task with templates $\alpha\alpha \mapsto +1$ and $\alpha\beta \mapsto -1$, the string CC should receive the same label as AA (+1), even if the symbols C and D never appear during training. This setting follows the fresh-symbol framework of Boix-Adserà et al. \cite{Boix2024}, where generalization requires learning rules invariant to symbol identity. Prior work shows that transformers can succeed in this regime under sufficient data, but also identifies failure modes when models rely on memorizing token patterns rather than abstract structure.

In this work, we study this phenomenon in the classification setting under the transformer-kernel regime. Unlike next-token prediction, the model does not need to generate unseen symbols, but must instead learn a decision function whose output is invariant under symbol renaming. This allows us to isolate the role of finite-sample effects in determining when generalization succeeds.

The examples used throughout the paper should be read in this way.  Program variables, equality-pattern strings, symbolic copying tasks, and literal scaffolds all define a quotient template problem in which the concrete names are nuisance variables.  The statistical question is then whether finite training substitutions introduce enough accidental name overlap to move the empirical kernel away from the quotient problem.  This is the role of the collision graph developed below.

\subsection{Our contributions}

For a token-renaming-invariant kernel \(K\), fresh representatives of the same pair of templates
 induces a finite template matrix \(\bN\).  The ridge-regularized population minimizer is determined only by the pairwise kernel values between fresh template instantiations, the template frequencies, and the labels; under template-level separability and sufficiently small regularization, it achieves positive signed margin.

Our main theorem is the classification analogue of the unseen-symbol guarantee of
Boix-Adser\`a et al.~\cite{Boix2024}, but with logistic loss and a margin conclusion rather
than squared-loss regression.

\noindent
\textbf{Informal theorem.}
\emph{Suppose the ideal quotient-space logistic classifier has signed margin
\(\Gamma=\varepsilon+\tau\) on template \(a\).  If the collision graph is sufficiently benign so that
\(\Delta_{\rm cg}(x,\delta)\le \tau\), then}
$
    \Pp\bigl(y_a  \widehat f_\lambda(x)>\varepsilon\bigr)
    \ge
    1-r\exp(-n p_{\min}/8)-\delta .
$
\emph{In particular, for \(\varepsilon=0\), the empirical classifier correctly labels the fresh
string \(x\in\cX_a\) with the same probability lower bound.}

The formal theorem replaces \(\Delta_{\rm cg}\) by an explicit high-probability bound, whose
terms are obtained from collision-graph control.  Thus the theorem separates the semantic
condition, i.e. positive margin on the quotient template problem, from the statistical condition, finite
substitutions must not create a large collision perturbation.

\begin{figure}[h]
    \centering
    \includegraphics[scale=0.3]{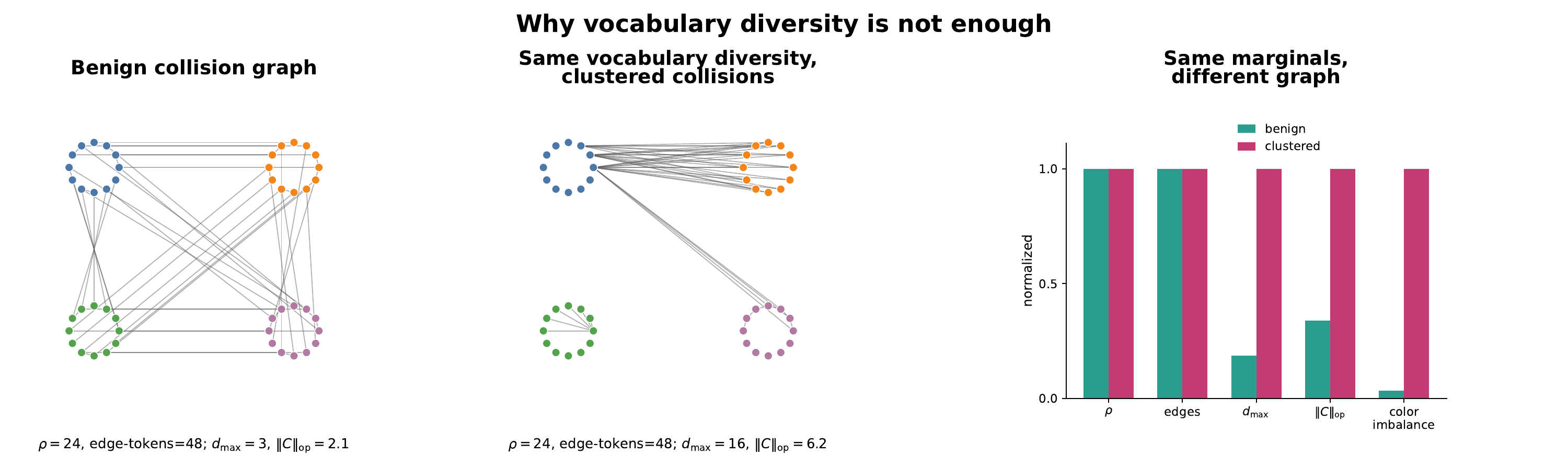}
    \caption{Pairwise token reuse preserves $\rho$ and edge-token counts, but collision structure differs in maximum degree, $d_{\max}$, color imbalance, and centered spectral norm, $\lVert C\rVert_{\mathrm{op}}$.}
    \label{fig:collision-vocab}
\end{figure}

We also show that vocabulary diversity \(\rho\), as defined in \cite{Boix2024}, is only a marginal proxy: it controls the largest marginal probability of any token occupying a wildcard slot, but not how collisions are distributed. Figure~\ref{fig:collision-vocab} shows that two schemes can have identical token frequencies and edge-token collision counts, yet induce graphs with very different degree structure, color-pair imbalance, spectral norm, and generalization behavior.

\subsection{Related work}

\textbf{Systematic and fresh-symbol generalization.}
The paper is closest to work on systematic generalization as invariance to nuisance symbol identity.  This line connects classical arguments about systematicity and compositionality with modern benchmarks for semantic parsing, grounded language, and equality-pattern reasoning \cite{FodorPylyshyn1988,LakeBaroni2018,BahdanauEtAl2019,Keysers2020,KimLinzen2020,RuisEtAl2020,HupkesEtAl2023,McCurdyEtAl2024}.  Boix-Adser\`a et al. formalize the fresh-symbol template setting for transformers; our result keeps the same quotient-template viewpoint, but studies fixed-label logistic classification and finite-sample collision geometry \cite{Boix2024,AbbeBengioLotfiRizk2024}.  This distinction is important because models may infer a rule yet still fail when they must copy or emit unseen variables \cite{LazicEtAl2026Unseen}.

\textbf{Mechanisms, invariance, and symbol binding.}
Several recent works design or analyze representations that are insensitive to token names, including renaming-invariant embeddings and open-vocabulary symbol-invariant architectures \cite{IsikEtAl2025Interchangeable,IsikLi2026SymbolInvariant}.  More broadly, our token-symmetry assumption is a kernel-level analogue of invariance and equivariance ideas in neural architectures and invariant kernels \cite{GensDomingos2014,CohenWelling2016,ZaheerEtAl2017,HaasdonkBurkhardt2007,MrouehVoineaPoggio2015}.  Mechanistic studies of induction heads, variable binding, and symbolic abstraction identify model components that can implement related equality and role-binding operations \cite{OlssonEtAl2022,WuGeigerMilliere2025,YangEtAl2025SymbolicMechanisms,AlSaeediHarma2025AbstractionHeads,SmolenskyEtAl2025SymbolProcessing}.  Work on in-context abstraction and in-context algebra asks how such symbolic structure can be induced at test time \cite{AnandEtAl2025DualProcess,ChanEtAl2025IWI,ToddEtAl2025InContextAlgebra,ParkEtAl2025ICLRRepresentations}.

\textbf{Kernel and logistic learning theory.}
The kernel side of the paper is related to string and convolution kernels for structured inputs \cite{Haussler1999,LodhiEtAl2002}, random-feature and neural tangent kernel limits \cite{RudiRosasco2017RandomFeatures,JacotGabrielHongler2018,AroraEtAl2019CNTK,ChizatBach2019}, and classical KRR generalization theory \cite{SchoelkopfHerbrichSmola2001,CaponnettoDeVito2007,LiangRakhlin2020Ridgeless,ZhangLiLuLin2025KRR,PanditWangZhu2025KernelRegression,ZhangShiZhangLiangZou2025NonIIDKRR}.  Standard KLR theory usually targets surrogate-risk consistency, self-concordant analysis, or primal RKHS rates \cite{Zhang2004ConvexRisk,BartlettJordanMcAuliffe2006,SteinwartChristmann2008,Bach2010LogisticSelfConcordant,WenBetkenHang2025KLR}.  Here the objective is different: we use the logistic entropy dual to control how a structured Gram-matrix perturbation changes a fresh-symbol margin.

\textbf{Graph viewpoint.}
The collision graph is inspired by the fact that density alone does not determine spectral or degree behavior, a theme familiar from quasi-random graphs and matrix concentration \cite{ChungGrahamWilson1989,BoucheronLugosiMassart2013,Tropp2015}.  Unlike a standard random graph, however, our graph is colored by latent templates and carries signed kernel weights.  The certificates therefore track not only how many collisions occur, but also which template pairs they connect and how they act on the logistic dual weights.

\section{Problem setup and idealized analysis}\label{sec:setup}

Let $\cV$ be a token vocabulary and $\cW$ a finite wildcard alphabet. A template family is a finite collection $\cZ=\{z_1,\dots,z_r\}\subset (\cV\cup\cW)^k$, where the templates are assumed to be pairwise disjoint. Each template $z_a$ defines a class $\cX_a \subset \cV^k$ obtained by injectively substituting tokens into wildcard positions while respecting literals. Thus every admissible string belongs to exactly one class $\cX_a$. %The full formal definition of admissible substitutions, matching, pairwise disjointness, and freshness is given in the supplementary appendix. For the main text it suffices to note that every admissible string belongs to exactly one class $\cX_a$.

\begin{remark}[Template examples and collision channels]
\label{rem:template-examples-main}
The formalism covers several standard symbolic-generalization examples.  A real-label program template may be written as
\[
    z_+(\alpha,\beta)
    =
    [\alpha=1;\ \beta=-1;\ \mathtt{print}(\alpha)],
    \qquad
    z_-(\alpha,\beta)
    =
    [\alpha=1;\ \beta=-1;\ \mathtt{print}(\beta)].
\]
The concrete variable names are nuisance symbols; the label is determined only by which assigned variable is queried.  With uniform injective substitutions from an alphabet of size \(m\), both templates have two active wildcards, so \(\rho=m/2\), while the wildcard-set collision probability is
\[
    q^{\rm wild}_{++}=q^{\rm wild}_{+-}=q^{\rm wild}_{--}
    =1-{\binom{m-2}{2}\over \binom{m}{2}}
    ={4m-6\over m(m-1)}
    \simeq {2\over \rho}.
\]
Symbolic-label or copying tasks add a second channel.  In
\[
    z_\alpha=[\alpha=``\gamma'';\ \beta=``\delta'';\ \mathtt{print}(\alpha)]\to\gamma,
    \qquad
    z_\beta=[\alpha=``\gamma'';\ \beta=``\delta'';\ \mathtt{print}(\beta)]\to\delta,
\]
variable-name collisions perturb the selector, while value-token collisions perturb the copied output.  Thus a signed discrepancy may decompose as
\[
    \Wcol_{ij}=\eta_{\rm var}\Wcol^{\rm var}_{ij}+\eta_{\rm val}\Wcol^{\rm val}_{ij}.
\]
The same template notation also covers equality-pattern tasks such as
\[
    z_{\rm same}(\alpha)=\alpha\alpha,
    \qquad
    z_{\rm diff}(\alpha,\beta)=\alpha\beta .
\]
Here \(w_{\rm same}=1\), \(w_{\rm diff}=2\), and \(\rho=m/2\), but the full color-pair collision matrix is
\[
    Q^{\rm wild}
    =
    \begin{pmatrix}
        1/m & 2/m \\
        2/m & (4m-6)/(m(m-1))
    \end{pmatrix}.
\]
A single diversity number hides these color-pair differences; the collision graph keeps them.
\end{remark}

We first develop the theory for binary classification.  Section~\ref{sec:multiclass-softmax-extension} gives the corresponding multiclass softmax formulation and classification guarantee without repeating the collision-graph estimates.  For the binary setup, we assign labels $y_1,\dots,y_r\in\{-1,+1\}$. A training observation is generated by first drawing a latent template $Z\sim \templ$ with masses $p_a>0$, then drawing an admissible substitution map $S$ conditional on $Z$, and finally setting $
X:=\subop(Z,S)$ and $Y:=y_Z$. The sample $(X_i,Y_i)_{i=1}^n$ is i.i.d. from this distribution.

Let $K:\cX\times\cX\to\R$ be a positive-semidefinite kernel with RKHS $\cH$. We assume \emph{token symmetry}: for every permutation $\pi:\cV\to\cV$ and every $x,x'\in\cX$,
\[
K(x,x')=K(\pi(x),\pi(x')).
\]
Under token symmetry, the quantity
\[
N_{ab}:=K\bigl(\subop(z_a,s),\subop(z_b,s')\bigr)
\]
is well defined whenever $(s,s')$ is a fresh admissible pair for $(z_a,z_b)$. This gives the template matrix $\bN=(N_{ab})_{a,b=1}^r$.  Thus token symmetry is used here as a quotient device: it removes the nuisance action of renaming concrete vocabulary items, in the same spirit as invariant kernels and equivariant architectures, but without requiring a special finite-width architecture \cite{HaasdonkBurkhardt2007,MrouehVoineaPoggio2015,GensDomingos2014,CohenWelling2016}. %Intuitively, token symmetry implies that the kernel depends only on the underlying template structure rather than on the concrete symbol names themselves. In the ideal fresh-symbol setting, two independently generated realizations of the same template pair therefore induce the same kernel value. This allows the learning problem to collapse from the sample level to a finite-dimensional template-level problem characterized by the matrix $\bN$.

\begin{assumption}[Standing assumptions]
	\label{ass:standing}
	Throughout the main results we assume:
	the template family is pairwise disjoint;  $p_a>0$ for every template $a$; the kernel $K$ is token-symmetric;  the template matrix $\bN$ is symmetric positive definite; the diagonal is bounded, in the sense that
	$
	\|K\|_\infty:=\sup_{x\in\cX}K(x,x)<\infty.
	$
\end{assumption}

The estimator is ridge-regularized kernel logistic regression (KLR):
\[
 \widehat f_\lambda
:=
\arg\min_{f\in\cH}
\left\{
\frac1n\sum_{i=1}^n \loss(Y_i f(X_i))
+
\frac{\lambda}{2}\norm{f}^2_{\cH}
\right\},
\qquad
\loss(u):=\log(1+e^{-u}),
\]
where $\lambda>0$.

The same kernel formulation is obtained from the frozen-feature transformer used in the experiments.  At finite width, if \(\Psi_\theta(x)\) is the frozen feature vector before the trainable unembedding, then fitting the unembedding with the same logistic loss and ridge penalty is linear logistic regression in \(\Psi_\theta\), or equivalently KLR with the empirical transformer kernel
\(
  \hat K_{\mathrm{trans}}(x,x')=\langle \Psi_\theta(x),\Psi_\theta(x')\rangle .
\)
Appendix~\ref{app:transformer-kernel-interface} proves the finite-sample random-feature limit \(\hat K_{\mathrm{trans}}\to K_{\mathrm{trans}}\) in probability on every fixed dataset.  This places the model in the same lazy-training/random-feature tradition as NTK and CNTK analyses, while keeping only the final linear head trainable \cite{JacotGabrielHongler2018,AroraEtAl2019CNTK,RudiRosasco2017RandomFeatures,ChizatBach2019}.  Hence the infinite-width frozen-transformer model corresponds to the special choice \(K=K_{\mathrm{trans}}\); the results below are stated for a general token-symmetric \(K\) to separate the collision analysis from this random-feature limit.

Under token symmetry, all fresh realizations of the same template are equivalent from the perspective of the kernel. Consequently, the ideal classification problem can be expressed entirely in terms of template identities rather than concrete strings. By the representer theorem, the corresponding RKHS minimizer lies in the span of the template-level kernel evaluations, and its prediction on every fresh string $x \in \cX_a^{\fr}$ reduces to a scalar score $g_a$. The signed quantity $y_a g_a$ therefore plays the role of a template-level classification margin.

Here we formulate this idea rigorously. The ideal finite-dimensional problem replaces concrete strings by template identities. For \(q\in\Delta_r\), define
\[
\Phi_{q,\lambda}(g)
:=
\sum_{a=1}^r q_a\,\loss(y_a g_a)
+
\frac\lambda2 g^\top\bN^{-1}g,
\qquad g\in\R^r.
\]
The objective $\Phi_{q,\lambda}$ is the quotient-space analogue of kernel logistic regression. The first term measures logistic classification error at the template level, weighted by template frequencies, while the quadratic penalty is induced by the RKHS norm through the template kernel matrix $\bN$. 
Let \(g_\lambda^{\id}\) be the minimizer of \(\Phi_{\widehat p,\lambda}\), where $\hat{p} \in \R^r$ indicates the empirical mass of each template. For a globally fresh test point \(x\in\cX_a^{\fr}\), the ideal scorer is
$
S_\lambda^{\id}(x)=(g_\lambda^{\id})_a.
%\  x\in\cX_a^{\fr}.
$
The small-ridge argument is entirely finite-dimensional: the logistic objective can drive the signed template scores past any fixed threshold before the RKHS penalty becomes dominant.  Let
\[
  R_*^2
  :=
  \min\{g^\top\bN^{-1}g:\ y_a g_a\ge1\text{ for all }a\}.
\]

\begin{proposition} 
\label{prop:template_margin}
Fix \(\underline p>0\) and \(\Gamma\ge0\).  There is
\(\lambda_\Gamma(\underline p)>0\) such that, whenever
\(q_a\ge\underline p\) for all \(a\) and
\(0<\lambda<\lambda_\Gamma(\underline p)\), the minimizer
\(g_{q,\lambda}\) of \(\Phi_{q,\lambda}\) satisfies
\[
  \min_a y_a(g_{q,\lambda})_a>\Gamma .
\]
 \end{proposition}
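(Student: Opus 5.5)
We compare the minimizer against an explicit competitor with large margin and then use strong convexity of the penalty to pin the minimizer near that competitor. Since \(\bN\) is positive definite, the set \(\{g: y_a g_a\ge 1\ \forall a\}\) is a nonempty closed convex set, so \(R_*^2<\infty\) is attained at some \(g^*\). Fix a scale \(M>0\), to be chosen, and take the competitor \(g^M:=M g^*\). It satisfies \(y_a g^M_a\ge M\) for every \(a\) and \((g^M)^\top\bN^{-1}g^M=M^2R_*^2\). Since \(\loss\) is decreasing and \(\sum_a q_a=1\), we get
\[
  \Phi_{q,\lambda}(g_{q,\lambda})\le \Phi_{q,\lambda}(g^M)\le \loss(M)+\tfrac{\lambda}{2}M^2R_*^2 .
\]
This bound holds for every \(q\in\Delta_r\).

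Now suppose the conclusion fails, so that \(y_a(g_{q,\lambda})_a\le\Gamma\) for some template \(a\). Because \(\loss\) is decreasing, the loss term alone gives
\[
  \Phi_{q,\lambda}(g_{q,\lambda})\ge q_a\loss(\Gamma)\ge \underline p\,\loss(\Gamma),
\]
using only nonnegativity of the other summands and of the penalty. It therefore suffices to have
\[
  \loss(M)+\tfrac{\lambda}{2}M^2R_*^2<\underline p\,\loss(\Gamma).
\]
To arrange this, first choose \(M\) so large that \(\loss(M)<\tfrac12\underline p\,\loss(\Gamma)\). This is possible because \(\loss(M)\to0\) as \(M\to\infty\), while \(\loss(\Gamma)>0\). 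Then set
\[
  \lambda_\Gamma(\underline p):=\frac{\underline p\,\loss(\Gamma)}{M^2R_*^2}.
\]
For \(0<\lambda<\lambda_\Gamma(\underline p)\), the penalty term is then strictly below \(\tfrac12\underline p\,\loss(\Gamma)\), which gives the strict inequality and hence a contradiction. This yields \(\min_a y_a(g_{q,\lambda})_a>\Gamma\).

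It remains to confirm that \(g_{q,\lambda}\) exists and is unique. The function \(\Phi_{q,\lambda}\) is continuous and strictly convex, because \(\bN^{-1}\succ0\). It is also coercive, because the penalty grows quadratically while the loss is nonnegative. Hence a unique minimizer exists.

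I expect no step to be a real obstacle; the proof is essentially finite-dimensional. The only point needing care is uniformity. The threshold \(\lambda_\Gamma(\underline p)\) must depend only on \(\underline p\), \(\Gamma\), \(\bN\) and the labels, and not on \(q\). This is ensured because the upper bound from the competitor uses only \(\sum_a q_a=1\), and the lower bound uses only \(q_a\ge\underline p\). The case \(R_*=0\) cannot occur, since every feasible \(g\) is nonzero.
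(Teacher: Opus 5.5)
Your proof is correct and is essentially the paper's argument: compare the minimizer with the scaled minimum-norm feasible point $M g^*$, bound $\Phi_{q,\lambda}(Mg^*)\le \loss(M)+\tfrac{\lambda}{2}M^2R_*^2$ using only $\sum_a q_a=1$, and lower-bound any point with margin at most $\Gamma$ by $\underline p\,\loss(\Gamma)$. Your threshold (choosing $\loss(M)<\tfrac12\underline p\,\loss(\Gamma)$ and $\lambda_\Gamma=\underline p\,\loss(\Gamma)/(M^2R_*^2)$) is only a cosmetic variant of the paper's choice. The opening remark about strong convexity pinning the minimizer near the competitor is never used and can be dropped.
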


The following theorem turns this deterministic separation statement into the ideal margin used by the collision-graph transfer theorem.   

\begin{theorem} 
\label{thm:ideal-margin}
\label{thm:idealmargin}
For every \(\Gamma\ge0\), there exists
\(\lambda_\Gamma^{\id}>0\), depending only on
\(\Gamma,p_{\min},\bN\), and \(y\), such that on \(E_{\rm rep}\),
every \(0<\lambda<\lambda_\Gamma^{\id}\) satisfies
\[
  y_a S_\lambda^{\id}(x)>\Gamma
  \qquad
  \text{for all }a\in[r]\text{ and all }x\in\cX_a^{\fr}.
\]
Moreover, for $  E_{\rm rep}:=\{\widehat p_b\ge p_b/2 \text{ for all }b\}.
$ we have 
$
  \Pp(E_{\rm rep})\ge 1-r e^{-np_{\min}/8}.
$
\end{theorem}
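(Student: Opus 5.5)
The proof has two independent parts: a deterministic margin statement obtained from Proposition~\ref{prop:template_margin}, and a concentration bound for the event \(E_{\rm rep}\).

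\textbf{Deterministic part.} Set \(\underline p:=p_{\min}/2\) and \(\lambda_\Gamma^{\id}:=\lambda_\Gamma(p_{\min}/2)\), where \(\lambda_\Gamma(\cdot)\) is the threshold from Proposition~\ref{prop:template_margin}. That threshold depends only on \(\underline p,\Gamma,\bN,y\), so \(\lambda_\Gamma^{\id}\) depends only on \(\Gamma,p_{\min},\bN,y\), as the theorem requires.

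On \(E_{\rm rep}\) we have \(\widehat p_b\ge p_b/2\ge p_{\min}/2\) for every \(b\), and \(\widehat p\in\Delta_r\). Hence Proposition~\ref{prop:template_margin} applies with \(q=\widehat p\). For \(0<\lambda<\lambda_\Gamma^{\id}\) it gives \(\min_a y_a(g^{\id}_\lambda)_a>\Gamma\).

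The definition \(S_\lambda^{\id}(x)=(g_\lambda^{\id})_a\) holds for every \(x\in\cX_a^{\fr}\). This is well defined because token symmetry makes all fresh representatives equivalent, and the minimizer is unique because \(\Phi_{\widehat p,\lambda}\) is strictly convex, \(\bN^{-1}\) being positive definite. Therefore \(y_aS_\lambda^{\id}(x)>\Gamma\) for all \(a\) and all \(x\in\cX_a^{\fr}\).

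One point needs care: the argument must not use a threshold that depends on \(\widehat p\) itself. The uniform-in-\(q\) form of Proposition~\ref{prop:template_margin} (any \(q\) with \(q_a\ge\underline p\)) is exactly what avoids this.

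\textbf{Probabilistic part.} For each \(b\), \(n\widehat p_b\sim\mathrm{Bin}(n,p_b)\). The multiplicative Chernoff lower-tail bound gives
\[
\Pp\bigl(n\widehat p_b<(1-\delta)np_b\bigr)\le \exp(-\delta^2np_b/2).
\]
Taking \(\delta=1/2\) yields \(\exp(-np_b/8)\le\exp(-np_{\min}/8)\). A union bound over the \(r\) templates then gives \(\Pp(E_{\rm rep}^c)\le re^{-np_{\min}/8}\).

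\textbf{Where the difficulty lies.} Given Proposition~\ref{prop:template_margin}, there is no real obstacle here; the only delicate step is the uniform-in-\(q\) threshold noted above. If the proposition were not available, proving it would be the main work. The plan for that would be as follows.
\begin{itemize}
\item Compare with the feasible point \(g=tg^*\), where \(g^*\) attains \(R_*^2\). Its objective value is at most \(\log(1+e^{-t})+\lambda t^2R_*^2/2\).
\item Any \(g\) with \(y_ag_a\le\Gamma\) for some \(a\) has objective at least \(\underline p\log(1+e^{-\Gamma})\).
\item Choose \(t\) so that \(\log(1+e^{-t})<\tfrac12\underline p\log(1+e^{-\Gamma})\), then take \(\lambda\) small enough that \(\lambda t^2R_*^2/2\) is below the remaining half.
\end{itemize}
This shows the minimizer cannot have any signed template score at or below \(\Gamma\).
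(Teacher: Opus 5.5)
Your proposal is correct and matches the paper's proof: on $E_{\rm rep}$ you apply Proposition~\ref{prop:template_margin} with $q=\widehat p$ and $\underline p=p_{\min}/2$, so that $\lambda_\Gamma^{\id}=\lambda_\Gamma(p_{\min}/2)$, and you bound $\Pp(E_{\rm rep}^c)$ using the multiplicative Chernoff lower tail with parameter $1/2$ and a union bound over the $r$ templates. Your fallback sketch of the proposition, which compares $tg_*$ against the floor $\underline p\,\loss(\Gamma)$, is also the paper's argument; the only difference is that you split the slack into halves.
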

 
To analyze the margin of $\hat{f}_{\lambda}$, we want to show that the template scorer behaves as an idealized version of empirical predictor, and derive an upper bound on their difference.  
 Let
\[
    \widehat K_n=(K(X_i,X_j))_{i,j=1}^n,
    \qquad
    k_x=(K(X_1,x),\ldots,K(X_n,x))^\top .
\]
where $x\in\cX$ is a test point. Define the binary entropy potential
$
\varphi(c):=c\log c+(1-c)\log(1-c),\  c\in[0,1].
$ Unlike square loss, logistic loss does not yield a linear system for the kernel
coefficients, so the standard kernel-ridge perturbation argument cannot be used
directly.  This is the point at which our analysis departs from least-squares RKHS theory and from risk-consistency treatments of logistic classification \cite{CaponnettoDeVito2007,Zhang2004ConvexRisk,BartlettJordanMcAuliffe2006,SteinwartChristmann2008}.  Our key analytic device is to work with the entropy dual of kernel
logistic regression, which is closely related to the convex-analytic and self-concordant viewpoint on logistic loss \cite{Bach2010LogisticSelfConcordant,WenBetkenHang2025KLR}.
The following dual statement is the analytic interface between kernel logistic regression and the collision graph.  It says that the trained scorer is controlled by a bounded entropy-dual vector, rather than by an unconstrained linear-system solution.

\begin{theorem} 
\label{thm:dual}
Let
\[
  D_{\widehat K_n}(c)
  :=
  {1\over n}\sum_i\varphi(c_i)
  +
  {1\over 2\lambda n^2}
  (\mathbf y\odot c)^\top
  \widehat K_n
  (\mathbf y\odot c),
  \qquad c\in[0,1]^n .
\]
Then \(D_{\widehat K_n}\) has a unique minimizer
\(c_{\widehat K_n}\in(0,1)^n\), and
\[
  \widehat f_\lambda(\cdot)
  =
  {1\over\lambda n}
  \sum_{i=1}^n c_{\widehat K_n,i}Y_iK(X_i,\cdot)
\qquad \mbox{and} \qquad 
  \widehat f_\lambda(x)
  =
  {1\over\lambda n}
  k_x^\top(\mathbf y\odot c_{\widehat K_n}).
\]
\end{theorem}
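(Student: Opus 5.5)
We reduce the problem to a finite-dimensional one, dualize the logistic loss through its Fenchel conjugate, and read off the representer formula from the saddle-point conditions.

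\emph{Finite-dimensional reduction.} By the representer theorem, the primal objective has a minimizer of the form $f=\sum_j \beta_j K(X_j,\cdot)$. Existence and uniqueness of $\widehat f_\lambda$ in $\cH$ hold because the objective is $\lambda$-strongly convex and continuous on $\cH$. Evaluations at the data enter only through the Gram matrix: $f(X_i)=(\widehat K_n\beta)_i$.

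\emph{Conjugate of the loss.} The logistic loss admits the variational representation
\[
\loss(u)=\max_{c\in[0,1]}\{-cu-\varphi(c)\},
\]
with the maximum attained at $c=\sigma(-u)=1/(1+e^{u})\in(0,1)$. This is verified by differentiating, since $\varphi'(c)=\log\frac{c}{1-c}$. It is exactly the Fenchel conjugate statement that $\varphi$, extended by $+\infty$ outside $[0,1]$, is the conjugate of $v\mapsto\log(1+e^{v})$, evaluated at $v=-u$.

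\emph{Saddle problem and minimax exchange.} Substituting into the primal gives
\[
\min_{f\in\cH}\max_{c\in[0,1]^n}\;\frac1n\sum_i\bigl(-c_iY_if(X_i)-\varphi(c_i)\bigr)+\frac\lambda2\|f\|_\cH^2 .
\]
The objective is convex in $f$ and concave in $c$, and $[0,1]^n$ is compact. Sion's minimax theorem therefore lets us exchange min and max. For fixed $c$, the inner minimization over $f$ is an unconstrained quadratic in $\cH$. Using the reproducing property $f(X_i)=\langle f,K(X_i,\cdot)\rangle$, it is minimized at
\[
f_c=\frac{1}{\lambda n}\sum_i c_iY_iK(X_i,\cdot).
\]
Its value is $-\frac1n\sum_i\varphi(c_i)-\frac{1}{2\lambda n^2}(\mathbf y\odot c)^\top\widehat K_n(\mathbf y\odot c)$. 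Hence the outer maximization over $c$ is exactly the minimization of $D_{\widehat K_n}$.

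\emph{Uniqueness and interiority of the dual minimizer.} $D_{\widehat K_n}$ is continuous on the compact set $[0,1]^n$, so a minimizer exists. It is strictly convex: $\varphi$ is strictly convex with $\varphi''=1/(c(1-c))>0$, and the quadratic term is convex because $\widehat K_n\succeq0$. So the minimizer is unique. It lies in the interior because $\varphi'(c)\to\mp\infty$ as $c\to0^+$ or $c\to1^-$, while the gradient of the quadratic term is bounded on the cube. Hence moving any boundary coordinate slightly inward strictly decreases $D$, so no minimizer sits on the boundary.

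\emph{Identifying the primal solution (main obstacle).} The main obstacle is to justify that the primal optimum equals $f_{c^*}$ at the dual optimum $c^*=c_{\widehat K_n}$, rather than relying only on equality of optimal values. I would argue this directly via first-order conditions, which avoids minimax subtleties.
\begin{itemize}
\item At the interior minimizer, $\nabla D=0$ reads
\[
\varphi'(c_i^*)+\frac{1}{\lambda n}Y_i\bigl(\widehat K_n(\mathbf y\odot c^*)\bigr)_i=0,
\qquad\text{i.e.}\qquad \log\frac{c_i^*}{1-c_i^*}=-Y_if_{c^*}(X_i).
\]
\item This gives $c_i^*=\sigma(-Y_if_{c^*}(X_i))=-\loss'(Y_if_{c^*}(X_i))$.
\item The primal optimality condition in $\cH$ is $\lambda f=\frac1n\sum_i(-\loss'(Y_if(X_i)))Y_iK(X_i,\cdot)$, and $f_{c^*}$ satisfies it by construction.
\item By strong convexity, this stationary point is the unique primal minimizer, so $\widehat f_\lambda=f_{c^*}$.
\end{itemize}

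\emph{Evaluation formula.} Evaluating $\widehat f_\lambda=f_{c^*}$ at $x$ with the reproducing property gives
\[
\widehat f_\lambda(x)=\frac{1}{\lambda n}\,k_x^\top(\mathbf y\odot c_{\widehat K_n}).
\]
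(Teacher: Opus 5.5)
Your proposal is correct and follows essentially the paper's route: the same conjugate representation of $\loss$ produces $D_{\widehat K_n}$ (you via Sion's minimax exchange, the paper via Fenchel--Rockafellar duality), and existence, uniqueness and interiority are argued the same way, from compactness, strict convexity of $\varphi$, and the blow-up of $\varphi'$ at the endpoints. The one real difference is how you identify $\widehat f_\lambda=f_{c^*}$: the paper reads it off the Fenchel optimality condition $\lambda\widehat f_\lambda+A^*s=0$, whereas you check directly that dual stationarity gives $c_i^*=-\loss'(Y_if_{c^*}(X_i))$, so $f_{c^*}$ satisfies the primal first-order condition and is the unique minimizer by $\lambda$-strong convexity; this makes your argument self-contained and renders the minimax step logically unnecessary.
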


To compare the empirical predictor with the ideal template scorer, we introduce an idealized sample-level kernel matrix obtained by replacing empirical token interactions with their template-level counterparts. Let $\tau(i)\in\{1,\dots,r\}$ be the latent template index of the $i$th training observation, so that $Z_i=z_{\tau(i)}$. Define the ideal sample-level block matrix and the corresponding test vector by
$$
M_n:=\bigl(N_{\tau(i)\tau(j)}\bigr)_{i,j=1}^n,\ 
m_a:=\bigl(N_{a,\tau(i)}\bigr)_{i=1}^n.
$$
At the ideal level, let \(c_{M_n}\) be the dual minimizer associated with \(M_n\).  The block-constant structure of \(M_n\) identifies this sample-level dual problem exactly with the finite template problem.
We next present the exact ideal block reduction.
\begin{theorem}
\label{thm:idealreduction}
Let \(x\in\cX_a^{\fr}\).  If \(c_{M_n}\) is the dual minimizer associated
with \(M_n\), then
\[
  {1\over\lambda n}
  m_a^\top(\mathbf y\odot c_{M_n})
  =
  S_\lambda^{\id}(x).
\]
If a block is empty, the corresponding coordinate \(\theta_b\) may be chosen
arbitrarily; all expressions above are independent of this choice because
\(\widehat p_b=0\).
\end{theorem}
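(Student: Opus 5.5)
The argument has four steps: show the dual minimizer $c_{M_n}$ is constant on template blocks, reduce the dual to an $r$-dimensional problem, identify that reduced problem as the Fenchel dual of $\Phi_{\widehat p,\lambda}$, and read off the score via the primal–dual link. Write $n_b=n\widehat p_b$ for the block sizes and $I_b=\{i:\tau(i)=b\}$.

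\textbf{Step 1: block-constancy.} The dual $D_{M_n}$ is invariant under any permutation of indices that preserves every block, because $M_n$ and $\mathbf y$ are block-constant ($Y_i=y_{\tau(i)}$). The objective $D_{M_n}$ is strictly convex: $\varphi$ is strictly convex, and $M_n=E\bN E^\top$ is positive semidefinite, where $E$ is the $n\times r$ block-indicator matrix. So the minimizer is unique and must be a fixed point of these permutations. Hence $c_{M_n,i}=\theta_{\tau(i)}$ for some $\theta\in(0,1)^r$.

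\textbf{Step 2: reduced dual.} Substituting the block-constant form gives
\[
D_{M_n}(E\theta)=\sum_b \widehat p_b\,\varphi(\theta_b)+\frac1{2\lambda}\,(\widehat p\odot y\odot\theta)^\top\bN\,(\widehat p\odot y\odot\theta),
\]
and $\theta$ minimizes this over $[0,1]^r$. Blocks with $\widehat p_b=0$ drop out of both terms, so $\theta_b$ is arbitrary there, which is the statement's caveat.

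\textbf{Step 3: identification with $\Phi_{\widehat p,\lambda}$.} I would show that the reduced problem is the Fenchel dual of $\Phi_{\widehat p,\lambda}$ under the change of variables $\alpha_b=\widehat p_b\theta_b$. The cleanest route goes through first-order conditions.
\begin{itemize}
\item Define the candidate score $g:=\lambda^{-1}\bN(\widehat p\odot y\odot\theta)$.
\item Stationarity of the reduced dual in $\theta_b$ (for $\widehat p_b>0$) reads $\widehat p_b\varphi'(\theta_b)+\widehat p_b\, y_b g_b=0$. Since $\varphi'(c)=\log\frac{c}{1-c}$, this gives $\theta_b=\sigma(-y_bg_b)=-\loss'(y_bg_b)$.
\item The gradient of $\Phi_{\widehat p,\lambda}$ at $g$ is $\widehat p\odot y\odot\loss'(y\odot g)+\lambda\bN^{-1}g$. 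The second term equals $\widehat p\odot y\odot\theta$, which cancels the first.
\item Strict convexity of $\Phi_{\widehat p,\lambda}$ (since $\bN\succ0$) then gives $g=g^{\id}_\lambda$.
\end{itemize}
Step 3 is the main obstacle. The care needed is in empty blocks: there $\Phi$ has no loss term, and the coordinate $g_b$ is pinned only through the $\bN^{-1}$ coupling. The computation above still holds because the $b$-th component of $\widehat p\odot y\odot\theta$ vanishes, so stationarity of $\Phi$ is satisfied in every coordinate.

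\textbf{Step 4: conclusion.} We have $m_a=E\,\bN_{\cdot a}$, so $m_a^\top(\mathbf y\odot c_{M_n})=\sum_b n_b N_{ab}y_b\theta_b$. Dividing by $\lambda n$ gives $(\lambda^{-1}\bN(\widehat p\odot y\odot\theta))_a=(g^{\id}_\lambda)_a=S^{\id}_\lambda(x)$ for $x\in\cX_a^{\fr}$, which is the claimed identity. It is independent of the arbitrary $\theta_b$ on empty blocks, since those enter only multiplied by $\widehat p_b=0$.
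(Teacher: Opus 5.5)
Your proposal is correct and follows the paper's proof essentially step for step. In both, $c_{M_n}$ is block-constant by within-block permutation invariance and uniqueness, the dual reduces to the $r$-dimensional problem, and $g=\lambda^{-1}\bN(\widehat p\odot y\odot\theta)$ is identified with $g_\lambda^{\id}$ by checking the first-order condition of the strictly convex $\Phi_{\widehat p,\lambda}$ in every coordinate, including empty blocks (one cosmetic point: $\sigma$ already denotes the MLP activation in this paper, so write $\theta_b=-\loss'(y_bg_b)$ directly rather than $\sigma(-y_bg_b)$).
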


Thus \(S_\lambda^{\mathrm{id}}(x)\) is the
template-level analogue of the empirical predictor \(\widehat f_\lambda(x)\),
but with all concrete token identities removed.
The empirical logistic scorer can be compared directly to the template scorer by
\[
\widehat f_\lambda(x)-S_\lambda^{\id}(x)
=
\frac1{\lambda n}\zeta^\top(\mathbf y\odot c_{M_n})
+
\frac1{\lambda n}k_x^\top \mathbf y\odot(c_{\widehat K_n}-c_{M_n}).
\]
The first term is the direct test--train collision contribution; the second term is the movement of the logistic dual optimizer caused by the train--train Gram perturbation.  The following two deterministic estimates are the main analytic tools used later by the graph certificates.

For any matrix \(G\succeq0\),  let \[
  s_{k,G}={1\over\lambda n}k^\top(\mathbf y\odot c_G)
\]
where 
\(c_G, \) is the corresponding dual minimizer of \(D_{ G}\).

\begin{lemma} 
\label{thm:detperturb}
Let \(G,M\succeq0\), let \(k,m\in\mathbb R^n\), and let
\(c_G,c_M\) be the corresponding dual minimizers.  If \(|k_i|,|m_i|\le \kappa^2\) for all \(i\), then
\[
  |s_{k,G}-s_{m,M}|
  \le
  {\kappa^2\over4\lambda^2 n}\|G-M\|_{\op}
  +
  {1\over\lambda\sqrt n}\|k-m\|_2 .
\]
\end{lemma}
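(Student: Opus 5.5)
We split the difference by inserting the mixed term \(s\)-value with test vector \(k\) but dual vector \(c_M\):
\[
  s_{k,G}-s_{m,M}
  =
  {1\over\lambda n}\,k^\top\bigl(\mathbf y\odot(c_G-c_M)\bigr)
  +
  {1\over\lambda n}\,(k-m)^\top(\mathbf y\odot c_M).
\]
The first term measures how far the dual optimizer moves when the Gram matrix changes. The second term measures the change in the test vector at a fixed dual optimizer. We bound each by Cauchy--Schwarz, using \(c_M\in[0,1]^n\), \(|y_i|=1\), and \(|k_i|\le\kappa^2\). The latter gives \(\|\mathbf y\odot c_M\|_2\le\sqrt n\) and \(\|k\|_2\le\kappa^2\sqrt n\). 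The second term is therefore at most \(\|k-m\|_2/(\lambda\sqrt n)\), which is exactly the second summand in the claim.

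The main step is a stability bound for the dual minimizer,
\[
  \|c_G-c_M\|_2\le {\|G-M\|_{\op}\over 4\lambda\sqrt n}.
\]
Combined with \(\|k\|_2\le\kappa^2\sqrt n\), this gives the first summand \(\kappa^2\|G-M\|_{\op}/(4\lambda^2 n)\). To prove it we use strong convexity of the entropy. Since \(\varphi''(c)=1/(c(1-c))\ge4\) on \((0,1)\), and the quadratic part is convex because \(G\succeq0\), the function \(D_G\) is \((4/n)\)-strongly convex on \((0,1)^n\).

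As in Theorem~\ref{thm:dual}, which applies verbatim to any \(G\succeq0\), the minimizers \(c_G,c_M\) lie in the open cube. The reason is that \(\varphi'\) diverges at \(0\) and \(1\). Hence the minimizers are stationary:
\[
  \nabla D_G(c_G)=0,
  \qquad
  \nabla D_M(c_M)=0,
  \qquad
  \nabla D_G(c)={1\over n}\varphi'(c)+{1\over\lambda n^2}\,\mathbf y\odot G(\mathbf y\odot c).
\]

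Strong monotonicity of \(\nabla D_G\) then gives
\[
  {4\over n}\|c_M-c_G\|_2^2
  \le
  \langle \nabla D_G(c_M)-\nabla D_G(c_G),\,c_M-c_G\rangle
  =
  \langle \nabla D_G(c_M)-\nabla D_M(c_M),\,c_M-c_G\rangle .
\]
The gradient difference equals \((\lambda n^2)^{-1}\,\mathbf y\odot(G-M)(\mathbf y\odot c_M)\). Its Euclidean norm is at most \(\|G-M\|_{\op}\sqrt n/(\lambda n^2)\). Dividing through by \(\|c_M-c_G\|_2\) yields the displayed stability bound.

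The only delicate point is justifying interior stationarity and strong convexity on the open cube, where \(\varphi''\) is unbounded near the boundary. This is harmless: the lower bound \(\varphi''\ge4\) is all we need, and the interior location of both minimizers is already provided by Theorem~\ref{thm:dual}.
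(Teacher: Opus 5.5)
Your proposal is correct and follows essentially the same route as the paper: the same split of \(s_{k,G}-s_{m,M}\) into a dual-movement term and a test-vector term, and the same \((4/n)\)-strong-convexity bound \(\|c_G-c_M\|_2\le \|G-M\|_{\op}/(4\lambda\sqrt n)\) obtained by comparing \(\nabla D_G(c_M)\) with \(\nabla D_M(c_M)=0\). The only difference is that you derive the stability inequality explicitly via strong monotonicity on the open cube, where the paper simply cites it as standard.
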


Statistically, Lemma~\ref{thm:detperturb} is a deterministic stability bound
for a fixed design.  The first term measures how much the fitted score changes
when the empirical kernel design is perturbed, and the second term measures the
change in the test representer.  For kernel ridge regression this type of
comparison follows from a linear normal equation, but logistic regression has no
such linear coefficient map \cite{SchoelkopfHerbrichSmola2001,CaponnettoDeVito2007}.  The entropy dual supplies the missing curvature: the Bernoulli entropy
has uniform second derivative lower bound on the dual cube, so the dual
optimizer is Lipschitz in the Gram matrix without assuming a margin condition or
an invertible unregularized Fisher matrix.  This differs from the usual
consistency or excess-risk perspective for convex classification losses, where
the main object is population risk rather than a finite-sample out-of-support
score \cite{Zhang2004ConvexRisk,BartlettJordanMcAuliffe2006,SteinwartChristmann2008}.  

\begin{theorem} 
\label{thm:curvature-spectral-perturbation}
Let \(G,M\succeq0\), let \(k,m\in\mathbb R^n\), and let
\(c_G,c_M\) be the associated dual minimizers.  Set $  \mathbf Y:=\diag(\mathbf y), \Delta:=G-M$ and 
%\[
%  s_G={1\over\lambda n}k^\top(\mathbf y\odot c_G),
%  \qquad
%  s_M={1\over\lambda n}m^\top(\mathbf y\odot c_M),
%\]
\[
   C_M:={4\over n}I+{1\over\lambda n^2}\mathbf YM\mathbf Y,
  \qquad
  E_\Delta:={1\over\lambda n^2}\mathbf Y\Delta\mathbf Y , \qquad  b:={1\over\lambda n^2}\mathbf Y\Delta\mathbf Yc_M .
\] 
For \(a\in\mathbb R^n\), set
\[
  \mathcal E_M(a;b)
  :=
  {1\over 2(1-\gamma_{G|M})}
  \left(
    |a^\top C_M^{-1}b|
    +
    \sqrt{a^\top C_M^{-1}a}
    \sqrt{b^\top C_M^{-1}b}
  \right), \qquad  \gamma_{G|M}
  :=
  \bigl\|
    C_M^{-1/2}E_\Delta C_M^{-1/2}
  \bigr\|_{\op},
\]
when \(\gamma_{G|M}<1\), and set
\(\mathcal E_M(a;b)=+\infty\) otherwise.
With
$
  \zeta:=k-m,
  \
  a_m:={1\over\lambda n}\mathbf Ym,
  \
  a_\zeta:={1\over\lambda n}\mathbf Y\zeta,
$ 
\begin{equation}
\label{eq:curvature-spectral-perturbation-main}
  |s_G-s_M|
  \le
  {1\over\lambda n}
  |\zeta^\top\mathbf Yc_M|
  +
  \mathcal E_M(a_m;b)
  +
  \mathcal E_M(a_\zeta;b).
\end{equation}
\end{theorem}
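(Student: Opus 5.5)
The plan is to split the score difference into a fixed-dual term and a dual-movement term, and then to control the dual movement by a linearized, symmetrized optimality system. Write \(d:=c_G-c_M\). Since \(k=m+\zeta\) and \(\mathbf y\odot c=\mathbf Y c\),
\[
  s_G-s_M={1\over\lambda n}\zeta^\top\mathbf Yc_M+a_m^\top d+a_\zeta^\top d .
\]
The first term is exactly the first term in \eqref{eq:curvature-spectral-perturbation-main}. It therefore suffices to prove the single estimate \(|a^\top d|\le\mathcal E_M(a;b)\) for an arbitrary \(a\in\mathbb R^n\). When \(\gamma_{G|M}\ge1\) the right side is \(+\infty\) and there is nothing to prove, so assume \(\gamma:=\gamma_{G|M}<1\).

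\textbf{Step 1 (first-order conditions).} By Theorem~\ref{thm:dual}, applied to \(G\) and to \(M\) (its argument uses only \(G\succeq0\)), both minimizers lie in \((0,1)^n\). The gradients of the dual objectives therefore vanish:
\[
  \tfrac1n\varphi'(c_G)+\tfrac{1}{\lambda n^2}\mathbf YG\mathbf Yc_G=0,
  \qquad
  \tfrac1n\varphi'(c_M)+\tfrac{1}{\lambda n^2}\mathbf YM\mathbf Yc_M=0 .
\]
Subtract the two equations. By the mean value theorem coordinatewise, \(\varphi'(c_G)-\varphi'(c_M)=Hd\), where \(H\) is diagonal with \(H_{ii}=\int_0^1\varphi''(c_{M,i}+t d_i)\,dt\). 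Since \(\varphi''(c)=1/(c(1-c))\ge4\) on \((0,1)\), we have \(H\succeq4I\).

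Split \(G=M+\Delta\) and \(c_G=c_M+d\). This gives the linear system
\[
  (A+E_\Delta)d=-b,\qquad A:=\tfrac1nH+\tfrac1{\lambda n^2}\mathbf YM\mathbf Y\succeq C_M .
\]

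\textbf{Step 2 (whitening).} Conjugate by \(C_M^{-1/2}\), which exists because \(C_M\succeq \tfrac4n I\). Set \(B:=C_M^{-1/2}AC_M^{-1/2}\succeq I\), \(E':=C_M^{-1/2}E_\Delta C_M^{-1/2}\), \(\tilde a:=C_M^{-1/2}a\) and \(\tilde b:=C_M^{-1/2}b\). Then
\[
  a^\top d=-\tilde a^\top(B+E')^{-1}\tilde b .
\]
The matrix \(E'\) is symmetric because \(\Delta\) is symmetric, and \(\|E'\|_{\op}=\gamma\). Since \(B\succeq I\) gives \(\gamma I\preceq\gamma B\),
\[
  B+E'\succeq B-\gamma I\succeq(1-\gamma)B\succeq(1-\gamma)I .
\]
Hence \(P:=(B+E')^{-1}\) is symmetric with spectrum in \((0,(1-\gamma)^{-1}]\). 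This is where the spectral condition \(\gamma<1\) enters: it turns a possibly indefinite perturbation into a positive definite system.

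\textbf{Step 3 (centering the spectrum).} Write \(P=\tfrac{1}{2(1-\gamma)}I+R\). Because the eigenvalues of \(P\) lie in \((0,(1-\gamma)^{-1}]\), the remainder satisfies \(\|R\|_{\op}\le\tfrac1{2(1-\gamma)}\). Therefore
\[
  |\tilde a^\top P\tilde b|\le \tfrac{1}{2(1-\gamma)}\bigl(|\tilde a^\top\tilde b|+\|\tilde a\|_2\|\tilde b\|_2\bigr).
\]
Substituting \(\tilde a^\top\tilde b=a^\top C_M^{-1}b\), \(\|\tilde a\|_2^2=a^\top C_M^{-1}a\) and \(\|\tilde b\|_2^2=b^\top C_M^{-1}b\) gives exactly \(\mathcal E_M(a;b)\).

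\textbf{Main obstacle.} The delicate point is the constant in Step 3. A Neumann-series expansion in \(B^{-1}E'\) loses a factor and yields \((1+\gamma)/(2(1-\gamma))\) in front of the product term, which is weaker than the claimed \(1/(2(1-\gamma))\). The argument must instead exploit symmetry: first absorb \(E'\) into a positive definite operator via \(B+E'\succeq(1-\gamma)I\), and only then center the spectrum of the inverse. One should also check that the mean-value Hessian \(H\), which is not the Hessian at \(c_M\), still satisfies \(A\succeq C_M\); this holds because the only property used is the uniform bound \(\varphi''\ge4\).
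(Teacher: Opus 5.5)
Your proof is correct and recovers exactly the constant in \(\mathcal E_M(a;b)\). The score decomposition is the same as the paper's, but you control \(a^\top d\) by a different mechanism.

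Your secant matrix \(A+E_\Delta\) is precisely the averaged Hessian \(\int_0^1\nabla^2D_G(c_M+td)\,dt\). The paper only contracts this matrix with \(d\). It uses \(-b^\top d=d^\top(A+E_\Delta)d\ge(1-\gamma_{G|M})\,d^\top C_Md\), completes the square to confine \(d\) to an ellipsoid centered at \(-\tfrac12\bigl((1-\gamma_{G|M})C_M\bigr)^{-1}b\), and then reads off linear functionals by Cauchy--Schwarz. You instead invert the matrix, writing \(d=-C_M^{-1/2}(B+E')^{-1}C_M^{-1/2}b\), and bound the symmetric resolvent by spectral centering.

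The two arguments have the same geometry. In whitened coordinates, the vectors \(-P\tilde b\) with \(P=P^\top\) and \(0\preceq P\preceq(1-\gamma_{G|M})^{-1}I\) sweep out exactly the ball obtained by completing the square, so neither route gives a better constant.

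The paper's route is a pure strong-monotonicity argument. It needs only a one-sided quadratic-form bound along the segment and never uses invertibility of the perturbed system. Your route gives an explicit linear-operator representation of \(c_G-c_M\). That makes the link to the main text's linear-response heuristic \(c_G-c_M\approx-C_M^{-1}b\) transparent. It does rely on the symmetry of the secant matrix, which you correctly flag.
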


The curvature-spectral estimate is a local, information-geometric refinement of
the preceding stability bound.  Here
\(C_M\) is the ideal dual Hessian, the regularized finite-sample analogue of a
Fisher information matrix for the logistic problem, and
$
  b 
$
is the score perturbation induced by train--train collisions.  Thus we control \(\Delta\)  action on
the ideal residual vector \(\mathbf Yc_M\).  The factor
\(C_M^{-1}\) is the statistical sensitivity: perturbations lying in
high-curvature directions are damped, while perturbations aligned with
weak-curvature directions can move the margin.   

  The curvature argument here is not a standard self-concordant or convex-risk
analysis of logistic regression.  Generic convexity or self-concordance analysis 
\cite{Bach2010LogisticSelfConcordant,WenBetkenHang2025KLR} would control the movement
of the estimator through a global norm of the Gram perturbation, such as
\(\|\Delta\|_{\mathrm{op}}\), and would therefore charge all token collisions
uniformly.  This is too coarse for fresh-symbol classification.
  Two samples can have the same scalar diversity \(\rho\) and
similar edge counts but different classification behavior because they produce
different vectors \(b\) after weighting by \(\mathbf Yc_M\) and filtering by
\(C_M^{-1}\).

The dual variables also have a useful template-importance interpretation.  Since
\(M_n\) is block-constant, the ideal dual optimizer is constant on each
template block: \(c_{M_n,i}=\theta_{\tau(i)}\). Therefore the ideal score can
be rewritten for every \(x\in\cX_a^{\fr}\) as
\[
    S_\lambda^{\id}(x)
    =
    \sum_{b=1}^r
    \widehat p_b\,N_{ab}\,y_b\,\theta_b / \lambda.
\]

\section{Collision-Graph Control of Fresh-Symbol Generalization}\label{subsec:collision_graph}

The empirical error \(\widehat f_\lambda(x)-S_\lambda^{\id}(x)\) is driven by the train--train Gram error \(\Delta:=\widehat K_n-M_n\) and the test--train error \(\zeta:=k_x-m_a\), for \(x\in\cX_a^{\fr}\).  For off-diagonal train pairs, these errors can occur only through substitution collisions: if \(X_i\) and \(X_j\) are jointly fresh relative to their templates, then token symmetry gives \(K(X_i,X_j)=N_{\tau(i),\tau(j)}\), and hence \(\Delta_{ij}=0\).  Similarly, if the fresh test string \(x\) does not collide with \(X_i\), then \(K(x,X_i)=N_{a,\tau(i)}\), and hence \(\zeta_i=0\).  Thus the collision graph records the support of the finite-sample perturbation away from the ideal template problem.  The scalar diversity parameter \(\rho:=\min_{1\le a\le r}\min_{t\in\cV}\{1/\Pp_{S\sim\mu_{\mathrm{sub},a}}[t\in S(\cW(z_a))]\}\) controls the marginal probability of token reuse, and is the quantity used for block approximation in the square-loss theory of \cite{Boix2024}.  For logistic classification, however, this marginal rate is not enough: the effect of collisions also depends on where they occur, which template colors they connect, and how their signed kernel errors act on the dual weights.  The colored collision graph keeps this finite-sample geometry.  The distinction mirrors the gap between edge density and graph discrepancy or spectral behavior in random-graph theory, but here the graph is colored, signed, and filtered through the logistic dual \cite{ChungGrahamWilson1989,BoucheronLugosiMassart2013,Tropp2015}.

%The empirical error
%$
%    \widehat f_\lambda(x)-S_\lambda^{\id}(x)
%$
%is controlled by two discrepancies: the train--train Gram error
%\(\Delta=\widehat K_n-M_n\) and the test--train error
%\(\zeta=k_x-m_x\). The only source of discrepancy is accidental reuse of concrete symbols.  If two %realized strings are jointly fresh, token symmetry makes their empirical
%kernel value equal to the corresponding template-kernel value.  Thus the Gram
%error can occur only on colliding pairs.  The collision graph records precisely
%these finite-sample locations where the ideal template reduction may fail.
% The usual scalar measure of substitution diversity is
%$
%\rho
%:=
%\min_{1\le a\le r} \ \min_{t\in\cV}   $ $ 
%{1}/{\Pp_{S\sim\mu_{\mathrm{sub},a}}\bigl[t\in S(\cW(z_a))\bigr]}.
%$
%Large \(\rho\) means that no token is too likely to occupy a wildcard slot.
%This is the same quantity that controls block approximation in the square-loss
%theory of \cite{Boix2024}. For logistic classification, however, the marginal collision rate is not
%enough.  Two samples can have the same value of \(\rho\), or even the same
%number of collision pairs, but induce very different perturbations of the
%learned classifier.  The colored collision graph records the finite-sample locations where the ideal
%template reduction can fail. 

\begin{definition}[Training and test collision graphs]
	\label{def:collision-graph}
	For \(i\ne j\), with \(\tau(i)=b\) and \(\tau(j)=c\), let \(\Bad_{ij}\) be the event that the two realized strings \(X_i,X_j\) are not jointly fresh relative to the template pair \((z_b,z_c)\).  Equivalently, at least one wildcard token in one string hits a literal token in the other string, or the two wildcard substitution sets share a token.  Define
	\[
	\Ccol_{ij}:=\1_{\Bad_{ij}},\qquad \Ccol_{ii}:=0 .
	\]
	For a fixed fresh test string \(x\), define \(\Bad_{0i}\) analogously for the pair \((x,X_i)\), and set \(\Ccol_{0i}=\1_{\Bad_{0i}}\).  The graph \(\Gcol=(\{0,1,\ldots,n\},\Ccol,\tau)\) is the colored collision graph. 
\end{definition}

\begin{remark}[From scalar diversity to collision geometry]
\label{rem:rho-to-graph-main}
For uniform injective substitutions of \(w_a\) active wildcards into an alphabet of size \(m\),
\[
    \rho_a={m\over w_a},
    \qquad
    \rho={m\over w_{\max}},
    \qquad
    w_{\max}:=\max_a w_a .
\]
For two templates using \(w_a\) and \(w_b\) active wildcards, the probability that their wildcard sets intersect is
\[
    q^{\rm wild}_{ab}(m)
    =
    1-{\binom{m-w_a}{w_b}\over\binom{m}{w_b}}
    \le
    {w_aw_b\over m-w_b+1}.
\]
When \(w_a=w_b=w\), this scale is \(q^{\rm wild}_{ab}(m)\simeq w^2/m=w/\rho\).  Thus \(\rho\) gives a first-order collision rate, but the graph refines it in four ways: it keeps the full color-pair matrix rather than a worst-case scalar; it records finite-sample degrees and hubs; it keeps signed kernel discrepancies so cancellations are not lost; and it separates deterministic literal overlap from random wildcard collisions.
\end{remark}

 The empirical problem differs from the ideal template problem only through the signed
collision graph
$
\Wcol=(\Delta,\zeta),
$
\(\Delta\) and \(\zeta\) vanish away from collision edges. If \(\Bad_{ij}\) does not occur, then the wildcard tokens appearing in \(X_i\) and \(X_j\) are distinct from each other and from the other template's literal tokens. For \(i\ne j\),
$
\Bad_{ij}^c\ \Longrightarrow\  \Delta_{ij}=0 .
$
For the test edge,
$
\Bad_{0i}^c\ \Longrightarrow\  \zeta_i=0 .
$

\begin{center}
	\resizebox{0.86\textwidth}{!}{%
		\begin{tikzpicture}[
			x=0.92cm,y=0.92cm,
			every node/.style={font=\scriptsize},
			tok/.style={draw,rounded corners=0.8pt,minimum height=3.8mm,minimum width=6.2mm,inner sep=0.6pt,font=\scriptsize},
			lit/.style={tok,fill=black!5},
			wild/.style={tok,fill=blue!7},
			tinylabel/.style={font=\tiny}
		]
			% Column headings
			\node[font=\bfseries\scriptsize,anchor=west] at (0.00,4.15) {(A) strings};
			\node[font=\bfseries\scriptsize,anchor=west] at (5.35,4.15) {(B) collision};
			\node[font=\bfseries\scriptsize,anchor=west] at (10.45,4.15) {(C) graph};
			
			% Panel A: short strings
			\node[tinylabel,anchor=west] at (0.00,3.72) {$z=[\mathtt{if},\alpha,\mathtt{then},\beta]$};
			\node[tinylabel,anchor=east] at (0.65,3.10) {$X_i=$};
			\node[lit]  (xi1) at (1.10,3.10) {$\mathtt{if}$};
			\node[wild] (xi2) at (1.80,3.10) {$a$};
			\node[lit]  (xi3) at (2.60,3.10) {$\mathtt{then}$};
			\node[wild] (xi4) at (3.55,3.10) {$b$};
			
			\node[tinylabel,anchor=east] at (0.65,2.30) {$X_j=$};
			\node[lit]  (xj1) at (1.10,2.30) {$\mathtt{if}$};
			\node[wild] (xj2) at (1.80,2.30) {$c$};
			\node[lit]  (xj3) at (2.60,2.30) {$\mathtt{then}$};
			\node[wild] (xj4) at (3.55,2.30) {$b$};
			
			\node[tinylabel,anchor=east] at (0.65,1.50) {$X_k=$};
			\node[lit]  (xk1) at (1.10,1.50) {$\mathtt{if}$};
			\node[wild] (xk2) at (1.80,1.50) {$d$};
			\node[lit]  (xk3) at (2.60,1.50) {$\mathtt{then}$};
			\node[wild] (xk4) at (3.55,1.50) {$e$};
			
			\draw[red!75!black,thick] (xi4.south) -- (xj4.north);
			\node[tinylabel,red!75!black,anchor=west] at (4.15,2.70) {$b=b$};
			\node[tinylabel,anchor=west] at (0.15,0.72) {$S_i\cap S_j=\{b\}$,\qquad $S_i\cap S_k=\varnothing$};
			
			% Flow arrow A -> B
			\draw[-{Latex[length=2.3mm]},thick] (4.35,2.45) -- (5.15,2.45);
			
			% Panel B: compact collision test
			\node[draw,rounded corners,fill=orange!5,anchor=north west,inner sep=3pt] at (5.35,3.40) {\(
				\begin{array}{c|c|c}
					(u,v) & S_u\cap S_v & (\Ccol_{uv},\Wcol_{uv})\\ \hline
					(i,j) & \{b\} & (1,\Delta_{ij})\\[-0.05em]
					(i,k) & \varnothing & (0,0)
				\end{array}
				\)};
			
			% Flow arrow B -> C
			\draw[-{Latex[length=2.3mm]},thick] (10.35,2.45) -- (11.15,2.45);
			
			% Panel C: graph
			\node[draw,circle,minimum size=5.8mm,fill=blue!10] (gi) at (11.75,2.95) {$i$};
			\node[draw,circle,minimum size=5.8mm,fill=blue!10] (gj) at (13.35,2.95) {$j$};
			\node[draw,circle,minimum size=5.8mm,fill=blue!10] (gk) at (12.65,1.45) {$k$};
			\draw[red!75!black,thick] (gi)--(gj) node[midway,above,tinylabel] {$\Delta_{ij}$};
			\draw[gray!60,dashed] (gi)--(gk);
			\draw[gray!60,dashed] (gj)--(gk);
			\node[tinylabel,align=center] at (12.55,0.72) {$\Ccol=\supp(\Wcol)$};
			
			% Bottom support lemma
			\node[draw,rounded corners,fill=yellow!10,align=center,inner sep=3pt] at (6.80,-0.05) {\(
				\Bad_{uv}^{c}\Longrightarrow K(X_u,X_v)=N_{\tau(u),\tau(v)}\Longrightarrow \Delta_{uv}=0
				\)};
		\end{tikzpicture}%
	}
	\captionof{figure}{\textbf{What the collision graph records.}  A graph edge appears only for a non-fresh pair.  Here $(i,j)$ collides through the wildcard token $b$, while $(i,k)$ is fresh and has zero Gram discrepancy.}
	\label{fig:collision-overview}
\end{center}

\begin{remark}[Equality selectors and signed collision weights]
\label{rem:selector-view-main}
The collision graph can be read as a sample-level abstraction of equality selectors.  This connects the present perturbation view to programmatic transformer descriptions and to induction-head or variable-binding mechanisms that also rely on equality matching \cite{WeissGoldbergYahav2021RASP,OlssonEtAl2022,WuGeigerMilliere2025,YangEtAl2025SymbolicMechanisms}.  A RASP-style selector has the form
\[
    S_{pq}=\1\{x_p=x_q\},
\]
where \(p,q\) are sequence positions.  The sample-level edge aggregates this predicate over wildcard slots:
\[
    \Ccol_{ij}
    =
    \1\{S_i(\cW(z_{\tau(i)}))\cap S_j(\cW(z_{\tau(j)}))\ne\emptyset\}.
\]
Thus \(\Ccol\) records where equality information is available, while the signed object \(\Wcol=(\Delta,\zeta)\) records how the kernel weights those equalities.  The curvature-spectral certificate asks whether the signed selector-induced perturbation aligns with weakly curved directions of the logistic objective.
\end{remark}

%\section{Generalization via Collision-Induced Perturbations}

%\subsection{Margin for ideal template scorer}
%
%{\color{red}\textbf{Remark:} The derivation of the target function $\Phi_{q, \lambda}$ should probably be explained in more detail. We need to at least explain why $g_j$ is the margin with representer theorem.}
%
%For a probability vector $q=(q_1,\dots,q_r)$ and ridge parameter $\lambda>0$, define
%\[
%\Phi_{q,\lambda}(g)
%:=
%\sum_{a=1}^r q_a\,\loss(y_a g_a)
%+
%\frac{\lambda}{2}g^\top \bN^{-1}g,
%\qquad g\in\R^r.
%\]
%Write $m_{\min}(g):=\min_{1\le a\le r} y_a g_a$. Let $\hat p_a=n^{-1}\#\{i:Z_i=z_a\}$ be the empirical template frequencies, let $g_\lambda^{\mathrm{id}}$ denote the minimizer of $\Phi_{\hat p,\lambda}$, and define the ideal empirical score by
%\[
%S_\lambda^{\mathrm{id}}(x):=(g_\lambda^{\mathrm{id}})_a\qquad\text{for }x\in\cX_a.
%\]
%
%The following theorem suggests the trained template scorer will achieve a positive lower bound with high probability.
%
%\begin{theorem}[Ideal empirical scorer has positive template margin]
%	\label{thm:idealmargin}
%	Fix $\Gamma\ge 0$. Then there exists
%	$
%	\lambda_\Gamma^{\mathrm{id}}>0,
%	$
%	depending only on $\Gamma$, $p_{\min}$, $\bN$, and $y$, such that on the event $E_{\mathrm{rep}}$, every ridge parameter $0<\lambda<\lambda_\Gamma^{\mathrm{id}}$ satisfies
%	\[
%	y_a\,S_\lambda^{\mathrm{id}}(x)>\Gamma
%	\qquad\text{for every }a\in\{1,\dots,r\}\text{ and every }x\in\cX_a.
%	\]
%	Moreover,
%	$
%	\Pp(E_{\mathrm{rep}})\ge 1-r e^{-n p_{\min}/8}.
%	$
%\end{theorem}

The role of the collision graph can be seen from the linearized dual equations.
Writing \(c_M:=c_{M_n}\), the first-order
variation of the dual optimizer around the ideal matrix \(M_n\) is
\[
    c_{\widehat K_n}-c_M
    \approx
    -
    C_M^{-1}
    \left(
        {1\over \lambda n^2}Y\Delta Y
    \right)c_M,
    \qquad
    C_M
    :=
    {4\over n}I
    +
    {1\over \lambda n^2}YM_nY .
\]
Thus train--train collisions affect the classifier through the action of
\(\Delta\) on the ideal signed dual weights \(Yc_M\), filtered by the local
curvature \(C_M\) of the logistic dual objective.  The test--train collisions
enter separately through the term
$
    {1\over \lambda n}\zeta^\top Yc_M .
$

The next two corollaries translate this generic perturbation formula into
quantities that are visible in the colored collision graph. Let \(P\) be the sample-template membership matrix, \(P_{ib}=\1_{\tau(i)=b}\),
and let \(Q=\diag(\widehat p)\).  Because the ideal Gram matrix is
block-constant, \(M_n=P\bN P^\top\), and the ideal dual optimizer is also
block-constant, \(c_M=P\theta\).  We write
\[
    \beta:=y\odot\theta,
    \qquad
    Y_r:=\diag(y_1,\ldots,y_r),
    \qquad
    n_a^{\rm tmplt}:=(N_{a1},\ldots,N_{ar})^\top .
\]
For any matrix perturbation \(\Delta\), define its color-pair block average by
$
    \overline\Delta_{bc}
    :=
    {1\over n_bn_c}
    \sum_{i\in I_b}\sum_{j\in I_c}\Delta_{ij}.
$

The vector \(g_\Delta:=Y_r\overline\Delta Q\beta\) is the quotient-level
train--train collision action: it first averages the empirical Gram error over
template color pairs and then weights the result by the signed ideal dual
coordinates.  The matrix \(R_{0,\lambda,\widehat p}\) below is the inverse
curvature operator after restricting the logistic dual Hessian to the
block-constant, template-level subspace.  The residual \(b_\perp\) is the part
of the train--train perturbation that is invisible to block averages and must be
controlled by row, degree, or spectral graph estimates.

\begin{corollary}
\label{cor:curvature-template-identities}
Assume \(M=M_n\), \(m=m_a=Pn_a^{\rm tmplt}\),
and \(c_M=P\theta\).  Suppose also that every block is nonempty.  Let
\[
  s_a:=Y_r n_a^{\rm tmplt},
 \quad \mbox{and} \quad
  \mathcal A_{0,\lambda,\widehat p}
  :=
  4I
  +
  {1\over\lambda}
  Y_rQ^{1/2}\bN Q^{1/2}Y_r,
  \qquad
  R_{0,\lambda,\widehat p}
  :=
  Q^{1/2}\mathcal A_{0,\lambda,\widehat p}^{-1}Q^{1/2}.
\]
If \(b_\parallel=P\bar b\) is the block-constant projection of \(b\), with
$
  \bar b={1\over\lambda n}g_\Delta,
  \
  b_\perp:=b-b_\parallel,
$
then
\begin{align}
\label{eq:curvature-spectral-template-identities}
  a_m^\top C_M^{-1}a_m
  &=
  {1\over\lambda^2}
  s_a^\top R_{0,\lambda,\widehat p}s_a,
 \qquad
  a_m^\top C_M^{-1}b
   =
  {1\over\lambda^2}
  s_a^\top R_{0,\lambda,\widehat p}g_\Delta,
  \nonumber\\
  b^\top C_M^{-1}b
  &=
  {1\over\lambda^2}
  g_\Delta^\top R_{0,\lambda,\widehat p}g_\Delta
  +
  {n\over4}\|b_\perp\|_2^2 .
\end{align}
\end{corollary}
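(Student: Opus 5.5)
The plan is to decompose \(\mathbb R^n\) orthogonally into the block-constant subspace \(\mathrm{range}(P)\) and its orthogonal complement. Then \(C_M^{-1}\) is computed on each piece separately.

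\textbf{Step 1: \(C_M\) preserves the decomposition.} Since \(M_n=P\bN P^\top\) and \(\mathbf Y P=PY_r\), we have
\[
C_M=\tfrac4nI+\tfrac1{\lambda n^2}PY_r\bN Y_rP^\top .
\]
Hence \(C_M\) acts as \(\tfrac4nI\) on \(\mathrm{range}(P)^\perp=\ker P^\top\). It also maps \(\mathrm{range}(P)\) into itself, because
\[
C_M P u=P\Bigl(\tfrac4n u+\tfrac1{\lambda n}Y_r\bN Y_rQu\Bigr),
\]
using \(P^\top P=nQ\). So \(C_M^{-1}\) preserves both subspaces, equals \(\tfrac n4 I\) on the complement, and is given on \(\mathrm{range}(P)\) by inverting the \(r\times r\) map \(u\mapsto \tfrac4n u+\tfrac1{\lambda n}Y_r\bN Y_rQu\).

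\textbf{Step 2: the block-level inverse.} Every block is nonempty, so \(Q\) is invertible. I would check the identity
\[
\tfrac4n I+\tfrac1{\lambda n}Y_r\bN Y_rQ=\tfrac1n\,Q^{-1/2}\mathcal A_{0,\lambda,\widehat p}\,Q^{1/2},
\]
which uses that \(Y_r\) and \(Q\) are diagonal and commute. It follows that \(C_M^{-1}Pv=P\,n\,Q^{-1/2}\mathcal A^{-1}Q^{1/2}v\). Consequently, for \(u,v\in\R^r\),
\[
(Pu)^\top C_M^{-1}(Pv)=n^2\,u^\top Q^{1/2}\mathcal A^{-1}Q^{1/2}v=n^2\,u^\top R_{0,\lambda,\widehat p}\,v .
\]

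\textbf{Step 3: identify the vectors.} First, \(a_m=\tfrac1{\lambda n}\mathbf YPn_a^{\rm tmplt}=\tfrac1{\lambda n}Ps_a\), which lies in \(\mathrm{range}(P)\). Next, write \(b=\tfrac1{\lambda n^2}\mathbf Y\Delta\mathbf YP\theta=\tfrac1{\lambda n^2}\mathbf Y\Delta P\beta\). Its projection onto \(\mathrm{range}(P)\) is \(P(P^\top P)^{-1}P^\top b\). Compute
\[
P^\top\mathbf Y\Delta P\beta=Y_rP^\top\Delta P\beta=Y_r\,(nQ)\,\overline\Delta\,(nQ)\beta ,
\]
since \((P^\top\Delta P)_{bc}=n_bn_c\overline\Delta_{bc}\). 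This gives \(\bar b=\tfrac1{\lambda n^2}(nQ)^{-1}n^2Y_rQ\overline\Delta Q\beta=\tfrac1{\lambda n}g_\Delta\), because the diagonal matrices \(Q\) and \(Y_r\) commute. This confirms the stated \(b_\parallel\).

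\textbf{Step 4: assemble.} Since \(a_m\perp b_\perp\) and \(C_M^{-1}\) preserves both subspaces, Step 2 gives
\[
a_m^\top C_M^{-1}a_m=\tfrac{n^2}{\lambda^2n^2}\,s_a^\top R s_a ,\qquad
a_m^\top C_M^{-1}b=a_m^\top C_M^{-1}b_\parallel=\tfrac1{\lambda^2}\,s_a^\top R\,g_\Delta .
\]
For the last identity, the cross terms vanish and the complement contributes \(b_\perp^\top(\tfrac n4)b_\perp\), so
\[
b^\top C_M^{-1}b=\tfrac1{\lambda^2}g_\Delta^\top R g_\Delta+\tfrac n4\|b_\perp\|_2^2 .
\]

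The only delicate point is the bookkeeping of the factors of \(n\) and \(Q^{1/2}\) in Step 2, together with checking that \(C_M^{-1}\) commutes with the orthogonal projection onto \(\mathrm{range}(P)\). The latter holds because \(C_M\) is symmetric and leaves \(\mathrm{range}(P)\) invariant.
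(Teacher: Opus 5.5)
Your proposal is correct and follows essentially the same route as the paper: $C_M$ leaves the block-constant subspace and its orthogonal complement invariant, the restricted inverse gives $(Pu)^\top C_M^{-1}(Pv)=n^2u^\top R_{0,\lambda,\widehat p}v$, the vectors $a_m$ and $\bar b$ are identified through block averages, and $C_M b_\perp=(4/n)b_\perp$ on the complement. The only cosmetic difference is that you write the restriction as the similarity $\frac1n Q^{-1/2}\mathcal A_{0,\lambda,\widehat p}Q^{1/2}$ in the coordinates $u\mapsto Pu$, whereas the paper passes to the orthonormal block basis $U=n^{-1/2}PQ^{-1/2}$.
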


The identities separate three statistically distinct objects.  The first term is
the curvature leverage of the fresh test template.  The second is the signed
alignment between the block-averaged collision perturbation and that fresh-template
direction.  The third decomposes the total curvature-weighted perturbation into a
quotient-level block action, \(g_\Delta\), and a within-block residual,
\(b_\perp\). 

To state the resulting transfer bound without interrupting the main argument,
we collect the exact envelope definitions in Appendix~\ref{sec:graph-budgets}.
Here \(A_\Delta\) denotes the train--train action size,; \(E_*(u)\) is the worst block-average
Gram-error envelope; \(X_\star(u)\) is the aggregate test--train collision
envelope; and \(\gamma_{\cspec}\) is the relative curvature size of the Gram
perturbation.

\begin{corollary} 
\label{cor:Mn-level-curvature-spectral-upper-bound}
Condition on a template assignment with \(n_b>0\) for all \(b\), and fix
\(x\in\cX_a^{\fr}\).  With
\(\Delta,\zeta,C_M,\gamma_{\cspec},d_i,d_2\), and \(A_\Delta\) as in
Section \ref{sec:graph-budgets}, on the KL block/test event of
Theorem \ref{thm:klbdens-kl-block-envelope},
\[
  \bigl|
    \widehat f_\lambda(x)-S_\lambda^{\id}(x)
  \bigr|
  \le
  \operatorname{Curv}_\lambda
  \bigl(A_\Delta,E_*(u),X_\star(u),\gamma_{\cspec}\bigr)
  =
  B_{\cspec}(\lambda;x,u)
\]
for every \(u>0\).  In particular, when \(\gamma_{\cspec}<1\),
\[
  \bigl|
    \widehat f_\lambda(x)-S_\lambda^{\id}(x)
  \bigr|
  \le
  {L_*\over\lambda}X_\star(u)
  +
  {K_*E_*(u)
    +(K_*+2L_*\sqrt{X_\star(u)})A_\Delta
  \over
  8\lambda^2(1-\gamma_{\cspec})}.
\]
%Also,
%\begin{equation}
%\label{eq:Mn-level-gamma-simple-bound}
%  \gamma_{\cspec}
%  \le
%  {\|\Delta\|_{\op}\over4\lambda n}.
%\end{equation}
\end{corollary}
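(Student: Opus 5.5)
The plan is to specialize Theorem~\ref{thm:curvature-spectral-perturbation} to \(G=\widehat K_n\), \(M=M_n\), \(k=k_x\), \(m=m_a\). By Theorem~\ref{thm:dual} we have \(s_G=\widehat f_\lambda(x)\), and by Theorem~\ref{thm:idealreduction} we have \(s_M=S_\lambda^{\id}(x)\). Note that \(\gamma_{G|M}\) is exactly \(\gamma_{\cspec}\) from Section~\ref{sec:graph-budgets}. Inequality \eqref{eq:curvature-spectral-perturbation-main} then reduces the claim to bounding three pieces:
\[
\tfrac1{\lambda n}|\zeta^\top\mathbf Yc_M|,\qquad \mathcal E_M(a_m;b),\qquad \mathcal E_M(a_\zeta;b).
\]
Each piece will be matched to the envelopes \(A_\Delta,E_*(u),X_\star(u)\), and the pieces are then summed into \(\operatorname{Curv}_\lambda\). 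When \(\gamma_{\cspec}\ge1\) the bound is \(+\infty\) by convention, so only the case \(\gamma_{\cspec}<1\) needs work.

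\emph{First piece.} Because \(c_M=P\theta\in(0,1)^n\) and \(\zeta\) is supported on test-collision edges, we get
\[
\tfrac1{\lambda n}|\zeta^\top\mathbf Yc_M|\le \tfrac1{\lambda n}\sum_i\Ccol_{0i}|\zeta_i|.
\]
On the KL test event of Theorem~\ref{thm:klbdens-kl-block-envelope}, this sum is at most \(L_*X_\star(u)/\lambda\), with \(L_*\) the uniform bound on kernel discrepancies (a multiple of \(\|K\|_\infty\)).

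\emph{Second piece, \(\mathcal E_M(a_m;b)\).} I would use Corollary~\ref{cor:curvature-template-identities}. It gives \(a_m^\top C_M^{-1}a_m=\lambda^{-2}s_a^\top R s_a\), and the cross term and \(b^\top C_M^{-1}b\) are expressed through \(g_\Delta\) and \(b_\perp\). Since \(\mathcal A_{0,\lambda,\widehat p}\succeq4I\), we have \(R\preceq Q/4\). This gives \(s_a^\top Rs_a\le K_*/4\), where \(K_*\) bounds \(\max_b N_{ab}^2\), and similarly
\[
|s_a^\top Rg_\Delta|\le \tfrac14\textstyle\sum_b\widehat p_b|N_{ab}||(g_\Delta)_b|.
\]
Since \(|\beta_b|\le1\), this is controlled by the block averages \(|\overline\Delta_{bc}|\), which on the KL block event are at most \(E_*(u)\). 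For the \(\sqrt{b^\top C_M^{-1}b}\) factor, write \(b=b_\parallel+b_\perp\) and bound \(\tfrac n4\|b_\perp\|^2\) by row and degree sums of \(\Delta\) weighted by \(|c_M|\le1\). I expect these to assemble into the action size \(A_\Delta\) of Section~\ref{sec:graph-budgets}. The factor \(\lambda^{-1}\) from \(b=\lambda^{-1}n^{-2}\mathbf Y\Delta\mathbf Yc_M\) combines with \(\lambda^{-1}\) from \(a_m\), which produces the overall \(\lambda^{-2}\). The factor \(1/(2(1-\gamma))\) together with the \(1/4\) from \(R\) accounts for the \(8\) in the denominator.

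\emph{Third piece, \(\mathcal E_M(a_\zeta;b)\).} Use \(C_M\succeq \tfrac4nI\), so that
\[
a_\zeta^\top C_M^{-1}a_\zeta\le \tfrac n4\cdot\tfrac1{\lambda^2n^2}\|\zeta\|_2^2.
\]
Since \(\zeta_i\) is nonzero only on collision edges and is bounded there, \(\|\zeta\|_2^2/n\lesssim L_*^2X_\star(u)\), which yields the \(2L_*\sqrt{X_\star(u)}A_\Delta\) term. The cross term \(|a_\zeta^\top C_M^{-1}b|\) is handled by Cauchy--Schwarz, giving the same product.

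The main obstacle I expect is the second piece. The orthogonal split must be applied carefully, because \(C_M^{-1}\) does not act diagonally on \(b_\perp\) unless \(\mathbf YM_n\mathbf Y\) annihilates within-block-mean-zero vectors. It does, since \(M_n=P\bN P^\top\) and \(\mathbf Y\) is constant on blocks. This is exactly what Corollary~\ref{cor:curvature-template-identities} encodes, so I would lean on it rather than recompute. After that, the remaining work is bookkeeping of constants so that the terms match the envelope definitions exactly.
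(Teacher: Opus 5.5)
Your skeleton matches the paper's proof: specialize Theorem~\ref{thm:curvature-spectral-perturbation} to $(\widehat K_n,M_n,k_x,m_a)$, use Corollary~\ref{cor:curvature-template-identities} for the $a_m$ terms and $C_M\succeq(4/n)I$ for the $a_\zeta$ terms, then insert the KL envelopes. Two steps, however, do not go through as written.

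The first is a false intermediate bound. The inequality $|s_a^\top R_{0,\lambda,\widehat p}g_\Delta|\le\frac14\sum_b\widehat p_b|N_{ab}|\,|(g_\Delta)_b|$ fails in general. The relation $R_{0,\lambda,\widehat p}\preceq Q/4$ is a Loewner-order statement and says nothing about the off-diagonal entries of $\mathcal A_{0,\lambda,\widehat p}^{-1}$, which is not diagonal unless $\bN$ is. The correct step is Cauchy--Schwarz in the $R_{0,\lambda,\widehat p}$-metric:
$|s_a^\top R_{0,\lambda,\widehat p}g_\Delta|\le\frac14(s_a^\top Qs_a)^{1/2}(g_\Delta^\top Qg_\Delta)^{1/2}\le\frac{K_*}{4}\max_{b,c}|\overline\Delta_{bc}|\le\frac{K_*}{4}E_*(u)$.
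Here $K_*=\sup_xK(x,x)$ bounds $|N_{ab}|$, so $s_a^\top R_{0,\lambda,\widehat p}s_a\le K_*^2/4$, not $K_*/4$. Also $|(g_\Delta)_b|\le\sum_c\widehat p_c|\overline\Delta_{bc}|$ because $|\beta_c|\le1$.

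The second issue is more substantive: your plan for $\sqrt{b^\top C_M^{-1}b}$ lacks the step that produces the stated constants. Suppose you control the block part $\lambda^{-2}g_\Delta^\top R_{0,\lambda,\widehat p}g_\Delta$ through $E_*(u)$, and separately bound $\frac n4\|b_\perp\|_2^2$ by row/degree sums (which in effect bound $\|b\|_2$). Then the block component is charged twice, and you only get $\sqrt{b^\top C_M^{-1}b}\le\sqrt{E_*(u)^2+A_\Delta^2}/(2\lambda)$. $E_*(u)$ then leaks into both square-root products, so the result is not $\operatorname{Curv}_\lambda$.

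The missing observation is that $R_{0,\lambda,\widehat p}\preceq Q/4$ gives $\lambda^{-2}g_\Delta^\top R_{0,\lambda,\widehat p}g_\Delta\le\frac n4\|b_\parallel\|_2^2$. By orthogonality of $b_\parallel$ and $b_\perp$, this yields $b^\top C_M^{-1}b\le\frac n4\|b\|_2^2=a_\Delta^2/(4\lambda^2)$ with $a_\Delta:=n^{-3/2}\|\Delta\mathbf Yc_M\|_2$. Equivalently, apply $C_M\succeq(4/n)I$ directly to $b$. Then $a_\Delta\le\|\Delta\|_{\op}/n$, and from $|(\Delta\mathbf Yc_M)_i|\le\delta_*+L_*d_i$ also $a_\Delta\le\delta_*/n+L_*d_2/n^{3/2}$, so $a_\Delta\le A_\Delta$.

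With $\sqrt{b^\top C_M^{-1}b}\le A_\Delta/(2\lambda)$ and $\sqrt{a_m^\top C_M^{-1}a_m}\le K_*/(2\lambda)$, the two remaining pieces are bounded as follows. The $a_m$ piece is at most $K_*(E_*(u)+A_\Delta)/(8\lambda^2(1-\gamma_{\cspec}))$. The $a_\zeta$ piece is at most $2L_*\sqrt{X_\star(u)}\,A_\Delta/(8\lambda^2(1-\gamma_{\cspec}))$. Together with your first piece, $L_*X_\star(u)/\lambda$, this is exactly the claimed bound.
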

 
The bound has a direct statistical interpretation.  The term
\(L_*X_\star(u)/\lambda\) is the direct test--train collision contribution.
The term involving \(E_*(u)\) is the block-level train--train distortion seen by
the fresh template.  The term involving \(A_\Delta\) is the residual action of
the empirical Gram perturbation after the block projection.  The denominator
\(1-\gamma_{\cspec}\) is a curvature safety factor: if the Gram perturbation is
small relative to the ideal logistic Hessian, the local sensitivity calculation
is stable. 

\begin{remark}[Literal backbones and bias--fluctuation]
\label{rem:literal-backbone-main}
Literal tokens show another limitation of scalar diversity.  Two template families can use the same number of wildcards, and hence have the same \(\rho\), while one contains deterministic literals such as \(\mathtt{if}\), \(\mathtt{then}\), or \(\mathtt{print}\).  The signed discrepancy can then be decomposed as
\[
    \Wcol
    =
    \underbrace{\Back_{\lit}}_{\text{shared-literal scaffold}}
    +
    \underbrace{\Res_{\fr}}_{\text{centered fresh-token residual}}.
\]
The residual \(\Res_{\fr}\) decreases with the alphabet size, whereas the literal backbone \(\Back_{\lit}\) persists even when wildcard collisions are rare.  The bias--fluctuation certificate uses this decomposition by comparing the empirical problem to a literal-corrected population target and then concentrating the centered residual.
\end{remark}

We now state the two main results.  The first is a transfer result: if the
realized collision graph is benign, then the empirical logistic classifier
inherits the ideal template margin.  
For a fresh test string $x\in\cX_a^{\fr}$ and confidence level
$\delta\in(0,1)$, and 
$
u_\delta=\log{11r^2+5r\over\delta}
$
let 
\begin{equation}
	\label{eq:main-new-budget}
	B^\sharp_\lambda(x,\delta)
	:=
	\min\{B_{\cspec},B_{\degact},B_{\bdens},B_{\hanova},B_{\bfdec}\}(\lambda;x,u_\delta),
\end{equation}
with \(B^\sharp_\lambda(x,\delta)=+\infty\) outside
$
E_{\rm rep}:=\{\widehat p_b\ge p_b/2 \text{ for all } b\}, $
denote the minimum of the five collision-graph certificates:
curvature-spectral, degree-action, block-density, Hoeffding--ANOVA, and
bias--fluctuation.

\begin{theorem}[Fresh-symbol classification from the collision graph]
	\label{thm:main}
	Assume the standing model conditions above.  Fix a globally fresh test string \(x\in\cX_a^{\fr}\), a confidence level \(\delta\in(0,1)\), and a ridge parameter \(\lambda>0\).  Then
	\[
	\Pp\!\left(
	\bigl|\widehat f_\lambda(x)-S_\lambda^{\id}(x)\bigl|
	\le B^\sharp_\lambda(x,\delta)
	\right)
	\ge
	1-r e^{-n p_{\min}/8}-\delta.
	\]
\end{theorem}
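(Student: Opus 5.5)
The plan is to read Theorem~\ref{thm:main} as a union-bound assembly of five certificate statements, each proved separately in the appendix, together with the representativeness event $E_{\rm rep}$. Concretely, I will show that for each certificate $\star\in\{\cspec,\degact,\bdens,\hanova,\bfdec\}$ there is a ``good'' event $\mathcal G_\star(u)$ such that on $E_{\rm rep}\cap\mathcal G_\star(u)$ we have $|\widehat f_\lambda(x)-S_\lambda^{\id}(x)|\le B_\star(\lambda;x,u)$. The bound with the minimum $B^\sharp_\lambda$ then holds on the intersection of all five good events, because on that intersection each certificate is simultaneously valid. Outside $E_{\rm rep}$ we have $B^\sharp=+\infty$, so the inequality holds trivially there. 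It remains to show that the failure probability of the good events, restricted to $E_{\rm rep}$, is at most $\delta$ once $u=u_\delta=\log((11r^2+5r)/\delta)$. The $r e^{-np_{\min}/8}$ term comes from Theorem~\ref{thm:idealmargin}. More precisely, the two failure terms enter additively: $\Pp(\text{fail})\le \Pp(E_{\rm rep}^c)+\Pp(E_{\rm rep}\cap\bigcup_\star\mathcal G_\star^c)$.

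The key steps are as follows. First, condition on the template assignment $\tau$; on $E_{\rm rep}$ every block is nonempty, so Corollary~\ref{cor:curvature-template-identities} applies. Second, use the decomposition
\[
\widehat f_\lambda(x)-S_\lambda^{\id}(x)=\tfrac1{\lambda n}\zeta^\top(\mathbf y\odot c_{M_n})+\tfrac1{\lambda n}k_x^\top\mathbf y\odot(c_{\widehat K_n}-c_{M_n}),
\]
together with Theorem~\ref{thm:idealreduction}, to identify $s_{m_a,M_n}=S^{\id}_\lambda(x)$. With that identification, Lemma~\ref{thm:detperturb} and Theorem~\ref{thm:curvature-spectral-perturbation} give deterministic bounds in terms of $\Delta$ and $\zeta$. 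Third, for each certificate, bound the deterministic quantities ($\|\Delta\|_{\op}$, $A_\Delta$, $g_\Delta$, $b_\perp$, $\gamma_{\cspec}$, weighted degrees, block averages $\overline\Delta$, the test sum $X_\star$) by concentration statements about the collision graph, exploiting the support property $\Bad^c_{ij}\Rightarrow\Delta_{ij}=0$.

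For the curvature-spectral certificate this is exactly Corollary~\ref{cor:Mn-level-curvature-spectral-upper-bound} on the KL block/test event of Theorem~\ref{thm:klbdens-kl-block-envelope}. The other four come from the analogous appendix corollaries, which I take as given: degree counts via Chernoff bounds on $\sum_j\Ccol_{ij}$, a Hoeffding decomposition of the U-statistic $\overline\Delta_{bc}$, and the literal-backbone/residual split for the bias--fluctuation bound. Each of these comes with an explicit count of tail events, and each event has probability at most $e^{-u}$. The per-color-pair events number at most $r^2$ per family, and the per-template or test events at most $r$ per family. Summing gives at most $(11r^2+5r)e^{-u}$, and setting $u=u_\delta$ makes this total equal to $\delta$.

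The main obstacle is measurability and conditioning. The certificates are stated conditionally on $\tau$, while $E_{\rm rep}$ is a $\tau$-measurable event. I will therefore apply each conditional tail bound for fixed $\tau\in E_{\rm rep}$ and then integrate over $\tau$, which yields $\Pp(E_{\rm rep}\cap\bigcup_\star\mathcal G_\star^c)\le\delta$. Two further points need care. First, the KL block events use the true $p$ only through $\widehat p_b\ge p_b/2$, so they remain valid uniformly on $E_{\rm rep}$. Second, I must check that the envelope counts across the five certificates genuinely add up to $11r^2+5r$, with shared events counted only once. If that bookkeeping fails, I would instead allocate $\delta/5$ to each certificate, which changes only the constant inside $u_\delta$.
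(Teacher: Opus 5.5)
Your proposal is correct and follows the paper's proof. The paper conditions on a template assignment in $E_{\rm rep}$ and intersects three events: the KL block/test event (failure $(r^2+r)e^{-u}$, shared by the BD, DEG and CS budgets), the Hoeffding--ANOVA event and the literal-corrected centered event (each $(5r^2+2r)e^{-u}$). It then sets $u=u_\delta$, integrates over $\tau$, and uses $B^\sharp_\lambda=+\infty$ on $E_{\rm rep}^c$, so the $re^{-np_{\min}/8}$ term is actually slack.

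One small correction: the degree-action and curvature-spectral budgets in this theorem use the realized $A_\Delta$ and $\gamma_{\cspec}$. No degree Chernoff events enter here (those belong to Theorem~\ref{thm:main-edge-wedge}), which is why the count closes exactly at $11r^2+5r$ and your $\delta/5$ fallback is unnecessary.
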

In particular, whenever $\lambda$ is small enough that the ideal template
margin exceeds $\varepsilon+s$, a certificate of size at most $s$, $ B^\sharp_\lambda(x,\delta)\le s$ guarantees
correct fresh-symbol classification with margin $\varepsilon$, $y_a\widehat f_\lambda(x)>\varepsilon$.
\begin{figure}[t]
	\centering
	\includegraphics[width=\linewidth]{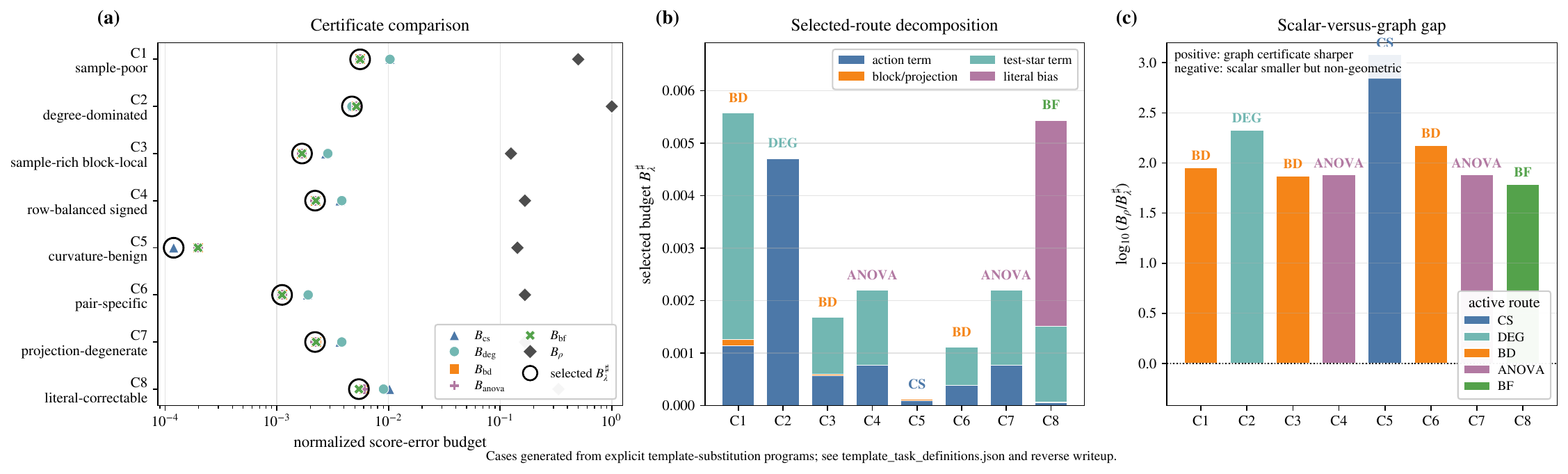}
	\caption{\textbf{Collision-graph bounds.} Eight explicit finite template tasks are generated by wildcard substitutions and literal hits, after which
the colored collision graph and all five Theorem~\ref{thm:main} certificates are computed.  Panel (a) compares the
five graph certificates to the scalar proxy $B_\rho=w_{\max}/\rho$.  Panel (b) decomposes the selected
route.  Panel (c) plots $\log_{10}(B_\rho/B^\sharp_\lambda)$, with positive bars indicating an improvement
over the scalar proxy.}
	\label{fig:reverse-template-certificates}
\end{figure}
Figure~\ref{fig:reverse-template-certificates} is a mechanism check for the certificate, not a new benchmark.  All eight examples use the same three-template family and labels; only the concrete substitutions are changed, so the ideal template problem is held fixed while the collision geometry varies.  The selected route tracks the obstruction that remains after the ideal margin is factored out: block-density wins when the collisions are few or confined to relevant color pairs; degree-action wins for a dense core whose main effect is row action; ANOVA wins when signed collisions cancel in the block projections; curvature-spectral wins for a single weak perturbation; and bias--fluctuation wins when many apparent edges come from a literal-induced backbone that can be absorbed into a corrected target.  This explains the gap in Panel~(c): \(B_\rho=w_{\max}/\rho\) sees only the worst marginal token-reuse scale, whereas \(B^\sharp_\lambda\) asks which realized collisions survive the dual weights, the template colors, and the local logistic curvature.  Full construction details appear in Appendix~\ref{sec:supp-reverse-template-examples}.

The second result shows that this random certificate is itself controlled
with high probability by explicit edge--wedge collision probabilities.  Let
$
B_{\lambda} (x,\delta,\eta)
$
denote the resulting  collision envelope separating collision-driven gains from row-action, projection,
literal-bias, and ellipsoid-geometry effects. 

\begin{theorem}[Edge--wedge control of the certificate]
	\label{thm:main-edge-wedge}
	Assume the standing model conditions.  Fix a globally fresh test string
	$x\in\cX_a^{\fr}$, a ridge parameter $\lambda>0$, and
	$\delta,\eta\in(0,1)$.  Then
	$
	\Pp (
	B^\sharp_\lambda(x,\delta)
	\le   B_{\lambda} (x,\delta,\eta)
	)
	\ge
	1-\eta .
	$
\end{theorem}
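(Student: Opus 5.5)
The plan is to bound the random certificate \(B^\sharp_\lambda(x,\delta)\) by bounding just one of its five arguments, since \(B^\sharp_\lambda\) is a minimum. Because \(B_\lambda(x,\delta,\eta)\) is described as an envelope that separates row-action, projection, literal-bias and ellipsoid-geometry effects, I will read it as the minimum of five deterministic envelopes \(\overline B_{\cspec},\overline B_{\degact},\overline B_{\bdens},\overline B_{\hanova},\overline B_{\bfdec}\). Each \(\overline B_\bullet\) is obtained from the random \(B_\bullet\) by replacing every graph statistic with an explicit function of the edge probabilities \(q_{bc}\) (and the analogous test probabilities \(q_{0b}\)) and of the wedge probabilities (two edges sharing a vertex). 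The statistics in question are the degrees \(d_i\), \(d_2\), the block averages \(\overline\Delta_{bc}\), the action \(A_\Delta\), the test envelope \(X_\star(u)\), the residual \(\Res_{\fr}\), and \(\|\Delta\|_{\op}\) through \(\gamma_{\cspec}\). The key reduction is that each \(B_\bullet(\lambda;x,u_\delta)\) is coordinatewise nondecreasing in these statistics on the set where its denominator \(1-\gamma_{\cspec}\) stays positive; otherwise the certificate is \(+\infty\). Outside \(E_{\rm rep}\) we also have \(B^\sharp=+\infty\). So I will handle \(E_{\rm rep}^c\) either by folding its probability into \(\eta\) or by building \(B_\lambda\) to be \(+\infty\) there, and I will fix whichever convention the appendix uses.

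The core step is concentration of the graph statistics, conditional on the template assignment \(\tau\). Given \(\tau\), the substitutions \(S_i\) are independent. Each edge count \(\sum_{i\in I_b,j\in I_c}\Ccol_{ij}\) is then a U-statistic of order two. Its mean is \(n_bn_cq_{bc}\), and its variance is controlled by the edge probability plus the wedge probability \(\Pp(\Bad_{ij}\cap\Bad_{ik})\). I will apply Bernstein's inequality for U-statistics, or alternatively Chebyshev's inequality with the edge-plus-wedge variance followed by a union bound, to each of the \(O(r^2)\) color pairs and to the test row, each at level \(\eta/(\text{number of events})\). The bounded-diagonal assumption gives \(|\Delta_{ij}|\le 2\|K\|_\infty\Ccol_{ij}\) and \(|\zeta_i|\le2\|K\|_\infty\Ccol_{0i}\). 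This converts collision counts into bounds on \(\overline\Delta\), \(A_\Delta\), \(E_*\) and \(X_\star\). Degrees \(d_i\) are sums of independent indicators given \(S_i\), so Chernoff's inequality plus a union over \(i\) handles \(\max_i d_i\) and \(d_2\). For the spectral quantity I will use \(\|\Delta\|_{\op}\le\max_i\sum_j|\Delta_{ij}|\) (the Gershgorin/row-sum bound), so \(\gamma_{\cspec}\le\|\Delta\|_{\op}/(4\lambda n)\) is controlled by the degree envelope and \(1-\gamma_{\cspec}\) is bounded below on the good event.

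For the bias--fluctuation route, I will subtract the deterministic literal backbone \(\Back_{\lit}\), whose entries depend only on \((\tau(i),\tau(j))\). I will then concentrate the centered residual \(\Res_{\fr}\), using a Hoeffding decomposition into a linear part and a degenerate part, with the wedge probabilities as the variance proxy. The ANOVA route is handled the same way. The steps are then combined as follows. On the intersection of all the concentration events, which has probability at least \(1-\eta\), every statistic is at most its envelope. By monotonicity each \(B_\bullet\le\overline B_\bullet\), and so \(B^\sharp_\lambda\le\min_\bullet\overline B_\bullet=B_\lambda\). Since the bound holds conditionally on every \(\tau\), integrating over \(\tau\) removes the conditioning.

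The main obstacle I expect is that the wildcard collisions are not independent across pairs: \(\Ccol_{ij}\) and \(\Ccol_{ik}\) share \(S_i\). The degenerate, wedge-driven part of the U-statistic therefore has to be controlled carefully, so that the envelope scales with wedge counts rather than with crude squared edge counts. My first attempt would be a decoupling inequality (de la Peña--Montgomery-Smith) followed by conditional Bernstein. A second difficulty is verifying that the monotonicity claim actually holds for the curvature-spectral certificate, since it involves \(\gamma_{\cspec}\) in a denominator and signed quantities. If it fails there, I would fall back to bounding that certificate via absolute values and \(A_\Delta\) only.
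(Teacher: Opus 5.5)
Your skeleton is the paper's: bound the random inputs of $B^\sharp_\lambda$ on a high-probability event conditional on $\tau$, then use monotonicity of $\operatorname{Ord}_\lambda$ and $\operatorname{Curv}_\lambda$, with $\operatorname{Curv}_\lambda=+\infty$ once $\bar\gamma_{\cspec}(v)\ge1$. Your max-degree step is also exactly the paper's route to $A_{\op}(v)$ and $\bar\gamma_{\cspec}(v)$: Bernstein for $d_i$ given $S_i$, a union over $i$, $\|\Delta\|_{\op}\le\delta_*+L_*d_{\max}$, and $\gamma_{\cspec}\le\|\Delta\|_{\op}/(4\lambda n)$.

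However, you misidentify what is random in $B^\sharp_\lambda(x,\delta)$. Given $\tau$, the slots $E_*(u_\delta)$, $X_\star(u_\delta)$, $D_\bullet(u_\delta)$, $Z_\bullet(u_\delta)$ and $B_{\mathrm{bias}}$ are already deterministic KL/Bernstein envelopes. The concentration of block averages, test rows and the literal-centered Hoeffding residual was used up in Main Theorem~1. The only random inputs are $A_\Delta$, $A_\Delta^\circ$ and $\gamma_{\cspec}$. The envelope $B_\lambda$ replaces exactly these by $A_0(v)=\min\{A_{\ew}(v),A_{\op}(v)\}$, $A_{\corr}(v)$ and $\bar\gamma_{\cspec}(v)$. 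Your U-statistic and decoupling work on edge counts and on $\Res_{\fr}$ therefore never touches a quantity appearing in $B^\sharp_\lambda$. Worse, if you substituted such bounds for $E_*(u_\delta)$ and the like in your envelope, nothing would make them dominate the envelopes already in $B^\sharp_\lambda$, so $B^\sharp\le\overline B$ would not follow.

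The substantive gap is that the edge--wedge mechanism itself is missing. Your only route to $d_2$ is $d_2\le\sqrt n\,d_{\max}$. That yields $A_\Delta\le A_{\op}(v)$, but neither $A_\Delta\le A_{\ew}(v)$ nor $A_\Delta^\circ\le A_{\corr}(v)$. The max-degree bound also runs on worst-anchor rates $\kappa_{bc}=\sup_s\Pp(\Bad_{bc}(s,V))$, which equal $1$ as soon as some substitution in the support of $\mu_{{\rm sub},b}$ puts a wildcard on a literal of $z_c$. The paper's averaged bound instead rests on the exact identity
\[
  d_2^2=\sum_i d_i^2
  =\sum_i\sum_{j\ne i}\Ccol_{ij}
  +\sum_i\sum_{\substack{j,k\ne i\\ j\ne k}}\Ccol_{ij}\Ccol_{ik}.
\]
Here wedges enter as a summand of $d_2^2$, not as a variance proxy for edge counts.

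The edge sum is controlled by matchings plus the partition-KL bound. For the wedge sum, fix colors $(b;c,d)$ and greedily color the $3$-uniform hypergraph of triples $\mathcal T_{b;cd}$ into at most $3\Delta^\wedge_{b;cd}+1$ vertex-disjoint classes. Within each class the indicators $\Ccol_{ij}\Ccol_{ik}$ are independent, with mean $\alpha_{b;cd}=\E[\pi_{b\to c}(S)\pi_{b\to d}(S)]$ obtained by conditioning on the anchor $S_i$. Partition-KL with $N^\wedge_{b;cd}$ and a union over the $r^3$ color triples then gives $S_\wedge(v)$. This, not decoupling, is how the shared-anchor dependence you flagged is handled. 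Without it your argument proves domination only by the weaker max-degree envelope, not by $B_\lambda(x,\delta,v_\eta)$.

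Your caution about $E_{\rm rep}^c$ is justified. The paper proves the bound conditionally on $\tau$ under $E_{\rm rep}$, and the unconditional form costs an extra $re^{-np_{\min}/8}$.
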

Combining this bound with Theorem~\ref{thm:main} gives
\[
\Pp\!\left(
|\widehat f_\lambda(x)-S_\lambda^{\id}(x)|
\le  B_{\lambda}  (x,\delta,\eta)
\right)
\ge
1-r e^{-np_{\min}/8}-\delta-\eta .
\]
Thus the empirical logistic score is controlled by an explicit edge--wedge
collision envelope.

 Taking \(\varepsilon=0\), fixint \(\tau>0\) and \(\eta\in(0,1)\), and setting
$
  v_\eta:=\log {(r^2+r^3+1)/\eta} 
$
as long as 
$
  0<\lambda<\lambda_\tau^{\id}
$ and $
  B_{\lambda} (x,\delta,v_\eta)\le \tau,
$
then the transformer is correctly classifying the unseen symbol  a 
\(x\in\cX_a^{\fr}\) i.e.
$$
  \Pp\!\left(
    \operatorname{sign}(\widehat f_\lambda(x))\ne y_a
  \right)
  \le
  r e^{-n p_{\min}/8}+\delta+\eta .
$$

\section{Multiclass Classification}\label{sec:multiclass-softmax-extension}

The binary formulation keeps the notation light, but the collision-graph
argument is not tied to two labels.  The graph only controls how concrete
substitutions perturb the sample kernel.  For a fixed label set \([L]\), the
same comparison is applied to class contrasts of a softmax classifier.  The
only changes are the form of the quotient estimator and the dual residuals that
weight the collision graph.  This fixed-label setting should be distinguished from next-token or copy-style tasks, where the output alphabet itself may contain unseen symbols \cite{Boix2024,LazicEtAl2026Unseen}.

Let the template labels be \(y_a\in[L]\).  For a score vector \(u\in\R^L\), set
\[
  \ell_{\rm sm}(u,y)
  :=
  \log\!\left(\sum_{\ell=1}^L e^{u_\ell}\right)-u_y .
\]
For \(q\in\Delta_r\), the ideal template problem is
\begin{equation}
\label{eq:sm-template-objective-main}
  \Phi^{\rm sm}_{q,\lambda}(G)
  :=
  \sum_{a=1}^r q_a\ell_{\rm sm}(G_{a\cdot},y_a)
  +
  {\lambda\over2}
  \sum_{\ell=1}^L
  G_{\cdot\ell}^{\top}\bN^{-1}G_{\cdot\ell},
  \qquad
  G\in\R^{r\times L}.
\end{equation}
We write \(G_\lambda^{\id,{\rm sm}}\) for the minimizer of
\eqref{eq:sm-template-objective-main} with \(q=\widehat p\).  Thus a fresh
string \(x\in\cX_a^{\fr}\) has ideal score row
\[
  S_\lambda^{\id,{\rm sm}}(x)
  :=
  (G_\lambda^{\id,{\rm sm}})_{a\cdot}\in\R^L .
\]
The ideal multiclass margin is
\[
  m_{\rm sm}(G)
  :=
  \min_{a\in[r]}\min_{\ell\ne y_a}
  \{G_{a,y_a}-G_{a,\ell}\}.
\]
Exactly as in the binary quotient problem, small ridge regularization gives a
positive template margin.  For every \(\Gamma\ge0\), there is
\(\lambda_{\Gamma,L}^{\id}>0\), depending only on
\(\Gamma,L,p_{\min},\bN\), and the template labels, such that on
\(E_{\rm rep}\), every \(0<\lambda<\lambda_{\Gamma,L}^{\id}\) satisfies
\[
  m_{\rm sm}(G_\lambda^{\id,{\rm sm}})>\Gamma .
\]
The proof is the same comparison argument as Theorem~\ref{thm:ideal-margin},
using \(\log(1+(L-1)e^{-t})\) for a score vector with margin \(t\).

We next record the corresponding dual representation.  Let
\(E_Y\in\R^{n\times L}\) have row \(e_{Y_i}^\top\), and let
\[
  \Delta_L:=\{p\in[0,1]^L:\sum_{\ell=1}^L p_\ell=1\},
  \qquad
  \psi(p):=\sum_{\ell=1}^L p_\ell\log p_\ell .
\]
For a positive semidefinite sample matrix \(\mathsf K\), define
\begin{equation}
\label{eq:sm-dual-main}
  D_{\mathsf K}^{\rm sm}(\mathsf P)
  :=
  {1\over n}\sum_{i=1}^n\psi(\mathsf P_{i\cdot})
  +
  {1\over2\lambda n^2}
  \operatorname{tr}\!\left(
    R(\mathsf P)^\top \mathsf K R(\mathsf P)
  \right),
  \qquad
  R(\mathsf P):=E_Y-\mathsf P,
\end{equation}
with \(\mathsf P\in\Delta_L^n\).  The entropy term is strongly convex on each
simplex row, hence \(D_{\mathsf K}^{\rm sm}\) has a unique minimizer
\(\mathsf P_{\mathsf K}\).  The fitted score vector at a test point is
\begin{equation}
\label{eq:sm-empirical-score-main}
  \widehat s_\lambda(x)
  =
  {1\over\lambda n}
  R(\mathsf P_{\widehat K_n})^\top k_x
  \in\R^L,
\end{equation}
and the predicted class is \(\arg\max_\ell \widehat s_{\lambda,\ell}(x)\).

The ideal reduction remains finite-dimensional.  Let
\(\Pi_{ia}=\1_{\{\tau(i)=a\}}\), so \(M_n=\Pi\bN\Pi^\top\), and let
\(E_y\in\R^{r\times L}\) have row \(e_{y_a}^\top\).  Since the ideal dual
objective is invariant under permutations inside each template block,
\[
  \mathsf P_{M_n}=\Pi\Theta
  \qquad\text{for some }\Theta\in\Delta_L^r .
\]
With
\[
  B:=E_y-\Theta,
  \qquad
  Q:=\diag(\widehat p),
\]
the quotient score matrix is
\begin{equation}
\label{eq:sm-ideal-score-main}
  G_\lambda^{\id,{\rm sm}}
  =
  {1\over\lambda}\bN Q B .
\end{equation}
Indeed, \(\Theta_{a\cdot}=\operatorname{softmax}((G_\lambda^{\id,{\rm sm}})_{a\cdot})\), and
\eqref{eq:sm-ideal-score-main} is equivalent to the first-order condition
\[
  Q(\Theta-E_y)+\lambda\bN^{-1}G_\lambda^{\id,{\rm sm}}=0 .
\]
Consequently, for \(x\in\cX_a^{\fr}\),
\[
  {1\over\lambda n}
  R(\mathsf P_{M_n})^\top m_a
  =
  (G_\lambda^{\id,{\rm sm}})_{a\cdot}^\top .
\]
Thus the scalar ideal score \(g_\lambda^{\id}\) is replaced by the row-valued
estimator \(G_\lambda^{\id,{\rm sm}}\), but the template reduction is otherwise
unchanged.

It remains to transfer the ideal margin through the collision perturbation.  For
\(x\in\cX_a^{\fr}\) and \(\ell\ne y_a\), define the class contrast
\[
  v_{a\ell}:=e_{y_a}-e_\ell .
\]
The binary proof applies to the scalar score
\(v_{a\ell}^\top\widehat s_\lambda(x)\).  In the five certificates, the binary
signed residual \(\mathbf Yc_M=P\beta\) is replaced by the softmax contrast
residual
\begin{equation}
\label{eq:sm-residual-replacement-main}
  R(\mathsf P_{M_n})v_{a\ell}
  =
  \Pi Bv_{a\ell}.
\end{equation}
The support lemma, block-density bounds, ANOVA bounds, literal correction, and
edge--wedge envelope are unchanged, since none of them uses the labels.  Let
\(B_\lambda^{\sharp,{\rm sm}}(x,a,\ell;\delta)\) denote the resulting contrast
certificate, with
\[
  u_{\delta,L}:=
  \log{(L-1)(11r^2+5r)\over\delta}
\]
in place of \(u_\delta\).  Equivalently, this is the binary certificate applied
to the contrast after the replacement \eqref{eq:sm-residual-replacement-main}.

\begin{theorem}[Multiclass fresh-symbol transfer]
\label{thm:multiclass-main}
Assume the standing model conditions, with labels in \([L]\).  Fix
\(x\in\cX_a^{\fr}\), \(\lambda>0\), and \(\delta\in(0,1)\).  Then
\[
  \Pp\!\left(
  \max_{\ell\ne y_a}
  \left|
    v_{a\ell}^\top
    \{\widehat s_\lambda(x)-S_\lambda^{\id,{\rm sm}}(x)\}
  \right|
  \le
  \max_{\ell\ne y_a}
  B_\lambda^{\sharp,{\rm sm}}(x,a,\ell;\delta)
  \right)
  \ge
  1-r e^{-np_{\min}/8}-\delta .
\]
On this event, if for some \(\varepsilon\ge0\) and \(s>0\),
\[
  (G_\lambda^{\id,{\rm sm}})_{a,y_a}
  -
  \max_{\ell\ne y_a}(G_\lambda^{\id,{\rm sm}})_{a,\ell}
  >
  \varepsilon+s
, \qquad \mbox{ and } \qquad 
  \max_{\ell\ne y_a}
  B_\lambda^{\sharp,{\rm sm}}(x,a,\ell;\delta)
  \le s,
\]
then
\[
  \widehat s_{\lambda,y_a}(x)
  -
  \max_{\ell\ne y_a}\widehat s_{\lambda,\ell}(x)
  >
  \varepsilon .
\]
In particular, for \(\varepsilon=0\), the empirical softmax classifier predicts
the correct fresh-symbol class.
\end{theorem}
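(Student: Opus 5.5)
The proof reduces to the binary argument applied once per class contrast, followed by a union bound over the $L-1$ contrasts. Fix $x\in\cX_a^{\fr}$ and $\ell\ne y_a$, and set $v=v_{a\ell}$. By \eqref{eq:sm-empirical-score-main}, the contrast score is
\[
v^\top\widehat s_\lambda(x)={1\over\lambda n}k_x^\top R(\mathsf P_{\widehat K_n})v .
\]
By the ideal reduction preceding the theorem, $v^\top S_\lambda^{\id,{\rm sm}}(x)={1\over\lambda n}m_a^\top R(\mathsf P_{M_n})v$. The difference then splits exactly as in the binary case:
\[
v^\top\{\widehat s_\lambda(x)-S_\lambda^{\id,{\rm sm}}(x)\}
={1\over\lambda n}\zeta^\top R(\mathsf P_{M_n})v+{1\over\lambda n}k_x^\top\{R(\mathsf P_{\widehat K_n})-R(\mathsf P_{M_n})\}v .
\]
The first term is the test--train collision term, with $\mathbf Yc_M$ replaced by $\Pi Bv$ as in \eqref{eq:sm-residual-replacement-main}. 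The second term is the dual movement term.

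For the dual movement I would redo the argument of Lemma~\ref{thm:detperturb} and Theorem~\ref{thm:curvature-spectral-perturbation} for $D^{\rm sm}$. The negative entropy $\psi$ has Hessian $\diag(1/p_\ell)\succeq I$ on each simplex row, which replaces the bound $\varphi''\ge 4$. Comparing the first-order optimality conditions for $\mathsf K=\widehat K_n$ and $\mathsf K=M_n$ on the product of simplices, and linearizing around $\mathsf P_{M_n}$, gives
\[
\mathsf P_{\widehat K_n}-\mathsf P_{M_n}\approx -\mathcal C^{-1}\bigl({1\over\lambda n^2}\Delta R(\mathsf P_{M_n})\bigr).
\]
Here $\mathcal C$ is the softmax dual Hessian restricted to the tangent space of the product of simplices. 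Contracting against $k_x$ and $v$ produces the same $\mathcal E_M$-type bilinear bound, with $b$ built from $\Delta\Pi Bv$ and the exact, non-linearized remainder controlled by the factor $1/(1-\gamma)$.

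The main obstacle is this curvature step. In the binary case the Hessian is scalar-diagonal up to the label signs; here it acts on $n\times L$ matrices with row-sum constraints, so the quantities $a$ and $b$ become $L$-column objects. The plan is to lower-bound $\mathcal C$ by $n^{-1}I\otimes I_L$ plus the kernel part, which gives the same spectral quantities as in the binary certificates up to the constant $4$ becoming $1$. Since the certificate $B^{\sharp,{\rm sm}}_\lambda$ is defined by this replacement, no sharper constant is needed.

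Once the deterministic contrast bound is in place, the graph-side estimates carry over unchanged: the support lemma, the block-density, ANOVA and literal-correction bounds, and Corollaries~\ref{cor:curvature-template-identities} and~\ref{cor:Mn-level-curvature-spectral-upper-bound} with $\beta$ replaced by $Bv$. None of these uses the labels, and $\Pi Bv$ is block-constant with entries bounded by $2$. Running the probabilistic part of Theorem~\ref{thm:main} for each contrast at level $\delta/(L-1)$, with $u_{\delta,L}$ in place of $u_\delta$, and taking a union bound over the $L-1$ choices of $\ell$ and over $E_{\rm rep}$ gives the displayed probability bound.

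The margin conclusion is then deterministic. On the event, for each $\ell\ne y_a$,
\[
\widehat s_{\lambda,y_a}(x)-\widehat s_{\lambda,\ell}(x)\ge (G^{\id,{\rm sm}}_\lambda)_{a,y_a}-(G^{\id,{\rm sm}}_\lambda)_{a,\ell}-s>\varepsilon .
\]
Taking the minimum over $\ell\ne y_a$ gives the claim, and $\varepsilon=0$ yields the correct argmax.
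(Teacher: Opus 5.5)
Your route is the paper's route. You express the contrast score through the softmax dual, split off the test--train term $\frac{1}{\lambda n}\zeta^\top R(\mathsf P_{M_n})v$, control the dual movement by the curvature argument, reuse the label-free graph estimates, and union-bound with $u_{\delta,L}$. The closing deterministic margin comparison is correct.

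The step that fails as you state it is the dual-movement bound ``with $b$ built from $\Delta\Pi Bv$''. The paper's sketch is equally terse here; it only says $\mathbf Yc_M$ is replaced by $\Pi Bv$. The movement $\delta:=\mathsf P_{\widehat K_n}-\mathsf P_{M_n}$ is driven by the full gradient perturbation $b=-\frac{1}{\lambda n^2}\Delta\Pi B\in\R^{n\times L}$. The linearized map $b\mapsto\delta v$ does not factor through $bv$, because the entropy Hessian $\frac1n\diag(1/\mathsf P_{i\ell})$ on the row-sum-zero tangent space mixes classes within each row. To see this, drop the kernel part. Then $\delta_{i\ell}=\mathsf P_{i\ell}(\nu_i-nb_{i\ell})$, with $\nu_i$ fixed by $\sum_\ell\delta_{i\ell}=0$. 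Taking $b_{i\cdot}\propto(1,1,-2)$ and $v=e_1-e_2$ gives $bv=0$, but $\delta_{i1}-\delta_{i2}\propto\mathsf P_{i3}(\mathsf P_{i1}-\mathsf P_{i2})\neq0$.

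Your scalar lower bound $\mathcal C(D)=\frac1nD+\frac{1}{\lambda n^2}M_nD$ does not fix this. It acts by left multiplication, so $(\mathcal C^{-1}b)v=\mathcal C^{-1}(bv)$. Hence only the alignment term $\langle a,\mathcal C^{-1}b\rangle$, with $a=\frac{1}{\lambda n}k_xv^\top$, reduces to $bv$. The completed-square remainder $\sqrt{\langle a,\mathcal C^{-1}a\rangle}\sqrt{\langle b,\mathcal C^{-1}b\rangle}$ still involves all $L$ columns of $\Delta\Pi B$.

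The repair is cheap and keeps the certificate graph-level. Each row of $\Pi B$ is $e_{y_b}-\Theta_{b\cdot}$, whose Euclidean norm is at most $\sqrt2(1-\Theta_{b,y_b})\le\sqrt2$. Hence $\|\Delta\Pi B\|_F\le\sqrt{2n}\,\|\Delta\|_{\mathrm{op}}$ and $\|(\Delta\Pi B)_{i\cdot}\|_2\le\sqrt2(\delta_*+L_*d_i)$, so $n^{-3/2}\|\Delta\Pi B\|_F\le\sqrt2\,A_\Delta$.

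In the degree-action and curvature-spectral routes, the action term must therefore be charged with $n^{-3/2}\|\Delta\Pi B\|_F$. In the strong-convexity route, also use $\|\delta v\|_2\le\sqrt2\|\delta\|_F$. The replacement $\mathbf Yc_M\to\Pi Bv$ is legitimate only in the test term and in the block-alignment term $\overline\Delta QBv$.

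Together with curvature constant $1$ in place of $4$ and $|(Bv)_b|\le2$, this changes only numerical constants. So $B^{\sharp,{\rm sm}}_\lambda$ must be read as the certificate this argument produces, not literally the binary formula. Your remark that no sharper constant is needed is right only under that reading.

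Finally, the union bound over $\ell$ is valid but unnecessary. None of the KL, ANOVA, or corrected-target concentration events depends on $\ell$ or on the labels.
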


This theorem is not a new graph estimate; it is the same collision-graph
transfer applied to the \(L-1\) class contrasts.  The union bound over those
contrasts is the only source of the additional factor \(L-1\) in
\(u_{\delta,L}\).  Combining the theorem with the ideal multiclass margin above
gives the same conclusion as in the binary case: once the quotient margin is
larger than the multiclass contrast certificate, fresh-symbol classification
transfers from the ideal template problem to the empirical softmax classifier.

\section{Margin Gains from Abstract Prompting}\label{sec:abstract-prompting-main}
We next extend the collision-perturbation viewpoint to abstraction-based
reasoning methods such as AbstRaL~\cite{gao2026abstral},
Chain-of-Abstraction~\cite{gao2025coa}, and SyReLM~\cite{dutta2024syrelm}.
This connects to a broader tradition of modular, tool-using, and neuro-symbolic approaches that insert intermediate structure before prediction \cite{AndreasEtAl2016,SantoroEtAl2017,LiuEtAl2020,NyeEtAl2020}.  Our question is narrower: when does such an abstraction improve fresh-symbol classification in the kernel regime?  The answer is not automatic.  Abstraction helps only when it increases the quotient-template margin by more than the additional collision and attention interactions introduced by the abstraction tokens.

Let \(\cA\) be an abstraction space.  An abstraction channel is a Markov kernel
$
    \nu_a(\cdot\mid x)
    =
    \mathcal L(A\mid X=x,\ Z=z_a),
    \  x\in\cX_a .
$
We write
$
    A=\Anc(X,\xi),
$
where \(\xi\) denotes prompting, decoding, parsing, retrieval, or other internal
randomness.  In the AbstRaL example, \(A\) may be interpreted as the generated
abstract answer, the retrieved symbolic abstraction, or the tuple consisting of
the recognized conditions, abstract question, abstract answer, and retrieved
abstraction.  In Chain-of-Abstraction, \(A\) may be the abstract placeholder
chain.  In formalize-then-solve methods, \(A\) may be the formal-language
expression passed to a symbolic solver.

We use the limiting frozen-feature transformer kernel derived in
Appendix~\ref{app:transformer-kernel-interface}.  Recall that
\[
    K_{\mathrm{trans}}(X,Y)
    =
    \E_{u,v}[\sigma(u)\sigma(v)],
\]
where \((u,v)\) is a centered Gaussian pair with covariance
$
    \begin{pmatrix}
        K_{\mathrm{attn}}(X,X) & K_{\mathrm{attn}}(X,Y)\\
        K_{\mathrm{attn}}(Y,X) & K_{\mathrm{attn}}(Y,Y)
    \end{pmatrix}
$ and  \(\sigma\in C^2\).
The attention kernel \(K_{\mathrm{attn}}\) is
\[
K_{\mathrm{attn}}(X,Y)
=
\E_{m(X),m(Y)}
\left[
\operatorname{softmax}(\beta m(X))^\top
(XY^\top+\gamma^2I)
\operatorname{softmax}(\beta m(Y))
\right].
\]

 Let
$
    U_\eta(x,a)
$
be the fixed-length embedded sequence obtained from the concrete input \(x\)
and abstraction \(a\), where \(\eta\ge0\) controls the strength of the
abstraction prompt.  We assume the prompt is turned off at \(\eta=0\):
$
    U_0(x,a)=U_{\mathrm{base}}(x)
   \  \text{for all }a.
$
Define the prompted transformer kernel by composition:
\[
    K_{\mathrm{trans},\eta}((x,a),(x',a'))
    :=
    K_{\mathrm{trans}}
    \!\left(
        U_\eta(x,a),U_\eta(x',a')
    \right).
\]
The effect of the
abstraction may pass through attention between original tokens and abstraction
tokens, through the MLP covariance map, or through both. We will assume bounded abstraction prompts inserted
through a smooth gate. For templates \(z_a,z_b\), let
$
    X_a=\Sub(z_a,S_a),
    \ 
    A_a=\Anc(X_a,\xi_a),
$
and let \((X_b',A_b')\) be an independent fresh augmented instantiation of
\(z_b\).  Define
\[
    N_{\eta,ab}
    :=
    \E\!\left[
        K_{\mathrm{trans},\eta}
        ((X_a,A_a),(X_b',A_b'))
    \right].
\]
Since \(U_0(x,a)=U_{\mathrm{base}}(x)\), the abstraction is absent at
\(\eta=0\), and \(N_{0,ab}=N_{ab}\). In the Appendix \ref{sec:abstract-prompting-ktrans} we formally show 
\[
    \bN_\eta=\bN+\eta H+o(\eta),
\]
with \[
    H_{ab}
    :=
    \E\!\left[
        \dot K_{\mathrm{trans},0}
        ((X_a,A_a),(X_b',A_b'))
    \right],
    \qquad
    \dot K_{\mathrm{trans},0}
    :=
    {d\over d\eta}K_{\mathrm{trans},\eta} |_{\eta=0}.
\]
The matrix \(H\) is the first-order template-level effect of abstract prompting.
It includes abstraction-token effects, cross-attention effects, and the
downstream MLP covariance response.    In general \(H\) need not be positive
semidefinite.

The next result makes the limitation precise.  The matrix \(H\) is the first-order change induced in the
fresh-template transformer kernel by the prompted abstraction, including any
cross-attention between the original input and the abstraction tokens.  Thus the
sign of the margin gain is determined by how this induced template direction
aligns with the original ideal classifier.

\begin{theorem}[When abstract prompting helps]
\label{thm:main-abstract-prompting}
Let \(g_\eta\) be the ideal template logistic scorer associated with
\(\bN_\eta\).  Then, at \(\eta=0\),
\[
    \left.{d\over d\eta}g_\eta\right|_{\eta=0}
    =
    \lambda J_0^{-1}\bN^{-1}H\bN^{-1}g_0,
\qquad \mbox{ where,} \ 
    J_0
    =
    \diag\!\left(q_a\loss''(y_a(g_0)_a)\right)_{a=1}^r
    +
    \lambda\bN^{-1}.
\]
Thus the first-order gain in signed margin on template \(a\) is
$
    \mathfrak C_a(H)
    :=
    y_a e_a^\top
    \lambda J_0^{-1}\bN^{-1}H\bN^{-1}g_0.
$\
\end{theorem}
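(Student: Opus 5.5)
The plan is to differentiate the first-order optimality condition of the template objective $\Phi_{q,\lambda}$ with respect to $\eta$, with $\bN$ replaced by $\bN_\eta$, and then apply the implicit function theorem.

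\textbf{Step 1: the stationarity equation and its Jacobian.} Since $\bN$ is positive definite (Assumption~\ref{ass:standing}) and $\bN_\eta=\bN+\eta H+o(\eta)$, the matrix $\bN_\eta$ stays positive definite for small $\eta$. Hence $\Phi^{(\eta)}(g)=\sum_a q_a\loss(y_ag_a)+\frac\lambda2 g^\top\bN_\eta^{-1}g$ is strictly convex and has a unique minimizer $g_\eta$. This minimizer is characterized by
\[
F(g,\eta):=\bigl(q_ay_a\loss'(y_ag_a)\bigr)_{a=1}^r+\lambda\bN_\eta^{-1}g=0 .
\]
The Jacobian in $g$ is $\diag(q_a\loss''(y_ag_a))+\lambda\bN_\eta^{-1}$, using $y_a^2=1$. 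At $\eta=0$ this is exactly $J_0$. It is positive definite because $\loss''>0$ and $\bN^{-1}\succ0$, so it is invertible. Since $\loss$ is smooth and $\eta\mapsto\bN_\eta^{-1}$ is differentiable at $0$, the implicit function theorem gives differentiability of $\eta\mapsto g_\eta$ at $0$ with
\[
\dot g_0=-J_0^{-1}\partial_\eta F(g_0,0).
\]

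\textbf{Step 2: the $\eta$-derivative.} The derivative of the inverse is $\frac{d}{d\eta}\bN_\eta^{-1}\big|_{0}=-\bN^{-1}H\bN^{-1}$, obtained from $\bN_\eta\bN_\eta^{-1}=I$ and the expansion of $\bN_\eta$. Therefore
\[
\partial_\eta F(g_0,0)=-\lambda\bN^{-1}H\bN^{-1}g_0,
\]
and so $\dot g_0=\lambda J_0^{-1}\bN^{-1}H\bN^{-1}g_0$, which is the claimed formula.

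\textbf{Step 3: the margin gain.} The signed margin on template $a$ is $y_a(g_\eta)_a=y_ae_a^\top g_\eta$. Differentiating at $\eta=0$ gives $\mathfrak C_a(H)$ directly.

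\textbf{Main obstacle: regularity.} The expansion $\bN_\eta=\bN+\eta H+o(\eta)$ is stated only as first-order differentiability at $0$, whereas the standard implicit function theorem needs $F$ to be $C^1$ in a neighbourhood. I would sidestep this with a direct argument.
\begin{itemize}
\item First, establish continuity of $g_\eta$ at $0$. Strong convexity of $\Phi^{(\eta)}$ with uniform modulus $\lambda\lambda_{\min}(\bN_\eta^{-1})/2$, together with the uniform convergence $\Phi^{(\eta)}\to\Phi^{(0)}$ on bounded sets, gives $g_\eta\to g_0$.
\item Then subtract the two stationarity equations. Write $\loss'(y_a(g_\eta)_a)-\loss'(y_a(g_0)_a)=\loss''(\xi_a)\,y_a\bigl((g_\eta)_a-(g_0)_a\bigr)$ by the mean value theorem, which yields a linear system for $(g_\eta-g_0)/\eta$.
\item Its matrix converges to $J_0$ and its right-hand side converges to $\lambda\bN^{-1}H\bN^{-1}g_0$, so the difference quotient converges to the claimed limit.
\end{itemize}
This argument only uses differentiability of $\bN_\eta$ at $0$, which is provided in Appendix~\ref{sec:abstract-prompting-ktrans}.
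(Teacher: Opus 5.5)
Your proposal is correct and follows the paper's proof. Both arguments differentiate the stationarity map $F(g,\eta)=q\odot y\odot\loss'(y\odot g)+\lambda\bN_\eta^{-1}g$, identify $D_gF(g_0,0)=J_0\succ0$ and $D_\eta F(g_0,0)=-\lambda\bN^{-1}H\bN^{-1}g_0$ through the inverse expansion, and solve the linearized equation.

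Your difference-quotient argument — continuity of $g_\eta$ from uniform strong convexity, then the mean-value linearization of $\loss'$ — also tightens the paper's proof. The paper simply invokes the implicit function theorem, even though $\bN_\eta$ is only known to be right-differentiable at $\eta=0$ rather than $C^1$ on a neighbourhood.
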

After subtracting the augmented certificate, the certified margin for template
\(a\) is
\[
    \mathcal M_a(\eta)
    :=
    y_a(g_\eta)_a-B_{\lambda,\eta}.
\]
where \(B_{\lambda,\eta}\) is a  augmented collision certificate (follows from the results on Section \ref{subsec:collision_graph}).
Then, the result above gives us that 
$
   d\mathcal M_a(\eta)/ d \eta |_{\eta=0}
    =
    \mathfrak C_a(H)
    -
 dB_{\lambda,\eta}/d\eta|_{\eta=0}. 
$
Thus prompted abstraction has two competing first-order effects: it changes the
ideal template margin by \(\mathfrak C_a(H)\), and it changes the finite-sample
certificate by \(dB_{\lambda,\eta}/d\eta|_{\eta=0}\).  It improves the
certified fresh-symbol margin exactly when the former dominates the latter,
namely when
\[
    \mathfrak C_a(H)
    >
    dB_{\lambda,\eta}/d\eta|_{\eta=0}.
\]
In this sense   abstraction must be template-aligned: additional
reasoning tokens help only if the template-level margin they induce is larger
than the extra certificate cost they introduce.

\section{Experiments}\label{sec:experiments}

The simulation is designed as a controlled study of how well different sequence models can generalize symbolic rules from limited data. We use synthetic template-based classification tasks, where each sample is generated by substituting symbols into wildcard templates. Specifically, we focus on the two following tasks, each representing template classification and NTP tasks: the two-template binary classification $\alpha \beta \alpha \mapsto 1$ and $\alpha \beta \beta \mapsto -1$, and finding the majority, where the templates are $\{ \alpha \alpha \alpha, \alpha \alpha \beta, \alpha \beta \alpha, \alpha \beta \beta \}$, and the output should be the majority token (the substitution for $\alpha, \alpha, \alpha$ and $\beta$ respectively). All models are one-layer transformers where only the unembedding layer is trained, corresponding to kernel logistic regression in the infinite-width limit.  The binary equality-pattern task is deliberately close to same--different and abstract-relation probes used in visual and symbolic reasoning, while the majority/copy task stresses the additional output-symbol channel \cite{BarrettEtAl2018,KimRicciSerre2018,Boix2024}.  The detailed simulation settings and results will be shown in Appendix \ref{app:simulation}.

\begin{remark}[A worked collision ledger for \(\alpha\beta\alpha\) versus \(\alpha\beta\beta\)]
\label{rem:aba-abb-ledger-main}
The binary classification experiment is a minimal example where scalar diversity, equality-pattern structure, and signed graph cancellation can be seen simultaneously.  Let
\[
    z_+=\alpha\beta\alpha\to +1,
    \qquad
    z_-=\alpha\beta\beta\to -1 .
\]
Both templates use two active wildcards, hence
\[
    \rho={m\over2},
    \qquad
    q^{\rm wild}
    =1-{(m-2)(m-3)\over m(m-1)}
    \simeq {2\over \rho}.
\]
The ideal rule distinguishes whether the repeated position is the first wildcard or the second wildcard.  Consider four wildcard-pair realizations
\[
    (a,b),\quad (a,c),\quad (d,b),\quad (d,c)
\]
on both sides.  If \((u_i,v_i)\) denotes the wildcard pair in a positive example and \((s_j,t_j)\) denotes the wildcard pair in a negative example, a signed two-channel residual may be written as
\[
    R_{ij}=\1\{u_i=s_j\}-\1\{v_i=t_j\}.
\]
For the ordering \((a,b),(a,c),(d,b),(d,c)\), this gives
\[
    R
    =
    \begin{pmatrix}
        0& 1&-1& 0\\
        1& 0& 0&-1\\
        -1& 0& 0& 1\\
        0&-1& 1& 0
    \end{pmatrix},
    \qquad
    C:=\1_{\{R\ne0\}}
    =
    \begin{pmatrix}
        0&1&1&0\\
        1&0&0&1\\
        1&0&0&1\\
        0&1&1&0
    \end{pmatrix}.
\]
The support matrix \(C\) has density \(8/16\), but the signed residual cancels in every row and column:
\[
    R\mathbf 1=0,
    \qquad
    \mathbf 1^\top R=0,
    \qquad
    \mathbf 1^\top R\mathbf 1=0.
\]
A support-only block-density argument sees a dense cross-class block, while the signed residual is invisible to the block average and the all-ones degree direction.  This is precisely the setting where Hoeffding--ANOVA and bias--fluctuation routes improve on scalar support counting.
\end{remark}

We compare one-layer transformers with same hyperparameters except for different combination of multipliers on K-Q and V-O inner products $W_K W_Q^\top + a I$ and $W_V W_O^\top + b I$, with $a, b \in \{0, 100\}$. Figure \ref{fig:accuracy} shows their test accuracy on these two tasks mentioned above. For each set of multipliers, Figure \ref{fig:loss_curves_binary} shows the learning curves for transformers with different embedding dimensions on classification tasks.

\begin{figure}[h]
	\centering
	\begin{subfigure}{0.48\textwidth}
		\centering
		\includegraphics[width=\textwidth]{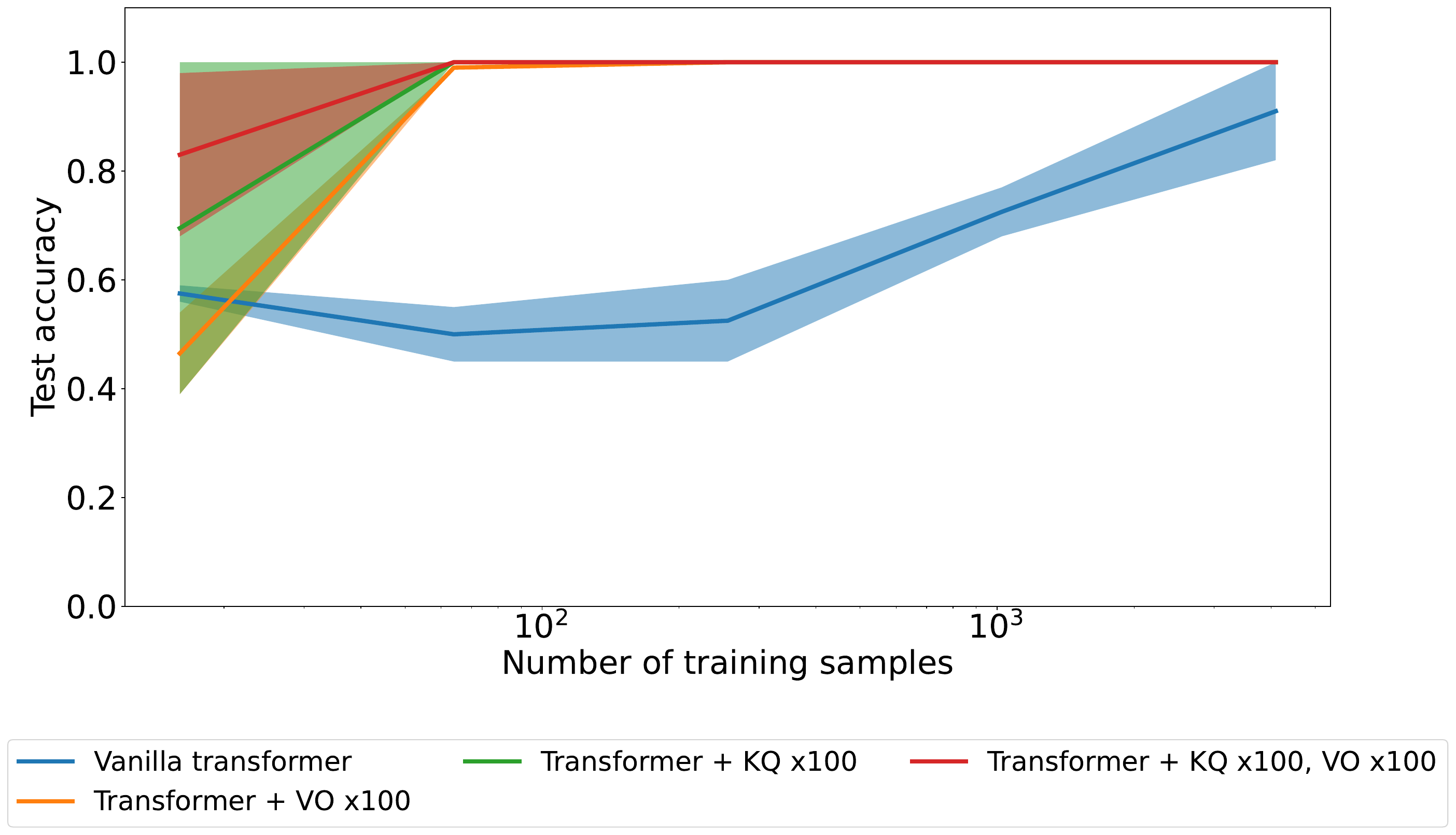}
	\end{subfigure}
	\hfill
	\begin{subfigure}{0.48\textwidth}
		\includegraphics[width=\textwidth]{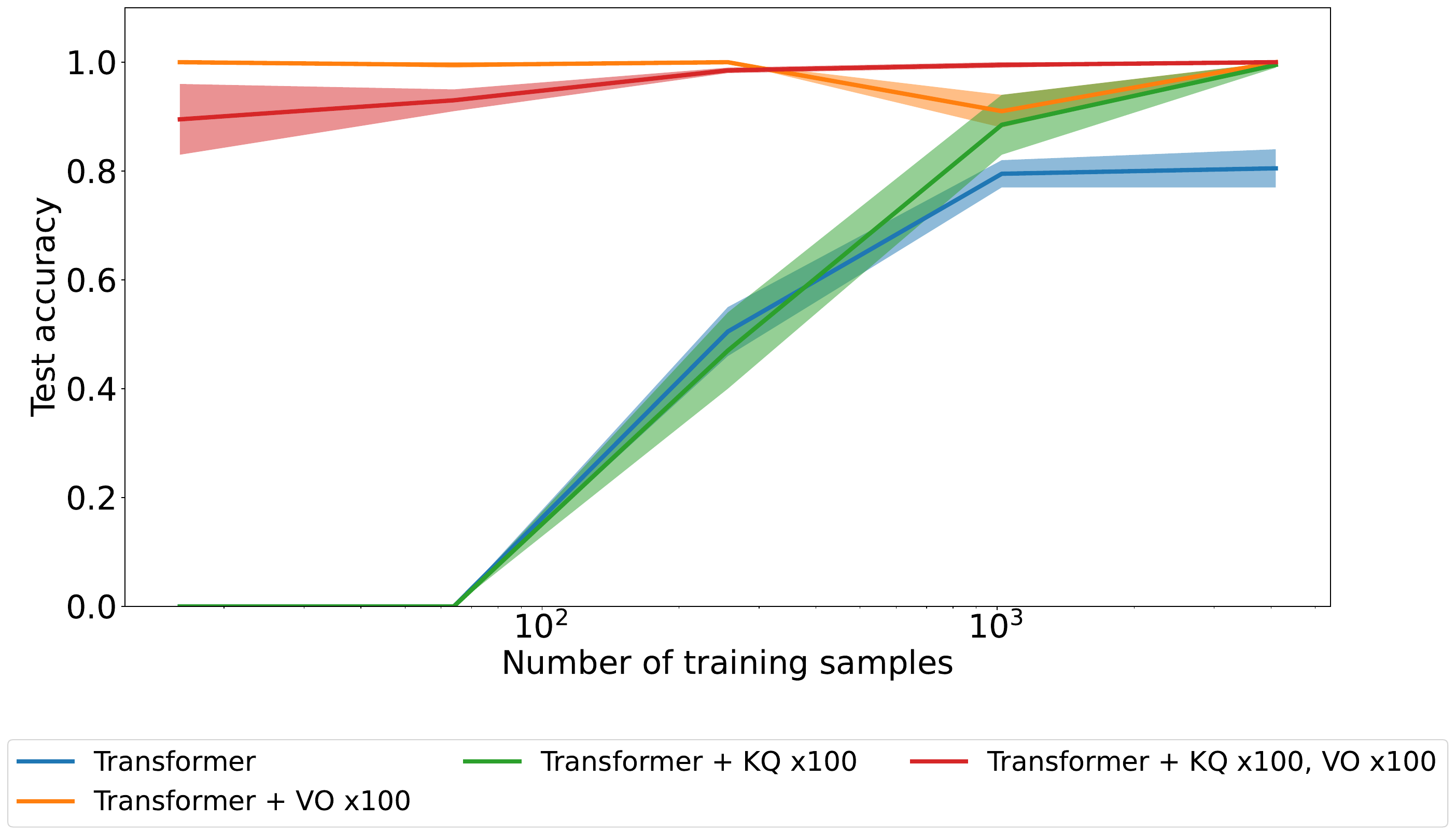}
	\end{subfigure}
	\caption{Test accuracy of one-layer transformers with different K-Q and V-O multipliers. Left: accuracy for binary classification. Right: accuracy for finding the majority.}
	\label{fig:accuracy}
\end{figure}

\begin{figure}[h]
	\centering
	\begin{subfigure}{0.48\textwidth}
		\centering
		\includegraphics[width=\textwidth]{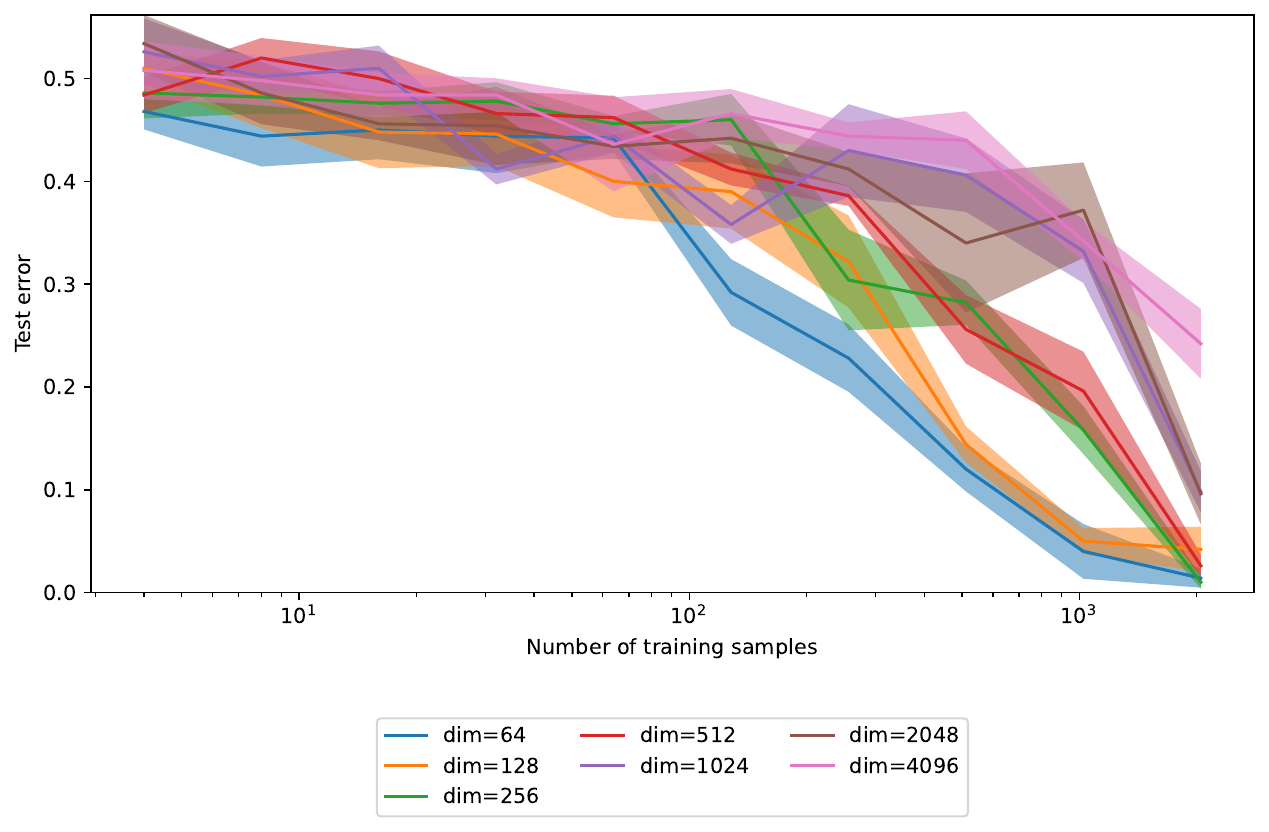}
		\caption{$(a,b)=(0,0)$}
	\end{subfigure}
	\hfill
	\begin{subfigure}{0.48\textwidth}
		\centering
		\includegraphics[width=\linewidth]{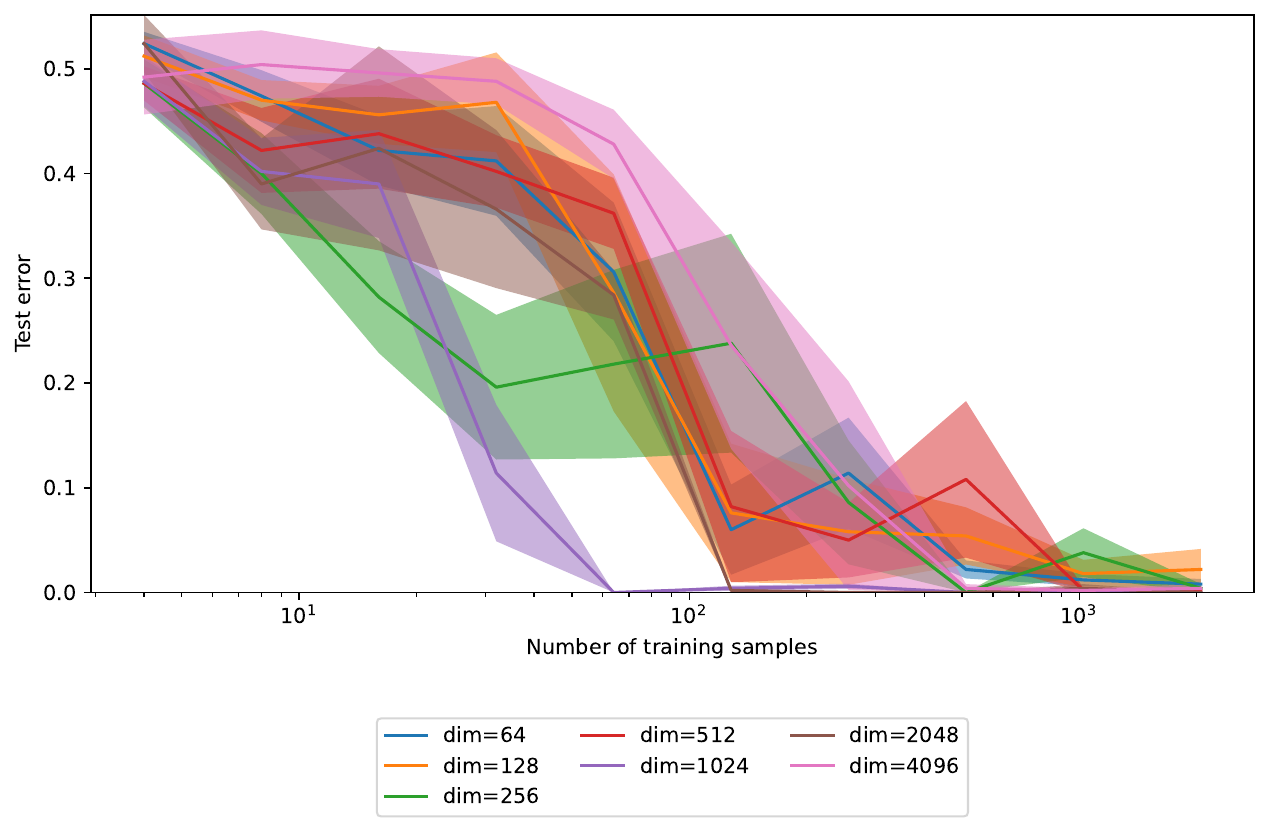}
		\caption{$(a,b)=(0,100)$}
	\end{subfigure}
	\\
	\begin{subfigure}{0.48\textwidth}
		\centering
		\includegraphics[width=\textwidth]{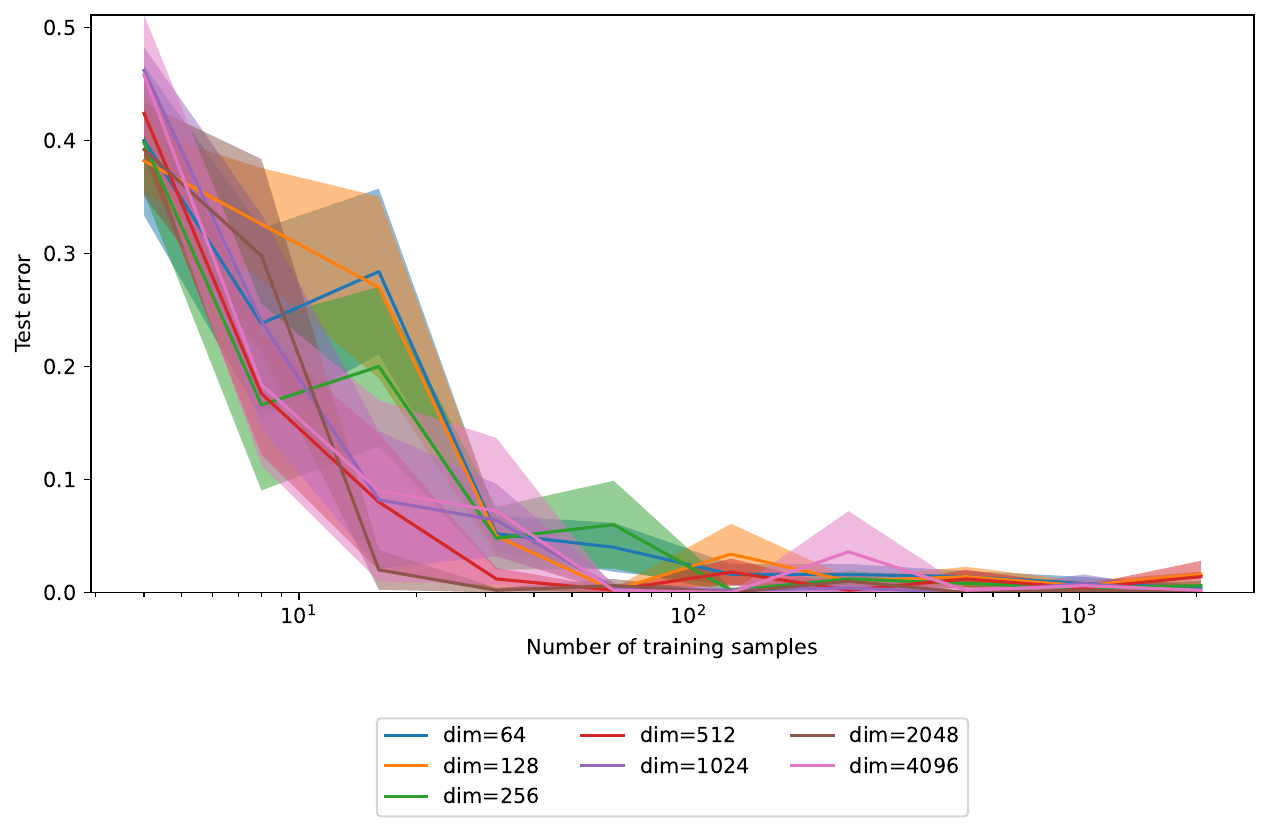}
		\caption{$(a,b)=(100,0)$}
	\end{subfigure}
	\hfill
	\begin{subfigure}{0.48\textwidth}
		\centering
		\includegraphics[width=\textwidth]{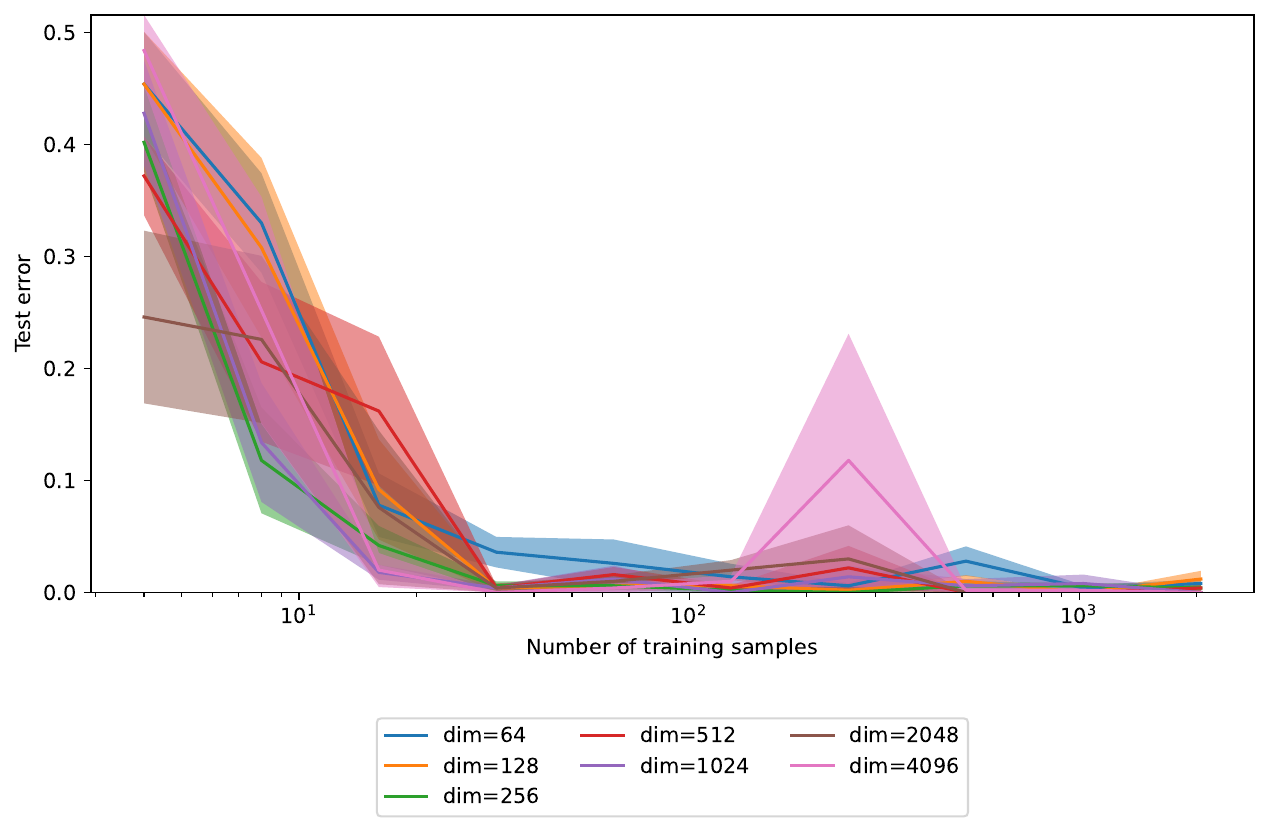}
		\caption{$(a,b)=(100,100)$}
	\end{subfigure}
	\caption{Test error curves for one-layer transformers with different embedding dimensions for binary classification.}
	\label{fig:loss_curves_binary}
\end{figure}

Based on the simulation, we have the following findings. First, vanilla one-layer transformers can generalize to unseen symbols, but requires large sample. This is similar to the finding for the copyout task from \cite{Boix2024}, transformers without any multipliers cannot learn well at the early stages of training. Second, multipliers will substantially improve the generalization performance. The test accuracy increases immediately from the first few training samples (or equivalently, training time) for transformers with almost every nonzero combinations of multipliers. Third, although two multipliers both introduce implicit regularization in the transformer kernel, their impact is different and task-dependent. As shown in Figure \ref{fig:accuracy}. K-Q multiplier plays a dominant role in achieving high accuracy for the classification task. In contrast, for the next-token prediction task, V-O multiplier is more critical, especially in the small-sample regime, where it enables generalization even with very limited data. However, for both tasks, adding K-Q and V-O multipliers together will be the overall best choice compared to adding either of them alone.

\section{Discussion}\label{sec:discussion}

Our results suggest that fresh-symbol generalization is governed not only by token diversity, but by the geometry of finite-sample substitution collisions. The collision graph provides a mechanism-level explanation for when empirical transformer kernels preserve the margin of the ideal template scorer. From this perspective, unseen-symbol generalization is fundamentally a perturbation problem. The ideal template scorer captures semantic structure, while the empirical training process introduces collision-induced deviations whose geometry determines whether the ideal margin survives finite sampling.

\begin{remark}[Relation to SCAN, COGS, and CFQ-style splits]
\label{rem:scan-cogs-cfq-main}
The theorem separates two mechanisms that are often entangled in systematic-generalization benchmarks.  The first is \emph{template coverage}: whether the ideal quotient problem contains enough colors and relations to support the target rule.  This is a property of \(\bN\), the template distribution, and the ideal margin.  The second is \emph{finite-substitution perturbation}: whether accidental concrete token overlaps move the empirical kernel away from the quotient problem.  This is a property of the collision graph.  Thus the result should be read as
\[
    \text{ideal template representation}
    \quad+\quad
    \text{small collision-graph perturbation}
    \quad\Longrightarrow\quad
    \text{fresh-symbol classification}.
\]
For SCAN, COGS, CFQ, and related splits, a model may fail because it has not learned the relevant compound rule, because the sample induces misleading token-level correlations, or both \cite{LakeBaroni2018,KimLinzen2020,Keysers2020}.  The present framework isolates the second obstruction conditional on a positive ideal template margin.
\end{remark}

Our analysis has several limitations. First, the theory is developed in the one-layer transformer kernel regime, where the feature map remains effectively fixed during training. While this enables explicit control of collision-induced perturbations, it does not capture representation learning in fully trained deep transformers \cite{JacotGabrielHongler2018,ChizatBach2019}. Second, our experiments focus on synthetic template tasks designed to isolate token-renaming invariance and symbolic structure. Although these tasks provide a controlled setting for studying fresh-symbol generalization, they do not reflect the full complexity of natural language. Finally, our framework studies frozen-feature kernel logistic classification and therefore does not address optimization dynamics, finite-width effects, or implicit bias in end-to-end transformer training.

Several directions may extend the present framework. A natural next step is to study collision-induced perturbations beyond the frozen-kernel regime, where collisions may interact with learned representations and attention dynamics \cite{Xie2022,VonOswaldEtAl2023}. It would also be interesting to investigate whether similar collision structures arise in broader forms of systematic generalization, including variable binding, compositional reasoning, and in-context learning \cite{SwaminathanEtAl2023,WebbSinhaCohen2021,WuGeigerMilliere2025}. More broadly, our results suggest that generalization may depend not only on global diversity statistics, but also on the geometry of finite-sample interactions induced by the data distribution.

\appendix

% Re-enable appendix entries for the appendix-only table of contents.
\addtocontents{toc}{\protect\setcounter{tocdepth}{2}}
\setcounter{tocdepth}{2}
\tableofcontents

\section{Notation and standing objects}
\label{sec:model-ideal}

This supplement uses the notation of the main paper.  For standalone reading, all global notation is collected here once.  Later statements introduce only theorem-local variables.

\subsection{Notation summary}
\label{sec:notation-summary}

The table below is a reference for symbols used throughout the supplement.  Generic variables that are local to a single perturbation lemma, such as an arbitrary pair of Gram matrices \(G,M\), are defined in the corresponding statement.

\begin{center}
\small
\renewcommand{\arraystretch}{1.16}
\begin{tabularx}{\textwidth}{@{}p{0.24\textwidth}X@{}}
\toprule
\multicolumn{2}{@{}l}{\textbf{Templates, substitutions, and labels}}\\
\midrule
\(\cV,\cW,k\) & Vocabulary, wildcard alphabet, and common sequence length.\\
\(\cZ=\{z_1,\ldots,z_r\}\) & Finite template family.  Template indices are usually denoted by \(a,b,c,d\in[r]\).\\
\(\cW(z),L(z)\) & Active wildcards and literal-token set of template \(z\).\\
\(W_a,L_a,w_a,w_*\) & Shorthand \(W_a=\cW(z_a)\), \(L_a=L(z_a)\), \(w_a=|W_a|\), and \(w_* =\max_a w_a\).\\
\(R\) & Global literal set \(R=\bigcup_a L_a\).\\
\(s,\Sub(z,s)\) & Admissible substitution map and the string obtained from template \(z\) by applying \(s\).\\
\(\cX_a,\cX_a^{\fr}\) & Matching class of \(z_a\), and its globally fresh subclass.\\
\(Z,S,X,Y\) & Latent template, substitution, observed string, and label.  The label is \(Y=y_Z\).\\
\(p_a,p_{\min},\templ\) & Template mass \(p_a=\Pp(Z=z_a)\), minimum mass \(p_{\min}=\min_a p_a\), and template law \(\templ\).\\
\(y_a\) & Fixed binary label of template \(z_a\), with \(y_a\in\{-1,+1\}\).\\
\(\rho\) & Coarse diversity parameter, the reciprocal of the largest token marginal in a wildcard slot.\\
\bottomrule
\end{tabularx}
\end{center}

\begin{center}
\small
\renewcommand{\arraystretch}{1.16}
\begin{tabularx}{\textwidth}{@{}p{0.24\textwidth}X@{}}
\toprule
\multicolumn{2}{@{}l}{\textbf{Kernel and ideal template objects}}\\
\midrule
\(K,\cH\) & Token-symmetric positive-semidefinite kernel and its RKHS.\\
\(K_*,L_*\) & Diagonal bound \(K_* = \sup_x K(x,x)\) and discrepancy bound \(L_*=2K_*\).\\
\(N_{ab},\bN\) & Fresh template kernel value and template matrix \(\bN=(N_{ab})\).\\
\(D_a,\delta_*\) & Self-kernel constant and diagonal discrepancy bound, \(\delta_* = \max_a |D_a-N_{aa}|\).\\
\(\Phi_{q,\lambda}\) & Ideal finite-template logistic objective at template frequencies \(q\).\\
\(g_\lambda^{\id}\) & Minimizer of \(\Phi_{\widehat p,\lambda}\).\\
\(S_\lambda^{\id}(x)\) & Ideal template score; if \(x\in\cX_a^{\fr}\), then \(S_\lambda^{\id}(x)=(g_\lambda^{\id})_a\).\\
\(\widehat f_\lambda\) & Ridge-regularized kernel logistic regression estimator.\\
\(\loss,\varphi\) & Logistic loss \(\loss(u)=\log(1+e^{-u})\) and dual entropy \(\varphi(c)=c\log c+(1-c)\log(1-c)\).\\
\bottomrule
\end{tabularx}
\end{center}

\begin{center}
\small
\renewcommand{\arraystretch}{1.16}
\begin{tabularx}{\textwidth}{@{}p{0.24\textwidth}X@{}}
\toprule
\multicolumn{2}{@{}l}{\textbf{Sample-level notation}}\\
\midrule
\(\tau(i)\) & Latent template index of training example \(i\).\\
\(I_b,n_b,\widehat p_b,Q\) & Template block, block size, empirical mass, and \(Q=\diag(\widehat p)\).\\
\(P\) & Membership matrix, \(P_{ib}=\1_{\tau(i)=b}\).\\
\(E_{\rm rep}\) & Representation event \(\{\widehat p_b\ge p_b/2\text{ for all }b\}\).\\
\(\mathbf y,\mathbf Y,Y\) & Label vector, diagonal label matrix, and the same diagonal matrix in compact displays.\\
\(M_n,m_a\) & Ideal sample Gram matrix \(M_n=P\bN P^\top\) and ideal fresh test vector \(m_a=(N_{a,\tau(i)})_i\).\\
\(\widehat K_n,k_x\) & Empirical Gram matrix and empirical test vector \((K(x,X_i))_i\).\\
\(\Delta,\zeta\) & Train--train and train--test discrepancies: \(\Delta=\widehat K_n-M_n\), \(\zeta=k_x-m_a\).\\
\(c_{\widehat K_n},c_M\) & Logistic dual minimizers for \(\widehat K_n\) and \(M_n\).\\
\(\theta,\beta,Y_r\) & Block dual coordinates \(c_M=P\theta\), signed coordinates \(\beta=y\odot\theta\), and \(Y_r=\diag(y_1,\ldots,y_r)\).\\
\bottomrule
\end{tabularx}
\end{center}

\begin{center}
\small
\renewcommand{\arraystretch}{1.16}
\begin{tabularx}{\textwidth}{@{}p{0.24\textwidth}X@{}}
\toprule
\multicolumn{2}{@{}l}{\textbf{Collision graph and concentration quantities}}\\
\midrule
\(\Bad_{ij},\Bad_{0i}\) & Train--train and test--train nonfresh-pair events.\\
\(\Ccol,\Wcol,\Gcol\) & Collision support, signed kernel-weighted collision object, and colored collision graph.\\
\(p_{b,t}\) & Token marginal \(\Pp_{S\sim\mu_{{\rm sub},b}}\{t\in S(W_b)\}\).\\
\(\ell_{b\to c},\chi_{bc}\) & Literal-hit mass from template \(b\) to literals of \(c\), and wildcard-overlap mass.\\
\(q_{bc},q_{bx}\) & Pair-specific train--train and train--test collision probability envelopes.\\
\(D_{\rm B},U_{\KL}\) & Bernoulli relative entropy and its inverse KL upper envelope.\\
\(N_{bc}^{\eff},E_{bc},T_b\) & Effective sample size and KL block/test envelopes.\\
\(X_\star,E_*\) & Aggregated test collision envelope and maximum block envelope.\\
\(d_i,d_2\) & Collision degree of training vertex \(i\) and its \(\ell_2\) aggregate.\\
\bottomrule
\end{tabularx}
\end{center}

\begin{center}
\small
\renewcommand{\arraystretch}{1.16}
\begin{tabularx}{\textwidth}{@{}p{0.24\textwidth}X@{}}
\toprule
\multicolumn{2}{@{}l}{\textbf{Certificate and corrected-target notation}}\\
\midrule
\(A_{\lambda,\widehat p},r_a\) & Finite-template Hessian surrogate and target sensitivity vector.\\
\(A_\Delta,\gamma_{\cspec}\) & Train--train action and curvature-spectral relative perturbation.\\
\(D_{\bdens},Z_{\bdens}\) & Weighted block-density train and test slots.\\
\(D_{\hanova},Z_{\hanova}\) & Weighted Hoeffding--ANOVA train and test slots.\\
\(N^\circ,m_x^\circ,M_n^\circ,\Delta^\circ\) & Literal-corrected template matrix, test vector, sample matrix, and empirical discrepancy.\\
\(A_\Delta^\circ,D_{\bfdec},Z_{\bfdec}\) & Corrected action and corrected bias--fluctuation slots.\\
\(B_{\mathrm{bias}}\) & Deterministic corrected-to-fresh bias envelope.\\
\(\operatorname{Ord}_\lambda,\operatorname{Curv}_\lambda\) & Ordinary and curvature score functionals used by the five certificates.\\
\(B_{\cspec},B_{\degact}\) & Curvature-spectral and degree-action transfer certificates.\\
\(B_{\bdens},B_{\hanova},B_{\bfdec}\) & Block-density, Hoeffding--ANOVA, and bias--fluctuation certificates.\\
\(B_\lambda^\sharp\) & Minimum realized certificate.\\
\(Q_E,S_\wedge\) & Edge and wedge density envelopes.\\
\(A_{\ew},A_{\op},A_0,A_{\corr}\) & Edge--wedge action envelopes.\\
\(B_{\bullet}^{\ew},B_\lambda^{\ew},v_\eta\) & Pathwise edge--wedge envelopes, their minimum, and the high-probability choice \(v_\eta=\log((r^2+r^3+1)/\eta)\).\\
\bottomrule
\end{tabularx}
\end{center}

\section{The collision graph as the perturbation object}
\label{sec:collision-graph}

The discrepancies \(\Delta\) and \(\zeta\) from the notation table are the train--train and train--test perturbations away from the fresh template problem.  The collision graph records exactly where these discrepancies may be nonzero.

\paragraph{Collision notation recalled from the main paper.}
The main paper defines the pairwise bad events and the collision graph.  For the proofs below we use the following explicit conventions.  After conditioning on a realized template assignment \(\tau\), the substitutions \(S_1,\ldots,S_n\) are independent, with
\[
  S_i\sim \mu_{{\rm sub},\tau(i)} .
\]
For \(b,c\in[r]\) and admissible substitutions \(s,t\), write \(s(W_b)=\{s(w):w\in W_b\}\) and set
\begin{align}
\label{eq:collision-bad-bc-recall}
  \Bad_{bc}(s,t)
  :=&\ \{s(W_b)\cap L_c\ne\varnothing\}
  \cup \{t(W_c)\cap L_b\ne\varnothing\}
  \nonumber\\
  &\cup \{s(W_b)\cap t(W_c)\ne\varnothing\}.
\end{align}
If \(\tau(i)=b\) and \(\tau(j)=c\), then
\[
  \Bad_{ij}:=\Bad_{bc}(S_i,S_j),
  \qquad
  \Ccol_{ij}:=\1_{\Bad_{ij}}
  \qquad (i\ne j).
\]
For the fixed globally fresh test string \(x=\Sub(z_a,s_x)\), put \(T_x=s_x(W_a)\) and
\begin{equation}
\label{eq:collision-bad-test-recall}
  \Bad_{b,x}(s)
  :=
  \{s(W_b)\cap L_a\ne\varnothing\}
  \cup
  \{s(W_b)\cap T_x\ne\varnothing\}.
\end{equation}
If \(\tau(i)=b\), then
\[
  \Bad_{0i}:=\Bad_{b,x}(S_i),
  \qquad
  \Ccol_{0i}:=\1_{\Bad_{0i}}.
\]
The notation \(\Bad_{bc}(s,V)\) used later means the same event with the second substitution random, \(V\sim\mu_{{\rm sub},c}\).

With these conventions, \(\Ccol_{ij}\) depends only on \((S_i,S_j)\), and \(\Ccol_{0i}\) depends only on \(S_i\).  Hence, conditionally on \(\tau\), any family of train--train collision indicators indexed by vertex-disjoint edges is jointly independent.  Likewise, any family of wedge indicators \(\Ccol_{ij}\Ccol_{ik}\) indexed by triples \(\{i,j,k\}\) that are pairwise vertex-disjoint is jointly independent.  Finally, after conditioning additionally on \(S_i=s\), the variables \(\{\Ccol_{ij}:j\ne i\}\) are independent and satisfy, for \(\tau(i)=b\) and \(\tau(j)=c\),
\[
  \Pp(\Ccol_{ij}=1\mid S_i=s,\tau)
  =
  \Pp_{V\sim\mu_{{\rm sub},c}}\bigl(\Bad_{bc}(s,V)\bigr).
\]
These are the conditional-independence facts used by the matching, hypergraph-coloring, and maximum-degree arguments below.

\begin{lemma}
\label{lem:collision-support-front}
For \(i\ne j\),
$
  \Bad_{ij}^c\ \Longrightarrow\  \Delta_{ij}=0 .
$
For the test edge,
$
  \Bad_{0i}^c\ \Longrightarrow\  \zeta_i=0 .
$
Moreover,
$
  |\Delta_{ij}|\le L_*\Ccol_{ij}\quad(i\ne j),
  \
  |\zeta_i|\le L_*\Ccol_{0i},
$
and the diagonal discrepancy is bounded separately by
$
  |\Delta_{ii}|\le \delta_* .
$
\end{lemma}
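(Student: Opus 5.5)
The lemma combines token symmetry with a boundedness estimate, and I will prove its three parts in turn: support for \(\Delta\), support for \(\zeta\), and the magnitude bounds. Throughout, I condition on the template assignment \(\tau\) and the substitutions.

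\emph{Support of \(\Delta\).} Fix \(i\ne j\) with \(\tau(i)=b\), \(\tau(j)=c\), and suppose \(\Bad_{ij}^c\) holds. By \eqref{eq:collision-bad-bc-recall} this gives three facts: \(S_i(W_b)\cap L_c=\varnothing\), \(S_j(W_c)\cap L_b=\varnothing\), and \(S_i(W_b)\cap S_j(W_c)=\varnothing\). Together with injectivity and the admissibility requirement that each substitution avoids its own template's literals, this says exactly that \((S_i,S_j)\) is a fresh admissible pair for \((z_b,z_c)\). Then \(K(X_i,X_j)=N_{bc}\) by the well-definedness of \(N_{bc}\) under token symmetry. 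The short argument behind that well-definedness is as follows. Two fresh pairs \((s,s')\) and \((t,t')\) for the same template pair induce a bijection on the finite set of tokens involved: it fixes all literals and sends \(s(w)\mapsto t(w)\) and \(s'(w')\mapsto t'(w')\). Freshness makes this map well defined and injective, and it extends to a permutation \(\pi\) of \(\cV\). Token symmetry then gives equal kernel values. Since \((M_n)_{ij}=N_{bc}\), we get \(\Delta_{ij}=0\).

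\emph{Support of \(\zeta\).} This case is identical, now using \eqref{eq:collision-bad-test-recall}. The test string \(x=\Sub(z_a,s_x)\) is globally fresh, so its wildcard tokens \(T_x\) avoid all literals in \(R\), and in particular avoid \(L_b\). Under \(\Bad_{0i}^c\) we also have \(S_i(W_b)\cap L_a=\varnothing\) and \(S_i(W_b)\cap T_x=\varnothing\). Hence \((s_x,S_i)\) is a fresh pair for \((z_a,z_b)\), so \(K(x,X_i)=N_{ab}=(m_a)_i\), and \(\zeta_i=0\). The one point I expect to need care is that the test-event definition omits the clause ``\(T_x\cap L_b\)''. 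I will justify this from the definition of \(\cX_a^{\fr}\) (global freshness), and if that definition does not give it directly I will make the assumption explicit.

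\emph{Magnitude bounds.} Cauchy--Schwarz in the RKHS gives \(|K(u,v)|\le\sqrt{K(u,u)K(v,v)}\le K_*\) for all strings. Each \(N_{bc}\) is itself a kernel value at some pair of strings, so \(|N_{bc}|\le K_*\) as well. Therefore \(|\Delta_{ij}|\le 2K_*=L_*\) and \(|\zeta_i|\le L_*\) unconditionally. Combining this with the support statements gives \(|\Delta_{ij}|\le L_*\Ccol_{ij}\) and \(|\zeta_i|\le L_*\Ccol_{0i}\). For the diagonal, \(K(X_i,X_i)=D_{\tau(i)}\) is constant on each block, again by token symmetry applied to a single admissible substitution. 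Hence \(|\Delta_{ii}|=|D_{\tau(i)}-N_{\tau(i)\tau(i)}|\le\delta_*\) by the definition of \(\delta_*\).

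The main obstacle is purely definitional. It is constructing the permutation \(\pi\), that is, checking that freshness makes the combined token map consistent across both strings, including the case of templates that share literal tokens. I will handle this by listing the token sets \(L_b\cup L_c\), \(s(W_b)\), and \(s'(W_c)\), showing they are pairwise disjoint, and then defining \(\pi\) piecewise on them.
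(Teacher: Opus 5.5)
Your proposal is correct and follows essentially the same route as the paper: a vocabulary permutation fixing the literals of the two templates (the paper's ``relevant literal tokens''), combined with token symmetry, gives both support statements, with global freshness of \(x\) supplying \(T_x\cap L_b=\varnothing\) for the test edge; Cauchy--Schwarz then gives \(|K(u,v)|,|N_{bc}|\le K_*\) and hence the \(L_*=2K_*\) bounds. Your diagonal step, writing \(\Delta_{ii}=D_{\tau(i)}-N_{\tau(i)\tau(i)}\) and invoking the definition of \(\delta_*\), is a more explicit version of the paper's remark that the diagonal is charged separately by \(\delta_*\).
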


\begin{proof}[Proof of Lemma \ref{lem:collision-support-front}]
If \(\Bad_{ij}\) does not occur, then the wildcard tokens appearing in \(X_i\) and \(X_j\) are distinct from one another and from the other template's literal tokens.  We can therefore choose a vocabulary permutation that fixes the relevant literal tokens and maps the realized pair \((X_i,X_j)\) to any reference fresh admissible pair of substitutions of \((z_{\tau(i)},z_{\tau(j)})\).  Token symmetry gives
\[
  K(X_i,X_j)=N_{\tau(i),\tau(j)},
\]
so \(\Delta_{ij}=0\).  The same argument applies to \((x,X_i)\), because the test string is globally fresh.  On a bad event, Cauchy--Schwarz in the RKHS gives \(|K(u,v)|\le K_*\) and \(|N_{bc}|\le K_*\), so every off-diagonal discrepancy has magnitude at most \(2K_*=L_*\).  The diagonal entry compares a self-kernel with a fresh two-copy kernel and is not a collision between independent substitutions; it is charged separately by \(\delta_*\).
\end{proof}

The support lemma is the first compression step: all off-diagonal error lives on \(\Gcol\).

\section{Certificate budgets used by the main theorems}
\label{sec:graph-budgets}

This section records the finite-sample quantities that enter the two transfer bounds.  On \(E_{\rm rep}^c\) we
set \(B^\sharp_\lambda(x,\delta)=+\infty\).

\subsection{Block averages and KL envelopes}

Set
\[
  \overline\Delta_{bc}
  :=
  {1\over n_bn_c}
  \sum_{i\in I_b}\sum_{j\in I_c}\Delta_{ij},
  \qquad
  \overline\zeta_b
  :=
  {1\over n_b}\sum_{i\in I_b}\zeta_i .
\]
The empirical collision densities are
\[
  \widehat q_{bc}
  :=
  \begin{cases}
  \displaystyle
  {1\over n_bn_c}
  \sum_{i\in I_b}\sum_{j\in I_c}\Ccol_{ij},
  & b\ne c,\\[.8em]
  \displaystyle
  {2\over n_b(n_b-1)}
  \sum_{\substack{i<j\\ i,j\in I_b}}\Ccol_{ij},
  & b=c,\ n_b\ge2,
  \end{cases}
  \qquad
  \widehat q_{0b}
  :=
  {1\over n_b}\sum_{i\in I_b}\Ccol_{0i}.
\]
By Lemma \ref{lem:collision-support-front},
\[
  |\overline\Delta_{bc}|
  \le
  \begin{cases}
  L_*\widehat q_{bc}, & b\ne c,\\[.4em]
  \displaystyle
  {\delta_*\over n_b}
  +
  {2L_*\binom{n_b}{2}\over n_b^2}\widehat q_{bb},
  & b=c,
  \end{cases}
  \qquad
  |\overline\zeta_b|
  \le
  L_*\widehat q_{0b}.
\]

For each token \(t\), let
\[
  p_{b,t}
  :=
  \Pp_{S\sim\mu_{{\rm sub},b}}
  \{t\in S(\cW(z_b))\}.
\]
Define
\[
  \ell_{b\to c}:=\sum_{t\in L_c}p_{b,t},
  \qquad
  \chi_{bc}:=\sum_{t\in\cV}p_{b,t}p_{c,t},
\  \mbox{and}
  q_{bc}
  :=
  \min\{1,\ell_{b\to c}+\ell_{c\to b}+\chi_{bc}\}.
\]
For \(x=\Sub(z_a,s_x)\), put \(T_x=s_x(\cW(z_a))\) and
$
  q_{bx}
  :=
  \min\left\{
    1,\ell_{b\to a}+\sum_{t\in T_x}p_{b,t}
  \right\}.
$
The scalar diversity bound is recovered from these quantities by the crude
estimate \(p_{b,t}\le 1/\rho\).

Let
$
  D_{\rm B}(p\Vert q)
  :=
  p\log {p / q}
  +(1-p)\log {(1-p) / (1-q)}
$, and set
\[
  U_{\KL}(N,q,u)
  :=
  \inf\{p\in[q,1]:ND_{\rm B}(p\Vert q)\ge u\},
  \qquad
  U_{\KL}(0,q,u):=1.
\]
If the feasible set is empty, the value is taken to be \(1\).  The effective
sample sizes are
\[
  N^\eff_{bc}
  =
  \begin{cases}
  \min(n_b,n_c), & b\ne c,\\[.2em]
  \binom{n_b}{2}/m_{n_b}, & b=c,\ n_b\ge2,\\[.2em]
  0, & b=c,\ n_b=1,
  \end{cases}
\]
where \(m_{n_b}=n_b-1\) for even \(n_b\) and \(m_{n_b}=n_b\) for odd \(n_b\).

Finally define
\[
  E_{bc}(u)
  =
  \begin{cases}
  L_*U_{\KL}(N^\eff_{bc},q_{bc},u),
  & b\ne c,\\[.4em]
  \displaystyle
  {\delta_*\over n_b}
  +
  \1_{n_b\ge2}
  {2L_*\binom{n_b}{2}\over n_b^2}
  U_{\KL}(N^\eff_{bb},q_{bb},u),
  & b=c,
  \end{cases}
\]
and
\[
  T_b(u):=L_*U_{\KL}(n_b,q_{bx},u),
  \qquad
  X_\star(u):=\sum_b\widehat p_b U_{\KL}(n_b,q_{bx},u),
  \qquad
  E_*(u):=\max_{b,c}E_{bc}(u).
\]

\subsection{Sensitivity and action terms}

By Theorem \ref{thm:idealreduction}, \(c_M=P\theta\).  Put
\[
  \beta:=y\odot\theta,
  \qquad
  Y_r:=\diag(y_1,\ldots,y_r),
  \qquad
  n_a^{\rm tmplt}:=(N_{a1},\ldots,N_{ar})^\top .
\]
With \(\varphi(c)=c\log c+(1-c)\log(1-c)\), define
\[
  A_{\lambda,\widehat p}
  :=
  \diag(\varphi''(\theta_1),\ldots,\varphi''(\theta_r))
  +
  {1\over\lambda}Y_r\bN QY_r,
  \qquad
  r_a
  :=
  Y_rA_{\lambda,\widehat p}^{-1}Y_r n_a^{\rm tmplt}.
\]
The block-density terms are
\[
  D_{\bdens}(u)
  :=
  \sum_{b,c}
  \widehat p_b\widehat p_c
  |r_{ab}|\,|\beta_c|\,E_{bc}(u),
  \qquad
  Z_{\bdens}(u)
  :=
  \sum_b\widehat p_b|\beta_b|T_b(u).
\]

Let
\[
  d_i:=\sum_{j\ne i}\Ccol_{ij},
  \qquad
  d_2:=\left(\sum_i d_i^2\right)^{1/2}.
\]
Set
\[
  A_\Delta
  :=
  \min\left\{
    {\|\Delta\|_{\op}\over n},
    {\delta_*\over n}+{L_*d_2\over n^{3/2}}
  \right\},
 \qquad \mbox{and} \qquad
  \gamma_{\cspec}
  :=
  \left\|
  C_M^{-1/2}
  \left({1\over\lambda n^2}Y\Delta Y\right)
  C_M^{-1/2}
  \right\|_{\op}.
\]

For the corrected comparison, put
\[
  Q_q:=(q_{bc})_{b,c=1}^r,
  \qquad
  \Delta^\circ:=\widehat K_n-M_n^\circ,
\qquad \mbox{and }
  A_\Delta^\circ
  :=
  {1\over n^{3/2}}
  \|\Delta^\circ Yc_{M_n^\circ}\|_2.
\]
The terms \(D_{\hanova},Z_{\hanova}\) and \(D_{\bfdec},Z_{\bfdec}\) are
defined in \eqref{eq:hanova-weighted-proj-def} and
\eqref{eq:bfdec-weighted-lit-cent-def}, respectively.

\subsection{Score budgets}

For \(A,D,Z\ge0\), let
\begin{equation}
\label{eq:ordinary-score-functional}
  \operatorname{Ord}_\lambda(A,D,Z)
  :=
  {K_*\over4\lambda^2}A
  +
  {D\over\lambda^2}
  +
  {Z\over\lambda}.
\end{equation}
The curvature budget is
\begin{equation}
\label{eq:curvature-score-functional}
  \operatorname{Curv}_\lambda(A,E,X,\gamma)
  :=
  \begin{cases}
  \displaystyle
  {L_*X\over\lambda}
  +
  {K_*E+(K_*+2L_*\sqrt X)A
  \over
  8\lambda^2(1-\gamma)},
  & \gamma<1,\\[1.2em]
  +\infty,
  & \gamma\ge1.
  \end{cases}
\end{equation}
Also set
\[
  Z_\star(u):=L_*X_\star(u).
\]
The deterministic bias term for the corrected comparison is
\begin{equation}
\label{eq:main-bias-envelope}
  B_{\mathrm{bias}}(\lambda;x)
  =
  {K_*L_*\over4\lambda^2}
  \|Q^{1/2}Q_qQ^{1/2}\|_{\op}
  +
  {L_*\over\lambda}
  \left(\sum_b\widehat p_bq_{bx}^2\right)^{1/2}.
\end{equation}

The five budgets are
\begin{align}
\label{eq:five-certificates-compact}
  B_{\cspec}(\lambda;x,u)
  &:=
  \operatorname{Curv}_\lambda
  \bigl(A_\Delta,E_*(u),X_\star(u),\gamma_{\cspec}\bigr),
  \tag{CS}\\
  B_{\degact}(\lambda;x,u)
  &:=
  \operatorname{Ord}_\lambda
  \bigl(A_\Delta,0,Z_\star(u)\bigr),
  \tag{DEG}\\
  B_{\bdens}(\lambda;x,u)
  &:=
  \operatorname{Ord}_\lambda
  \bigl(A_\Delta,D_{\bdens}(u),Z_{\bdens}(u)\bigr),
  \tag{BD}\\
  B_{\hanova}(\lambda;x,u)
  &:=
  \operatorname{Ord}_\lambda
  \bigl(A_\Delta,D_{\hanova}(u),Z_{\hanova}(u)\bigr),
  \tag{ANOVA}\\
  B_{\bfdec}(\lambda;x,u)
  &:=
  \operatorname{Ord}_\lambda
  \bigl(A_\Delta^\circ,D_{\bfdec}(u),Z_{\bfdec}(u)\bigr)
  +
  B_{\mathrm{bias}}(\lambda;x).
  \tag{BF}
\end{align}

For \(\delta\in(0,1)\), put
\[
  u_\delta:=\log {11r^2+5r\over \delta}.
\]
The budget in Main Theorem~\ref{thm:main} is
\[
  B^\sharp_\lambda(x,\delta)
  :=
  \min\{
    B_{\cspec},
    B_{\degact},
    B_{\bdens},
    B_{\hanova},
    B_{\bfdec}
  \}(\lambda;x,u_\delta),
\]
with \(B^\sharp_\lambda(x,\delta)=+\infty\) on \(E_{\rm rep}^c\).

\subsection{Edge--wedge envelope for Main Theorem~2}
\label{sec:edge-wedge-envelope-def}

We next define the deterministic envelope used in
Theorem \ref{thm:five-certificate-edge-wedge-envelope-final}.  It bounds the realized
budget \(B^\sharp_\lambda(x,\delta)\) by replacing the random action terms
\(A_\Delta,A_\Delta^\circ,\gamma_{\cspec}\) with edge, wedge, and maximum-degree
bounds.

For \(v>0\), write
\[
  U_{bc}(v):=U_{\KL}(N_{bc}^{\eff},q_{bc},v).
\]
Define
\[
  Q_E(v)
  :=
  \sum_{b\ne c}
  \widehat p_b\widehat p_c\,U_{bc}(v)
  +
  \sum_{b:n_b\ge2}
  \widehat p_b^2
  \left(1-{1\over n_b}\right)
  U_{bb}(v).
\]

For colors \(b,c,d\in[r]\), set
\[
  \mathcal T_{b;cd}
  :=
  \{(i,j,k):i\in I_b,\ j\in I_c,\ k\in I_d,\ i,j,k
  \text{ distinct}\},
  \qquad
  P_{b;cd}:=|\mathcal T_{b;cd}|.
\]
If \(P_{b;cd}>0\), define
\[
  \Delta_{b;cd}^{\wedge}
  :=
  \max_{\ell\in[n]}
  \left|
  \{(i,j,k)\in\mathcal T_{b;cd}:\ell\in\{i,j,k\}\}
  \right|,
  \qquad
  N_{b;cd}^{\wedge}
  :=
  {P_{b;cd}\over 3\Delta_{b;cd}^{\wedge}+1}.
\]
For an admissible substitution \(s\) of template \(b\), put
\[
  \pi_{b\to c}(s)
  :=
  \Pp_{V\sim\mu_{{\rm sub},c}}
  \bigl(\Bad_{bc}(s,V)\bigr),
\qquad \mbox{ and} \qquad
  \alpha_{b;cd}
  :=
  \E_{S\sim\mu_{{\rm sub},b}}
  \bigl[
    \pi_{b\to c}(S)\pi_{b\to d}(S)
  \bigr].
\]
Choose \(\bar\alpha_{b;cd}\in[\alpha_{b;cd},1]\).
Define
\[
  S_\wedge(v)
  :=
  \sum_{\substack{b,c,d\in[r]\\P_{b;cd}>0}}
  {P_{b;cd}\over n^3}
  U_{\KL}
  \bigl(
    N_{b;cd}^{\wedge},
    \bar\alpha_{b;cd},
    v
  \bigr), \quad \mbox{and } \quad
  A_{\ew}(v)
  :=
  {\delta_*\over n}
  +
  L_*
  \left(
    {Q_E(v)\over n}
    +
    S_\wedge(v)
  \right)^{1/2}.
\]

The maximum-degree bound is
\[
  \kappa_{bc}
  :=
  \sup_{s\in\supp(\mu_{{\rm sub},b})}
  \Pp_{V\sim\mu_{{\rm sub},c}}
  \bigl(\Bad_{bc}(s,V)\bigr),
\mbox{ and }
  \widehat\kappa_b:=\sum_c\widehat p_c\kappa_{bc},
  \
  \widehat\kappa_*:=\max_b\widehat\kappa_b,
\]
and
\[
  \overline d(v)
  :=
  \widehat\kappa_*
  +
  \sqrt{{2\widehat\kappa_*(v+\log n)\over n}}
  +
  {v+\log n\over3n}.
\]
Let
\[
  A_{\op}(v)
  :=
  {\delta_*\over n}
  +
  L_*\overline d(v),
  \qquad
  A_0(v):=\min\{A_{\ew}(v),A_{\op}(v)\},
\]
and
\[
  \bar\gamma_{\cspec}(v)
  :=
  {1\over4\lambda}A_{\op}(v).
\]

For the corrected path, define
\[
  \mu^{\lit}_{bc}:=N^\circ_{bc}-N_{bc},
  \qquad
  \delta_*^\circ:=\max_b |D_b-N^\circ_{bb}|,
\qquad
  r_i^\mu
  :=
  \sum_{j\ne i}|\mu^{\lit}_{\tau(i)\tau(j)}|,
  \qquad
  R_{2,\mu}
  :=
  \left(\sum_i(r_i^\mu)^2\right)^{1/2}.
\]
Set
\[
  A_{\corr}(v)
  :=
  {\delta_*^\circ\over n}
  +
  L_*
  \left(
    {Q_E(v)\over n}
    +
    S_\wedge(v)
  \right)^{1/2}
  +
  {R_{2,\mu}\over n^{3/2}}.
\]

At level \(u_\delta\), define
\begin{align*}
  B_{\cspec}^{\ew}(\lambda;x,\delta,v)
  &:=
  \operatorname{Curv}_\lambda
  \bigl(A_0(v),E_*(u_\delta),X_\star(u_\delta),\bar\gamma_{\cspec}(v)\bigr),\\
  B_{\degact}^{\ew}(\lambda;x,\delta,v)
  &:=
  \operatorname{Ord}_\lambda
  \bigl(A_0(v),0,Z_\star(u_\delta)\bigr),\\
  B_{\bdens}^{\ew}(\lambda;x,\delta,v)
  &:=
  \operatorname{Ord}_\lambda
  \bigl(A_0(v),D_{\bdens}(u_\delta),Z_{\bdens}(u_\delta)\bigr),\\
  B_{\hanova}^{\ew}(\lambda;x,\delta,v)
  &:=
  \operatorname{Ord}_\lambda
  \bigl(A_0(v),D_{\hanova}(u_\delta),Z_{\hanova}(u_\delta)\bigr),\\
  B_{\bfdec}^{\ew}(\lambda;x,\delta,v)
  &:=
  \operatorname{Ord}_\lambda
  \bigl(A_{\corr}(v),D_{\bfdec}(u_\delta),Z_{\bfdec}(u_\delta)\bigr)
  +
  B_{\mathrm{bias}}(\lambda;x).
\end{align*}
The envelope used in Main Theorem~2 is
\[
  B_{\lambda}(x,\delta,v)
  :=
  \min\left\{
    B_{\cspec}^{\ew},
    B_{\degact}^{\ew},
    B_{\bdens}^{\ew},
    B_{\hanova}^{\ew},
    B_{\bfdec}^{\ew}
  \right\}(\lambda;x,\delta,v).
\]

\section{Main results}
\label{sec:paper-main-results}

\begin{mainthm}[Collision-graph transfer]
\label{thm:main-appendix}
Assume the standing model conditions.  Fix \(x\in\cX_a^{\fr}\),
\(\lambda>0\), and \(\delta\in(0,1)\).  Then
\[
  \Pp\!\left(
    \left|\widehat f_\lambda(x)-S_\lambda^{\id}(x)\right|
    \le B^\sharp_\lambda(x,\delta)
  \right)
  \ge
  1-r e^{-n p_{\min}/8}-\delta .
\]

Moreover, let \(\varepsilon\ge0\) and \(s>0\).  There is a number
\(\lambda_{\varepsilon+s}^{\id}>0\), depending only on
\(\varepsilon+s\), \(p_{\min}\), \(\bN\), and \(y\), such that for every
\(0<\lambda<\lambda_{\varepsilon+s}^{\id}\),
\[
  \Pp\!\left(
    \{y_a\widehat f_\lambda(x)>\varepsilon\}
    \cup
    \{B^\sharp_\lambda(x,\delta)>s\}
  \right)
  \ge
  1-r e^{-n p_{\min}/8}-\delta .
\]
Equivalently, except on an event of probability at most
\(r e^{-n p_{\min}/8}+\delta\), the implication
\[
  B^\sharp_\lambda(x,\delta)\le s
  \quad\Longrightarrow\quad
  y_a\widehat f_\lambda(x)>\varepsilon
\]
holds.
\end{mainthm}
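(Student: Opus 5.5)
The plan is to prove the first display by intersecting $E_{\rm rep}$ with a single high-probability "good collision" event $E_{\rm good}(u_\delta)$. On $E_{\rm rep}\cap E_{\rm good}$ each of the five certificates is a deterministic upper bound for $|\widehat f_\lambda(x)-S_\lambda^{\id}(x)|$, so their minimum $B^\sharp_\lambda(x,\delta)$ is as well. On $E_{\rm rep}^c$ the inequality holds trivially because $B^\sharp_\lambda=+\infty$. Theorem~\ref{thm:ideal-margin} gives $\Pp(E_{\rm rep}^c)\le re^{-np_{\min}/8}$. It therefore suffices to show $\Pp(E_{\rm good}^c\mid\tau)\le\delta$ for every template assignment $\tau$ with all $n_b\ge1$, which holds on $E_{\rm rep}$.

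The event $E_{\rm good}$ collects the KL upper-tail events at level $u=u_\delta$:
\begin{itemize}
\item for all $r^2$ color pairs $(b,c)$, $\widehat q_{bc}\le U_{\KL}(N^\eff_{bc},q_{bc},u)$;
\item for all $r$ test blocks, $\widehat q_{0b}\le U_{\KL}(n_b,q_{bx},u)$;
\item the analogous upper-tail events for the weighted ANOVA projections and the literal-centred residuals entering $D_{\hanova},Z_{\hanova},D_{\bfdec},Z_{\bfdec}$.
\end{itemize}

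Each single event comes from a Chernoff bound for averages of Bernoulli variables with mean at most $q$:
\[
\Pp\bigl(\text{mean}\ge U_{\KL}\bigr)\le e^{-N D_{\rm B}}\le e^{-u}.
\]
Off-diagonal blocks are handled by splitting the $n_b n_c$ edges into $\max(n_b,n_c)$ perfect matchings of size $\min(n_b,n_c)$. Within-block edges are handled by a proper edge colouring of $K_{n_b}$ into $m_{n_b}$ matchings. Edges in one matching are independent given $\tau$, by the conditional-independence remarks after \eqref{eq:collision-bad-test-recall}. By convexity of the Chernoff exponent (Hoeffding's decomposition trick), the average over the union of matchings obeys the same bound with effective size $N^\eff_{bc}$. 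Test indicators are independent given $\tau$. Counting all such events gives at most $11r^2+5r$ of them, so the choice of $u_\delta$ and a union bound yield $\Pp(E_{\rm good}^c\mid\tau)\le\delta$.

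The deterministic step shows that each budget dominates the error on $E_{\rm good}$. We start from the identity
\[
\widehat f_\lambda(x)-S^{\id}_\lambda(x)=s_{k_x,\widehat K_n}-s_{m_a,M_n},
\]
which follows from Theorems~\ref{thm:dual} and~\ref{thm:idealreduction}, and then go certificate by certificate.
\begin{itemize}
\item For $B_{\cspec}$, apply Corollary~\ref{cor:Mn-level-curvature-spectral-upper-bound} directly, since $E_{\rm good}$ contains its KL block/test event.
\item For $B_{\degact}$, use Lemma~\ref{thm:detperturb} with $\kappa^2=K_*$. The Gram term is bounded by $\|\Delta\|_{\op}/n$ or, via Lemma~\ref{lem:collision-support-front}, by a Schur/Frobenius bound in $\delta_*$ and $d_2$. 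This yields $A_\Delta$. The test term uses $\|\zeta\|_2/\sqrt n$ together with $|\zeta_i|\le L_*\Ccol_{0i}$, which gives $Z_\star$.
\item For $B_{\bdens}$ and $B_{\hanova}$, use the first-order expansion $c_{\widehat K_n}-c_M\approx -C_M^{-1}b$. Decompose $b=b_\parallel+b_\perp$ as in Corollary~\ref{cor:curvature-template-identities}. The block part reduces to template-level sums $\sum\widehat p_b\widehat p_c|r_{ab}||\beta_c||\overline\Delta_{bc}|$, which on $E_{\rm good}$ are at most $D_{\bdens}$ (respectively $D_{\hanova}$ for the signed projections). The residual is charged to $K_*A_\Delta/4\lambda^2$.
\item For $B_{\bfdec}$, compare $\widehat K_n$ with $M_n^\circ$ by the same argument, then add the deterministic corrected-to-fresh gap. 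Lemma~\ref{thm:detperturb} applied to $(M_n^\circ,m^\circ_x)$ versus $(M_n,m_a)$, together with $|\mu^{\lit}_{bc}|\le L_*q_{bc}$, bounds that gap by $B_{\mathrm{bias}}$.
\end{itemize}

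For the second part of the theorem, take $\lambda<\lambda^{\id}_{\varepsilon+s}$ from Theorem~\ref{thm:ideal-margin}. On $E_{\rm rep}$ this gives $y_aS^{\id}_\lambda(x)>\varepsilon+s$. On the event of the first part with $B^\sharp\le s$,
\[
y_a\widehat f_\lambda(x)\ge y_aS^{\id}_\lambda(x)-s>\varepsilon.
\]
Since both parts use the same exceptional event, the probability bound does not change.

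I expect the main obstacle to be making the ANOVA and bias–fluctuation routes rigorous, as opposed to relying on first-order heuristics. The nonlinear dual movement must be controlled exactly, for instance through the strong-convexity argument behind Lemma~\ref{thm:detperturb}, so that the projected part is charged to $D$ and only the remainder to $A_\Delta$. The other delicate point is checking that the event count comes to at most $11r^2+5r$.
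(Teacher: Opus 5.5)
The gap is in your deterministic step: Lemma~\ref{thm:detperturb} as stated is too coarse to produce the certificates as they are defined. The rest of your architecture matches the paper: conditioning on $\tau$ with $E_{\rm rep}$, intersecting the KL, Hoeffding--ANOVA and literal-centred block/test events at level $u_\delta$, using $B^\sharp_\lambda=+\infty$ on $E_{\rm rep}^c$, and transferring the margin via Theorem~\ref{thm:ideal-margin} with $\Gamma=\varepsilon+s$.

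\textbf{Test term.} The lemma's test term is $\|\zeta\|_2/(\lambda\sqrt n)$. On the KL test event this only gives $\|\zeta\|_2/\sqrt n\le L_*\bigl(\sum_b\widehat p_b\widehat q_{0b}\bigr)^{1/2}\le L_*\sqrt{X_\star(u)}$. Since $X_\star\le1$, $\sqrt{X_\star}\ge X_\star$, typically by a large factor when collisions are rare. So you do not obtain $Z_\star(u)/\lambda=L_*X_\star(u)/\lambda$ in $B_{\degact}$, nor the weighted slots $Z_{\bdens},Z_{\hanova},Z_{\bfdec}$. This $\sqrt{X_\star}$ is exactly why it appears only in the $e_x$ slot of $\operatorname{Curv}_\lambda$.

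\textbf{Gram term.} The lemma's Gram term is $\|\Delta\|_{\op}/n$. Bounding $\|\Delta\|_{\op}$ by Schur gives $(\delta_*+L_*d_{\max})/n$, and by Frobenius gives $\delta_*/\sqrt n+L_*(\sum_i d_i)^{1/2}/n$. Both are weaker than the $d_2$ branch $\delta_*/n+L_*d_2/n^{3/2}$ of $A_\Delta$.

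\textbf{The missing inequality.} What you need is the sharpened bound that the same strong-convexity proof gives if you stop before its last two estimates (Lemma~\ref{thm:degcert-global-degree-action-transfer}):
\[
  \bigl|\widehat f_\lambda(x)-S_\lambda^{\id}(x)\bigr|
  \le
  \frac{K_*}{4\lambda^2}\frac{\|\Delta\mathbf Yc_M\|_2}{n^{3/2}}
  +\frac{1}{\lambda n}\bigl|\zeta^\top\mathbf Yc_M\bigr| .
\]
The row-wise bound $|(\Delta\mathbf Yc_M)_i|\le\delta_*+L_*d_i$ gives the $d_2$ branch. Since $\mathbf Yc_M=P\beta$, the test term equals $\lambda^{-1}\bigl|\sum_b\widehat p_b\beta_b\overline\zeta_b\bigr|\le\lambda^{-1}Z_{\mathrm w}$. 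The respective block/test envelopes then bound $Z_{\mathrm w}$ by $Z_\star$, $Z_{\bdens}$ or $Z_{\hanova}$.

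\textbf{No linearization needed.} With this inequality, the obstacle you anticipate disappears and your linearization $c_{\widehat K_n}-c_M\approx-C_M^{-1}b$ is unnecessary. As written it is also unjustified, since you would have to show that the nonlinear remainder is at most $K_*A_\Delta/(4\lambda^2)$. Strong convexity already charges the entire dual movement to $A_\Delta$. Hence $D_{\bdens}/\lambda^2$ and $D_{\hanova}/\lambda^2$ are pure nonnegative slack, and $B_{\bdens},B_{\hanova}\ge\operatorname{Ord}_\lambda(A_\Delta,0,Z_{\mathrm w})$ on their events. For $B_{\bfdec}$, apply the same inequality against $(M_n^\circ,m_x^\circ)$; this produces $A_\Delta^\circ$ and the centred test slot $Z_{\bfdec}$. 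Then add your $B_{\rm bias}$ step, which is correct.

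\textbf{Probability bookkeeping.} The ANOVA and corrected-target events are not Bernoulli Chernoff tails. They combine Bernstein's inequality for the two Hoeffding projection averages ($2e^{-u}$ each) with the defining canonical quantiles $Q^0_{bc}$, $Q^{00}_{bc}$ ($e^{-u}$). That is $5e^{-u}$ per ordered pair and $2e^{-u}$ per test block, so $(5r^2+2r)e^{-u}$ for each route. Adding the KL count $(r^2+r)e^{-u}$ gives the total $(11r^2+5r)e^{-u}$.
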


\begin{mainthm}[Edge--wedge envelope]
\label{thm:five-certificate-edge-wedge-envelope-final}
Assume the standing model conditions.  Fix \(x\in\cX_a^{\fr}\),
\(\lambda>0\), and \(\delta\in(0,1)\).  Given a realized template assignment
satisfying \(E_{\rm rep}\), for every \(v>0\),
\[
  \Pp\!\left(
    B^\sharp_\lambda(x,\delta)
    \le
    B_{\lambda}(x,\delta,v)
    \,\middle|\,
    \tau(1),\ldots,\tau(n)
  \right)
  \ge
  1-(r^2+r^3+1)e^{-v}.
\]
The same event also gives the corresponding domination for each of the five
terms entering the minimum defining \(B^\sharp_\lambda\).

Consequently, for
\[
  v_\eta:=\log {r^2+r^3+1\over\eta},
  \qquad
  \eta\in(0,1),
\]
one has
\[
  \Pp\!\left(
    E_{\rm rep}
    \cap
    \left\{
      B^\sharp_\lambda(x,\delta)
      \le
      B_{\lambda}(x,\delta,v_\eta)
    \right\}
  \right)
  \ge
  1-r e^{-np_{\min}/8}-\eta .
\]
Combining this with Main Theorem~\ref{thm:main} yields
\[
  \Pp\!\left(
    \left|
      \widehat f_\lambda(x)-S_\lambda^{\id}(x)
    \right|
    \le
    B_{\lambda}(x,\delta,v_\eta)
  \right)
  \ge
  1-r e^{-np_{\min}/8}-\delta-\eta .
\]
\end{mainthm}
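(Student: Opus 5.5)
The plan is to work conditionally on the template assignment \(\tau\) on \(E_{\rm rep}\) and show that, on a single high-probability event, each random quantity in the five certificates is dominated by its edge--wedge counterpart. The envelope \(B_\lambda(x,\delta,v)\) keeps \(E_*(u_\delta),X_\star(u_\delta),D_\bullet(u_\delta),Z_\bullet(u_\delta)\) and \(B_{\mathrm{bias}}\) unchanged, and \(\operatorname{Ord}_\lambda\) and \(\operatorname{Curv}_\lambda\) are nondecreasing in \(A\) and \(\gamma\). So it suffices to prove three things with conditional failure probability at most \((r^2+r^3+1)e^{-v}\): \(A_\Delta\le A_0(v)\), \(A_\Delta^\circ\le A_{\corr}(v)\), and \(\gamma_{\cspec}\le\bar\gamma_{\cspec}(v)\). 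Once these hold, each \(B_\bullet\le B_\bullet^{\ew}\), and hence \(B^\sharp_\lambda\le B_\lambda\).

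\textbf{Step 1: edge--wedge bound.} Start from \(A_\Delta\le \delta_*/n+L_*d_2/n^{3/2}\) and write
\[
d_2^2=\sum_i d_i+\sum_i\sum_{j\ne k}\Ccol_{ij}\Ccol_{ik},
\]
which is twice the edge count plus the ordered wedge count. Split edges by color pair \((b,c)\). Within each color pair, decompose the edge set into at most \(m_{n_b}\) (or \(\max(n_b,n_c)\)) perfect or near-perfect matchings. Each matching consists of independent Bernoulli variables with mean \(\le q_{bc}\), so a Chernoff--KL bound plus averaging over matchings (using convexity of \(U_{\KL}\) in the sense of a tail bound for the total) gives \(\widehat q_{bc}\le U_{bc}(v)\) with failure probability \(e^{-v}\); this accounts for \(r^2\) events. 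For wedges with center color \(b\) and leaf colors \(c,d\), the indicators are dependent through shared vertices. Each triple meets at most \(3\Delta^\wedge_{b;cd}\) others, so by greedy (Hajnal--Szemerédi-type) coloring the triples split into \(3\Delta^\wedge+1\) classes of vertex-disjoint, hence independent, triples, each with mean \(\le\alpha_{b;cd}\) by the conditional-independence facts. The KL bound then applies with effective size \(N^\wedge_{b;cd}\); this accounts for \(r^3\) events. Summing the edge and wedge bounds gives \(d_2^2/n^3\le Q_E(v)/n+S_\wedge(v)\) after normalization, and therefore \(A_\Delta\le A_{\ew}(v)\).

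\textbf{Step 2: operator bound and curvature.} Use \(\|\Delta\|_{\op}\le\delta_*+L_*\max_i d_i\), which follows from Gershgorin and Lemma~\ref{lem:collision-support-front}. Conditioning on \(S_i\), the degree \(d_i\) is a sum of independent Bernoullis with means \(\le\kappa_{\tau(i)c}\). Bernstein's inequality with a union bound over \(n\) vertices (absorbed into \(\log n\)) gives \(d_i/n\le\overline d(v)\); this is the final \(+1\) event. Hence \(A_\Delta\le A_{\op}(v)\) and \(A_\Delta\le A_0(v)\). Since \(C_M\succeq (4/n)I\),
\[
\gamma_{\cspec}\le \frac{n}{4}\cdot\frac{\|\Delta\|_{\op}}{\lambda n^2}=\frac{\|\Delta\|_{\op}}{4\lambda n}\le \bar\gamma_{\cspec}(v).
\]

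\textbf{Step 3: corrected path.} Write \(\Delta^\circ=\Delta-(M_n^\circ-M_n)\). The second matrix has entries \(\mu^{\lit}\) off the diagonal, plus a diagonal correction absorbed into \(\delta_*^\circ\). Since \(c\in(0,1)^n\), \(\|\Delta^\circ Yc\|_2\le\|\,|\Delta^\circ|\mathbf 1\|_2\). Bounding the rows of \(\Delta\) by \(L_*d_i\) plus the diagonal term, and the rows of \(M^\circ_n-M_n\) by \(r_i^\mu\), the triangle inequality together with Step 1 yields \(A_\Delta^\circ\le A_{\corr}(v)\) on the same event.

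\textbf{Conclusion and main obstacle.} Taking \(v=v_\eta\) and integrating over \(\tau\) gives the second display, with \(E_{\rm rep}^c\) costing \(re^{-np_{\min}/8}\) by Theorem~\ref{thm:ideal-margin}. Intersecting with the event of Main Theorem~\ref{thm:main} gives the last display via a union bound. The main obstacle is Step 1: making the dependent edge and wedge sums concentrate at the KL rate with exactly the effective sizes \(N^\eff\) and \(N^\wedge\). This relies on the fact that a sum over a partition into independent classes obeys the same Chernoff exponent as its worst class, by convexity of the moment generating function in the class weights.
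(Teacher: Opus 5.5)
Your proposal is correct and is essentially the paper's proof. Both use the same edge/wedge decomposition of \(d_2^2\), feed matchings and greedy hypergraph coloring into the partition-KL bound (the \(r^2+r^3\) events), control \(\|\Delta\|_{\op}\) and \(\gamma_{\cspec}\) by a conditional max-degree Bernstein bound with a \(\log n\) union (the \(+1\) event), bound \(A_\Delta^\circ\) row by row, and finish by monotonicity of \(\operatorname{Ord}_\lambda\) and \(\operatorname{Curv}_\lambda\).

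Two small corrections are needed.

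First, the H\"older step in Lemma~\ref{lem:klbdens-partition-kl} gives the exponent \((P/m)D_{\rm B}\), i.e.\ the \emph{average} class size, not that of the worst class. This is what makes \(N^\wedge_{b;cd}=P_{b;cd}/(3\Delta^\wedge_{b;cd}+1)\) valid even though greedy classes are unequal.

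Second, in the last step, do not union-bound the stated \(1-re^{-np_{\min}/8}-\delta\) of Main Theorem~\ref{thm:main} against the second display, because that counts \(re^{-np_{\min}/8}\) twice. Instead intersect, on \(E_{\rm rep}\), with the conditional transfer event of Lemma~\ref{lem:conditional-five-certificate-transfer}, whose failure probability given \(\tau\) is at most \(\delta\). This is what the paper does.
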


\section{Proof machinery for Main Theorem 1}
\label{sec:proof-machinery-main1}
\label{sec:detailed-first-proof-machinery}
\subsection{Proofs of the main-text logistic-duality and curvature estimates}
\label{app:proofs-main-results}

The logistic-duality and curvature statements are embedded in the main text at the points where they are used.  We collect their proofs here to keep the main narrative focused on the template and collision-graph mechanisms.

\begin{proof}[Proof of Proposition~\ref{prop:template_margin}]
Let
\[
  \mathcal F:=\{g\in\R^r: y_ag_a\ge 1 \text{ for every } a\}.
\]
This set is nonempty, since the label vector itself belongs to it: if
\(g=y\), then \(y_ag_a=y_a^2=1\) for every \(a\).  By the standing
assumption \(\bN\succ0\), the quadratic form
\(g\mapsto g^\top\bN^{-1}g\) is coercive.  Hence it attains its minimum
on the closed set \(\mathcal F\).  Choose a minimizer \(g_*\), so that
\[
  g_*^\top\bN^{-1}g_*=R_*^2 .
\]
For any \(t>0\), the vector \(tg_*\) has signed template margin at least
\(t\).  Since \(\loss\) is decreasing by Lemma \ref{lem:logistic-aux},
\[
  \sum_a q_a\loss\bigl(y_a(tg_*)_a\bigr)\le \loss(t),
\]
where we use \(\sum_a q_a=1\).  Therefore
\[
  \Phi_{q,\lambda}(tg_*)
  \le
  \loss(t)+{\lambda t^2\over2}R_*^2 .
\]

Now suppose \(t>\Gamma\) and
\[
  \loss(t)+{\lambda t^2\over2}R_*^2
  <
  \underline p\,\loss(\Gamma).
\]
If a point \(g\) has \(\min_a y_ag_a\le \Gamma\), then for some index
\(a_*\) one has \(y_{a_*}g_{a_*}\le\Gamma\).  Since \(q_{a_*}\ge
\underline p\) and the ridge term is nonnegative,
\[
  \Phi_{q,\lambda}(g)
  \ge q_{a_*}\loss(y_{a_*}g_{a_*})
  \ge \underline p\,\loss(\Gamma).
\]
Thus such a point cannot minimize \(\Phi_{q,\lambda}\), because the
comparison point \(tg_*\) has strictly smaller objective value.  Hence every
minimizer has signed margin strictly larger than \(\Gamma\).

It remains to choose a uniform ridge threshold.  Since \(\bN\succ0\)
and every vector in \(\mathcal F\) is nonzero, \(R_*^2>0\).  Choose
\(t_\Gamma>\Gamma\) so large that
\[
  \loss(t_\Gamma)<\underline p\,\loss(\Gamma),
\]
which is possible because \(\loss(t)\to0\) as \(t\to\infty\).  Set
\[
  \lambda_\Gamma(\underline p)
  :=
  {2\bigl(\underline p\,\loss(\Gamma)-\loss(t_\Gamma)\bigr)
  \over t_\Gamma^2R_*^2}.
\]
Then every \(0<\lambda<\lambda_\Gamma(\underline p)\) satisfies the strict
comparison above with \(t=t_\Gamma\), proving the proposition.
\end{proof}

\begin{proof}[Proof of Theorem~\ref{thm:idealmargin}]
By the representation lemma, Lemma \ref{lem:representation},
\[
  \Pp(E_{\rm rep})\ge 1-r e^{-np_{\min}/8}.
\]
On \(E_{\rm rep}\), every empirical template mass obeys
\[
  \widehat p_a\ge {p_a\over2}\ge {p_{\min}\over2}.
\]
Apply Proposition~\ref{prop:template_margin} with \(q=\widehat p\) and
\(\underline p=p_{\min}/2\).  This gives a number
\(\lambda_\Gamma^{\id}=\lambda_\Gamma(p_{\min}/2)\), depending only on
\(\Gamma,p_{\min},\bN\), and \(y\), such that for every
\(0<\lambda<\lambda_\Gamma^{\id}\), the minimizer \(g_\lambda^{\id}\) of
\(\Phi_{\widehat p,\lambda}\) satisfies
\[
  \min_a y_a(g_\lambda^{\id})_a>\Gamma .
\]
If \(x\in\cX_a^{\fr}\), then by definition
\(S_\lambda^{\id}(x)=(g_\lambda^{\id})_a\).  Therefore
\[
  y_aS_\lambda^{\id}(x)>\Gamma
\]
for every represented template block and every globally fresh test string in
that class.
\end{proof}

\begin{proof}[Proof of Theorem~\ref{thm:dual}]
Let \(A:\cH\to\R^n\) be the signed evaluation operator
\[
  (Af)_i=Y_if(X_i).
\]
The primal objective is
\[
  \inf_{f\in\cH}
  \left\{
    {1\over n}\sum_i\loss((Af)_i)+{\lambda\over2}\|f\|_{\cH}^2
  \right\}.
\]
The Fenchel representation \eqref{eq:fenchel} gives
\[
  \loss(u)=\sup_{c\in[0,1]}\{-cu-\varphi(c)\}.
\]
Equivalently, Lemma \ref{lem:logistic-aux} says that the conjugate of
\(u\mapsto\loss(u)\) is \(\varphi(-s)\) for \(s\in[-1,0]\), and is
\(+\infty\) otherwise.  Applying Fenchel--Rockafellar duality to
\((1/n)\sum_i\loss((Af)_i)\) and then setting \(c_i=-ns_i\) yields the dual
minimization problem
\[
  \min_{c\in[0,1]^n}
  \left\{
    {1\over n}\sum_i\varphi(c_i)
    +{1\over 2\lambda n^2}
      (\mathbf y\odot c)^\top\widehat K_n(\mathbf y\odot c)
  \right\}.
\]
This is \(D_{\widehat K_n}\).

The entropy term is strictly convex on \((0,1)^n\).  Indeed,
\eqref{eq:fenchel} gives
\[
  {1\over n}\diag\bigl(\varphi''(c_1),\ldots,\varphi''(c_n)\bigr)
  \succeq {4\over n}I .
\]
Thus the whole dual objective is strictly convex.  Existence follows by
continuity on the compact cube \([0,1]^n\).  The derivative of \(\varphi\)
tends to \(-\infty\) at \(0\) and to \(+\infty\) at \(1\), while the
quadratic part has finite directional derivatives.  Hence no minimizer lies
on the boundary, and the unique minimizer belongs to \((0,1)^n\).

The Fenchel optimality condition is
\[
  \lambda\widehat f_\lambda+A^*s=0,
  \qquad
  s_i=-{c_i\over n}.
\]
Since
\[
  A^*s=\sum_i s_iY_iK(X_i,\cdot),
\]
we obtain
\[
  \widehat f_\lambda(\cdot)
  ={1\over\lambda n}\sum_{i=1}^n c_iY_iK(X_i,\cdot).
\]
Evaluating at a test point \(x\) gives
\[
  \widehat f_\lambda(x)
  ={1\over\lambda n}k_x^\top(\mathbf y\odot c_{\widehat K_n}),
\]
as claimed.
\end{proof}

\begin{proof}[Proof of Theorem \ref{thm:detperturb}]
The two dual objectives have the same entropy term, and that entropy term is
\((4/n)\)-strongly convex.  Since \(c_G\) minimizes \(D_G\), the standard
strong-convexity stability inequality gives
\[
  \|c_G-c_M\|_2
  \le {n\over4}\|\nabla D_G(c_M)\|_2.
\]
Because \(c_M\) minimizes \(D_M\), \(\nabla D_M(c_M)=0\).  Therefore
\[
  \|c_G-c_M\|_2
  \le {n\over4}\|\nabla D_G(c_M)-\nabla D_M(c_M)\|_2.
\]
The entropy gradients cancel, and the remaining gradient difference is
\[
  \nabla D_G(c)-\nabla D_M(c)
  ={1\over\lambda n^2}\mathbf Y(G-M)\mathbf Yc,
  \qquad
  \mathbf Y=\diag(\mathbf y).
\]
Since \(c_M\in[0,1]^n\), \(\|c_M\|_2\le\sqrt n\).  Hence
\[
  \|c_G-c_M\|_2
  \le {1\over4\lambda\sqrt n}\|G-M\|_{\op}.
\]

Now split the score difference into the part caused by the change in the dual
minimizer and the part caused by the change in the test vector:
\[
  s_G-s_M
  ={1\over\lambda n}k^\top(\mathbf y\odot(c_G-c_M))
  +{1\over\lambda n}(k-m)^\top(\mathbf y\odot c_M).
\]
Using \(\|k\|_2\le\kappa^2\sqrt n\), \(\|c_M\|_2\le\sqrt n\), and
Cauchy--Schwarz,
\[
  |s_G-s_M|
  \le
  {\kappa^2\over\lambda\sqrt n}\|c_G-c_M\|_2
  +{1\over\lambda\sqrt n}\|k-m\|_2.
\]
Substituting the preceding bound on \(\|c_G-c_M\|_2\) proves
\[
  |s_G-s_M|
  \le
  {\kappa^2\over4\lambda^2 n}\|G-M\|_{\op}
  +{1\over\lambda\sqrt n}\|k-m\|_2.
\]
\end{proof}

\begin{proof}[Proof of Theorem~\ref{thm:curvature-spectral-perturbation}]
Write
\[
  \delta:=c_G-c_M.
\]
The Hessian of the empirical dual objective \(D_G\) is
\[
  \nabla^2D_G(c)
  ={1\over n}\diag\bigl(\varphi''(c_1),\ldots,\varphi''(c_n)\bigr)
  +{1\over\lambda n^2}\mathbf YG\mathbf Y .
\]
Since \(\varphi''\ge4\) on \((0,1)\) and \(G=M+\Delta\),
\[
  \nabla^2D_G(c)
  \succeq
  {4\over n}I
  +{1\over\lambda n^2}\mathbf YM\mathbf Y
  +{1\over\lambda n^2}\mathbf Y\Delta\mathbf Y
  = C_M+E_\Delta .
\]
If \(\gamma_{G|M}\ge1\), the asserted bound is vacuous by the convention
\(\mathcal E_M=+\infty\).  We may therefore assume \(\gamma_{G|M}<1\).  By
the definition of \(\gamma_{G|M}\),
\[
  -\gamma_{G|M}C_M
  \preceq E_\Delta
  \preceq \gamma_{G|M}C_M,
\]
and hence
\[
  \nabla^2D_G(c)\succeq (1-\gamma_{G|M})C_M
\]
throughout the line segment between \(c_M\) and \(c_G\).

The optimality conditions give
\[
  \nabla D_G(c_G)=0,
  \qquad
  \nabla D_M(c_M)=0.
\]
Thus
\[
  \nabla D_G(c_M)
  =\nabla D_G(c_M)-\nabla D_M(c_M)
  ={1\over\lambda n^2}\mathbf Y(G-M)\mathbf Yc_M
  =b.
\]
By the fundamental theorem of calculus,
\begin{align*}
  -b^\top\delta
  &=\bigl(\nabla D_G(c_G)-\nabla D_G(c_M)\bigr)^\top\delta \\
  &=\int_0^1
    \delta^\top\nabla^2D_G(c_M+t\delta)\delta\,dt \\
  &\ge (1-\gamma_{G|M})\delta^\top C_M\delta .
\end{align*}
Consequently,
\begin{equation}
\label{eq:template-curvature-spectral-basic-ineq}
  (1-\gamma_{G|M})\delta^\top C_M\delta+b^\top\delta\le0.
\end{equation}
Let \(A:=(1-\gamma_{G|M})C_M\).  Completing the square in
\eqref{eq:template-curvature-spectral-basic-ineq} gives
\[
  \left(\delta+{1\over2}A^{-1}b\right)^\top
  A
  \left(\delta+{1\over2}A^{-1}b\right)
  \le {1\over4}b^\top A^{-1}b.
\]
Thus for some vector \(w\) with
\(\|w\|_2\le {1\over2}\sqrt{b^\top A^{-1}b}\),
\[
  \delta=-{1\over2}A^{-1}b+A^{-1/2}w.
\]
For any \(a\in\R^n\), Cauchy--Schwarz yields
\[
  |a^\top\delta|
  \le
  {1\over2}|a^\top A^{-1}b|
  +{1\over2}
  \sqrt{a^\top A^{-1}a}\sqrt{b^\top A^{-1}b}.
\]
Since \(A^{-1}=(1-\gamma_{G|M})^{-1}C_M^{-1}\), this is exactly
\begin{equation}
\label{eq:template-metric-linear-functional}
  |a^\top\delta|
  \le \mathcal E_M(a;b).
\end{equation}

Finally decompose the score.  With \(\zeta=k-m\),
\[
  s_G-s_M
  ={1\over\lambda n}k^\top\mathbf Y(c_G-c_M)
  +{1\over\lambda n}(k-m)^\top\mathbf Yc_M.
\]
The first term on the right equals
\[
  {1\over\lambda n}m^\top\mathbf Y\delta
  +{1\over\lambda n}\zeta^\top\mathbf Y\delta
  =a_m^\top\delta+a_\zeta^\top\delta.
\]
Applying \eqref{eq:template-metric-linear-functional} with \(a=a_m\) and
with \(a=a_\zeta\) gives
\[
  |s_G-s_M|
  \le
  {1\over\lambda n}|\zeta^\top\mathbf Yc_M|
  +\mathcal E_M(a_m;b)
  +\mathcal E_M(a_\zeta;b),
\]
which is \eqref{eq:curvature-spectral-perturbation-main}.
\end{proof}

\begin{proof}[Proof of Corollary~\ref{cor:curvature-template-identities}]
In the template-ideal case, \(M=P\bN P^\top\) and \(\mathbf YP=PY_r\).  The
block-constant subspace is invariant under \(C_M\).  Indeed, for any
\(u\in\R^r\),
\[
  C_M(Pu)
  ={1\over n}P\left(4I+{1\over\lambda}Y_r\bN QY_r\right)u.
\]
Equivalently, in the orthonormal block basis
\[
  U:=n^{-1/2}PQ^{-1/2},
\]
the restriction of \(C_M\) to the block-constant subspace is
\[
  {1\over n}\mathcal A_{0,\lambda,\widehat p},
  \qquad
  \mathcal A_{0,\lambda,\widehat p}
  =4I+{1\over\lambda}Y_rQ^{1/2}\bN Q^{1/2}Y_r.
\]
Therefore, for any block-constant vectors \(P\alpha\) and \(P\beta\),
\begin{equation}
\label{eq:block-metric-bilinear-proof}
  (P\alpha)^\top C_M^{-1}(P\beta)
  =n^2\alpha^\top R_{0,\lambda,\widehat p}\beta,
  \qquad
  R_{0,\lambda,\widehat p}
  =Q^{1/2}\mathcal A_{0,\lambda,\widehat p}^{-1}Q^{1/2}.
\end{equation}

Now
\[
  a_m={1\over\lambda n}\mathbf Ym_a
  =P\left({1\over\lambda n}s_a\right).
\]
The block average of \(b\) is
\[
  \bar b_d
  ={1\over n_d}\sum_{i\in I_d}
  {1\over\lambda n^2}
  Y_i\sum_{j=1}^n\Delta_{ij}Y_jc_{M,j}
  ={1\over\lambda n}\left(Y_r\overline\Delta Q\beta\right)_d
  ={1\over\lambda n}(g_\Delta)_d.
\]
Thus
\[
  b_\parallel=P\bar b
  =P\left({1\over\lambda n}g_\Delta\right).
\]
Applying \eqref{eq:block-metric-bilinear-proof} first with
\(\alpha=\beta=(\lambda n)^{-1}s_a\), and then with
\(\alpha=(\lambda n)^{-1}s_a\) and
\(\beta=(\lambda n)^{-1}g_\Delta\), gives
\[
  a_m^\top C_M^{-1}a_m
  ={1\over\lambda^2}s_a^\top R_{0,\lambda,\widehat p}s_a,
  \qquad
  a_m^\top C_M^{-1}b
  ={1\over\lambda^2}s_a^\top R_{0,\lambda,\widehat p}g_\Delta.
\]

It remains to compute \(b^\top C_M^{-1}b\).  Decompose
\[
  b=b_\parallel+b_\perp,
\]
where \(b_\perp\) is Euclidean-orthogonal to the block-constant subspace.  Both the block-constant
subspace and its Euclidean orthogonal complement are invariant under \(C_M\), so the two pieces are
orthogonal in the \(C_M^{-1}\)-metric.  The block part is
\[
  b_\parallel^\top C_M^{-1}b_\parallel
  ={1\over\lambda^2}g_\Delta^\top R_{0,\lambda,\widehat p}g_\Delta.
\]
For the orthogonal part, \(P^\top b_\perp=0\).  Since \(\mathbf Y\) is constant on each block,
\(P^\top\mathbf Yb_\perp=0\) as well.  Hence
\[
  \mathbf YM\mathbf Yb_\perp
  =\mathbf YP\bN P^\top\mathbf Yb_\perp
  =0.
\]
Therefore \(C_Mb_\perp=(4/n)b_\perp\), and
\[
  b_\perp^\top C_M^{-1}b_\perp
  ={n\over4}\|b_\perp\|_2^2.
\]
Combining the block and orthogonal pieces proves
\[
  b^\top C_M^{-1}b
  ={1\over\lambda^2}g_\Delta^\top R_{0,\lambda,\widehat p}g_\Delta
  +{n\over4}\|b_\perp\|_2^2,
\]
which is the third identity in \eqref{eq:curvature-spectral-template-identities}.
\end{proof}

\begin{proof}[Proof of Corollary~\ref{cor:Mn-level-curvature-spectral-upper-bound}]
We first prove a deterministic curvature bound.  The probability and graph estimates are inserted
only at the end.  By Lemma \ref{lem:template-kernel-aux}, the ideal sample matrix \(M_n\) is positive semidefinite,
so Theorem \ref{thm:curvature-spectral-perturbation} applies with
\[
  G=\widehat K_n,
  \qquad
  M=M_n=P\bN P^\top,
  \qquad
  k=k_x,
  \qquad
  m=m_a.
\]
The exact block reduction, Theorem \ref{thm:idealreduction}, identifies the reference score with
\(S_\lambda^{\id}(x)\).  With the global ideal dual vector \(c_M\) and
\(\zeta=k_x-m_a\), set
\[
  b:={1\over\lambda n^2}\mathbf Y\Delta\mathbf Yc_M.
\]
Introduce
\[
  z_x:={1\over n}|\zeta^\top\mathbf Yc_M|,
  \qquad
  e_x:={1\over\sqrt n}\|\zeta\|_2,
  \qquad
  \Delta_{\rm blk}:=\max_{b,c}|\overline\Delta_{bc}|,
\]
and
\[
  a_\Delta:={1\over n^{3/2}}\|\Delta\mathbf Yc_M\|_2.
\]
The perturbation theorem gives
\begin{equation}
\label{eq:Mn-level-curvature-master}
  \bigl|\widehat f_\lambda(x)-S_\lambda^{\id}(x)\bigr|
  \le
  {1\over\lambda}z_x
  +\mathcal E_M(a_m;b)
  +\mathcal E_M(a_\zeta;b),
\end{equation}
where
\[
  a_m={1\over\lambda n}\mathbf Ym_a,
  \qquad
  a_\zeta={1\over\lambda n}\mathbf Y\zeta.
\]
If \(\gamma_{\cspec}\ge1\), the stated curvature certificate is \(+\infty\), so there is nothing
to prove.  We therefore assume \(\gamma_{\cspec}<1\) for the remainder of the deterministic
estimates.

By Theorem \ref{thm:idealreduction}, the ideal dual vector is block-constant.  Write
\[
  c_M=P\theta,
  \qquad
  \beta:=y\odot\theta,
  \qquad
  |\beta_b|\le1.
\]
Let
\[
  s_a:=Y_r n_a^{\rm tmplt},
  \qquad
  g_\Delta:=Y_r\overline\Delta Q\beta,
  \qquad
  d_\Delta:=\left(g_\Delta^\top Qg_\Delta\right)^{1/2}.
\]
The template identities in Corollary \ref{cor:curvature-template-identities} use
\[
  R_{0,\lambda,\widehat p}
  =Q^{1/2}
  \left(4I+{1\over\lambda}Y_rQ^{1/2}\bN Q^{1/2}Y_r\right)^{-1}
  Q^{1/2}.
\]
Since \(\bN\succeq0\),
\[
  0\preceq R_{0,\lambda,\widehat p}\preceq {1\over4}Q.
\]
Also \(|N_{ab}|\le K_*\), so
\[
  s_a^\top R_{0,\lambda,\widehat p}s_a
  \le {1\over4}s_a^\top Qs_a
  \le {K_*^2\over4}.
\]
The block-action vector satisfies
\[
  |(g_\Delta)_b|
  \le \sum_c \widehat p_c |\overline\Delta_{bc}|\,|\beta_c|
  \le \Delta_{\rm blk},
\]
and therefore
\begin{equation}
\label{eq:Mn-level-delta-block-action-bound}
  d_\Delta\le\Delta_{\rm blk}.
\end{equation}
By Cauchy--Schwarz in the \(R_{0,\lambda,\widehat p}\)-metric,
\begin{equation}
\label{eq:Mn-level-alignment-sharp-bound}
  |a_m^\top C_M^{-1}b|
  ={1\over\lambda^2}|s_a^\top R_{0,\lambda,\widehat p}g_\Delta|
  \le {K_*d_\Delta\over4\lambda^2}.
\end{equation}

Next we control \(b^\top C_M^{-1}b\) without charging the block component twice.  Let
\[
  b_\parallel=P\bar b,
  \qquad
  \bar b={1\over\lambda n}g_\Delta,
  \qquad
  b_\perp=b-b_\parallel.
\]
By Corollary \ref{cor:curvature-template-identities},
\[
  b^\top C_M^{-1}b
  ={1\over\lambda^2}g_\Delta^\top R_{0,\lambda,\widehat p}g_\Delta
  +{n\over4}\|b_\perp\|_2^2.
\]
Because \(b_\parallel\) is the Euclidean block projection of \(b\),
\[
  \|b\|_2^2={a_\Delta^2\over\lambda^2 n},
  \qquad
  \|b_\parallel\|_2^2={g_\Delta^\top Qg_\Delta\over\lambda^2 n}
  ={d_\Delta^2\over\lambda^2 n}.
\]
Thus
\begin{equation}
\label{eq:Mn-level-Pythagorean-cancellation}
  {n\over4}\|b_\perp\|_2^2
  ={a_\Delta^2-d_\Delta^2\over4\lambda^2}.
\end{equation}
Since \(R_{0,\lambda,\widehat p}\preceq Q/4\),
\[
  {1\over\lambda^2}g_\Delta^\top R_{0,\lambda,\widehat p}g_\Delta
  \le {d_\Delta^2\over4\lambda^2}.
\]
Adding this to \eqref{eq:Mn-level-Pythagorean-cancellation} gives
\begin{equation}
\label{eq:Mn-level-residual-a2-only}
  b^\top C_M^{-1}b
  \le {a_\Delta^2\over4\lambda^2}.
\end{equation}
Combining \eqref{eq:Mn-level-alignment-sharp-bound},
\eqref{eq:Mn-level-residual-a2-only}, and
\[
  a_m^\top C_M^{-1}a_m
  ={1\over\lambda^2}s_a^\top R_{0,\lambda,\widehat p}s_a
  \le {K_*^2\over4\lambda^2},
\]
we obtain
\begin{equation}
\label{eq:Mn-level-am-curvature-term}
  \mathcal E_M(a_m;b)
  \le
  {K_*\over8\lambda^2(1-\gamma_{\cspec})}(d_\Delta+a_\Delta).
\end{equation}

For the test-residual term, use \(C_M\succeq(4/n)I\), so
\(C_M^{-1}\preceq(n/4)I\).  Then
\[
  a_\zeta^\top C_M^{-1}a_\zeta
  \le {n\over4}\|a_\zeta\|_2^2
  ={e_x^2\over4\lambda^2}.
\]
Together with \eqref{eq:Mn-level-residual-a2-only}, Cauchy--Schwarz gives
\[
  |a_\zeta^\top C_M^{-1}b|
  \le
  \sqrt{a_\zeta^\top C_M^{-1}a_\zeta}
  \sqrt{b^\top C_M^{-1}b}
  \le {e_xa_\Delta\over4\lambda^2}.
\]
The same product bounds the square-root term in \(\mathcal E_M(a_\zeta;b)\).  Hence
\begin{equation}
\label{eq:Mn-level-azeta-curvature-term}
  \mathcal E_M(a_\zeta;b)
  \le {e_xa_\Delta\over4\lambda^2(1-\gamma_{\cspec})}.
\end{equation}
Substituting \eqref{eq:Mn-level-am-curvature-term} and
\eqref{eq:Mn-level-azeta-curvature-term} into
\eqref{eq:Mn-level-curvature-master} gives
\begin{equation}
\label{eq:Mn-level-readable-curvature-spectral-bound}
  \bigl|\widehat f_\lambda(x)-S_\lambda^{\id}(x)\bigr|
  \le
  {1\over\lambda}z_x
  +{K_*d_\Delta+(K_*+2e_x)a_\Delta
    \over 8\lambda^2(1-\gamma_{\cspec})}.
\end{equation}
Using \eqref{eq:Mn-level-delta-block-action-bound}, we also have the certificate-ready bound
\begin{equation}
\label{eq:Mn-level-readable-curvature-spectral-bound-crude-action}
  \bigl|\widehat f_\lambda(x)-S_\lambda^{\id}(x)\bigr|
  \le
  {1\over\lambda}z_x
  +{K_*\Delta_{\rm blk}+(K_*+2e_x)a_\Delta
    \over 8\lambda^2(1-\gamma_{\cspec})}.
\end{equation}

We substitute the graph and KL certificates.  Since \(\mathbf Yc_M=P\beta\), the block-vector
identity from Lemma \ref{lem:blockproj-hessian} gives
\[
  {1\over n}\zeta^\top\mathbf Yc_M
  =\overline\zeta^{\,\top}Q\beta,
  \qquad
  \overline\zeta_b={1\over n_b}\sum_{i\in I_b}\zeta_i.
\]
Therefore
\[
  z_x\le\sum_b\widehat p_b|\overline\zeta_b|.
\]
On the KL test event from Theorem \ref{thm:klbdens-kl-block-envelope},
\[
  |\overline\zeta_b|
  \le L_*U_{\KL}(n_b,q_{bx},u)
  \qquad\text{for every }b,
\]
and hence
\begin{equation}
\label{eq:Mn-level-zx-certificate-bound}
  z_x\le L_*X_\star(u).
\end{equation}

Second, the test-edge support statement in Lemma \ref{lem:collision-support-front} gives
\(|\zeta_i|\le L_*\Ccol_{0i}\).  If
\[
  C_{b,x}:=\sum_{i\in I_b}\Ccol_{0i},
\]
then
\[
  e_x^2
  ={1\over n}\sum_i\zeta_i^2
  \le
  L_*^2\sum_b\widehat p_b{C_{b,x}\over n_b}.
\]
The same KL test event gives \(C_{b,x}/n_b\le U_{\KL}(n_b,q_{bx},u)\), so
\begin{equation}
\label{eq:Mn-level-ex-certificate-bound}
  e_x\le L_*\sqrt{X_\star(u)}.
\end{equation}

Third, the matrix part of Theorem \ref{thm:klbdens-kl-block-envelope} gives
\begin{equation}
\label{eq:Mn-level-block-certificate-bound}
  \Delta_{\rm blk}\le E_*(u).
\end{equation}
Finally, the action term satisfies two deterministic bounds.  The operator bound is
\[
  a_\Delta
  ={1\over n^{3/2}}\|\Delta\mathbf Yc_M\|_2
  \le {1\over n^{3/2}}\|\Delta\|_{\op}\|c_M\|_2
  \le {\|\Delta\|_{\op}\over n},
\]
because \(c_M\in[0,1]^n\).  The row-degree support bound from
Lemma \ref{lem:degcert-degree-action-support-control}, with
\(d_i=\sum_{j\ne i}\Ccol_{ij}\), gives
\[
  a_\Delta
  \le {\delta_*\over n}+L_*{d_2\over n^{3/2}}.
\]
Thus
\begin{equation}
\label{eq:Mn-level-action-certificate-bound}
  a_\Delta\le A_\Delta.
\end{equation}
Substituting
\eqref{eq:Mn-level-zx-certificate-bound}--\eqref{eq:Mn-level-action-certificate-bound} into
\eqref{eq:Mn-level-readable-curvature-spectral-bound-crude-action} gives
\[
  \bigl|\widehat f_\lambda(x)-S_\lambda^{\id}(x)\bigr|
  \le
  \operatorname{Curv}_\lambda
  \bigl(A_\Delta,E_*(u),X_\star(u),\gamma_{\cspec}\bigr)
  =B_{\cspec}(\lambda;x,u),
\]
which is the claimed substituted curvature-spectral certificate.

It remains to record the simple bound on \(\gamma_{\cspec}\).  Since
\(C_M\succeq(4/n)I\),
\[
  \|C_M^{-1/2}\|_{\op}\le\sqrt{n/4}.
\]
As \(\mathbf Y\) is an orthogonal diagonal sign matrix,
\[
  \gamma_{\cspec}
  =
  \left\|
  C_M^{-1/2}{1\over\lambda n^2}\mathbf Y\Delta\mathbf Y
  C_M^{-1/2}
  \right\|_{\op}
  \le
  {n\over4}{\|\Delta\|_{\op}\over\lambda n^2}
  ={\|\Delta\|_{\op}\over4\lambda n}.
\]  On \(E_{\rm rep}\), the optional closed-form
rates for \(E_*(u)\) and \(X_\star(u)\) follow directly from
Corollary \ref{cor:klbdens-closed-form-envelope}.
\end{proof}

\begin{proof}[Proof of Theorem~\ref{thm:idealreduction}]
By Lemma \ref{lem:template-kernel-aux}, the ideal sample-level matrix \(M_n\) is positive semidefinite, so the dual
formula and uniqueness statement in Theorem \ref{thm:dual} apply.  The ideal matrix is block-constant:
\[
  (M_n)_{ij}=N_{\tau(i)\tau(j)}.
\]
Thus the dual objective associated with \(M_n\) is invariant under permutations of sample indices
inside each template block \(I_b=\{i:\tau(i)=b\}\).  By uniqueness of the dual minimizer,
\[
  c_{M_n,i}=\theta_b
  \qquad\text{for every } i\in I_b.
\]
If \(I_b\) is empty, choose \(\theta_b\in(0,1)\) arbitrarily; every expression below multiplies
that coordinate by \(\widehat p_b=0\).

Define
\[
  g:={1\over\lambda}\bN(\widehat p\odot y\odot\theta).
\]
For a globally fresh \(x\in\cX_a^{\fr}\),
\[
  {1\over\lambda n}m_a^\top(\mathbf y\odot c_{M_n})
  ={1\over\lambda}\sum_b N_{ab}\widehat p_b y_b\theta_b
  =g_a.
\]
It remains to identify this vector \(g\) with the finite-template minimizer
\(g_\lambda^{\id}\).

The sample-level dual objective, restricted to block-constant vectors \(c=P\theta\), is
\[
  \sum_b\widehat p_b\varphi(\theta_b)
  +{1\over2\lambda}
    (\widehat p\odot y\odot\theta)^\top
    \bN
    (\widehat p\odot y\odot\theta).
\]
On nonempty blocks, the first-order equations are
\[
  \varphi'(\theta_b)+y_bg_b=0.
\]
Using Lemma \ref{lem:logistic-aux}, this is equivalently
\[
  \theta_b=-\loss'(y_bg_b).
\]
Also,
\[
  \bN^{-1}g={1\over\lambda}(\widehat p\odot y\odot\theta).
\]
Therefore, for every coordinate \(b\), including empty blocks where \(\widehat p_b=0\),
\[
  \widehat p_b\,y_b\loss'(y_bg_b)
  +\lambda(\bN^{-1}g)_b=0.
\]
This is the first-order condition for \(\Phi_{\widehat p,\lambda}\).  Since
\(\Phi_{\widehat p,\lambda}\) is strictly convex, \(g=g_\lambda^{\id}\).  Consequently
\[
  {1\over\lambda n}m_a^\top(\mathbf y\odot c_{M_n})
  =g_a
  =(g_\lambda^{\id})_a
  =S_\lambda^{\id}(x).
\]
\end{proof}

\section{Auxiliary results}
\label{sec:auxiliary-results}

We record the auxiliary facts used in the proof.

\begin{lemma}
\label{lem:logistic-aux}
Let
\[
\loss(u):=\log(1+e^{-u}).
\]
Then \(\loss\) is convex and differentiable on \(\R\), with
\[
\loss'(u)=-\frac{1}{1+e^u},
\qquad
|\loss'(u)|\le 1,
\qquad
\loss''(u)=\frac{e^u}{(1+e^u)^2}\le \frac14 .
\]
Moreover, for every \(u\in\R\),
\begin{equation}
 \label{eq:fenchel}
\loss(u)=\sup_{c\in[0,1]}\{-cu-\varphi(c)\},
\end{equation}
and the unique maximizer is
\[
c(u)=\frac{1}{1+e^u}.
\]
For \(0<c<1\),
\[
\varphi'(c)=\log\frac{c}{1-c},
\qquad
\varphi''(c)=\frac{1}{c}+\frac{1}{1-c}\ge 4.
\]
Finally, the convex conjugate of \(\loss\) is
\[
\loss^*(s)
=
\begin{cases}
\varphi(-s), & s\in[-1,0],\\[0.3em]
+\infty, & s\notin[-1,0].
\end{cases}
\]
\end{lemma}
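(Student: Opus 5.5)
This is a one-variable calculus statement, and I would verify each claim directly.

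\emph{Derivatives and convexity.} Write $\loss(u)=\log(1+e^{-u})$. Differentiating gives
\[
\loss'(u)=-e^{-u}/(1+e^{-u})=-1/(1+e^u),
\]
which lies in $(-1,0)$, so $|\loss'|\le1$. Differentiating again gives
\[
\loss''(u)=e^u/(1+e^u)^2=\sigma(u)(1-\sigma(u))\le 1/4,
\]
with $\sigma(u)=1/(1+e^{-u})$, by AM--GM. Since $\loss''>0$, convexity follows. For the entropy $\varphi$ on $(0,1)$, direct differentiation gives $\varphi'(c)=\log c-\log(1-c)$ and $\varphi''(c)=1/c+1/(1-c)=1/(c(1-c))$. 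Because $c(1-c)\le1/4$, we get $\varphi''(c)\ge4$.

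\emph{The Fenchel representation~\eqref{eq:fenchel}.} Fix $u$ and maximize $h(c)=-cu-\varphi(c)$ over $[0,1]$, using the convention $0\log0=0$, so that $\varphi$ is continuous on the closed interval. The function $h$ is strictly concave, since $\varphi''\ge4$. Also $h'(c)=-u-\log(c/(1-c))$ tends to $+\infty$ at $0^+$ and to $-\infty$ at $1^-$, so the maximum is interior and unique. It solves $\log(c/(1-c))=-u$, that is, $c(u)=1/(1+e^u)$. Substituting, with $1-c=e^u/(1+e^u)$:
\[
h(c(u))=-\frac{u}{1+e^u}-\frac{1}{1+e^u}\log\frac{1}{1+e^u}-\frac{e^u}{1+e^u}\log\frac{e^u}{1+e^u}.
\]
This simplifies to $\log(1+e^u)-u=\log(1+e^{-u})$, which proves~\eqref{eq:fenchel}.

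\emph{The conjugate.} By definition, $\loss^*(s)=\sup_u\{su-\loss(u)\}$.
\begin{itemize}
\item If $s>0$, let $u\to+\infty$: the term $su$ grows while $\loss(u)\to0$, so $\loss^*(s)=+\infty$.
\item If $s<-1$, let $u\to-\infty$: there $\loss(u)=-u+o(1)$, so $su-\loss(u)=(s+1)u+o(1)\to+\infty$.
\item If $s\in(-1,0)$, the supremum is attained where $\loss'(u)=s$, i.e.\ $1/(1+e^u)=-s$. Then~\eqref{eq:fenchel} gives $\loss=\loss^{**}$ in the form $\loss(u)=\sup_{c}\{-cu-\varphi(c)\}$. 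Together with Fenchel--Young equality at the matched pair $c=-s=c(u)$, this yields $\loss^*(s)=su-\loss(u)=\varphi(-s)$. One can also check this directly by substituting $u=\log((1+s)/(-s))$.
\item At the endpoints $s\in\{-1,0\}$, the supremum is a limit, and it equals $\varphi(1)=\varphi(0)=0$. For $s=0$, $\sup_u(-\loss(u))=0$, approached as $u\to\infty$. For $s=-1$, $-u-\loss(u)=-\log(1+e^u)\to0$ as $u\to-\infty$.
\end{itemize}

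The only delicate points are the boundary conventions at $c\in\{0,1\}$ and $s\in\{-1,0\}$. I would handle these by continuity of $\varphi$ on the closed interval, as above.
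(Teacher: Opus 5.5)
Your proof is correct. For the derivative bounds, the entropy derivatives, and the variational formula \eqref{eq:fenchel}, you follow essentially the paper's argument. The only differences there are cosmetic: you use AM--GM on $\sigma(1-\sigma)$ where the paper substitutes $v=e^{u/2}$ and uses $(1+v^2)^2-4v^2=(v^2-1)^2$. You also place the unique maximizer in the interior via the boundary blow-up of $h'$, where the paper appeals to continuity of $H_u$ on $[0,1]$.

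The genuine difference is the conjugate.
\begin{itemize}
\item The paper rewrites \eqref{eq:fenchel} with $s=-c$ as $\loss=q^*$, where $q(s)=\varphi(-s)+I_{[-1,0]}(s)$. Since $q$ is proper, closed and convex, Fenchel--Moreau gives $\loss^*=q^{**}=q$ in one line. The value $+\infty$ off $[-1,0]$ and the endpoint values at $s\in\{-1,0\}$ come for free from closedness of $q$.
\item You instead compute $\sup_u\{su-\loss(u)\}$ directly, case by case, using the asymptotics of $\loss$ at $\pm\infty$ and explicit limits at the endpoints. This is more elementary, since it needs no biconjugation theorem. It also shows where the supremum is attained for $s\in(-1,0)$. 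The cost is four cases where the paper needs one line.
\end{itemize}

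In your interior case, the mention of $\loss=\loss^{**}$ is unnecessary. What you actually use is two facts. First, $u\mapsto su-\loss(u)$ is concave because $\loss$ is convex, so the stationary point $u=\log((1+s)/(-s))$ is the global maximizer. Second, at this $u$ the maximizer in \eqref{eq:fenchel} is $c(u)=-s$. Hence $\loss(u)=su-\varphi(-s)$, and directly $\loss^*(s)=su-\loss(u)=\varphi(-s)$.
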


\begin{proof}
The elementary differential identities follow by direct calculation:
\[
\loss'(u)
=
\frac{-e^{-u}}{1+e^{-u}}
=
-\frac{1}{1+e^u},
\qquad
\loss''(u)
=
\frac{e^u}{(1+e^u)^2}.
\]
Thus \(|\loss'(u)|\le 1\), and \(\loss''(u)\ge 0\), so \(\loss\) is convex. Writing \(v=e^{u/2}\), we also have
\[
\loss''(u)=\frac{v^2}{(1+v^2)^2}.
\]
Since
\[
(1+v^2)^2-4v^2=(v^2-1)^2\ge 0,
\]
it follows that \(\loss''(u)\le 1/4\).

For the variational formula, fix \(u\in\R\) and set
\[
H_u(c):=-cu-\varphi(c),
\qquad 0<c<1.
\]
Then
\[
H_u'(c)
=
-u-\log\frac{c}{1-c},
\qquad
H_u''(c)
=
-\frac{1}{c}-\frac{1}{1-c}<0.
\]
Hence \(H_u\) is strictly concave. Its critical point satisfies
\[
\log\frac{c}{1-c}=-u,
\]
so
\[
c_*=\frac{1}{1+e^u}.
\]
Since \(H_u\) extends continuously to \([0,1]\), this critical point is the unique maximizer on the closed interval. Evaluating at \(c_*\), using
\[
c_*=\frac{1}{1+e^u},
\qquad
1-c_*=\frac{e^u}{1+e^u},
\]
gives
\begin{align*}
H_u(c_*)
&=
-c_*u-c_*\log c_*-(1-c_*)\log(1-c_*)  \\
&=
-u+\log(1+e^u)
=
\log(1+e^{-u})
=
\loss(u).
\end{align*}
This proves the Fenchel representation and identifies the maximizer.

The displayed formulas for \(\varphi'\) and \(\varphi''\) follow from differentiating \(\varphi\). In particular,
\[
\varphi''(c)
=
\frac{1}{c(1-c)}
\ge 4,
\]
because \(c(1-c)\le 1/4\).

It remains to identify the conjugate. Rewriting the Fenchel representation with \(s=-c\), we get
\[
\loss(u)
=
\sup_{s\in[-1,0]}\{su-\varphi(-s)\}.
\]
Equivalently, \(\loss=q^*\), where
\[
q(s):=\varphi(-s)+I_{[-1,0]}(s).
\]
The function \(q\) is proper, closed, and convex, so Fenchel--Moreau gives
\[
\loss^*=q^{**}=q.
\]
This is the stated formula.
\end{proof}

\begin{lemma}
\label{lem:template-kernel-aux}
Under the standing assumptions on the templates and on the token-symmetric kernel \(K\), the following hold.
First, the label map
\[
y_*:\bigcup_{j=1}^r \cX_j\to \{-1,+1\},
\qquad
y_*(x):=y_j \ \text{whenever }x\in\cX_j,
\]
is well defined.
Second, if \((s,s')\) and \((t,t')\) are fresh admissible pairs for the same ordered pair \((z_i,z_j)\), then
\[
K(\subop(z_i,s),\subop(z_j,s'))
=
K(\subop(z_i,t),\subop(z_j,t')).
\]
Thus \(N_{ij}\) is independent of the fresh admissible choice.
Third, for each template \(a\), the self-kernel constant \(D_a\) is independent of the admissible substitution map.
Finally, \(\bN\) and \(M_n\) are symmetric positive semidefinite, and for every template index \(a\),
\[
D_a-N_{aa}\ge 0.
\]
In particular, \(\delta_*<\infty\).
\end{lemma}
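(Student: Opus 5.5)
The statement has four parts: well-definedness of the label map, independence of $N_{ij}$ and $D_a$ from the substitution chosen, and positive semidefiniteness of $\bN$, $M_n$ with $D_a\ge N_{aa}$. I would prove them in that order, using only pairwise disjointness of the templates, token symmetry, and the RKHS structure of $K$.

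\textbf{Label map.} Pairwise disjointness means every admissible string lies in exactly one class $\cX_j$. Hence $y_*(x)=y_j$ involves no choice and is well defined.

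\textbf{Independence of $N_{ij}$ and $D_a$.} Given two fresh admissible pairs $(s,s')$ and $(t,t')$ for $(z_i,z_j)$, I build a vocabulary permutation $\pi$ with three properties:
\begin{itemize}
\item[(i)] $\pi$ fixes every literal token of $z_i$ and $z_j$;
\item[(ii)] $\pi$ sends $s(w)\mapsto t(w)$ for $w\in\cW(z_i)$;
\item[(iii)] $\pi$ sends $s'(w)\mapsto t'(w)$ for $w\in\cW(z_j)$.
\end{itemize}
Freshness means the wildcard images of $s$ and $s'$ are pairwise distinct, distinct from each other, and disjoint from both literal sets; the same holds for $t,t'$. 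So this prescription is a bijection between two equal-size finite sets avoiding the literals, and it extends to a permutation of $\cV$. Then $\pi(\subop(z_i,s))=\subop(z_i,t)$ and $\pi(\subop(z_j,s'))=\subop(z_j,t')$, and token symmetry gives the equality of kernel values. The self-kernel constant $D_a=K(\subop(z_a,s),\subop(z_a,s))$ follows by the same argument with a single substitution, since injectivity of $s$ alone suffices.

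\textbf{Positive semidefiniteness.} This is the only step needing an idea. I assume the vocabulary is large enough that fresh realizations exist, which is implicit in the definition of $N_{ab}$.
\begin{itemize}
\item For $\bN$: choose substitutions $s_1,\dots,s_r$ with pairwise disjoint wildcard images, all disjoint from every literal. Each pair $(s_a,s_b)$ with $a\ne b$ is then fresh, so $N_{ab}=K(x_a,x_b)$ where $x_a=\subop(z_a,s_a)$.
\item The diagonal is the delicate part, because $K(x_a,x_a)=D_a$, not $N_{aa}$. I fix this by taking two disjoint fresh copies $x_a, x_a'$ of each template and setting $\phi_a=\tfrac12(K(x_a,\cdot)+K(x_a',\cdot))$.
\item Then $\langle\phi_a,\phi_b\rangle_{\cH}=N_{ab}$ for $a\ne b$, while $\langle\phi_a,\phi_a\rangle_{\cH}=\tfrac12(D_a+N_{aa})$. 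Writing $\bN=G-\tfrac12\diag(D-N_{\cdot\cdot})$, where $G$ is the Gram matrix of the $\phi_a$, shows this route depends on the diagonal inequality.
\end{itemize}
I would therefore prove $D_a\ge N_{aa}$ first, and instead use $m$ disjoint fresh copies $x_a^{(1)},\dots,x_a^{(m)}$ per template. Let $\phi_a^{(m)}=\tfrac1m\sum_\ell K(x_a^{(\ell)},\cdot)$. Its Gram entries are $N_{ab}$ off the diagonal and $N_{aa}+(D_a-N_{aa})/m$ on the diagonal. Gram matrices are PSD for every $m$, so letting $m\to\infty$ gives $\bN\succeq 0$.

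The diagonal inequality $D_a\ge N_{aa}$ is Cauchy–Schwarz in the RKHS: $N_{aa}=\langle K(x,\cdot),K(x',\cdot)\rangle\le\sqrt{D_aD_a}=D_a$ for two fresh copies $x,x'$. Symmetry of $\bN$ follows from symmetry of $K$, since a pair fresh for $(z_i,z_j)$ is fresh for $(z_j,z_i)$ after swapping. Finally, $M_n=P\bN P^\top$ is a congruence of a PSD matrix, hence symmetric PSD. The bound $\delta_*<\infty$ follows from $|D_a|,|N_{aa}|\le\|K\|_\infty$.

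The main obstacle is the existence of enough simultaneously fresh realizations: $m$ copies for each of $r$ templates with pairwise disjoint wildcard images. If $\cV$ is finite and too small, I would fall back on a cleaner argument. Any finite set of fresh strings embeds into a larger vocabulary, and token symmetry lets the kernel values be computed there. Alternatively, if the paper implicitly takes $\cV$ infinite, the construction works directly, and I would state that assumption explicitly.
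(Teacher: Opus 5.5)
Your proof is correct for an infinite vocabulary, and it departs from the paper's proof at the one step that matters. The paper proves $\bN\succeq0$ by picking one jointly fresh instantiation $\subop(z_a,s_a)$ per template and declaring $\bN$ to be their Gram matrix. But $(s_a,s_a)$ is not a fresh pair, so that Gram matrix has diagonal $D_a$, not $N_{aa}$. What the paper's argument actually shows is $\bN+\diag(D_a-N_{aa})\succeq0$, which does not imply $\bN\succeq0$. You spotted this conflation, and your repair is the right one. Averaging over $m$ disjoint fresh copies gives the Gram matrix $\bN+\tfrac1m\diag(D_a-N_{aa})$, and letting $m\to\infty$ gives $\bN\succeq0$ because the PSD cone is closed. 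That limit does not need $D_a\ge N_{aa}$, so proving the diagonal inequality first is unnecessary (though harmless). Two smaller comparisons:
\begin{itemize}
\item You fix only $L_i\cup L_j$ in the renaming permutation, whereas the paper fixes the whole global literal set $R$. Yours is the safer choice when freshness is pair-specific, and it matches the paper's own later proof of Lemma~\ref{lem:blockavg-pair-specific-entrywise-agreement}.
\item Your Cauchy--Schwarz proof of $D_a\ge N_{aa}$ is equivalent to the paper's test vector $(1,-1)$.
\end{itemize}

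Your fallback for a finite $\cV$ does not work. The kernel is given only on strings over $\cV$, so there is no larger vocabulary on which its values could be computed. Moreover, the conclusion can genuinely fail. Take $r=1$, $z_1=\alpha$, $|\cV|=V\ge2$, and set $K(t,t)=1$ and $K(t,t')=-1/(V-1)$ for $t\ne t'$. This kernel matrix has eigenvalues $0$ and $V/(V-1)$, so it is positive semidefinite, and it is permutation invariant. Yet $N_{11}=-1/(V-1)<0$. So for finite $\cV$, $\bN\succeq0$ does not follow from token symmetry. In the paper it is instead part of the standing assumptions (Assumption~\ref{ass:standing} posits $\bN\succ0$), which you may simply cite. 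The remaining claims hold under those assumptions for any vocabulary size, and $M_n=P\bN P^\top\succeq0$ then follows by congruence.
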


\begin{proof}
The label map is well defined because the sets \(\cX_1,\ldots,\cX_r\) are pairwise disjoint. Hence every point in their union belongs to a unique \(\cX_j\).

The kernel invariance statements all follow from the same permutation argument. For the entries of \(\bN\), freshness makes the prescriptions
\[
\pi(a)=a \quad (a\in R),
\qquad
\pi(s(w))=t(w) \quad (w\in\cW(z_i)),
\qquad
\pi(s'(w))=t'(w) \quad (w\in\cW(z_j))
\]
consistent on the finite set of tokens involved. Extend this partial bijection to a permutation \(\pi\) of \(\cV\). Then
\[
\pi(\subop(z_i,s))=\subop(z_i,t),
\qquad
\pi(\subop(z_j,s'))=\subop(z_j,t'),
\]
and token symmetry gives
\[
K(\subop(z_i,s),\subop(z_j,s'))
=
K(\subop(z_i,t),\subop(z_j,t')).
\]
Thus \(N_{ij}\) is well defined.

The same argument applies to \(D_a\). If \(s\) and \(t\) are admissible substitutions for \(z_a\), then the assignments fixing \(L(z_a)\) and sending \(s(w)\) to \(t(w)\) for each \(w\in\cW(z_a)\) extend to a permutation of \(\cV\). Token symmetry therefore gives
\[
K(\subop(z_a,s),\subop(z_a,s))
=
K(\subop(z_a,t),\subop(z_a,t)),
\]
so \(D_a\) is well defined.

Symmetry of \(\bN\) follows by swapping the two substitutions and using symmetry of \(K\). For positive semidefiniteness, choose admissible substitutions \(s_1,\ldots,s_r\) whose nonliteral images are jointly fresh across the templates. By the preceding invariance,
\[
N_{ij}
=
K(\subop(z_i,s_i),\subop(z_j,s_j)).
\]
Thus \(\bN\) is the Gram matrix of the instantiated strings
\[
\subop(z_1,s_1),\ldots,\subop(z_r,s_r),
\]
and is positive semidefinite.

Now fix \(a\), choose a fresh admissible pair \((s,t)\) for \((z_a,z_a)\), and write
\[
x:=\subop(z_a,s),
\qquad
x':=\subop(z_a,t).
\]
Then
\[
K(x,x)=K(x',x')=D_a,
\qquad
K(x,x')=N_{aa}.
\]
Since \(K\) is positive semidefinite,
\[
0
\le
\begin{pmatrix}1&-1\end{pmatrix}
\begin{pmatrix}
D_a & N_{aa}\\
N_{aa} & D_a
\end{pmatrix}
\begin{pmatrix}1\\-1\end{pmatrix}
=
2(D_a-N_{aa}),
\]
so \(D_a-N_{aa}\ge 0\).

The boundedness of the kernel diagonal and Cauchy--Schwarz give
\[
D_a=K(x,x)\le \|K\|_\infty,
\qquad
|N_{aa}|=|K(x,x')|
\le
\sqrt{K(x,x)K(x',x')}
=
D_a
\le
\|K\|_\infty.
\]
Hence
\[
|D_a-N_{aa}|
\le
2\|K\|_\infty.
\]
Taking the maximum over finitely many template indices gives \(\delta_*<\infty\).

Finally, for the membership matrix \(P\),
\[
(P\bN P^\top)_{ij}
=
\sum_{b=1}^r\sum_{c=1}^r P_{ib}N_{bc}P_{jc}
=
N_{\tau(i)\tau(j)}
=
(M_n)_{ij}.
\]
Thus
\[
M_n=P\bN P^\top.
\]
Since \(\bN\) is symmetric positive semidefinite, the same is true of \(M_n\).
\end{proof}

\begin{lemma}
\label{lem:representation}
For the global representation event \(E_{\rm rep}\),
\[
\Pp(E_{\rm rep})\ge 1-r\,e^{-n p_{\min}/8}.
\]
\end{lemma}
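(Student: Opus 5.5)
The bound follows from a multiplicative Chernoff lower-tail estimate for each template count, combined with a union bound over the \(r\) templates. Since the sample \((X_i,Y_i)_{i=1}^n\) is i.i.d., the latent template indicators \(\1\{\tau(i)=b\}\) are i.i.d.\ Bernoulli\((p_b)\) for each fixed \(b\). Hence \(n_b=n\widehat p_b\sim\mathrm{Bin}(n,p_b)\) with mean \(\mu_b=np_b\). The complement of \(E_{\rm rep}\) is the union over \(b\in[r]\) of the events \(\{\widehat p_b<p_b/2\}=\{n_b<(1-\tfrac12)\mu_b\}\). So it suffices to bound each of these \(r\) probabilities by \(e^{-np_b/8}\le e^{-np_{\min}/8}\).

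For a fixed \(b\), we use the standard lower-tail bound: for a sum of independent Bernoulli variables with mean \(\mu\) and any \(t\in(0,1)\),
\[
\Pp\bigl(n_b\le(1-t)\mu\bigr)\le \exp(-t^2\mu/2).
\]
To keep the proof self-contained, this can be derived by the exponential-moment method. For \(s>0\), Markov's inequality and independence give
\[
\Pp(n_b\le(1-t)\mu)\le e^{s(1-t)\mu}\bigl(1-p_b(1-e^{-s})\bigr)^n\le \exp\bigl(s(1-t)\mu-\mu(1-e^{-s})\bigr),
\]
using \(1+x\le e^x\). Choosing \(s=-\log(1-t)\) yields \(\exp(-\mu[t+(1-t)\log(1-t)])\). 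The elementary inequality \((1-t)\log(1-t)\ge -t+t^2/2\) on \([0,1)\) then finishes the estimate; it follows from the Taylor series of \((1-t)\log(1-t)\), whose terms beyond second order are nonnegative. With \(t=1/2\) the bound becomes \(\exp(-np_b/8)\).

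Finally, the union bound gives \(\Pp(E_{\rm rep}^c)\le\sum_b e^{-np_b/8}\le r e^{-np_{\min}/8}\), since \(p_b\ge p_{\min}\) for every \(b\). No step is a real obstacle. The only points needing care are checking that the convention \(\widehat p_b\ge p_b/2\), with a non-strict inequality, matches the strict event bounded above, and making sure the elementary inequality \((1-t)\log(1-t)\ge -t+t^2/2\) is justified rather than just asserted.
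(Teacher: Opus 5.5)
Your proof is correct and matches the paper's argument: the paper also treats each \(n_b\) as \(\mathrm{Binomial}(n,p_b)\), applies the multiplicative Chernoff lower-tail bound with parameter \(1/2\) to get \(e^{-np_b/8}\le e^{-np_{\min}/8}\), and finishes with a union bound over the \(r\) templates. The only difference is that you derive the Chernoff estimate yourself via the exponential-moment method (correctly, including the series bound \((1-t)\log(1-t)\ge -t+t^2/2\) and the strict versus non-strict event check), whereas the paper simply cites it as standard.
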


\begin{proof}
For each \(j\),
\[
n_j\sim \mathrm{Binomial}(n,p_j),
\qquad
\E n_j=np_j.
\]
The multiplicative Chernoff lower-tail bound with parameter \(1/2\) gives
\[
\Pp\!\left(n_j\le \frac{np_j}{2}\right)
\le
\exp\!\left(-\frac{np_j}{8}\right).
\]
Since \(\widehat p_j=n_j/n\),
\[
\Pp\!\left(\widehat p_j\le \frac{p_j}{2}\right)
\le
e^{-np_j/8}
\le
e^{-n p_{\min}/8}.
\]
Taking a union bound over \(j=1,\ldots,r\),
\[
\Pp(E_{\rm rep}^c)
\le
\sum_{j=1}^r
\Pp\!\left(\widehat p_j<\frac{p_j}{2}\right)
\le
r e^{-n p_{\min}/8}.
\]
The claim follows by taking complements.
\end{proof}

\subsection{Sharp rates}
\label{app:sharp-transfer-machinery}
\label{app:certificates}

 \subsubsection{Block-averaged linear response}
\label{sec:block-averaged-linear-response}

After conditioning on the template assignment, the ideal problem admits a finite-dimensional block reduction.  A row-degree argument gives a direct bound on
\[
(\widehat K_n-M_n)(\mathbf y\odot c_{M_n}),
\]
but such a bound does not exploit the block structure of the ideal dual solution.  The ideal dual vector, the fresh-test representer, and the inverse Hessian of the ideal dual objective all lie in the block-constant subspace generated by the latent templates.  Consequently, the first-order score perturbation depends on the empirical kernel discrepancy only through block averages.

Throughout this subsection we condition on a realized template assignment for which every template is represented.  Let
\[
\mathcal B_n:=\{Pu:u\in\R^r\}\subseteq\R^n
\]
be the block-constant subspace.  For \(A\in\R^{n\times n}\) and \(z\in\R^n\), define their empirical block averages by
\[
\overline A_{bc}
:=
\frac{1}{n_b n_c}
\sum_{i\in I_b}\sum_{j\in I_c} A_{ij},
\qquad
\overline z_b
:=
\frac{1}{n_b}\sum_{i\in I_b}z_i.
\]
We also write
\[
Q:=\operatorname{diag}(\widehat p_1,\ldots,\widehat p_r).
\]

\begin{lemma}
\label{lem:blockproj-hessian}
For every \(A\in\R^{n\times n}\), \(z\in\R^n\), and \(u,v\in\R^r\),
\begin{equation}
\label{eq:blockproj-block-bilinear-identity}
(Pu)^\top A(Pv)
=
n^2 u^\top Q\overline A Qv,
\end{equation}
and
\begin{equation}
\label{eq:blockproj-block-vector-identity}
z^\top Pu
=
n\,\overline z^{\,\top}Qu.
\end{equation}

Let \(J_M\) be the Hessian of the ideal sample-level dual objective at \(c_M\):
\[
J_M
=
\frac1n\operatorname{diag}\bigl(\varphi''(c_{M,1}),\ldots,\varphi''(c_{M,n})\bigr)
+
\frac{1}{\lambda n^2}\mathbf Y M_n\mathbf Y .
\]
Write \(c_M=P\theta\), \(\beta=y\odot\theta\), and
\[
Y_r:=\operatorname{diag}(y_1,\ldots,y_r),
\qquad
H_\theta:=\operatorname{diag}\bigl(\varphi''(\theta_1),\ldots,\varphi''(\theta_r)\bigr).
\]
Define
\begin{equation}
\label{eq:blockproj-A-lambda-def}
A_{\lambda,\widehat p}
:=
H_\theta+\frac{1}{\lambda}Y_r\bN QY_r .
\end{equation}
Then \(\mathcal B_n\) and \(\mathcal B_n^\perp\) are invariant under \(J_M\).  Moreover, for every \(u,z\in\R^r\),
\begin{equation}
\label{eq:blockproj-HM-restriction}
J_M(Pu)
=
\frac1n P A_{\lambda,\widehat p}u,
\end{equation}
and
\begin{equation}
\label{eq:blockproj-HM-inverse-restriction}
J_M^{-1}(Pz)
=
nP A_{\lambda,\widehat p}^{-1}z.
\end{equation}
\end{lemma}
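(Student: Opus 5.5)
The plan is to verify the two identities by direct index computation, then to handle the Hessian statements by showing that $J_M$ commutes with the block projection.

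\emph{The two identities.} For \eqref{eq:blockproj-block-bilinear-identity}, note that $(Pu)_i=u_{\tau(i)}$. Grouping the double sum by blocks gives
\[
(Pu)^\top A(Pv)=\sum_{b,c}u_bv_c\sum_{i\in I_b,j\in I_c}A_{ij}=\sum_{b,c}u_bv_c\,n_bn_c\overline A_{bc}.
\]
Since $n_b=n\widehat p_b$, this equals $n^2u^\top Q\overline AQv$. The vector identity \eqref{eq:blockproj-block-vector-identity} follows the same way from $\sum_b u_b\sum_{i\in I_b}z_i=\sum_b u_b n_b\overline z_b$. Both steps use only that every block is nonempty, so that the averages are defined.

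\emph{The restriction formula.} By Theorem~\ref{thm:idealreduction}, $c_M=P\theta$, so the diagonal part of $J_M$ is $\frac1n\diag(\varphi''(c_{M,i}))=\frac1n PH_\theta P^{\dagger}$-type block-constant scaling. It acts on $Pu$ as $\frac1n P H_\theta u$, and on any vector it multiplies coordinate $i$ by $\varphi''(\theta_{\tau(i)})/n$. For the kernel part, I use $M_n=P\bN P^\top$ (Lemma~\ref{lem:template-kernel-aux}), $\mathbf YP=PY_r$, and $P^\top P=\diag(n_b)=nQ$. These give
\[
\tfrac1{\lambda n^2}\mathbf YP\bN P^\top\mathbf YPu=\tfrac1{\lambda n^2}PY_r\bN (nQ)Y_ru=\tfrac1n P\,\tfrac1\lambda Y_r\bN QY_ru,
\]
using that $Q$ and $Y_r$ are diagonal and therefore commute. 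Summing the two parts yields \eqref{eq:blockproj-HM-restriction}, and this already shows that $\mathcal B_n$ is invariant under $J_M$.

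\emph{Invariance of the complement.} For $w\in\mathcal B_n^\perp$ we have $P^\top w=0$. Since $\mathbf Y$ is constant on blocks, $P^\top\mathbf Yw=Y_rP^\top w=0$, so the kernel part annihilates $w$. The diagonal part is blockwise constant, so it maps $w$ to a vector whose block sums are $\varphi''(\theta_b)\sum_{i\in I_b}w_i/n=0$; hence the image stays in $\mathcal B_n^\perp$. Alternatively, $J_M$ is symmetric and $\mathcal B_n$ is invariant, so $\mathcal B_n^\perp$ is invariant automatically.

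\emph{The inverse formula.} This is the one step needing care: I must show that $A_{\lambda,\widehat p}$ is invertible even though it is not symmetric. I would write $A_{\lambda,\widehat p}=Q^{-1/2}\bigl(Q^{1/2}H_\theta Q^{1/2}+\frac1\lambda Q^{1/2}Y_r\bN Y_rQ^{1/2}\bigr)Q^{-1/2}\cdot$, or more simply observe from \eqref{eq:blockproj-HM-restriction} that $A_{\lambda,\widehat p}$ represents the restriction of $J_M$ to $\mathcal B_n$ in the basis $P$. Since $J_M\succeq\frac4nI$ (because $\varphi''\ge4$ and $M_n\succeq0$), $J_M$ is positive definite. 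Its restriction to the invariant subspace $\mathcal B_n$ is therefore injective, and $P$ is injective because the blocks are nonempty, so $A_{\lambda,\widehat p}$ is invertible. Applying \eqref{eq:blockproj-HM-restriction} with $u=nA_{\lambda,\widehat p}^{-1}z$ gives $J_M(nPA_{\lambda,\widehat p}^{-1}z)=Pz$. Inverting $J_M$ then yields \eqref{eq:blockproj-HM-inverse-restriction}.
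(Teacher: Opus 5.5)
Your proof is correct and follows the paper's argument essentially step for step. You use direct block sums for the two identities; $M_n=P\bN P^\top$, $\mathbf YP=PY_r$ and $P^\top P=nQ$ for the restriction; $P^\top\mathbf Yw=0$ together with block-constancy of the diagonal for $\mathcal B_n^\perp$; and positive definiteness of $J_M$ for the inverse. Your explicit appeal to injectivity of $P$ to get invertibility of the non-symmetric $A_{\lambda,\widehat p}$ spells out a step the paper leaves implicit.

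The two side remarks you set aside are not right as written. The diagonal part of $J_M$ coincides with $\frac1n PH_\theta P^{\dagger}$ only on $\mathcal B_n$, and the similarity should read $A_{\lambda,\widehat p}=Q^{-1/2}\bigl(H_\theta+\frac1\lambda Y_rQ^{1/2}\bN Q^{1/2}Y_r\bigr)Q^{1/2}$. Nothing in your argument depends on either remark.
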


\begin{proof}
The block-average identities follow directly from the definitions:
\[
(Pu)^\top A(Pv)
=
\sum_{b,c}u_bv_c
\sum_{i\in I_b}\sum_{j\in I_c}A_{ij}
=
\sum_{b,c}u_bv_c\,n_bn_c\,\overline A_{bc}
=
n^2u^\top Q\overline A Qv.
\]
Similarly,
\[
z^\top Pu
=
\sum_b u_b\sum_{i\in I_b}z_i
=
\sum_b u_b n_b\overline z_b
=
n\,\overline z^{\,\top}Qu.
\]

We identify the action of \(J_M\) on the block-constant subspace.  Since \(c_M=P\theta\), the diagonal entropy part of \(J_M\) is constant on each block.  It therefore preserves both \(\mathcal B_n\) and \(\mathcal B_n^\perp\).  Also,
\[
M_n=P\bN P^\top,
\qquad
\mathbf YP=PY_r.
\]
If \(v\in\mathcal B_n^\perp\), then \(P^\top v=0\).  Since \(\mathbf Y\) is block-constant, \(P^\top\mathbf Yv=0\) as well, and therefore
\[
\mathbf YM_n\mathbf Yv
=
\mathbf YP\bN P^\top\mathbf Yv
=
0.
\]
Thus \(\mathcal B_n^\perp\) is invariant under the quadratic part of \(J_M\).

For \(u\in\R^r\),
\[
\mathbf YM_n\mathbf Y(Pu)
=
\mathbf YP\bN P^\top PY_ru
=
PY_r\bN \operatorname{diag}(n_1,\ldots,n_r)Y_ru
=
nP Y_r\bN QY_ru.
\]
Combining this with the entropy diagonal part gives
\[
J_M(Pu)
=
\frac1nPH_\theta u
+
\frac{1}{\lambda n}P Y_r\bN QY_ru
=
\frac1nP A_{\lambda,\widehat p}u,
\]
which proves \eqref{eq:blockproj-HM-restriction}.  Since \(J_M\) is positive definite, its restriction to \(\mathcal B_n\) is invertible, and \eqref{eq:blockproj-HM-inverse-restriction} follows by solving the last display for \(u\).
\end{proof}

For a globally fresh test string \(x\in\cX_a^{\mathrm{fr}}\), set
\[
n_a^{\mathrm{tmplt}}
:=
(N_{a1},\ldots,N_{ar})^\top,
\qquad
m_a:=Pn_a^{\mathrm{tmplt}},
\]
and define
\begin{equation}
\label{eq:blockproj-ra-def}
r_a
:=
Y_rA_{\lambda,\widehat p}^{-1}Y_r n_a^{\mathrm{tmplt}}.
\end{equation}

\begin{lemma}
\label{lem:blockproj-test-representer}
For every globally fresh \(x\in\cX_a^{\mathrm{fr}}\),
\begin{equation}
\label{eq:blockproj-Hinv-m-block}
\mathbf YJ_M^{-1}\mathbf Ym_a
=
nPr_a.
\end{equation}
Moreover,
\begin{equation}
\label{eq:blockproj-ra-weighted-bound}
\left(\sum_{b=1}^r\widehat p_b r_{ab}^2\right)^{1/2}
\le
\frac{K_*}{4},
\qquad
\sum_{b=1}^r\widehat p_b |r_{ab}|
\le
\frac{K_*}{4}.
\end{equation}
\end{lemma}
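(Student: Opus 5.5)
The identity \eqref{eq:blockproj-Hinv-m-block} should follow directly from Lemma~\ref{lem:blockproj-hessian}. The weighted bounds should follow from rewriting \(Y_rA_{\lambda,\widehat p}^{-1}Y_r\) as a matrix similar to the inverse of a symmetric matrix that dominates \(4I\). Throughout I work on the conditioning event where every block is nonempty, so \(Q\) is invertible.

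\textbf{The block identity.} Since \(m_a=Pn_a^{\mathrm{tmplt}}\) and \(\mathbf Y\) is constant on blocks, we have \(\mathbf YP=PY_r\). Hence \(\mathbf Ym_a=P(Y_rn_a^{\mathrm{tmplt}})\) lies in \(\mathcal B_n\). Applying \eqref{eq:blockproj-HM-inverse-restriction} with \(z=Y_rn_a^{\mathrm{tmplt}}\) gives
\[
J_M^{-1}\mathbf Ym_a=nPA_{\lambda,\widehat p}^{-1}Y_rn_a^{\mathrm{tmplt}}.
\]
Multiplying on the left by \(\mathbf Y\) and using \(\mathbf YP=PY_r\) once more yields \(nPY_rA_{\lambda,\widehat p}^{-1}Y_rn_a^{\mathrm{tmplt}}=nPr_a\).

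\textbf{Rewriting \(r_a\).} Because \(Y_r\) is a diagonal sign matrix, \(Y_r^2=I\), and it commutes with the diagonal matrix \(H_\theta\). Therefore
\[
Y_rA_{\lambda,\widehat p}Y_r=H_\theta+\lambda^{-1}\bN Q,
\qquad
r_a=(H_\theta+\lambda^{-1}\bN Q)^{-1}n_a^{\mathrm{tmplt}}.
\]
The matrix \(H_\theta+\lambda^{-1}\bN Q\) is not symmetric, so I symmetrize it by a diagonal similarity. Since \(H_\theta\) and \(Q\) are both diagonal, they commute, and
\[
H_\theta+\lambda^{-1}\bN Q=Q^{-1/2}\,S\,Q^{1/2},
\qquad
S:=H_\theta+\lambda^{-1}Q^{1/2}\bN Q^{1/2}.
\]
Consequently \(Q^{1/2}r_a=S^{-1}Q^{1/2}n_a^{\mathrm{tmplt}}\). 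The matrix \(S\) is symmetric. Moreover \(S\succeq H_\theta\succeq 4I\), because \(\bN\succeq0\) and \(\varphi''\ge4\) by Lemma~\ref{lem:logistic-aux}. Hence \(\|S^{-1}\|_{\op}\le1/4\).

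\textbf{The weighted bounds.} From the previous step,
\[
\Bigl(\sum_b\widehat p_br_{ab}^2\Bigr)^{1/2}=\|Q^{1/2}r_a\|_2\le\tfrac14\|Q^{1/2}n_a^{\mathrm{tmplt}}\|_2=\tfrac14\Bigl(\sum_b\widehat p_bN_{ab}^2\Bigr)^{1/2}\le\tfrac{K_*}{4}.
\]
The last step uses \(|N_{ab}|\le K_*\), which holds by Cauchy--Schwarz as in Lemma~\ref{lem:template-kernel-aux}, together with \(\sum_b\widehat p_b=1\). The \(\ell_1\) bound then follows from Cauchy--Schwarz with weights \(\widehat p_b\):
\[
\sum_b\widehat p_b|r_{ab}|\le\Bigl(\sum_b\widehat p_b\Bigr)^{1/2}\Bigl(\sum_b\widehat p_br_{ab}^2\Bigr)^{1/2}\le\tfrac{K_*}{4}.
\]

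The only nonroutine step is the similarity transform that turns the nonsymmetric matrix \(H_\theta+\lambda^{-1}\bN Q\) into the symmetric \(S\succeq 4I\). It is what makes the \(Q\)-weighted norm the natural one in which to state the bounds.
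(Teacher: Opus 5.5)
Your proof is correct, and it differs from the paper's only in how the weighted bound is obtained. The block identity is argued exactly as in the paper.

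For the bound, the paper never symmetrizes at the template level. It uses the sample-level curvature bound $J_M\succeq(4/n)I_n$ to get $\|\mathbf YJ_M^{-1}\mathbf Ym_a\|_2\le(n/4)\|m_a\|_2\le K_*n^{3/2}/4$. It then reads off the template quantity from the identity just proved, via $\|nPr_a\|_2=n^{3/2}(\sum_b\widehat p_br_{ab}^2)^{1/2}$.

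You instead work entirely in $\R^r$. The diagonal similarity turns $H_\theta+\lambda^{-1}\bN Q$ into the symmetric $S=H_\theta+\lambda^{-1}Q^{1/2}\bN Q^{1/2}\succeq 4I$.

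The two arguments are the same inequality in different coordinates. In the orthonormal block basis $n^{-1/2}PQ^{-1/2}$ of $\mathcal B_n$, the restriction of $J_M$ is exactly $\frac1n Y_rSY_r$. So your $S\succeq 4I$ is the restriction of $J_M\succeq(4/n)I_n$.

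What each version buys:
\begin{itemize}
\item The paper's version is shorter, because it recycles the first identity and needs no change of basis.
\item Yours is self-contained at the quotient level, since the bound does not use the first part at all. It also makes explicit why the $\widehat p$-weighted norm is the natural one.
\end{itemize}
Both rely on nonempty blocks, which is the standing conditioning of that subsection. Both also give the intermediate bound $\frac14(\sum_b\widehat p_bN_{ab}^2)^{1/2}$ before invoking $|N_{ab}|\le K_*$.
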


\begin{proof}
Since \(m_a=Pn_a^{\mathrm{tmplt}}\) and \(\mathbf YP=PY_r\),
\[
\mathbf Ym_a=PY_r n_a^{\mathrm{tmplt}}.
\]
Using \eqref{eq:blockproj-HM-inverse-restriction},
\[
J_M^{-1}\mathbf Ym_a
=
nP A_{\lambda,\widehat p}^{-1}Y_r n_a^{\mathrm{tmplt}}.
\]
Multiplying by \(\mathbf Y\) gives
\[
\mathbf YJ_M^{-1}\mathbf Ym_a
=
nP Y_rA_{\lambda,\widehat p}^{-1}Y_r n_a^{\mathrm{tmplt}}
=
nPr_a.
\]

For the bound, use the Hessian lower bound \(J_M\succeq (4/n)I_n\), which gives
\[
\|J_M^{-1}\|_{\mathrm{op}}\le \frac n4.
\]
Since every coordinate of \(m_a\) is bounded in absolute value by \(K_*\),
\[
\|m_a\|_2\le K_*\sqrt n.
\]
Therefore
\[
\|\mathbf YJ_M^{-1}\mathbf Ym_a\|_2
\le
\frac n4\|m_a\|_2
\le
\frac{K_*n^{3/2}}4.
\]
On the other hand, by \eqref{eq:blockproj-Hinv-m-block},
\[
\|nPr_a\|_2
=
n\left(\sum_{b=1}^r n_b r_{ab}^2\right)^{1/2}
=
n^{3/2}
\left(\sum_{b=1}^r\widehat p_b r_{ab}^2\right)^{1/2}.
\]
Dividing by \(n^{3/2}\) proves the first estimate in \eqref{eq:blockproj-ra-weighted-bound}.  The second follows from Cauchy--Schwarz and \(\sum_b\widehat p_b=1\).
\end{proof}

We next isolate the first-order block-averaged score perturbation.  Let \(\Delta\) and \(\zeta\) denote the global kernel and test-vector discrepancies, and write \(\overline\Delta\) and \(\overline\zeta\) for their empirical block averages.  Define
\begin{equation}
\label{eq:blockproj-linearized-block-score}
\mathcal L_a^{\mathrm{blk}}(\Delta,\zeta)
:=
-
\frac{1}{\lambda^2}
\sum_{b=1}^r\sum_{c=1}^r
\widehat p_b\widehat p_c\,r_{ab}\,\beta_c\,\overline\Delta_{bc}
+
\frac{1}{\lambda}
\sum_{b=1}^r
\widehat p_b\,\beta_b\,\overline\zeta_b .
\end{equation}
For later use, set
\begin{equation}
\label{eq:blockproj-Wa-def}
W_a:=\sum_{b=1}^r\widehat p_b|r_{ab}|,
\qquad
\Delta_{\mathrm{blk}}:=\max_{b,c}|\overline\Delta_{bc}|,
\qquad
Z_{\mathrm{blk}}:=\max_b|\overline\zeta_b|.
\end{equation}
We shall also use the weighted envelopes
\begin{align}
\label{eq:blockproj-weighted-block-score-envelopes}
\Delta_{\mathrm w}
&:=
\sum_{b=1}^r\sum_{c=1}^r
\widehat p_b\widehat p_c |r_{ab}|\,|\beta_c|\,|\overline\Delta_{bc}|, \\
Z_{\mathrm w}
&:=
\sum_{b=1}^r
\widehat p_b |\beta_b|\,|\overline\zeta_b|.
\end{align}
These satisfy
\begin{equation}
\label{eq:blockproj-weighted-vs-max}
\Delta_{\mathrm w}\le W_a\Delta_{\mathrm{blk}},
\qquad
Z_{\mathrm w}\le Z_{\mathrm{blk}},
\end{equation}
while the weighted quantities can be sharper when the large block averages occur on lightly weighted template pairs.

\begin{lemma}
\label{lem:blockproj-linearized-block-bound}
The block-linearized score obeys
\begin{equation}
\label{eq:blockproj-linearized-block-bound}
|\mathcal L_a^{\mathrm{blk}}(\Delta,\zeta)|
\le
\frac{1}{\lambda^2}\Delta_{\mathrm w}
+
\frac{1}{\lambda}Z_{\mathrm w}.
\end{equation}
Consequently,
\begin{equation}
\label{eq:blockproj-linearized-block-bound-max}
|\mathcal L_a^{\mathrm{blk}}(\Delta,\zeta)|
\le
\frac{W_a}{\lambda^2}\Delta_{\mathrm{blk}}
+
\frac{1}{\lambda}Z_{\mathrm{blk}},
\end{equation}
and
\begin{equation}
\label{eq:blockproj-linearized-block-bound-coarse}
|\mathcal L_a^{\mathrm{blk}}(\Delta,\zeta)|
\le
\frac{K_*}{4\lambda^2}\Delta_{\mathrm{blk}}
+
\frac{1}{\lambda}Z_{\mathrm{blk}}.
\end{equation}
\end{lemma}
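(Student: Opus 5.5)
The first inequality \eqref{eq:blockproj-linearized-block-bound} follows from the triangle inequality applied term by term to the definition \eqref{eq:blockproj-linearized-block-score} of \(\mathcal L_a^{\mathrm{blk}}\). For the double sum, the factors \(\widehat p_b\widehat p_c\ge0\) give
\[
\Bigl|\sum_{b,c}\widehat p_b\widehat p_c\,r_{ab}\beta_c\overline\Delta_{bc}\Bigr|
\le
\sum_{b,c}\widehat p_b\widehat p_c|r_{ab}|\,|\beta_c|\,|\overline\Delta_{bc}|
=\Delta_{\mathrm w}.
\]
Likewise, the single sum is bounded by \(Z_{\mathrm w}\). Multiplying by \(1/\lambda^2\) and \(1/\lambda\) respectively and adding proves \eqref{eq:blockproj-linearized-block-bound}. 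No further structure is needed at this step.

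For \eqref{eq:blockproj-linearized-block-bound-max}, I will establish the comparison \eqref{eq:blockproj-weighted-vs-max}. The one fact required is \(|\beta_c|\le1\), which holds because \(\beta_c=y_c\theta_c\) and \(\theta_c\in(0,1)\) by Theorem~\ref{thm:dual} together with the block-constancy in Theorem~\ref{thm:idealreduction}.

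With this in hand, \(\Delta_{\mathrm w}\le\Delta_{\mathrm{blk}}\sum_b\widehat p_b|r_{ab}|\sum_c\widehat p_c=W_a\Delta_{\mathrm{blk}}\), using \(\sum_c\widehat p_c=1\). Similarly, \(Z_{\mathrm w}\le Z_{\mathrm{blk}}\sum_b\widehat p_b=Z_{\mathrm{blk}}\). Substituting these two comparisons into \eqref{eq:blockproj-linearized-block-bound} yields \eqref{eq:blockproj-linearized-block-bound-max}.

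Finally, \eqref{eq:blockproj-linearized-block-bound-coarse} follows from the second estimate in \eqref{eq:blockproj-ra-weighted-bound} of Lemma~\ref{lem:blockproj-test-representer}, which gives \(W_a\le K_*/4\). The only point to check is that this lemma applies: it requires every block to be nonempty, which is the standing conditioning of this subsection. There is no real obstacle; the whole argument is elementary bookkeeping once \(|\beta_c|\le1\) and the weighted bound on \(r_a\) are available.
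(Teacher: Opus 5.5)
Your proposal is correct and matches the paper's proof: the triangle inequality on \eqref{eq:blockproj-linearized-block-score}, then $|\beta_c|\le 1$ and $\sum_c\widehat p_c=1$ to obtain \eqref{eq:blockproj-weighted-vs-max}, then $W_a\le K_*/4$ from the second estimate in \eqref{eq:blockproj-ra-weighted-bound}. You are also right that the first inequality needs no bound on $\beta$ at all, since $|\beta_c|$ is already built into $\Delta_{\mathrm w}$ and $Z_{\mathrm w}$; the paper invokes $|\beta_b|\le 1$ at the start, but it is only used in the second step.
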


\begin{proof}
Since \(0<\theta_b<1\), we have \(|\beta_b|=|y_b\theta_b|\le 1\).  Taking absolute values in \eqref{eq:blockproj-linearized-block-score} gives
\[
|\mathcal L_a^{\mathrm{blk}}(\Delta,\zeta)|
\le
\frac{1}{\lambda^2}
\sum_{b,c}\widehat p_b\widehat p_c |r_{ab}|\,|\beta_c|\,|\overline\Delta_{bc}|
+
\frac1\lambda
\sum_b\widehat p_b|\beta_b|\,|\overline\zeta_b|,
\]
which is \eqref{eq:blockproj-linearized-block-bound}.  The maximum bound follows from \eqref{eq:blockproj-weighted-vs-max}, and the final bound follows from the weighted estimate on \(r_a\).
\end{proof}

 \subsubsection{Block-average concentration}
\label{sec:blockavg-sharp-block-average-bounds}

The block reduction in Section \ref{sec:block-averaged-linear-response} shows that the first-order score perturbation is governed by block averages of \(\Delta\) and \(\zeta\), rather than by a global Frobenius or operator norm of \(\Delta\).  We now collect the concentration ingredients needed to control these averages.

There are two complementary estimates.  The first is a support bound, which treats every collision as a worst-case change in the corresponding kernel entry.  This is sharp when all collision contributions have the same sign.  The second is a signed, kernel-weighted bound, which keeps the centered entries and can be sharper when collision contributions have small mean, small variance, or cancel in block average.  In both estimates, literal collisions are measured only against the literals appearing in the two strings being compared, rather than against the full global literal set.

Throughout this subsection we condition on a realized template assignment for which every block is nonempty.

\begin{lemma}
\label{lem:blockavg-pair-specific-entrywise-agreement}
Condition on \(\tau(i)=b\) and \(\tau(j)=c\), with \(i\ne j\), and define
\begin{align}
\label{eq:blockavg-pair-bad-event}
\mathcal B_{ij}(b,c)
:=
&\{S_i(W_b)\cap L_c\ne\varnothing\}
\cup
\{S_j(W_c)\cap L_b\ne\varnothing\}
\nonumber\\
&\cup
\{S_i(W_b)\cap S_j(W_c)\ne\varnothing\}.
\end{align}
Then, on \(\mathcal B_{ij}(b,c)^c\),
\begin{equation}
\label{eq:blockavg-pair-entrywise-agree}
K(X_i,X_j)=N_{bc}.
\end{equation}
Moreover,
\begin{equation}
\label{eq:blockavg-pair-prob-bound}
\Pp\bigl(\mathcal B_{ij}(b,c)\mid \tau(i)=b,\tau(j)=c\bigr)
\le q_{bc}.
\end{equation}

Let \(x\in\cX_a^{\mathrm{fr}}\) be a fixed globally fresh test string.  Conditional on \(\tau(i)=b\), define
\begin{equation}
\label{eq:blockavg-test-bad-event}
\mathcal T_i(b,x)
:=
\{S_i(W_b)\cap L_a\ne\varnothing\}
\cup
\{S_i(W_b)\cap T_x\ne\varnothing\}.
\end{equation}
Then, on \(\mathcal T_i(b,x)^c\),
\begin{equation}
\label{eq:blockavg-test-entrywise-agree}
K(x,X_i)=N_{ab},
\end{equation}
and
\begin{equation}
\label{eq:blockavg-test-prob-bound}
\Pp\bigl(\mathcal T_i(b,x)\mid \tau(i)=b\bigr)
\le q_{bx}.
\end{equation}
\end{lemma}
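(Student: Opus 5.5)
The statement has two halves, the entrywise agreement off the bad events and the probability bounds, and I will treat them separately. For the agreement, I reuse the permutation argument from Lemma \ref{lem:collision-support-front} and Lemma \ref{lem:template-kernel-aux}. On \(\mathcal B_{ij}(b,c)^c\), the wildcard images \(S_i(W_b)\) and \(S_j(W_c)\) are disjoint. Neither meets the other template's literals, and admissibility already keeps each string's wildcard images off its own literals. So the realized pair \((S_i,S_j)\) is a fresh admissible pair for \((z_b,z_c)\) in the sense used to define \(N_{bc}\), and the well-definedness part of Lemma \ref{lem:template-kernel-aux} gives \(K(X_i,X_j)=N_{bc}\).

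The test case works the same way. Since \(x=\Sub(z_a,s_x)\) is globally fresh, \(T_x\) avoids the global literal set \(R\), and in particular avoids \(L_b\). On \(\mathcal T_i(b,x)^c\), the set \(S_i(W_b)\) misses both \(L_a\) and \(T_x\). Hence \((s_x,S_i)\) is a fresh admissible pair for \((z_a,z_b)\), and \(K(x,X_i)=N_{ab}\). The only point to check carefully is that ``fresh pair'' requires exactly these three disjointness conditions and nothing about other literals. This is where the pair-specific literal sets \(L_b,L_c\) replace the global set \(R\).

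For the probability bounds, I condition on \(\tau(i)=b\) and \(\tau(j)=c\), so that \(S_i\sim\mu_{{\rm sub},b}\) and \(S_j\sim\mu_{{\rm sub},c}\) are independent. I apply a union bound over the three events and a further union bound over tokens in each:
\[
\Pp\{S_i(W_b)\cap L_c\ne\varnothing\}\le\sum_{t\in L_c}p_{b,t}=\ell_{b\to c},
\qquad
\Pp\{S_j(W_c)\cap L_b\ne\varnothing\}\le\ell_{c\to b}.
\]
For the wildcard overlap, independence gives \(\Pp\{t\in S_i(W_b),\,t\in S_j(W_c)\}=p_{b,t}p_{c,t}\), and summing over \(t\in\cV\) gives \(\chi_{bc}\). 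The sum of the three bounds is at most \(\ell_{b\to c}+\ell_{c\to b}+\chi_{bc}\). Since any probability is at most \(1\), this yields \(q_{bc}\). The test event is bounded the same way, now with deterministic \(T_x\): \(\Pp\{S_i(W_b)\cap L_a\ne\varnothing\}\le\ell_{b\to a}\) and \(\Pp\{S_i(W_b)\cap T_x\ne\varnothing\}\le\sum_{t\in T_x}p_{b,t}\). Capping at \(1\) gives \(q_{bx}\).

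I expect no real obstacle; the argument is a union bound plus independence. The step that needs the most care is the agreement step: confirming that the extension of the partial bijection to a permutation of \(\cV\) is consistent when only the literals of the two strings involved are fixed. If necessary, I will restate the freshness definition with exactly these pairwise disjointness conditions.
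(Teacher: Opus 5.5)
Your proposal is correct and follows the paper's proof. Both use token symmetry via a vocabulary permutation for the agreement off the bad events, and both use the three-term union bound for the probabilities, with conditional independence of \(S_i,S_j\) giving \(\chi_{bc}\) and the cap at \(1\) giving \(q_{bc}\) and \(q_{bx}\). The one refinement is the one you already flagged: the paper does not cite Lemma \ref{lem:template-kernel-aux}, whose proof fixes the whole global literal set \(R\) and would break if a wildcard of \(X_i\) hit a literal of a third template. Instead it directly extends the partial bijection that fixes only \(L_b\cup L_c\) and sends \(S_i(w)\mapsto u(w)\), \(S_j(w)\mapsto v(w)\) for a reference fresh pair \((u,v)\), and does the same with \(L_a\cup L_b\) in the test case.
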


\begin{proof}
On \(\mathcal B_{ij}(b,c)^c\), the wildcard image of \(X_i\) avoids the literals of \(z_c\), the wildcard image of \(X_j\) avoids the literals of \(z_b\), and the two wildcard images are disjoint.  Admissibility also gives
\[
S_i(W_b)\cap L_b=\varnothing,
\qquad
S_j(W_c)\cap L_c=\varnothing.
\]
Thus the wildcard tokens in the two strings are disjoint from one another and from all literals in \(L_b\cup L_c\).

Let \((u,v)\) be a fresh admissible pair used to define \(N_{bc}\).  The map that fixes \(L_b\cup L_c\), sends \(S_i(w)\) to \(u(w)\) for \(w\in W_b\), and sends \(S_j(w)\) to \(v(w)\) for \(w\in W_c\), is a consistent partial bijection on the finite set of tokens involved.  Extend it to a permutation \(\pi\) of the vocabulary.  Token symmetry then gives
\[
K(X_i,X_j)
=
K(\pi(X_i),\pi(X_j))
=
K(\subop(z_b,u),\subop(z_c,v))
=
N_{bc}.
\]

The probability bound follows by a union bound.  The probability that \(S_i(W_b)\) hits \(L_c\) is at most
\[
\sum_{t\in L_c}\Pp(t\in S_i(W_b)\mid \tau(i)=b)
=
\ell_{b\to c},
\]
and the analogous probability that \(S_j(W_c)\) hits \(L_b\) is at most \(\ell_{c\to b}\).  For the shared-wildcard term, conditional independence gives
\[
\Pp(t\in S_i(W_b),\ t\in S_j(W_c)\mid \tau(i)=b,\tau(j)=c)
=
p_{b,t}p_{c,t},
\]
and summing over \(t\) gives the contribution \(\chi_{bc}\).  Combining the three terms gives \eqref{eq:blockavg-pair-prob-bound}.

The test-string statement is identical in form.  On \(\mathcal T_i(b,x)^c\), the training wildcard image avoids both \(L_a\) and the fresh tokens \(T_x\), while admissibility already gives avoidance of \(L_b\).  Hence a vocabulary permutation fixing \(L_a\cup L_b\) carries the pair \((x,X_i)\) to a fresh admissible pair defining \(N_{ab}\), so \(K(x,X_i)=N_{ab}\).  The two collision probabilities are bounded by \(\ell_{b\to a}\) and \(\chi_{b,x}\), respectively.
\end{proof}

We next record the elementary edge colorings used to separate dependent kernel entries into independent classes.

\begin{lemma}
\label{lem:blockavg-block-edge-colorings}
For two nonempty blocks \(I_b,I_c\), the following partitions exist.

If \(b\ne c\), the complete bipartite edge set \(I_b\times I_c\) can be partitioned into
\[
\max\{n_b,n_c\}
\]
matchings.  If \(b=c\), the unordered edge set
\[
\{\{i,j\}:i,j\in I_b,\ i<j\}
\]
can be partitioned into \(m_{n_b}\) matchings, where
\[
m_{n_b}
:=
\begin{cases}
n_b-1, & n_b\text{ even},\\
n_b, & n_b\text{ odd}.
\end{cases}
\]
\end{lemma}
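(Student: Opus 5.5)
The statement is a pure edge-coloring fact (proper edge colorings of $K_{n_b,n_c}$ and $K_{n_b}$), so I will give explicit cyclic constructions rather than cite König's theorem, so that the matchings are concrete.

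\emph{Bipartite case $b\ne c$.} Write $p=n_b\le q=n_c$ without loss of generality. Label $I_b=\{u_0,\dots,u_{p-1}\}$ and $I_c=\{v_0,\dots,v_{q-1}\}$. For $s\in\{0,\dots,q-1\}$ define
\[
  \mathcal M_s:=\{(u_i,v_{(i+s)\bmod q}):0\le i\le p-1\}.
\]
Each $\mathcal M_s$ is a matching: distinct $i$ give distinct left endpoints, and since $p\le q$ they also give distinct residues $(i+s)\bmod q$. Every edge $(u_i,v_j)$ lies in exactly one class, namely $s=(j-i)\bmod q$. So $I_b\times I_c$ is partitioned into $q=\max\{n_b,n_c\}$ matchings. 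Some classes may be empty if one block is trivial, but they are still admissible.

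\emph{Same-block case, $n=n_b$ odd.} Identify $I_b$ with $\mathbb Z_n$. For $s\in\mathbb Z_n$ let
\[
  \mathcal M_s:=\bigl\{\{i,j\}:i\ne j,\ i+j\equiv s \pmod n\bigr\}.
\]
Every edge lies in exactly one class, namely $s=i+j$. Each class is a matching: a vertex $i$ has at most one partner $j\equiv s-i$ in $\mathcal M_s$. (The vertex $i=s/2$, which exists because $2$ is invertible mod $n$, is the only one left unmatched.) This gives $n$ matchings.

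\emph{Same-block case, $n$ even.} Here I use the standard round-robin construction. Set aside one vertex $\infty$ and identify the remaining $n-1$ vertices (an odd number) with $\mathbb Z_{n-1}$. Apply the odd construction to them, giving $n-1$ classes, each leaving exactly one vertex $s/2$ uncovered. Add the edge $\{s/2,\infty\}$ to class $s$. The edges at $\infty$ are then covered exactly once, since $s\mapsto s/2$ is a bijection of $\mathbb Z_{n-1}$, and every class remains a matching. This gives $n-1=m_{n_b}$ matchings. The degenerate cases $n_b\in\{1,2\}$ are consistent with these counts: one empty class when $n_b=1$, and one single-edge class when $n_b=2$.

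Nothing here is genuinely hard. The step that needs the most care is the even case, specifically checking that $s\mapsto s/2$ is well defined and bijective modulo the odd number $n-1$; that is exactly what makes the edges incident to $\infty$ covered exactly once.
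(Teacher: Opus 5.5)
Your proof is correct and follows the paper's route. The bipartite case uses exactly the paper's cyclic classes $j-i\equiv s \pmod{n_c}$, and the same-block case is the standard round-robin one-factorization run in the opposite direction: you build an explicit odd near-one-factorization via $i+j\equiv s \pmod{n}$ and then adjoin $\infty$, whereas the paper starts from the even one-factorization and gets the odd case by deleting a dummy vertex. One harmless slip: when both blocks are nonempty, each bipartite class $\mathcal M_s$ has exactly $\min\{n_b,n_c\}\ge 1$ edges, so no class is ever empty and that caveat is unnecessary.
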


\begin{proof}
For \(b\ne c\), this is the standard edge coloring of \(K_{n_b,n_c}\).  Assuming \(n_b\le n_c\), label the two parts by \(\{1,\ldots,n_b\}\) and \(\{1,\ldots,n_c\}\), and place \((i,j)\) in color class \(q\in\{0,\ldots,n_c-1\}\) when
\[
j-i\equiv q\pmod{n_c}.
\]
Each color class is a matching, and the \(n_c\) classes cover all edges.

For \(b=c\), use the usual round-robin one-factorization.  If \(n_b\) is even, it partitions the complete graph on \(I_b\) into \(n_b-1\) perfect matchings.  If \(n_b\) is odd, add one dummy vertex, apply the even construction to \(n_b+1\) vertices, and then delete the dummy-incident edges.  The remaining \(n_b\) matchings partition the original edge set.
\end{proof}

The coloring lets us apply Bernstein's inequality color by color.

\begin{lemma}
\label{lem:blockavg-signed-partition-bernstein}
Let \(\mathcal I\) be a finite index set and let \((X_\alpha)_{\alpha\in\mathcal I}\) be real random variables satisfying
\[
|X_\alpha|\le L
\qquad\text{almost surely}.
\]
Assume that
\[
\mathcal I=\mathcal I_1\sqcup\cdots\sqcup\mathcal I_m
\]
and that, for each \(q\), the variables \(\{X_\alpha:\alpha\in\mathcal I_q\}\) are jointly independent.  Define
\[
S:=\sum_{\alpha\in\mathcal I}(X_\alpha-\E X_\alpha),
\qquad
V:=\sum_{\alpha\in\mathcal I}\Var(X_\alpha).
\]
Then, for every \(u>0\),
\begin{equation}
\label{eq:blockavg-signed-partition-bernstein}
\Pp\left(
|S|>
\sqrt{2mVu}+\frac{4}{3}mLu
\right)
\le 2e^{-u}.
\end{equation}
\end{lemma}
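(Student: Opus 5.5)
The plan is to apply the scalar Bernstein inequality separately to each independent color class and then combine the $m$ classes with a union bound. One could instead try to refine the variance allocation; the union bound is cleaner and already yields the stated constants.

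\emph{Step 1: split into classes.} Write $S=\sum_{q=1}^m S_q$ with $S_q:=\sum_{\alpha\in\mathcal I_q}(X_\alpha-\E X_\alpha)$, and set $V_q:=\sum_{\alpha\in\mathcal I_q}\Var(X_\alpha)$, so that $\sum_q V_q=V$. Within a class the centered summands are independent, mean zero, and bounded by $|X_\alpha-\E X_\alpha|\le 2L$.

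\emph{Step 2: Bernstein on each class.} The two-sided Bernstein inequality with range bound $2L$ gives, for each $q$ and every $u>0$,
\[
\Pp\Bigl(|S_q|>\sqrt{2V_q u}+\tfrac{2}{3}(2L)u\Bigr)\le 2e^{-u}.
\]

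\emph{Step 3: combine the classes.} This is the step where the stated form must be matched. A naive union bound over the $m$ classes at level $u$ would give failure probability $2me^{-u}$, not the claimed $2e^{-u}$. There are two ways to handle this.

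Option (a): use a weighted union bound at level $u+\log m$. This produces an extra logarithmic factor that does not appear in the statement, so it does not give the claimed bound.

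Option (b), the one I would use: avoid a union bound altogether and bound the moment generating function of $S$ by Hölder's inequality. Because the classes may be dependent on one another, write
\[
\E e^{sS}=\E\prod_q e^{sS_q}\le\prod_q\bigl(\E e^{msS_q}\bigr)^{1/m}.
\]
Within each class, independence gives the Bernstein MGF bound
\[
\log\E e^{msS_q}\le \frac{m^2s^2V_q/2}{1-2Lms/3}.
\]
Taking the product and the $1/m$-th power yields
\[
\log\E e^{sS}\le \frac{m s^2 V/2}{1-2Lms/3}.
\]
This is a Bernstein-type MGF with variance proxy $mV$ and scale $2Lm/3$. The standard Chernoff optimization then gives the deviation $\sqrt{2mVu}+\tfrac{2}{3}\cdot 2mLu=\sqrt{2mVu}+\tfrac{4}{3}mLu$ with probability at most $e^{-u}$ per tail.

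\emph{Step 4: two tails.} Applying the same argument to $-S$ gives the factor $2$, which establishes \eqref{eq:blockavg-signed-partition-bernstein}.

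The main obstacle is the dependence between classes, which the Hölder step handles. The only delicate point is the bookkeeping of the constants: the range $2L$ of the centered variables combines with the Bernstein scale $1/3$ and the factor $m$ to give exactly $\tfrac{4}{3}mL$.
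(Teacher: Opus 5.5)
Your Option (b) (H\"older across the $m$ color classes, the Bernstein moment bound inside each independent class, Chernoff, and a union bound over the two tails) is exactly the paper's proof, with the same variance proxy $mV$ and the same deviation $\sqrt{2mVu}+\tfrac43 mLu$. Two minor fixes: state the restriction $0\le ms<3/(2L)$ needed for the per-class moment bound, and delete the opening plan and Step~2, since, as your own Step~3 notes, a per-class union bound cannot give $2e^{-u}$ without an extra $\log m$.
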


\begin{proof}
Write \(Z_\alpha=X_\alpha-\E X_\alpha\).  Then \(\E Z_\alpha=0\), \(\Var(Z_\alpha)=\Var(X_\alpha)\), and \(|Z_\alpha|\le 2L\).  The standard Bernstein moment estimate gives, for \(0\le s<3/(2L)\),
\[
\E e^{sZ_\alpha}
\le
\exp\left(
\frac{s^2\Var(Z_\alpha)}{2(1-2Ls/3)}
\right).
\]

For \(\theta\ge0\) with \(m\theta<3/(2L)\), Holder's inequality gives
\[
\E e^{\theta S}
\le
\prod_{q=1}^m
\left[
\E\exp\left(m\theta\sum_{\alpha\in\mathcal I_q}Z_\alpha\right)
\right]^{1/m}.
\]
Within each color class the variables are independent, so the Bernstein moment estimate yields
\[
\log\E e^{\theta S}
\le
\frac{m\theta^2V}{2(1-2Lm\theta/3)}.
\]
Chernoff's bound then gives
\[
\Pp(S\ge t)
\le
\exp\left(
-\frac{t^2}{2m(V+2Lt/3)}
\right).
\]
The stated choice
\[
t=\sqrt{2mVu}+\frac{4}{3}mLu
\]
gives the desired one-sided bound.  Applying the same argument to \(-S\) and taking a union bound proves \eqref{eq:blockavg-signed-partition-bernstein}.
\end{proof}

 \subsubsection{Deterministic transfer reductions}
\label{sec:degcert-resulting-sharp-transfer}

The concentration estimates above control the random block averages and collision actions.  We now record the deterministic transfer step that turns those quantities into score bounds.  The first reduction is a row-action stability estimate; it uses only the uniform curvature of the logistic dual entropy and therefore requires neither a coordinate lower bound on the ideal dual solution nor a perturbative smallness condition.  The curvature-spectral path is already contained in Corollary \ref{cor:Mn-level-curvature-spectral-upper-bound}; we only restate below how it enters the budget notation of Section \ref{sec:graph-budgets}.

\begin{lemma}[Degree-action stability]
\label{thm:degcert-global-degree-action-transfer}
Let \(G\succeq0\) be a Gram matrix with test vector \(k\), and let \(M\succeq0\) be a reference Gram matrix with test vector \(m\).  Let \(c_G,c_M\) be the corresponding dual minimizers, and write
\[
s_G={1\over\lambda n}k^\top(\mathbf y\odot c_G),
\qquad
s_M={1\over\lambda n}m^\top(\mathbf y\odot c_M).
\]
Then, with \(\Delta_{G|M}:=G-M\) and \(\zeta_{k|m}:=k-m\),
\begin{equation}
\label{eq:degcert-global-degree-action-transfer-general}
|s_G-s_M|
\le
\frac{K_*}{4\lambda^2}
\frac{\|\Delta_{G|M}\mathbf Yc_M\|_2}{n^{3/2}}
+
\frac1\lambda
\frac{|\zeta_{k|m}^\top\mathbf Yc_M|}{n},
\end{equation}
provided \(\|k\|_2\le K_*\sqrt n\).
\end{lemma}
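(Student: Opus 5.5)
The plan is to reuse the splitting from the proof of Theorem~\ref{thm:detperturb}, but to keep the row action \(\Delta_{G|M}\mathbf Yc_M\) intact instead of bounding it by \(\|\Delta_{G|M}\|_{\op}\|c_M\|_2\). First decompose the score difference as
\[
  s_G-s_M
  ={1\over\lambda n}k^\top\mathbf Y(c_G-c_M)
  +{1\over\lambda n}\zeta_{k|m}^\top\mathbf Yc_M .
\]
The second term is exactly the test contribution \(\lambda^{-1}n^{-1}|\zeta_{k|m}^\top\mathbf Yc_M|\) in \eqref{eq:degcert-global-degree-action-transfer-general}, so no further work is needed there. Note that this decomposition pairs \(k\), not \(m\), with the dual displacement. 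That choice is what makes the single hypothesis \(\|k\|_2\le K_*\sqrt n\) sufficient.

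For the first term, Cauchy--Schwarz gives
\[
  |k^\top\mathbf Y(c_G-c_M)|\le K_*\sqrt n\,\|c_G-c_M\|_2,
\]
since \(\mathbf Y\) is orthogonal. It therefore suffices to show
\[
  \|c_G-c_M\|_2\le {1\over 4\lambda n}\|\Delta_{G|M}\mathbf Yc_M\|_2 .
\]
This is the strong-convexity stability step. Both dual objectives share the entropy term, which is \((4/n)\)-strongly convex by Lemma~\ref{lem:logistic-aux}, and the quadratic part of \(D_G\) is convex because \(G\succeq0\). Hence \(D_G\) is \((4/n)\)-strongly convex on \((0,1)^n\). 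Both minimizers lie in the open cube by Theorem~\ref{thm:dual}, so gradient conditions hold with equality: \(\nabla D_G(c_G)=0\) and \(\nabla D_M(c_M)=0\).

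Monotonicity of \(\nabla D_G\) then gives
\[
  {4\over n}\|c_G-c_M\|_2^2
  \le \bigl(\nabla D_G(c_G)-\nabla D_G(c_M)\bigr)^\top(c_G-c_M)
  \le \|\nabla D_G(c_M)\|_2\,\|c_G-c_M\|_2 .
\]
The entropy gradients cancel, so
\[
  \nabla D_G(c_M)=\nabla D_G(c_M)-\nabla D_M(c_M)={1\over\lambda n^2}\mathbf Y\Delta_{G|M}\mathbf Yc_M .
\]
Its norm is \((\lambda n^2)^{-1}\|\Delta_{G|M}\mathbf Yc_M\|_2\), which yields the displayed bound on \(\|c_G-c_M\|_2\).

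Combining the two steps, the first term is at most
\[
  {K_*\sqrt n\over\lambda n}\cdot{1\over4\lambda n}\|\Delta_{G|M}\mathbf Yc_M\|_2
  ={K_*\over4\lambda^2}{\|\Delta_{G|M}\mathbf Yc_M\|_2\over n^{3/2}},
\]
which proves \eqref{eq:degcert-global-degree-action-transfer-general}. There is no genuine obstacle. The only points needing care are that the gradient difference must be evaluated at \(c_M\), so that it is the row action on the ideal residual rather than an operator norm, and that the interior property of the minimizers justifies the first-order conditions.
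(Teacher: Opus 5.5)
Your proposal is correct and follows the paper's proof essentially step for step. You use the same splitting that pairs $k$ with the dual displacement, the same $(4/n)$-strong-convexity stability bound with the gradient difference evaluated at $c_M$ so that only the row action $\Delta_{G|M}\mathbf Yc_M$ appears, and the same Cauchy--Schwarz step using $\|k\|_2\le K_*\sqrt n$.
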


\begin{proof}
By Theorem \ref{thm:dual}, the two dual objectives differ only in their quadratic kernel terms.  Since
\[
\varphi''(c)\ge4
\]
on \((0,1)\), the dual objective associated with \(G\) is \((4/n)\)-strongly convex.  The standard stability estimate for strongly convex functions gives
\[
\|c_G-c_M\|_2
\le
\frac n4
\|\nabla D_G(c_M)-\nabla D_M(c_M)\|_2 .
\]
The entropy gradients cancel, and the remaining difference is
\[
\nabla D_G(c_M)-\nabla D_M(c_M)
=
\frac{1}{\lambda n^2}\mathbf Y(G-M)\mathbf Yc_M.
\]
Thus
\begin{equation}
\label{eq:degcert-global-cdiff-bound}
\|c_G-c_M\|_2
\le
\frac{1}{4\lambda n}
\|\Delta_{G|M}\mathbf Yc_M\|_2 .
\end{equation}

Decomposing the score difference,
\[
s_G-s_M
=
\frac1{\lambda n}k^\top\mathbf Y(c_G-c_M)
+
\frac1{\lambda n}\zeta_{k|m}^\top\mathbf Yc_M .
\]
The first term is bounded by Cauchy--Schwarz and \(\|k\|_2\le K_*\sqrt n\):
\[
\frac1{\lambda n}
|k^\top\mathbf Y(c_G-c_M)|
\le
\frac{K_*}{\lambda\sqrt n}
\|c_G-c_M\|_2 .
\]
Substituting \eqref{eq:degcert-global-cdiff-bound} gives the first term in
\eqref{eq:degcert-global-degree-action-transfer-general}; the second term is already in the desired form.
\end{proof}

The following consequence is the form used by the block-density, Hoeffding--ANOVA, and bias--fluctuation paths.  All quantities appearing in it were defined earlier in Section \ref{sec:graph-budgets} and Section \ref{sec:block-averaged-linear-response}.

\begin{corollary}[Fresh-target deterministic transfer]
\label{thm:degcert-deterministic-sharp-transfer}
\label{lem:degcert-zid-block-control}
\label{lem:degcert-degree-action-support-control}
For the fresh ideal reference \((M_n,m_a)\),
\begin{equation}
\label{eq:degcert-deterministic-sharp-transfer}
\bigl|\widehat f_\lambda(x)-S_\lambda^{\id}(x)\bigr|
\le
\operatorname{Ord}_\lambda(A_\Delta,0,Z_{\mathrm w}).
\end{equation}
Equivalently,
\[
\bigl|\widehat f_\lambda(x)-S_\lambda^{\id}(x)\bigr|
\le
\frac{K_*}{4\lambda^2}A_\Delta
+
\frac1\lambda Z_{\mathrm w}.
\]
The projection-augmented form
\begin{equation}
\label{eq:degcert-projection-augmented-certificate}
\bigl|\widehat f_\lambda(x)-S_\lambda^{\id}(x)\bigr|
\le
\operatorname{Ord}_\lambda(A_\Delta,\Delta_{\mathrm w},Z_{\mathrm w})
\end{equation}
also holds.

In particular, on any event where
\[
\Delta_{\mathrm w}\le D,
\qquad
Z_{\mathrm w}\le Z,
\]
one has
\begin{equation}
\label{eq:degcert-generic-projected-budget}
\bigl|\widehat f_\lambda(x)-S_\lambda^{\id}(x)\bigr|
\le
\operatorname{Ord}_\lambda(A_\Delta,D,Z).
\end{equation}
The same statement holds for any block-constant positive-semidefinite reference target, with the corresponding reference quantities substituted throughout.
\end{corollary}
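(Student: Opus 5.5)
The plan is to apply the degree-action stability lemma (Lemma~\ref{thm:degcert-global-degree-action-transfer}) with $G=\widehat K_n$, $k=k_x$, $M=M_n$, $m=m_a$. Theorem~\ref{thm:dual} identifies $s_G=\widehat f_\lambda(x)$, and Theorem~\ref{thm:idealreduction} identifies $s_M=S_\lambda^{\id}(x)$. The hypothesis $\|k_x\|_2\le K_*\sqrt n$ is immediate from Cauchy--Schwarz in the RKHS, since $|K(X_i,x)|\le K_*$. The lemma then gives
\[
  \bigl|\widehat f_\lambda(x)-S_\lambda^{\id}(x)\bigr|
  \le \frac{K_*}{4\lambda^2}a_\Delta+\frac1\lambda\frac{|\zeta^\top\mathbf Yc_M|}{n},
  \qquad a_\Delta=\frac{\|\Delta\mathbf Yc_M\|_2}{n^{3/2}} .
\]

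\textbf{Action term.} I will show $a_\Delta\le A_\Delta$, and I expect this to be the only step with content. The operator half is $\|\Delta\mathbf Yc_M\|_2\le\|\Delta\|_{\op}\sqrt n$, using $c_M\in[0,1]^n$. For the degree half, I split $\Delta$ into its diagonal and off-diagonal parts. The diagonal contributes at most $\delta_*\|c_M\|_2\le\delta_*\sqrt n$. For the off-diagonal part, Lemma~\ref{lem:collision-support-front} gives $|(\Delta_{\rm off}\mathbf Yc_M)_i|\le\sum_{j\ne i}L_*\Ccol_{ij}=L_*d_i$, so its $\ell_2$ norm is at most $L_*d_2$. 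Dividing by $n^{3/2}$ yields $\delta_*/n+L_*d_2/n^{3/2}$, and taking the minimum of the two bounds gives $A_\Delta$.

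\textbf{Test term.} Since $\mathbf Yc_M=P\beta$, identity \eqref{eq:blockproj-block-vector-identity} gives $n^{-1}\zeta^\top P\beta=\overline\zeta^{\,\top}Q\beta$. Its absolute value is at most $\sum_b\widehat p_b|\beta_b||\overline\zeta_b|=Z_{\rm w}$. Together with the action bound, this proves \eqref{eq:degcert-deterministic-sharp-transfer}.

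\textbf{Projection-augmented form.} Since $\operatorname{Ord}_\lambda$ is increasing in its middle argument and $\Delta_{\rm w}\ge0$, \eqref{eq:degcert-projection-augmented-certificate} follows trivially from \eqref{eq:degcert-deterministic-sharp-transfer}. Similarly, \eqref{eq:degcert-generic-projected-budget} follows from monotonicity of $\operatorname{Ord}_\lambda$ in $D$ and $Z$.

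\textbf{General reference targets.} For an arbitrary block-constant PSD reference, I would repeat the same argument verbatim. It only needs three things: the reference dual is block-constant (the permutation-invariance and uniqueness argument of Theorem~\ref{thm:idealreduction}), $c\in[0,1]^n$, and entrywise bounds on the discrepancies.
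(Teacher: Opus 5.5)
Your proposal is correct and follows the paper's proof essentially step for step. The paper also applies Lemma~\ref{thm:degcert-global-degree-action-transfer} with $(G,M,k,m)=(\widehat K_n,M_n,k_x,m_a)$ (both Gram matrices being positive semidefinite, the latter by Lemma~\ref{lem:template-kernel-aux}), bounds the action term by the operator estimate and the collision-support degree estimate to get $A_\Delta$, turns the test term into $Z_{\mathrm w}$ via $\mathbf Yc_M=P\beta$ and the block-vector identity, and gets the augmented and generic forms from monotonicity of $\operatorname{Ord}_\lambda$. Your explicit diagonal/off-diagonal split simply fills in the detail the paper compresses into ``the collision-support bound also gives.''
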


\begin{proof}
Apply Theorem \ref{thm:degcert-global-degree-action-transfer} with
\[
(G,M,k,m)=(\widehat K_n,M_n,k_x,m_a).
\]
By Theorem \ref{thm:idealreduction}, the reference score is \(S_\lambda^{\id}(x)\).  Also \(\|k_x\|_2\le K_*\sqrt n\), since each coordinate of \(k_x\) is bounded by \(K_*\).

It remains to connect the two terms in
\eqref{eq:degcert-global-degree-action-transfer-general} with the previously defined budget quantities.  First,
\[
\frac{\|\Delta\mathbf Yc_M\|_2}{n^{3/2}}
\le
\frac{\|\Delta\|_{\op}}{n},
\]
because \(c_M\in[0,1]^n\).  The collision-support bound also gives
\[
\frac{\|\Delta\mathbf Yc_M\|_2}{n^{3/2}}
\le
\frac{\delta_*}{n}
+
L_*\frac{d_2}{n^{3/2}}.
\]
Therefore, by the definition of \(A_\Delta\) in Section \ref{sec:graph-budgets},
\[
\frac{\|\Delta\mathbf Yc_M\|_2}{n^{3/2}}
\le A_\Delta.
\]

For the test term, \(c_M=P\theta\), so \(\mathbf Yc_M=P\beta\).  Using the block-vector identity from Lemma \ref{lem:blockproj-hessian},
\[
\frac1n\zeta^\top\mathbf Yc_M
=
\overline\zeta^{\,\top}Q\beta
=
\sum_b\widehat p_b\,\overline\zeta_b\,\beta_b.
\]
Taking absolute values gives
\[
\frac1n|\zeta^\top\mathbf Yc_M|
\le
\sum_b\widehat p_b|\beta_b|\,|\overline\zeta_b|
=
Z_{\mathrm w}.
\]
Substituting these two estimates into
\eqref{eq:degcert-global-degree-action-transfer-general} proves
\eqref{eq:degcert-deterministic-sharp-transfer}.  The projection-augmented bound follows by adding the nonnegative term \(\lambda^{-2}\Delta_{\mathrm w}\).  The final statement is the same argument applied to any block-constant reference target.
\end{proof}

It is sometimes convenient to name the action part of the ordinary budget:
\begin{equation}
\label{eq:degcert-degree-remainder-def}
R_{\degact}(\lambda;x)
:=
\frac{K_*}{4\lambda^2}A_\Delta .
\end{equation}
For corrected targets we use the analogous shorthand
$
R_{\degact}^{\circ}(\lambda;x)
:=
\frac{K_*}{4\lambda^2}A_\Delta^\circ .
$
Thus \eqref{eq:degcert-deterministic-sharp-transfer} may be written as
\[
\bigl|\widehat f_\lambda(x)-S_\lambda^{\id}(x)\bigr|
\le
R_{\degact}(\lambda;x)
+
\frac1\lambda Z_{\mathrm w}.
\]

\begin{corollary}[Curvature-spectral transfer in budget form]
\label{thm:degcert-global-curvature-spectral-transfer}
On the KL block/test event of Theorem \ref{thm:klbdens-kl-block-envelope},
\begin{equation}
\label{eq:degcert-global-curvature-spectral-transfer}
\bigl|\widehat f_\lambda(x)-S_\lambda^{\id}(x)\bigr|
\le
B_{\cspec}(\lambda;x,u).
\end{equation}
\end{corollary}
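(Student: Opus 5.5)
This corollary is essentially a restatement of Corollary~\ref{cor:Mn-level-curvature-spectral-upper-bound} in the budget notation of Section~\ref{sec:graph-budgets}, so I would prove it by reusing that argument rather than redoing any perturbation analysis.

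The first step is deterministic. Apply Theorem~\ref{thm:curvature-spectral-perturbation} with $(G,M,k,m)=(\widehat K_n,M_n,k_x,m_a)$; this is legitimate because $M_n\succeq0$ by Lemma~\ref{lem:template-kernel-aux}. Then use Theorem~\ref{thm:idealreduction} to identify $s_M$ with $S_\lambda^{\id}(x)$. If $\gamma_{\cspec}\ge1$, then $\operatorname{Curv}_\lambda=+\infty$ by \eqref{eq:curvature-score-functional} and there is nothing to prove, so assume $\gamma_{\cspec}<1$.

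Corollary~\ref{cor:curvature-template-identities} then splits each quadratic form into a block part and an orthogonal part. The bound $R_{0,\lambda,\widehat p}\preceq Q/4$ controls the block part. The Pythagorean identity for $b_\perp$ gives $b^\top C_M^{-1}b\le a_\Delta^2/(4\lambda^2)$, and $C_M^{-1}\preceq (n/4)I$ handles $a_\zeta$. Together these yield the readable intermediate bound
\[
  \bigl|\widehat f_\lambda(x)-S_\lambda^{\id}(x)\bigr|
  \le
  \frac{z_x}{\lambda}
  +\frac{K_*\Delta_{\rm blk}+(K_*+2e_x)a_\Delta}{8\lambda^2(1-\gamma_{\cspec})},
\]
where $z_x,e_x,a_\Delta,\Delta_{\rm blk}$ are the quantities introduced in that proof.

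The second step inserts the graph quantities. From $c_M\in[0,1]^n$ we get $a_\Delta\le\|\Delta\|_{\op}/n$. The support bounds of Lemma~\ref{lem:collision-support-front} give $a_\Delta\le \delta_*/n+L_*d_2/n^{3/2}$. Together these give $a_\Delta\le A_\Delta$, and this step uses no probability. The block-vector identity of Lemma~\ref{lem:blockproj-hessian} gives $z_x\le\sum_b\widehat p_b|\overline\zeta_b|$, and the support bound gives $e_x^2\le L_*^2\sum_b\widehat p_b C_{b,x}/n_b$.

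On the KL block/test event of Theorem~\ref{thm:klbdens-kl-block-envelope}, three envelope bounds then hold:
\[
  \Delta_{\rm blk}\le E_*(u),\qquad
  C_{b,x}/n_b\le U_{\KL}(n_b,q_{bx},u),\qquad
  |\overline\zeta_b|\le L_*U_{\KL}(n_b,q_{bx},u).
\]
These yield $z_x\le L_*X_\star(u)$ and $e_x\le L_*\sqrt{X_\star(u)}$.

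All the bounds are monotone in the quantities being replaced, so substitution gives exactly $\operatorname{Curv}_\lambda(A_\Delta,E_*(u),X_\star(u),\gamma_{\cspec})=B_{\cspec}(\lambda;x,u)$.

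The main obstacle is bookkeeping rather than analysis. I would check that the KL event really supplies the degree-count bound $C_{b,x}/n_b\le U_{\KL}$ as well as the signed block-average bound on $\overline\zeta_b$. I would also check that $\gamma_{\cspec}$ is used unchanged on both sides, since it is not replaced by an envelope. If the KL event only controls $|\overline\zeta_b|$, I would instead derive the count bound from the same Bernoulli KL tail applied to the indicators $\Ccol_{0i}$, $i\in I_b$. These indicators are conditionally independent given $\tau$, because $x$ is fixed.
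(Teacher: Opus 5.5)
Your proposal is correct and follows the paper's route: the paper proves this corollary by citing Corollary~\ref{cor:Mn-level-curvature-spectral-upper-bound}, whose proof is exactly your two steps, namely the deterministic curvature bound with $z_x,e_x,a_\Delta,\Delta_{\rm blk}$, followed by substitution of $A_\Delta$, $E_*(u)$ and $X_\star(u)$ on the KL event. Your bookkeeping worry is well founded and is resolved as the paper implicitly resolves it: the proof of Theorem~\ref{thm:klbdens-kl-block-envelope} actually controls the counts $C_{b,x}/n_b\le U_{\KL}(n_b,q_{bx},u)$, which the signed bounds on $\overline\zeta_b$ alone would not give, so reading the KL test event as that count event yields the $e_x$ bound at no extra probability cost.
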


\begin{proof}
This is the \(M_n\)-level curvature-spectral specialization in
Corollary \ref{cor:Mn-level-curvature-spectral-upper-bound}, written using the budget notation from
Equation \ref{eq:five-certificates-compact}.
\end{proof}

   \subsubsection{Block-density KL envelope}
\label{sec:klbdens-sharpness-exact-counts}

The deterministic transfer step reduces the block-density path to controlling the empirical block averages
\[
\overline\Delta_{bc}
\qquad\text{and}\qquad
\overline\zeta_b .
\]
The envelopes \(E_{bc}(u)\), \(T_b(u)\), \(D_{\bdens}(u)\), \(Z_{\bdens}(u)\), and the budget
\(B_{\bdens}(\lambda;x,u)\) were defined in Section \ref{sec:graph-budgets}.  We now prove that these
envelopes hold with high probability.  The KL inverse keeps the exact Bernoulli tail for the relevant collision counts; the Bernstein-type form is used only as a later relaxation.

\begin{lemma}[Colored Bernoulli KL bound]
\label{lem:klbdens-partition-kl}
\label{lem:klbdens-kl-bernstein-relaxation}
Let \(\mathcal I\) be a finite index set partitioned as
\[
\mathcal I=\mathcal I_1\sqcup\cdots\sqcup\mathcal I_m .
\]
Let \(B_\alpha\in\{0,1\}\), \(\alpha\in\mathcal I\), satisfy
\[
\Pp(B_\alpha=1)\le q .
\]
Assume that, within each color class \(\mathcal I_s\), the variables
\(\{B_\alpha:\alpha\in\mathcal I_s\}\) are jointly independent.  Put
\[
P:=|\mathcal I|,
\qquad
N_{\mathrm{eff}}:=P/m,
\qquad
\overline B:=P^{-1}\sum_{\alpha\in\mathcal I}B_\alpha .
\]
Then, for every \(p\in[q,1]\),
\begin{equation}
\label{eq:klbdens-partition-kl-tail}
\Pp(\overline B\ge p)
\le
\exp\{-N_{\mathrm{eff}}D_{\rm B}(p\|q)\}.
\end{equation}
Consequently, for every \(u>0\),
\begin{equation}
\label{eq:klbdens-partition-kl-envelope}
\Pp\left(
\overline B\le U_{\KL}(N_{\mathrm{eff}},q,u)
\right)
\ge 1-e^{-u}.
\end{equation}
Moreover, whenever \(N_{\mathrm{eff}}>0\),
\begin{equation}
\label{eq:klbdens-kl-bernstein-relaxation}
U_{\KL}(N_{\mathrm{eff}},q,u)
\le
q+
\sqrt{\frac{2q(1-q)u}{N_{\mathrm{eff}}}}
+
\frac{2u}{3N_{\mathrm{eff}}}
\le
q+
\sqrt{\frac{2qu}{N_{\mathrm{eff}}}}
+
\frac{2u}{3N_{\mathrm{eff}}}.
\end{equation}
\end{lemma}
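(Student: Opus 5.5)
The plan is a Chernoff argument applied one color class at a time, with Hölder's inequality joining the classes, followed by two elementary manipulations of the Bernoulli relative entropy.

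\textbf{Tail bound \eqref{eq:klbdens-partition-kl-tail}.} Fix $p\in[q,1]$ and $\theta\ge0$. Write $P_s:=|\mathcal I_s|$, so that $P\overline B=\sum_s\sum_{\alpha\in\mathcal I_s}B_\alpha$.

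First I would separate the classes. Applying Hölder's inequality with $m$ equal exponents gives
\[
\E e^{\theta P\overline B}\le\prod_{s=1}^m\Bigl[\E\exp\Bigl(m\theta\sum_{\alpha\in\mathcal I_s}B_\alpha\Bigr)\Bigr]^{1/m}.
\]
This is the same step used in Lemma~\ref{lem:blockavg-signed-partition-bernstein}.

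Inside a class the $B_\alpha$ are independent, and each satisfies $\E e^{tB_\alpha}\le 1-q+qe^t$ for $t\ge0$, because $\Pp(B_\alpha=1)\le q$. Hence
\[
\log\E e^{\theta P\overline B}\le\sum_s\frac{P_s}{m}\log\bigl(1-q+qe^{m\theta}\bigr)=N_{\rm eff}\log\bigl(1-q+qe^{m\theta}\bigr).
\]

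Markov's inequality then yields
\[
\Pp(\overline B\ge p)\le\exp\bigl\{-\theta Pp+N_{\rm eff}\log(1-q+qe^{m\theta})\bigr\}.
\]
Substitute $t=m\theta$ and use $P=mN_{\rm eff}$. The exponent becomes $-N_{\rm eff}\bigl[tp-\log(1-q+qe^t)\bigr]$. Optimizing over $t\ge0$ gives the Legendre transform of the Bernoulli cumulant, which equals $D_{\rm B}(p\|q)$ for $p\ge q$.

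Two edge cases need care. If $p=1$, take $t\to\infty$; this gives $q^{N_{\rm eff}}=e^{-N_{\rm eff}D_{\rm B}(1\|q)}$. If $q=0$, the bound is trivial.

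\textbf{Envelope \eqref{eq:klbdens-partition-kl-envelope}.} Let $p^*=U_{\KL}(N_{\rm eff},q,u)$. If the feasible set is empty, then $p^*=1$ and $\overline B\le1$ holds surely. Otherwise, the map $p\mapsto D_{\rm B}(p\|q)$ is continuous and increasing on $[q,1]$, so $N_{\rm eff}D_{\rm B}(p^*\|q)\ge u$. Because $P\overline B$ is integer-valued, the event $\{\overline B>p^*\}$ is contained in $\{\overline B\ge p'\}$ for some $p'>p^*$ that is still feasible. The tail bound then gives probability at most $e^{-u}$. Alternatively, one can pass to a limit through $p\downarrow p^*$ using right-continuity. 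The case $N_{\rm eff}=0$ holds by convention.

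\textbf{Relaxation \eqref{eq:klbdens-kl-bernstein-relaxation}.} Since $U_{\KL}$ is an infimum, it suffices to show that $\bar p:=q+\sqrt{2q(1-q)u/N}+2u/(3N)$ is feasible, whenever $\bar p\le1$; if $\bar p>1$ the bound is trivial. This requires the standard Bernstein-type lower bound on the Bernoulli KL:
\[
D_{\rm B}(q+\varepsilon\|q)\ge\frac{\varepsilon^2}{2\bigl(q(1-q)+\varepsilon/3\bigr)}.
\]
One route is to compare the cumulant $\log(1-q+qe^t)$ with the Bernstein cumulant $q(1-q)t^2/(2(1-t/3))$ and then take Legendre transforms. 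The other is to check directly that $g(\varepsilon):=D_{\rm B}(q+\varepsilon\|q)-\varepsilon^2/(2(v+\varepsilon/3))$ satisfies $g(0)=g'(0)=0$ and $g''\ge0$, where $v=q(1-q)$.

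Plugging $\varepsilon=\sqrt{2vu/N}+2u/(3N)$ into this bound gives $\varepsilon^2/(2(v+\varepsilon/3))\ge u/N$. This is the usual inversion step: $\varepsilon^2\ge 2u(v+\varepsilon/3)/N$ holds for this choice of $\varepsilon$. The second inequality in \eqref{eq:klbdens-kl-bernstein-relaxation} is immediate from $1-q\le1$.

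\textbf{Main obstacle.} I expect the hardest step to be the KL lower bound, specifically its validity uniformly in $q$ and the handling of the boundary case $p\to1$. The Hölder decoupling and the Chernoff optimization are routine.
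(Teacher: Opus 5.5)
Your proposal is correct and follows essentially the paper's proof: H\"older decoupling across the $m$ color classes, the within-class Bernoulli moment bound $1-q+qe^{m\theta}$, Chernoff with the substitution $t=m\theta$ to recover $D_{\rm B}(p\|q)$ via the Legendre transform, and then the Bennett-type lower bound $D_{\rm B}(q+\varepsilon\|q)\ge \varepsilon^2/\bigl(2q(1-q)+2\varepsilon/3\bigr)$ followed by the standard inversion. You are more careful than the paper with the edge cases ($p=1$, $q=0$, attainment of the infimum defining $U_{\KL}$), though the integer-valuedness detour in the envelope step is unnecessary: for $q>0$, continuity already makes $p^*$ itself feasible, so $\{\overline B>p^*\}\subseteq\{\overline B\ge p^*\}$ suffices.
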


\begin{proof}
Write \(S_s:=\sum_{\alpha\in\mathcal I_s}B_\alpha\) and \(S:=\sum_sS_s\).  For \(\theta\ge0\), Holder's inequality gives
\[
\E e^{\theta S}
=
\E\prod_{s=1}^m e^{\theta S_s}
\le
\prod_{s=1}^m
\left(\E e^{m\theta S_s}\right)^{1/m}.
\]
Inside each color class the variables are independent, and since each \(B_\alpha\) is Bernoulli
with success probability at most \(q\),
\[
\E e^{m\theta S_s}
\le
\prod_{\alpha\in\mathcal I_s}
\left(1+q(e^{m\theta}-1)\right).
\]
Thus
\[
\E e^{\theta S}
\le
\left(1+q(e^{m\theta}-1)\right)^{P/m}.
\]
Chernoff's bound gives, for \(p\ge q\),
\[
\Pp(S\ge Pp)
\le
\inf_{\theta\ge0}
\exp\left\{
-\theta Pp
+
\frac{P}{m}\log\bigl(1+q(e^{m\theta}-1)\bigr)
\right\}.
\]
With \(\eta=m\theta\), the exponent is
\[
-\frac{P}{m}
\left\{
\eta p-\log(1+q(e^\eta-1))
\right\}.
\]
Optimizing over \(\eta\ge0\) gives the Bernoulli Cramer transform
\(D_{\rm B}(p\|q)\), proving \eqref{eq:klbdens-partition-kl-tail}.  The inverse-envelope statement follows from the definition of \(U_{\KL}\).

For the closed-form relaxation, use the standard Bennett lower bound
\[
D_{\rm B}(q+s\|q)
\ge
\frac{s^2}{2q(1-q)+2s/3},
\qquad 0\le s\le 1-q .
\]
Taking
\[
s=
\sqrt{\frac{2q(1-q)u}{N_{\mathrm{eff}}}}
+
\frac{2u}{3N_{\mathrm{eff}}},
\]
one checks that
\[
D_{\rm B}(q+s\|q)\ge \frac{u}{N_{\mathrm{eff}}}
\]
whenever \(q+s\le1\); if \(q+s>1\), the claim is trivial.  This proves the first inequality in
\eqref{eq:klbdens-kl-bernstein-relaxation}, and the second follows from \(1-q\le1\).
\end{proof}

\begin{theorem}[KL block and test envelopes]
\label{thm:klbdens-kl-block-envelope}
Condition on a template assignment with \(n_b\ge1\) for every \(b\).  Fix \(u>0\).  With conditional probability at least
\[
1-(r^2+r)e^{-u},
\]
the following bounds hold simultaneously:
\begin{equation}
\label{eq:klbdens-block-envelope-compact}
|\overline\Delta_{bc}|\le E_{bc}(u)
\qquad
(1\le b,c\le r),
\end{equation}
and
\begin{equation}
\label{eq:klbdens-test-envelope-compact}
|\overline\zeta_b|\le T_b(u)
\qquad
(1\le b\le r).
\end{equation}
\end{theorem}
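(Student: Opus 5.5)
The plan is to reduce each block and test bound to a support-count statement and then apply Lemma~\ref{lem:klbdens-partition-kl} once per ordered colour pair and once per test block, with a union bound at the end.

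\textbf{Reduction to collision densities.} By Lemma~\ref{lem:collision-support-front} and the display following the definition of $\widehat q_{bc}$, we already have
\[
|\overline\Delta_{bc}|\le L_*\widehat q_{bc}\quad(b\ne c),\qquad
|\overline\Delta_{bb}|\le \frac{\delta_*}{n_b}+\frac{2L_*\binom{n_b}{2}}{n_b^2}\widehat q_{bb},\qquad
|\overline\zeta_b|\le L_*\widehat q_{0b}.
\]
Comparing with the definitions of $E_{bc}(u)$ and $T_b(u)$, it is therefore enough to prove the following three statements, each with probability at least $1-e^{-u}$:
\[
\widehat q_{bc}\le U_{\KL}(N^\eff_{bc},q_{bc},u)\ (b\ne c),\qquad
\widehat q_{bb}\le U_{\KL}(N^\eff_{bb},q_{bb},u)\ (n_b\ge2),\qquad
\widehat q_{0b}\le U_{\KL}(n_b,q_{bx},u).
\]
When $n_b=1$ the diagonal case is trivial: there is no edge term, and the envelope reduces to $\delta_*/n_b$.

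\textbf{Off-diagonal blocks, $b\ne c$.} Take $\mathcal I=I_b\times I_c$ and $B_{(i,j)}=\Ccol_{ij}$. Lemma~\ref{lem:blockavg-pair-specific-entrywise-agreement} gives $\Pp(B_\alpha=1\mid\tau)\le q_{bc}$. Lemma~\ref{lem:blockavg-block-edge-colorings} partitions $\mathcal I$ into $m=\max(n_b,n_c)$ matchings. Within a matching the edges are vertex-disjoint, so the conditional-independence fact from Section~\ref{sec:collision-graph} makes the indicators jointly independent. This gives $N_{\rm eff}=P/m=n_bn_c/\max(n_b,n_c)=\min(n_b,n_c)=N^\eff_{bc}$, and Lemma~\ref{lem:klbdens-partition-kl} yields the first statement. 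One subtlety: $\overline\Delta_{bc}$ and $\overline\Delta_{cb}$ coincide by symmetry, but I will simply count all ordered pairs $(b,c)$, which costs at most $r^2$ events.

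\textbf{Diagonal blocks.} Use the unordered edges in $I_b$, so $P=\binom{n_b}{2}$. The round-robin colouring of Lemma~\ref{lem:blockavg-block-edge-colorings} has $m_{n_b}$ classes, which gives exactly $N^\eff_{bb}=\binom{n_b}{2}/m_{n_b}$. The quantity $\widehat q_{bb}$ is precisely the average over these edges, so the second statement follows in the same way.

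\textbf{Test blocks.} Each $\Ccol_{0i}$ depends only on $S_i$. Given $\tau$, the indicators $\{\Ccol_{0i}:i\in I_b\}$ are therefore independent, with probability at most $q_{bx}$ by the test part of Lemma~\ref{lem:blockavg-pair-specific-entrywise-agreement}. Applying the lemma with a single colour class gives $N_{\rm eff}=n_b$.

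\textbf{Union bound.} Altogether there are at most $r^2$ block events (the $r(r-1)$ ordered off-diagonal pairs plus $r$ diagonal blocks) and $r$ test events. Each fails with probability at most $e^{-u}$, so all hold simultaneously with conditional probability at least $1-(r^2+r)e^{-u}$.

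\textbf{Main obstacle.} The only delicate point is justifying joint independence inside each colour class conditional on $\tau$. This rests on the fact that $\Ccol_{ij}$ is a function of $(S_i,S_j)$ alone and that the substitutions are conditionally independent; this was recorded in Section~\ref{sec:collision-graph}, and I would cite it there. Everything else is bookkeeping that matches the effective sample sizes to the colour counts.
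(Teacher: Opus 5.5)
Your proposal is correct and follows the paper's proof essentially step for step. You use the collision-support bounds to reduce each block and test average to a collision density. You then apply Lemma~\ref{lem:klbdens-partition-kl} with the bipartite-matching, round-robin, and single-class partitions, whose class counts reproduce $N^{\eff}_{bc}$, $N^{\eff}_{bb}$, and $n_b$. A union bound over the $r^2$ Gram blocks and $r$ test blocks then gives the stated conditional probability.
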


\begin{proof}
Fix first two distinct blocks \(b\ne c\), and let
\[
B_{bc}:=
\sum_{i\in I_b}\sum_{j\in I_c}
\mathbf 1_{\mathcal B_{ij}(b,c)} .
\]
By Lemma \ref{lem:blockavg-pair-specific-entrywise-agreement}, the indicators have conditional means at most \(q_{bc}\), and \(\Delta_{ij}=0\) outside \(\mathcal B_{ij}(b,c)\), while \(|\Delta_{ij}|\le L_*\) always.  Hence
\[
|\overline\Delta_{bc}|
\le
L_*\frac{B_{bc}}{n_bn_c}.
\]
The complete bipartite edge set \(I_b\times I_c\) can be partitioned into
\(\max\{n_b,n_c\}\) matchings by Lemma \ref{lem:blockavg-block-edge-colorings}.  Within each matching the collision indicators are independent, since no training substitution appears twice.  Applying Lemma \ref{lem:klbdens-partition-kl} gives the bound
\[
|\overline\Delta_{bc}|\le E_{bc}(u)
\]
with failure probability at most \(e^{-u}\).

For a diagonal block, write
\[
\overline\Delta_{bb}
=
\frac1{n_b^2}\sum_{i\in I_b}\Delta_{ii}
+
\frac1{n_b^2}
\sum_{\substack{i,j\in I_b\\ i\ne j}}\Delta_{ij}.
\]
The diagonal contribution is bounded by \(\delta_*/n_b\).  The off-diagonal contribution is controlled by the unordered collision count
\[
\sum_{\substack{i<j\\ i,j\in I_b}}
\mathbf 1_{\mathcal B_{ij}(b,b)} .
\]
The round-robin coloring from Lemma \ref{lem:blockavg-block-edge-colorings}, followed again by
Lemma \ref{lem:klbdens-partition-kl}, gives the within-block part of \(E_{bb}(u)\).  If \(n_b=1\), there is no off-diagonal term.

For the test block, set
\[
C_{b,x}:=\sum_{i\in I_b}\mathbf 1_{\mathcal T_i(b,x)} .
\]
The indicators are independent over \(i\in I_b\), each has conditional probability at most
\(q_{bx}\), and \(|\overline\zeta_b|\le L_*C_{b,x}/n_b\).  The one-color version of
Lemma \ref{lem:klbdens-partition-kl} gives
\[
|\overline\zeta_b|\le T_b(u)
\]
with failure probability at most \(e^{-u}\).

A union bound over the \(r^2\) Gram blocks and \(r\) test blocks proves the theorem.
\end{proof}

\begin{corollary}[Closed-form scale]
\label{cor:klbdens-closed-form-envelope}
On the event of Theorem \ref{thm:klbdens-kl-block-envelope}, every occurrence of the KL envelope
\(U_{\KL}(N_{\mathrm{eff}},q,u)\) in the block and test bounds may be relaxed to
\[
q+
\sqrt{\frac{2qu}{N_{\mathrm{eff}}}}
+
\frac{2u}{3N_{\mathrm{eff}}}.
\]
In particular, on \(E_{\rm rep}\), the block-density terms have the scale
\begin{equation}
\label{eq:klbdens-representation-scale}
q+
\sqrt{\frac{q u}{np_{\min}}}
+
\frac{u}{np_{\min}},
\end{equation}
up to universal constants, with the additional same-block diagonal contribution
\(\delta_*/(np_{\min})\).
\end{corollary}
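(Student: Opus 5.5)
The plan is to obtain both assertions of Corollary~\ref{cor:klbdens-closed-form-envelope} from monotonicity and the Bernstein relaxation already proved, with no new probabilistic argument. The first step is to observe that the envelopes $E_{bc}(u)$, $T_b(u)$, $E_*(u)$ and $X_\star(u)$ are nondecreasing functions of each KL inverse $U_{\KL}(N^\eff_{bc},q_{bc},u)$ or $U_{\KL}(n_b,q_{bx},u)$ that enters them. This holds because each envelope is a nonnegative combination of these inverses, the diagonal block adding only the fixed term $\delta_*/n_b$. On the event of Theorem~\ref{thm:klbdens-kl-block-envelope} we have $|\overline\Delta_{bc}|\le E_{bc}(u)$ and $|\overline\zeta_b|\le T_b(u)$. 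Replacing every $U_{\KL}$ by a larger quantity therefore keeps these inequalities valid on the same event. The bound \eqref{eq:klbdens-kl-bernstein-relaxation} in Lemma~\ref{lem:klbdens-partition-kl} supplies exactly the required majorant $q+\sqrt{2qu/N_{\rm eff}}+2u/(3N_{\rm eff})$ whenever $N_{\rm eff}>0$. This proves the first claim.

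For the scale statement on $E_{\rm rep}$, the key step is a lower bound on every effective sample size in terms of $np_{\min}$. On $E_{\rm rep}$ we have $n_b\ge np_b/2\ge np_{\min}/2$.
\begin{itemize}
\item Off-diagonal blocks: $N^\eff_{bc}=\min(n_b,n_c)\ge np_{\min}/2$.
\item Test blocks: the effective size is $n_b\ge np_{\min}/2$.
\item Diagonal blocks with $n_b\ge2$: $\binom{n_b}{2}/m_{n_b}$ equals $n_b/2$ for even $n_b$ and $(n_b-1)/2$ for odd $n_b$. In both cases it is at least $n_b/4\ge np_{\min}/8$.
\end{itemize}
Substituting these bounds into the closed-form majorant gives $q+\sqrt{16qu/(np_{\min})}+16u/(3np_{\min})$, which is \eqref{eq:klbdens-representation-scale} up to universal constants. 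The remaining prefactors are harmless. The prefactor $2\binom{n_b}{2}/n_b^2\le1$ multiplies the within-block inverse, and the additive diagonal term is $\delta_*/n_b\le 2\delta_*/(np_{\min})$, which is the stated extra diagonal contribution.

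The one place needing care, and the expected obstacle, is the degenerate diagonal case $n_b=1$. There $N^\eff_{bb}=0$, but the indicator $\1_{n_b\ge2}$ removes the within-block term, so $E_{bb}=\delta_*/n_b$ and the bound still holds. More generally, a block may be so small that $N_{\rm eff}$ is of order one, in which case the KL inverse is simply at most $1$. I would handle this by noting that the failure probability is nontrivial only when $u\ge\log 2$, and at the level $u_\delta$ actually used we have $u_\delta\ge\log 16$. Hence $u/(np_{\min})\gtrsim1$ whenever $np_{\min}=O(1)$, so the trivial bound $U_{\KL}\le1$ is still dominated by a constant multiple of the stated scale.
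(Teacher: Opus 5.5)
Your proposal is correct and follows the paper's proof. The first claim is the Bernstein relaxation \eqref{eq:klbdens-kl-bernstein-relaxation}, and your monotonicity remark just makes explicit why the block and test bounds survive the substitution. The scale statement then follows from $n_b\ge np_{\min}/2$ on $E_{\rm rep}$, which gives $N^{\eff}\gtrsim np_{\min}$ for the cross, test, and diagonal ($n_b\ge2$) blocks.

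Your last paragraph on very small blocks is unnecessary. The relaxation holds for every $N_{\rm eff}>0$, and your lower bounds on $N_{\rm eff}$ already cover all cases, so there is no need to appeal to $u_\delta$ or to the trivial bound $U_{\KL}\le1$.
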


\begin{proof}
The first statement is exactly \eqref{eq:klbdens-kl-bernstein-relaxation}.  On \(E_{\rm rep}\),
\[
n_b\ge \frac{np_{\min}}2
\]
for every \(b\).  Thus the effective sample sizes appearing in the cross-block and test envelopes
are bounded below by a constant multiple of \(np_{\min}\).  The same is true for the within-block
effective sample size whenever \(n_b\ge2\).  The displayed scale follows after absorbing numerical constants.
\end{proof}

\begin{corollary}[Block-density transfer certificate]
\label{cor:klbdens-kl-refined-transfer}
Conditional on a template assignment with all block sizes positive, with probability at least
\[
1-(r^2+r)e^{-u},
\]
one has
\begin{equation}
\label{eq:klbdens-kl-refined-transfer}
|\widehat f_\lambda(x)-S_\lambda^{\id}(x)|
\le
B_{\bdens}(\lambda;x,u).
\end{equation}
On \(E_{\rm rep}\), the block-density part of this budget has the closed-form scale
\[
\frac{1}{\lambda^2}
\left(
q_{\Delta,*}
+
\sqrt{\frac{q_{\Delta,*}u}{np_{\min}}}
+
\frac{u}{np_{\min}}
\right)
+
\frac{1}{\lambda}
\left(
q_{x,*}
+
\sqrt{\frac{q_{x,*}u}{np_{\min}}}
+
\frac{u}{np_{\min}}
\right),
\]
up to fixed kernel and template constants, where
\[
q_{\Delta,*}:=\max_{b,c}q_{bc},
\qquad
q_{x,*}:=\max_b q_{bx}.
\]
The realized weighted budget \(B_{\bdens}\) can be smaller than this max-envelope relaxation.
\end{corollary}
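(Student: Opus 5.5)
The statement to prove is Corollary~\ref{cor:klbdens-kl-refined-transfer}. The plan is to combine the deterministic projection-augmented transfer of Corollary~\ref{thm:degcert-deterministic-sharp-transfer} with the KL block and test envelopes of Theorem~\ref{thm:klbdens-kl-block-envelope}. Condition on a template assignment with every $n_b\ge1$. Inequality \eqref{eq:degcert-projection-augmented-certificate} gives, deterministically,
\[
|\widehat f_\lambda(x)-S_\lambda^{\id}(x)|\le \operatorname{Ord}_\lambda(A_\Delta,\Delta_{\mathrm w},Z_{\mathrm w}).
\]
It therefore suffices to show that, on an event of conditional probability at least $1-(r^2+r)e^{-u}$, one has $\Delta_{\mathrm w}\le D_{\bdens}(u)$ and $Z_{\mathrm w}\le Z_{\bdens}(u)$. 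The general form \eqref{eq:degcert-generic-projected-budget} then yields $\operatorname{Ord}_\lambda(A_\Delta,D_{\bdens}(u),Z_{\bdens}(u))=B_{\bdens}(\lambda;x,u)$.

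The event is the one supplied by Theorem~\ref{thm:klbdens-kl-block-envelope}, on which $|\overline\Delta_{bc}|\le E_{bc}(u)$ for all pairs $(b,c)$ and $|\overline\zeta_b|\le T_b(u)$ for all $b$. The failure probability is exactly $(r^2+r)e^{-u}$, coming from the union over the $r^2$ Gram blocks and the $r$ test blocks. On this event the weighted envelopes \eqref{eq:blockproj-weighted-block-score-envelopes} are bounded termwise, because every weight $\widehat p_b\widehat p_c|r_{ab}||\beta_c|$ and $\widehat p_b|\beta_b|$ is nonnegative and deterministic given $\tau$. Substituting the envelopes gives precisely $D_{\bdens}(u)$ and $Z_{\bdens}(u)$ as defined in Section~\ref{sec:graph-budgets}.

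For the closed-form scale on $E_{\rm rep}$, I will relax each $U_{\KL}$ using Corollary~\ref{cor:klbdens-closed-form-envelope}, with effective sizes $N^{\eff}_{bc}\gtrsim np_{\min}$. I then bound $E_{bc}(u)\le L_*U(\max q)$ plus the diagonal term $\delta_*/n_b\lesssim \delta_*/(np_{\min})$, which is absorbed in $u/(np_{\min})$ up to constants. The monotonicity of the relaxation in $q$ lets me replace $q_{bc}$ by $q_{\Delta,*}$. For the weights, I use $\sum_b\widehat p_b|r_{ab}|\le K_*/4$ from Lemma~\ref{lem:blockproj-test-representer} together with $|\beta_c|\le1$ and $\sum_c\widehat p_c=1$. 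This gives $D_{\bdens}\lesssim (K_*L_*/4)\cdot(\cdots)$ and $Z_{\bdens}\lesssim L_*(\cdots)$. The $A_\Delta$ term is not part of the block-density ``part'' being scaled, so I leave it as it is.

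The only delicate point is that the $r_{ab}$ and $\beta$ weights depend on $\tau$ alone, and not on the substitutions. This holds because $\theta$ solves the ideal block problem, which is determined by $\widehat p$, so the termwise substitution is legitimate under conditioning. The last claim, that the realized weighted budget can be smaller than the max relaxation, is immediate from $\Delta_{\mathrm w}\le W_a\Delta_{\rm blk}$ in \eqref{eq:blockproj-weighted-vs-max}.
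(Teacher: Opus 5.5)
Your proposal is correct and follows the paper's proof essentially step for step. You apply the projection-augmented deterministic certificate \eqref{eq:degcert-projection-augmented-certificate}, substitute the KL block and test envelopes of Theorem~\ref{thm:klbdens-kl-block-envelope} termwise into $\Delta_{\mathrm w}$ and $Z_{\mathrm w}$ to obtain $D_{\bdens}(u)$ and $Z_{\bdens}(u)$, and read off the closed-form scale from Corollary~\ref{cor:klbdens-closed-form-envelope}. Two small nits: first, the failure probability is at most $(r^2+r)e^{-u}$, not exactly that. Second, absorbing $\delta_*/(np_{\min})$ into $u/(np_{\min})$ needs $u$ bounded below, which is why Corollary~\ref{cor:klbdens-closed-form-envelope} keeps that same-block diagonal contribution as a separate term.
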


\begin{proof}
Use the projection-augmented deterministic transfer certificate
\eqref{eq:degcert-projection-augmented-certificate}:
\[
|\widehat f_\lambda(x)-S_\lambda^{\id}(x)|
\le
\operatorname{Ord}_\lambda(A_\Delta,\Delta_{\mathrm w},Z_{\mathrm w}).
\]
On the event from Theorem \ref{thm:klbdens-kl-block-envelope}, the bounds
\[
|\overline\Delta_{bc}|\le E_{bc}(u),
\qquad
|\overline\zeta_b|\le T_b(u)
\]
hold for every \(b,c\).  Substituting these bounds into the weighted definitions of
\(\Delta_{\mathrm w}\) and \(Z_{\mathrm w}\) gives
\[
\Delta_{\mathrm w}\le D_{\bdens}(u),
\qquad
Z_{\mathrm w}\le Z_{\bdens}(u).
\]
Therefore
\[
|\widehat f_\lambda(x)-S_\lambda^{\id}(x)|
\le
\operatorname{Ord}_\lambda(A_\Delta,D_{\bdens}(u),Z_{\bdens}(u))
=
B_{\bdens}(\lambda;x,u),
\]
as claimed.  The probability is inherited from Theorem \ref{thm:klbdens-kl-block-envelope}, and the
closed-form display follows from Corollary \ref{cor:klbdens-closed-form-envelope}.
\end{proof}

  \subsubsection{Hoeffding--ANOVA block envelopes}
\label{app:projection}
\label{sec:hanova-hanova-rates}

The KL block-density bound treats each discrepancy as a supported collision event.  The next refinement keeps the actual signed kernel discrepancy.  For each template pair, the block average admits a Hoeffding--ANOVA decomposition into a deterministic bias, two first-order projection terms, and a canonical residual.  This separates the nondegenerate fluctuations, which scale with the individual block sizes, from the genuinely degenerate two-sample fluctuation, which can scale with the product of the block sizes.

Throughout this subsection we condition on a realized template assignment with all blocks nonempty.  For independent substitutions \(U_b\sim\mu_{\mathrm{sub},b}\) and \(V_c\sim\mu_{\mathrm{sub},c}\), define
\begin{equation}
\label{eq:hanova-Hbc-def}
H_{bc}(u,v)
:=
K\bigl(\subop(z_b,u),\subop(z_c,v)\bigr)-N_{bc}.
\end{equation}
For a fixed globally fresh test string \(x\in\cX_a^{\fr}\), define
\begin{equation}
\label{eq:hanova-Gbx-def}
G_{b,x}(u)
:=
K\bigl(x,\subop(z_b,u)\bigr)-N_{ab}.
\end{equation}
By Cauchy--Schwarz in the RKHS,
\begin{equation}
\label{eq:hanova-HG-bounds}
|H_{bc}(u,v)|\le L_*,
\qquad
|G_{b,x}(u)|\le L_*.
\end{equation}

Set
\begin{equation}
\label{eq:hanova-mu-nu-def}
\mu_{bc}:=\E H_{bc}(U_b,V_c),
\qquad
\nu_{b,x}:=\E G_{b,x}(U_b).
\end{equation}
These are signed collision biases.  They obey the support envelopes
$
|\mu_{bc}|\le L_*q_{bc},
\
|\nu_{b,x}|\le L_*q_{bx},
$
but can be much smaller because positive and negative discrepancies may cancel.

For \(b\ne c\), define the first projections
\begin{align}
\label{eq:hanova-two-sample-projections}
h_{b\to c}(u)
&:=
\E_{V_c}H_{bc}(u,V_c)-\mu_{bc}, \qquad
h_{c\to b}(v)
 :=
\E_{U_b}H_{bc}(U_b,v)-\mu_{bc},
\end{align}
and the canonical part
\begin{equation}
\label{eq:hanova-two-sample-canonical}
h_{bc}^{\circ}(u,v)
:=
H_{bc}(u,v)-\mu_{bc}-h_{b\to c}(u)-h_{c\to b}(v).
\end{equation}
Then
$
\E h_{b\to c}(U_b)=0,
\
\E h_{c\to b}(V_c)=0,
$
and
$
\E[h_{bc}^{\circ}(U_b,V_c)\mid U_b]=0,
\
\E[h_{bc}^{\circ}(U_b,V_c)\mid V_c]=0.
$
Let
\begin{equation}
\label{eq:hanova-variance-components}
\sigma_{b\to c}^2:=\E h_{b\to c}(U_b)^2,
\qquad
\sigma_{c\to b}^2:=\E h_{c\to b}(V_c)^2,
\qquad
\omega_{bc}^2:=\E h_{bc}^{\circ}(U_b,V_c)^2,
\end{equation}
and
\begin{equation}
\label{eq:hanova-projection-envelopes}
M_{b\to c}:=\|h_{b\to c}\|_\infty,
\qquad
M_{c\to b}:=\|h_{c\to b}\|_\infty.
\end{equation}
From \eqref{eq:hanova-HG-bounds},
$
M_{b\to c}\le 2L_*,
\
M_{c\to b}\le 2L_*,
\
\|h_{bc}^{\circ}\|_\infty\le 4L_*.
$

For \(b=c\), let
\[
\mu_{bb}:=\E H_{bb}(U_b,V_b),
\qquad
h_b(u):=\E_{V_b}H_{bb}(u,V_b)-\mu_{bb},
\]
and
\[
h_{bb}^{\circ}(u,v)
:=
H_{bb}(u,v)-\mu_{bb}-h_b(u)-h_b(v),
\]
where \(U_b,V_b\) are independent.  Define
\begin{equation}
\label{eq:hanova-same-block-variance-def}
\sigma_b^2:=\E h_b(U_b)^2,
\qquad
\omega_{bb}^2:=\E h_{bb}^{\circ}(U_b,V_b)^2,
\qquad
M_b:=\|h_b\|_\infty.
\end{equation}
Then \(M_b\le 2L_*\) and \(\|h_{bb}^{\circ}\|_\infty\le 4L_*\).

\begin{lemma}[Hoeffding decompositions of block averages]
\label{lem:hanova-two-sample-decomp}
\label{lem:hanova-same-block-decomp}
For \(b\ne c\),
\begin{align}
\label{eq:hanova-two-sample-decomp}
\overline\Delta_{bc}-\mu_{bc}
&=
\frac1{n_b}\sum_{i=1}^{n_b}h_{b\to c}(U_{b,i})
+
\frac1{n_c}\sum_{j=1}^{n_c}h_{c\to b}(V_{c,j})
+
\frac1{n_bn_c}
\sum_{i=1}^{n_b}\sum_{j=1}^{n_c}
h_{bc}^{\circ}(U_{b,i},V_{c,j}).
\end{align}
The three terms on the right are pairwise orthogonal in \(L^2\), and
\begin{equation}
\label{eq:hanova-two-sample-variance}
\Var(\overline\Delta_{bc})
=
\frac{\sigma_{b\to c}^2}{n_b}
+
\frac{\sigma_{c\to b}^2}{n_c}
+
\frac{\omega_{bc}^2}{n_bn_c}.
\end{equation}
If \(b=c\) and \(n_b\ge2\), define the ordered off-diagonal average
\[
\overline\Delta_{bb}^{\mathrm{off}}
:=
\frac{1}{n_b(n_b-1)}
\sum_{\substack{i,j\in I_b\\ i\ne j}}
\left(K(X_i,X_j)-N_{bb}\right).
\]
Then
\begin{equation}
\label{eq:hanova-same-block-decomp}
\overline\Delta_{bb}^{\mathrm{off}}-\mu_{bb}
=
\frac{2}{n_b}\sum_{i\in I_b}h_b(U_{b,i})
+
\frac{1}{n_b(n_b-1)}
\sum_{\substack{i,j\in I_b\\ i\ne j}}
h_{bb}^{\circ}(U_{b,i},U_{b,j}),
\end{equation}
the two terms are orthogonal in \(L^2\), and
\begin{equation}
\label{eq:hanova-same-block-variance}
\Var(\overline\Delta_{bb}^{\mathrm{off}})
=
\frac{4\sigma_b^2}{n_b}
+
\frac{2\omega_{bb}^2}{n_b(n_b-1)}.
\end{equation}
Finally, the diagonal-including same-block average satisfies
\begin{equation}
\label{eq:hanova-diagonal-reduction}
|\overline\Delta_{bb}|
\le
\frac{\delta_*}{n_b}
+
|\overline\Delta_{bb}^{\mathrm{off}}|
\qquad (n_b\ge2),
\end{equation}
with the convention \(|\overline\Delta_{bb}|\le\delta_*\) when \(n_b=1\).
\end{lemma}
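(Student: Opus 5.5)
We will prove the lemma by conditioning on the template assignment \(\tau\). Given \(\tau\), the substitutions in block \(I_b\) are i.i.d.\ with law \(\mu_{\mathrm{sub},b}\), and the blocks are independent of one another. Write \(U_{b,i}\) for the substitutions in \(I_b\) and \(V_{c,j}\) for those in \(I_c\). All expectations and variances below are conditional on \(\tau\).

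\emph{Cross blocks \(b\ne c\).} Every pair \((i,j)\in I_b\times I_c\) is off-diagonal, so \(\Delta_{ij}=H_{bc}(U_{b,i},V_{c,j})\). Substituting the identity
\[
H_{bc}=\mu_{bc}+h_{b\to c}(u)+h_{c\to b}(v)+h^\circ_{bc}(u,v),
\]
which is just the definition \eqref{eq:hanova-two-sample-canonical}, and averaging over the \(n_bn_c\) pairs gives \eqref{eq:hanova-two-sample-decomp}. The two first-order averages collapse because each \(U_{b,i}\) appears \(n_c\) times and each \(V_{c,j}\) appears \(n_b\) times.

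Orthogonality has three parts.
\begin{itemize}
\item[(i)] The two projection sums are centered and are functions of independent families, so their product has mean zero.
\item[(ii)] The canonical double sum is orthogonal to any square-integrable function of the \(U\)'s alone. For each term, condition on \(U_{b,i}\) and on the remaining \(U\)'s. The term \(h^\circ_{bc}(U_{b,i},V_{c,j})\) depends only on \(U_{b,i}\) and \(V_{c,j}\), and \(V_{c,j}\) is independent of all the \(U\)'s. Hence its conditional mean given the \(U\)'s is \(\E[h^\circ_{bc}\mid U_b]=0\). The symmetric argument covers functions of the \(V\)'s alone.
\item[(iii)] For the variance of the canonical part, take two terms indexed by \((i,j)\ne(k,l)\). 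If \(i\ne k\) and \(j\ne l\), the terms are independent and centered. If they share exactly one index, say \(i=k\) and \(j\ne l\), condition on \((U_{b,i},V_{c,j})\); the second term then has conditional mean \(\E[h^\circ_{bc}\mid U_b]=0\). So only the \(n_bn_c\) diagonal terms survive, each contributing \(\omega_{bc}^2\).
\end{itemize}
Combining these with the i.i.d.\ variances \(\sigma^2_{b\to c}/n_b\) and \(\sigma^2_{c\to b}/n_c\) of the projection averages yields \eqref{eq:hanova-two-sample-variance}.

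\emph{Same block \(b=c\), \(n_b\ge2\).} For ordered pairs \(i\ne j\) in \(I_b\) we have
\[
K(X_i,X_j)-N_{bb}=\mu_{bb}+h_b(U_{b,i})+h_b(U_{b,j})+h^\circ_{bb}(U_{b,i},U_{b,j}).
\]
Each index \(i\) occurs in \(2(n_b-1)\) ordered pairs, once in each coordinate for every partner. Dividing by \(n_b(n_b-1)\) therefore turns the first-order part into \((2/n_b)\sum_i h_b(U_{b,i})\), which gives \eqref{eq:hanova-same-block-decomp}.

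By symmetry of \(K\), the kernel \(H_{bb}\) is symmetric, hence so is \(h^\circ_{bb}\). The ordered sum is thus twice the sum over \(i<j\). The degeneracy argument from the cross-block case again kills every covariance between unordered pairs: pairs sharing one index vanish by conditioning on the pair, and disjoint pairs are independent and centered. The canonical part therefore has variance
\[
\frac{4\binom{n_b}{2}\,\omega_{bb}^2}{n_b^2(n_b-1)^2}=\frac{2\omega_{bb}^2}{n_b(n_b-1)}.
\]
The same conditioning shows that this part is orthogonal to the linear term. The linear term has variance \(4\sigma_b^2/n_b\), and together these give \eqref{eq:hanova-same-block-variance}.

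\emph{Diagonal reduction.} We split
\[
\overline\Delta_{bb}=n_b^{-2}\sum_{i\in I_b}\Delta_{ii}+\frac{n_b-1}{n_b}\,\overline\Delta_{bb}^{\mathrm{off}}.
\]
The diagonal part is bounded using \(|\Delta_{ii}|\le\delta_*\) from Lemma \ref{lem:collision-support-front}, which contributes at most \(\delta_*/n_b\). Using \((n_b-1)/n_b\le1\) on the second part proves \eqref{eq:hanova-diagonal-reduction}. When \(n_b=1\), the block average is the single diagonal entry, so \(|\overline\Delta_{bb}|\le\delta_*\).

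The only delicate step is the degeneracy bookkeeping: checking that covariances between canonical terms sharing exactly one index vanish. Each such case must be handled by conditioning on the shared substitution and the term's own other argument, then applying the zero-conditional-mean property of \(h^\circ\). Everything else is direct substitution.
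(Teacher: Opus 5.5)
Your proposal is correct and follows essentially the same route as the paper: you substitute the Hoeffding decomposition into the block averages, kill the cross terms using centering, block independence and the conditional mean-zero property of the canonical kernel, count multiplicities in the same-block ordered \(U\)-statistic, and split off the diagonal \(\delta_*/n_b\) contribution for the reduction. The only difference is that you spell out the degeneracy bookkeeping more explicitly than the paper does, namely that shared-index covariances vanish by conditioning on the shared substitution.
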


\begin{proof}
For \(b\ne c\), substitute
\[
H_{bc}(u,v)
=
\mu_{bc}+h_{b\to c}(u)+h_{c\to b}(v)+h_{bc}^{\circ}(u,v)
\]
into the rectangular block average.  The two projection terms collapse to averages over their own blocks, while the canonical part remains a two-sample average, giving \eqref{eq:hanova-two-sample-decomp}.

Orthogonality follows from independence of the two blocks, centering of the projections, and the conditional mean-zero identities for \(h_{bc}^{\circ}\).  Expanding the variance of the canonical average, all mixed terms vanish unless the two ordered pairs coincide; this leaves \(\omega_{bc}^2/(n_bn_c)\).  The projection variances are \(\sigma_{b\to c}^2/n_b\) and \(\sigma_{c\to b}^2/n_c\), giving \eqref{eq:hanova-two-sample-variance}.

The same-block decomposition is identical, except that the ordered off-diagonal statistic is a one-sample \(U\)-statistic.  Each \(h_b(U_{b,i})\) appears once as a first argument and once as a second argument, producing the coefficient \(2/n_b\).  The canonical variance receives contributions only from identical or reversed ordered pairs; by symmetry of \(K\), both contribute \(\omega_{bb}^2\).  This gives the factor \(2\omega_{bb}^2/[n_b(n_b-1)]\).  The diagonal reduction follows from \(|\Delta_{ii}|\le\delta_*\).
\end{proof}

We use Bernstein's inequality for the projection averages.  For a centered random variable \(Z\) with \(|Z|\le M\) and variance \(\sigma^2\),
\begin{equation}
\label{eq:hanova-bernstein-template}
\Pp\left(
\left|\frac1N\sum_{i=1}^N Z_i\right|
>
\sqrt{\frac{2\sigma^2u}{N}}
+
\frac{2Mu}{3N}
\right)
\le
2e^{-u}.
\end{equation}
For the canonical pieces, we keep the exact tail quantile.  For \(b\ne c\), let
\begin{equation}
\label{eq:hanova-two-sample-canonical-quantile}
Q^0_{bc}(u;n_b,n_c)
:=
\inf\left\{q\ge0:
\Pp\left(
\left|
\frac1{n_bn_c}
\sum_{i=1}^{n_b}\sum_{j=1}^{n_c}
h_{bc}^{\circ}(U_{b,i},V_{c,j})
\right|>q
\right)
\le e^{-u}
\right\}.
\end{equation}
For \(b=c\), define \(Q^0_{bb}(u;n_b)\) analogously for the ordered off-diagonal canonical average.  Any available canonical-chaos inequality may be substituted for these exact quantiles; for example, one may use bounds of the form
\[
Q^0_{bc}(u;n_b,n_c)
\lesssim
\frac{\omega_{bc}}{\sqrt{n_bn_c}}(u+1)
+
\frac{L_*}{n_bn_c}(u+1)^2,
\]
and similarly
\[
Q^0_{bb}(u;n_b)
\lesssim
\frac{\omega_{bb}}{n_b}(u+1)
+
\frac{L_*}{n_b^2}(u+1)^2.
\]

Define the block envelopes as follows.  For \(b\ne c\),
\begin{align}
\label{eq:hanova-offdiag-block-envelope}
A_{bc}^{\hanova}(u)
:=
&|\mu_{bc}|
+
\sqrt{\frac{2\sigma_{b\to c}^2u}{n_b}}
+
\frac{2M_{b\to c}u}{3n_b}
+
\sqrt{\frac{2\sigma_{c\to b}^2u}{n_c}}
+
\frac{2M_{c\to b}u}{3n_c}
+
Q^0_{bc}(u;n_b,n_c).
\end{align}
For \(b=c\) and \(n_b\ge2\),
\begin{align}
\label{eq:hanova-diag-block-envelope}
A_{bb}^{\hanova}(u)
:=
\frac{\delta_*}{n_b}
+
|\mu_{bb}|
+
2\sqrt{\frac{2\sigma_b^2u}{n_b}}
+
\frac{4M_bu}{3n_b}
+
Q^0_{bb}(u;n_b),
\end{align}
and set \(A_{bb}^{\hanova}(u):=\delta_*\) when \(n_b=1\).

For the test vector, define
\begin{equation}
\label{eq:hanova-test-variance-def}
\sigma_{b,x}^2:=\Var(G_{b,x}(U_b)),
\qquad
M_{b,x}:=\|G_{b,x}-\nu_{b,x}\|_\infty,
\end{equation}
so \(M_{b,x}\le2L_*\), and set
\begin{equation}
\label{eq:hanova-test-envelope}
T_{bx}^{\hanova}(u)
:=
|\nu_{b,x}|
+
\sqrt{\frac{2\sigma_{b,x}^2u}{n_b}}
+
\frac{2M_{b,x}u}{3n_b}.
\end{equation}

\begin{theorem}[Hoeffding--ANOVA block and test envelopes]
\label{thm:hanova-hanova-block}
\label{thm:hanova-hanova-test}
Conditional on the template assignment, with probability at least
\[
1-(5r^2+2r)e^{-u},
\]
the following inequalities hold simultaneously:
\begin{equation}
\label{eq:hanova-hanova-block-conclusion}
|\overline\Delta_{bc}|
\le
A_{bc}^{\hanova}(u)
\qquad
(1\le b,c\le r),
\qquad \mbox{and} \qquad
|\overline\zeta_b|
\le
T_{bx}^{\hanova}(u)
\qquad
(1\le b\le r).
\end{equation}
\end{theorem}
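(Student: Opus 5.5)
The plan is to reduce everything, for each block pair and each test block, to the Hoeffding decomposition of Lemma~\ref{lem:hanova-two-sample-decomp}. Each piece is then controlled by a one-sided tail bound, and the pieces are combined by a union bound whose count of failure events is exactly $5r^2+2r$. Throughout I condition on $\tau$. After conditioning, the substitutions $U_{b,i}=S_i$ for $i\in I_b$ are independent with law $\mu_{\mathrm{sub},b}$, so distinct blocks supply independent samples, as the lemma requires.

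\emph{Off-diagonal pairs $b\ne c$.} By \eqref{eq:hanova-two-sample-decomp}, $|\overline\Delta_{bc}|\le|\mu_{bc}|+|T_1|+|T_2|+|T_3|$.
\begin{itemize}
\item $T_1=n_b^{-1}\sum_i h_{b\to c}(U_{b,i})$ is an average of i.i.d.\ centered variables bounded by $M_{b\to c}$ with variance $\sigma^2_{b\to c}$. Bernstein \eqref{eq:hanova-bernstein-template} bounds it by the two corresponding terms of $A^{\hanova}_{bc}(u)$, with failure probability $2e^{-u}$.
\item $T_2$ is handled the same way, also with failure probability $2e^{-u}$.
\item $T_3$ is bounded by $Q^0_{bc}(u;n_b,n_c)$, with failure probability at most $e^{-u}$. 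This holds by the definition \eqref{eq:hanova-two-sample-canonical-quantile} as an infimum, using right-continuity of the tail function.
\end{itemize}
Each off-diagonal pair therefore costs $5e^{-u}$. Ordered pairs $(b,c)$ and $(c,b)$ give transposed averages, so they could share an event, but charging them separately keeps the count simple.

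\emph{Diagonal blocks.} For $n_b\ge2$, I use \eqref{eq:hanova-diagonal-reduction} and \eqref{eq:hanova-same-block-decomp}. The term $(2/n_b)\sum_i h_b(U_{b,i})$ is Bernstein with parameters $(\sigma_b^2,M_b)$, scaled by $2$; this yields $2\sqrt{2\sigma_b^2u/n_b}+4M_bu/(3n_b)$ at cost $2e^{-u}$. The canonical U-statistic costs $e^{-u}$ via $Q^0_{bb}$. For $n_b=1$ the bound $|\overline\Delta_{bb}|\le\delta_*$ is deterministic. Diagonal blocks thus cost at most $3e^{-u}$ each, so the full Gram part costs at most $5r^2e^{-u}$.

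\emph{Test vector.} $\overline\zeta_b=n_b^{-1}\sum_{i\in I_b}G_{b,x}(U_{b,i})$ is an i.i.d.\ average with mean $\nu_{b,x}$. Bernstein with $(\sigma_{b,x}^2,M_{b,x})$ gives $|\overline\zeta_b|\le T^{\hanova}_{bx}(u)$ except with probability $2e^{-u}$, so the test part costs $2re^{-u}$. A union bound then gives total failure probability at most $(5r^2+2r)e^{-u}$.

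The only delicate step is the canonical term. Its summands are neither independent nor bounded-increment in a useful sense, which is why the statement uses the exact quantile $Q^0$ rather than an explicit chaos bound. So the main point to verify is that the infimum in \eqref{eq:hanova-two-sample-canonical-quantile} is attained, giving failure probability $\le e^{-u}$ rather than merely $<e^{-u}+\epsilon$. The map $q\mapsto\Pp(|\cdot|>q)$ is right-continuous and nonincreasing, so the defining set is closed on the left and the infimum belongs to it. Everything else is bookkeeping of the Bernstein constants.
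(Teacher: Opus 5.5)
Your proposal is correct and follows essentially the same route as the paper's proof: the Hoeffding decomposition of each block average, Bernstein's inequality for the projection averages (doubled on the diagonal), the exact quantile $Q^0$ for the canonical part, the diagonal reduction, and a union bound with $5e^{-u}$ per ordered pair (at most $3e^{-u}$ on the diagonal) and $2e^{-u}$ per test block. Your right-continuity argument showing that the infimum defining $Q^0$ is attained, so that the canonical failure probability is at most $e^{-u}$, makes explicit a point the paper leaves implicit.
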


\begin{proof}
Fix \(b\ne c\).  The decomposition in \eqref{eq:hanova-two-sample-decomp} gives
\[
\overline\Delta_{bc}-\mu_{bc}=P_b+P_c+C_{bc},
\]
where \(P_b\) and \(P_c\) are the two projection averages and \(C_{bc}\) is the canonical average.  Bernstein's inequality \eqref{eq:hanova-bernstein-template} controls \(P_b\) and \(P_c\) with failure probabilities \(2e^{-u}\) each, while the definition of \(Q^0_{bc}\) controls \(C_{bc}\) with failure probability \(e^{-u}\).  The triangle inequality gives the envelope \(A_{bc}^{\hanova}(u)\), with failure probability at most \(5e^{-u}\) for this ordered pair.

For \(b=c\), the same argument uses \eqref{eq:hanova-same-block-decomp}, Bernstein for the one-sample projection average, the quantile \(Q^0_{bb}\) for the canonical part, and the diagonal reduction \eqref{eq:hanova-diagonal-reduction}.  The resulting bound is \(A_{bb}^{\hanova}(u)\), with failure probability at most \(3e^{-u}\), and hence at most \(5e^{-u}\).

For the test block,
\[
\overline\zeta_b-\nu_{b,x}
=
\frac1{n_b}\sum_{i\in I_b}
\bigl(G_{b,x}(U_{b,i})-\nu_{b,x}\bigr).
\]
The summands are independent, centered, bounded by \(M_{b,x}\), and have variance \(\sigma_{b,x}^2\).  Bernstein's inequality gives the bound \(T_{bx}^{\hanova}(u)\) with failure probability at most \(2e^{-u}\) for the fixed block.  A union bound over all ordered template pairs and all test blocks proves the theorem.
\end{proof}

The weighted Hoeffding--ANOVA quantities used in the transfer budget are
\begin{align}
\label{eq:hanova-weighted-proj-def}
D_{\hanova}(u)
&:=
\sum_{b=1}^r\sum_{c=1}^r
\widehat p_b\widehat p_c
|r_{ab}|\,|\beta_c|\,A_{bc}^{\hanova}(u),
 \qquad
Z_{\hanova}(u)
 :=
\sum_{b=1}^r
\widehat p_b|\beta_b|\,T_{bx}^{\hanova}(u).
\end{align}

\begin{corollary}[Hoeffding--ANOVA transfer certificate]
\label{cor:hanova-hanova-transfer}
Conditional on the template assignment, with probability at least
\[
1-(5r^2+2r)e^{-u},
\]
\begin{equation}
\label{eq:hanova-hanova-transfer}
|\widehat f_\lambda(x)-S_\lambda^{\id}(x)|
\le
B_{\hanova}(\lambda;x,u),
\end{equation}
where \(B_{\hanova}\) is the budget defined in Equation \ref{eq:five-certificates-compact}.
\end{corollary}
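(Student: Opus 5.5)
The argument follows the block-density corollary (Corollary~\ref{cor:klbdens-kl-refined-transfer}) almost verbatim, with the KL envelopes replaced by the Hoeffding--ANOVA envelopes. Throughout we condition on a template assignment with all blocks nonempty, as in the statement.

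The first step is deterministic. We invoke the projection-augmented transfer bound \eqref{eq:degcert-projection-augmented-certificate} of Corollary~\ref{thm:degcert-deterministic-sharp-transfer}:
\[
  |\widehat f_\lambda(x)-S_\lambda^{\id}(x)|
  \le
  \operatorname{Ord}_\lambda(A_\Delta,\Delta_{\mathrm w},Z_{\mathrm w}).
\]
This holds pathwise, with no probabilistic input. It uses only the strong convexity of the entropy dual (through Lemma~\ref{thm:degcert-global-degree-action-transfer}), the block-vector identity of Lemma~\ref{lem:blockproj-hessian}, and the fact that the ideal reference score equals $S_\lambda^{\id}(x)$ by Theorem~\ref{thm:idealreduction}.

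The second step is probabilistic. We work on the event of Theorem~\ref{thm:hanova-hanova-block}, which has conditional probability at least $1-(5r^2+2r)e^{-u}$. On this event, $|\overline\Delta_{bc}|\le A^{\hanova}_{bc}(u)$ for every ordered pair $(b,c)$ and $|\overline\zeta_b|\le T^{\hanova}_{bx}(u)$ for every $b$. Every weight in the definitions \eqref{eq:blockproj-weighted-block-score-envelopes} is nonnegative: the weights are $\widehat p_b\widehat p_c|r_{ab}||\beta_c|$ and $\widehat p_b|\beta_b|$. Hence $\Delta_{\mathrm w}$ and $Z_{\mathrm w}$ are monotone in the entrywise moduli $|\overline\Delta_{bc}|$ and $|\overline\zeta_b|$. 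Substituting the envelopes gives $\Delta_{\mathrm w}\le D_{\hanova}(u)$ and $Z_{\mathrm w}\le Z_{\hanova}(u)$, exactly as defined in \eqref{eq:hanova-weighted-proj-def}.

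Finally, $\operatorname{Ord}_\lambda$ is nondecreasing in each argument by \eqref{eq:ordinary-score-functional}. Therefore
\[
  \operatorname{Ord}_\lambda(A_\Delta,\Delta_{\mathrm w},Z_{\mathrm w})
  \le
  \operatorname{Ord}_\lambda\bigl(A_\Delta,D_{\hanova}(u),Z_{\hanova}(u)\bigr)
  =
  B_{\hanova}(\lambda;x,u),
\]
and the action term $A_\Delta$ is left as the realized random quantity.

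The only point needing care is bookkeeping. The quantities $r_{ab}$, $\beta$, and $\widehat p$ entering the weights are determined by the ideal block problem given $\tau$, so they are fixed once we condition on the assignment. The probability level is therefore inherited unchanged from Theorem~\ref{thm:hanova-hanova-block}, and no further union bound is required.
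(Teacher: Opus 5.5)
Your proposal is correct and follows the paper's proof essentially step for step. You apply the pathwise projection-augmented bound \eqref{eq:degcert-projection-augmented-certificate}, use the event of Theorem~\ref{thm:hanova-hanova-block} and the nonnegative weights to get $\Delta_{\mathrm w}\le D_{\hanova}(u)$ and $Z_{\mathrm w}\le Z_{\hanova}(u)$, and conclude by monotonicity of $\operatorname{Ord}_\lambda$, with the probability $1-(5r^2+2r)e^{-u}$ inherited unchanged.
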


\begin{proof}
On the event of Theorem \ref{thm:hanova-hanova-block},
\[
|\overline\Delta_{bc}|\le A_{bc}^{\hanova}(u),
\qquad
|\overline\zeta_b|\le T_{bx}^{\hanova}(u).
\]
Substitution into the weighted block quantities from
\eqref{eq:blockproj-weighted-block-score-envelopes} gives
\[
\Delta_{\mathrm w}\le D_{\hanova}(u),
\qquad
Z_{\mathrm w}\le Z_{\hanova}(u).
\]
The projection-augmented deterministic transfer certificate
\eqref{eq:degcert-projection-augmented-certificate} therefore yields
\[
|\widehat f_\lambda(x)-S_\lambda^{\id}(x)|
\le
\operatorname{Ord}_\lambda(A_\Delta,D_{\hanova}(u),Z_{\hanova}(u))
=
B_{\hanova}(\lambda;x,u).
\]
\end{proof}

\subsubsection{Literal-aware corrected target}
\label{app:literal}
\label{sec:bfdec-literal-correction}

The previous certificates exploit directional, block-constant, and Hoeffding--ANOVA structure.  One avoidable loss remains.  A global literal-collision event treats a wildcard hit on any literal in
\[
R=\bigcup_{a=1}^r L(z_a)
\]
as corrupting every template pair.  For a pair of templates \((b,c)\), however, only literals in \(L_c\) matter for the wildcard image of template \(b\), and only literals in \(L_b\) matter for the wildcard image of template \(c\).  The mean effect of these pair-specific collisions can be absorbed into a corrected block target.  After this correction, the remaining block fluctuations are centered; the deterministic \(q\)-scale appears only when we compare the corrected target back to the fully fresh target.

For an admissible substitution \(s\) of template \(z_b\), write
\[
W_b(s):=s(W_b).
\]
For independent substitutions \(U_b\sim\mu_{\mathrm{sub},b}\) and \(V_c\sim\mu_{\mathrm{sub},c}\), define the pair-specific collision event
\begin{align}
\label{eq:bfdec-pair-bad-event}
\mathcal B_{bc}^{\lit}(U_b,V_c)
:=
&\{W_b(U_b)\cap L_c\neq\varnothing\}
\cup
\{W_c(V_c)\cap L_b\neq\varnothing\}
\nonumber\\
&\cup
\{W_b(U_b)\cap W_c(V_c)\neq\varnothing\}.
\end{align}
For a globally fresh test string \(x=\subop(z_a,s_x)\), let \(T_x=s_x(W_a)\) and define
\begin{equation}
\label{eq:bfdec-test-bad-event}
\mathcal T_{bx}^{\lit}(U_b)
:=
\{W_b(U_b)\cap L_a\neq\varnothing\}
\cup
\{W_b(U_b)\cap T_x\neq\varnothing\}.
\end{equation}

\begin{lemma}[Pair-specific agreement and corrected target]
\label{lem:bfdec-corrected-target}
On \((\mathcal B_{bc}^{\lit})^c\),
\[
K\bigl(\subop(z_b,U_b),\subop(z_c,V_c)\bigr)=N_{bc},
\]
and
\[
\Pp(\mathcal B_{bc}^{\lit})\le q_{bc}.
\]
Similarly, on \((\mathcal T_{bx}^{\lit})^c\),
\[
K\bigl(x,\subop(z_b,U_b)\bigr)=N_{ab},
\]
and
\[
\Pp(\mathcal T_{bx}^{\lit})\le q_{bx}.
\]

Let \(\phi:\cX\to\cH\) be the RKHS feature map and define
\[
\psi_b^\circ
:=
\E_{U_b}\phi(\subop(z_b,U_b)).
\]
Set
\begin{equation}
\label{eq:bfdec-corrected-N-def}
N_{bc}^{\circ}
:=
\langle \psi_b^\circ,\psi_c^\circ\rangle_{\cH}
=
\E_{U_b,V_c}K\bigl(\subop(z_b,U_b),\subop(z_c,V_c)\bigr),
\end{equation}
and
\begin{equation}
\label{eq:bfdec-corrected-test-row-def}
m_{x,b}^{\circ}
:=
\langle \phi(x),\psi_b^\circ\rangle_{\cH}
=
\E_{U_b}K\bigl(x,\subop(z_b,U_b)\bigr).
\end{equation}
Then \(\bN^\circ=(N_{bc}^{\circ})_{b,c}\) is symmetric positive semidefinite.  Moreover, with
\[
\mu_{bc}^{\lit}:=N_{bc}^{\circ}-N_{bc},
\qquad
\nu_{b,x}^{\lit}:=m_{x,b}^{\circ}-N_{ab},
\]
one has
\begin{equation}
\label{eq:bfdec-mu-nu-envelope}
|\mu_{bc}^{\lit}|\le L_*q_{bc},
\qquad
|\nu_{b,x}^{\lit}|\le L_*q_{bx}.
\end{equation}
Consequently, for \(Q_q=(q_{bc})_{b,c}\),
\begin{equation}
\label{eq:bfdec-N-bias-op-envelope}
\|\bN^\circ-\bN\|_{\op}
\le
L_*\|Q_q\|_{\op}
\le
L_*\sqrt{\|Q_q\|_1\|Q_q\|_\infty}.
\end{equation}
\end{lemma}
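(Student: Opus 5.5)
The lemma has three parts: pointwise agreement off the collision events, positive semidefiniteness of the corrected matrix, and the bias bounds. I would handle them in that order.

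\textbf{Agreement off the bad events.} This is the same permutation argument as in Lemma~\ref{lem:blockavg-pair-specific-entrywise-agreement}; the events \(\mathcal B_{bc}^{\lit}\) and \(\mathcal B_{ij}(b,c)\) are literally the same.
\begin{itemize}
\item On \((\mathcal B_{bc}^{\lit})^c\), the wildcard images of \(U_b\) and \(V_c\) are disjoint from each other. They are also disjoint from \(L_b\cup L_c\): admissibility excludes each template's own literals, and the event excludes the other template's literals.
\item The partial map that fixes \(L_b\cup L_c\) and sends these wildcard images to a reference fresh pair is therefore a consistent partial bijection. Extend it to a permutation of \(\cV\).
\item Token symmetry then gives \(K(\subop(z_b,U_b),\subop(z_c,V_c))=N_{bc}\).
\item The probability bound \(\Pp(\mathcal B_{bc}^{\lit})\le q_{bc}\) is the union bound over the literal hits \(\ell_{b\to c}\), \(\ell_{c\to b}\) and the wildcard overlap \(\chi_{bc}\), using independence of \(U_b\) and \(V_c\).
\item The test statement is identical: fix \(L_a\cup L_b\), use global freshness of \(x\), and bound the probability by \(\ell_{b\to a}+\sum_{t\in T_x}p_{b,t}\), which gives \(q_{bx}\).
\end{itemize}

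\textbf{Positive semidefiniteness of \(\bN^\circ\).} Boundedness of the kernel diagonal makes \(\phi\) Bochner integrable, since \(\|\phi(u)\|_{\cH}\le\sqrt{K_*}\). So \(\psi_b^\circ\) is well defined. Exchanging expectation with the inner product, using independence of \(U_b\) and \(V_c\), gives the two identities \eqref{eq:bfdec-corrected-N-def} and \eqref{eq:bfdec-corrected-test-row-def}. Then \(\bN^\circ\) is a Gram matrix of \(\psi_1^\circ,\dots,\psi_r^\circ\), hence symmetric positive semidefinite.

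\textbf{Bias envelopes.} Write
\[
\mu^{\lit}_{bc}=\E\bigl[(K(\cdot,\cdot)-N_{bc})\1_{\mathcal B_{bc}^{\lit}}\bigr],
\]
where the integrand vanishes off the bad event by the first part. On the bad event the integrand is at most \(L_*=2K_*\) in absolute value, by Cauchy--Schwarz, so \(|\mu^{\lit}_{bc}|\le L_*q_{bc}\). The test bias \(\nu^{\lit}_{b,x}\) is handled the same way.

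\textbf{Operator-norm bound.} The entries of \(\bN^\circ-\bN\) are dominated entrywise in absolute value by \(L_*Q_q\). Since \(Q_q\) has nonnegative entries, for any \(v\) we get
\[
|v^\top(\bN^\circ-\bN)w|\le L_*\,|v|^\top Q_q|w|\le L_*\|Q_q\|_{\op}\|v\|_2\|w\|_2,
\]
so \(\|\bN^\circ-\bN\|_{\op}\le L_*\|Q_q\|_{\op}\). The final inequality \(\|Q_q\|_{\op}\le\sqrt{\|Q_q\|_1\|Q_q\|_\infty}\) is the Schur test (Riesz--Thorin interpolation).

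The only mildly delicate step is justifying the exchange of the Bochner expectation with the inner product. This follows from bounded measurability on the finite-support substitution laws, so I expect no real obstacle.
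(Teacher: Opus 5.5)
Your proposal is correct and follows the paper's proof essentially step for step. The agreement claims use the permutation argument plus a union bound, positive semidefiniteness comes from viewing $\bN^\circ$ as a Gram matrix of the mean features $\psi_b^\circ$, the bias bounds integrate the $L_*$-bounded discrepancy over the bad event, and the operator-norm bound follows from entrywise domination by $L_*Q_q$. The paper leaves two of your details implicit: Bochner integrability of $\phi$ from $\|\phi(u)\|_{\cH}\le\sqrt{K_*}$, and the Schur test for the last inequality. You do not need the substitution laws to have finite support; an at most countable support already makes the measurability step automatic.
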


\begin{proof}
If \(\mathcal B_{bc}^{\lit}\) does not occur, the wildcard image of each substitution avoids the literals of both templates, and the two wildcard images are disjoint.  A vocabulary permutation fixing \(L_b\cup L_c\) can therefore send the realized pair to any fresh admissible pair for \((z_b,z_c)\).  Token symmetry gives the equality with \(N_{bc}\).  The probability bound follows from the union bound:
\[
\Pp(\mathcal B_{bc}^{\lit})
\le
\ell_{b\to c}+\ell_{c\to b}+\chi_{bc}
\le q_{bc}.
\]
The test statement is the same argument with \(x\) fixed.  On \((\mathcal T_{bx}^{\lit})^c\), the training wildcard image avoids both \(L_a\) and \(T_x\), so token symmetry gives the fresh value \(N_{ab}\), and the union bound gives
\[
\Pp(\mathcal T_{bx}^{\lit})\le q_{bx}.
\]

The matrix \(\bN^\circ\) is a Gram matrix of the mean features \(\psi_b^\circ\), hence it is symmetric positive semidefinite.  The bias bounds follow from the agreement just proved.  Indeed, the discrepancy
\[
K\bigl(\subop(z_b,U_b),\subop(z_c,V_c)\bigr)-N_{bc}
\]
vanishes off \(\mathcal B_{bc}^{\lit}\) and has absolute value at most \(L_*\), so
\[
|\mu_{bc}^{\lit}|
\le
L_*\Pp(\mathcal B_{bc}^{\lit})
\le
L_*q_{bc}.
\]
The test bias is identical.  Finally, the entrywise inequality
\[
|N_{bc}^{\circ}-N_{bc}|
\le
L_*(Q_q)_{bc}
\]
implies
\[
|u^\top(\bN^\circ-\bN)v|
\le
L_* |u|^\top Q_q |v|
\]
for unit vectors \(u,v\), which gives the operator bound.
\end{proof}

Using the membership matrix, define
\begin{equation}
\label{eq:bfdec-corrected-sample-objects}
M_n^\circ:=P\bN^\circ P^\top,
\qquad
m_x^\circ(i):=m_{x,\tau(i)}^\circ,
\end{equation}
and let \(c_{M_n^\circ}\) be the corresponding dual minimizer.  The corrected sample-level score is
\begin{equation}
\label{eq:bfdec-corrected-score-def}
S_\lambda^\circ(x)
:=
\frac{1}{\lambda n}
(m_x^\circ)^\top(\mathbf y\odot c_{M_n^\circ}).
\end{equation}

\begin{lemma}[Corrected-to-fresh bias]
\label{lem:bfdec-corrected-vs-fresh-bias}
For every globally fresh test string \(x\),
\begin{equation}
\label{eq:bfdec-corrected-vs-fresh-bias}
|S_\lambda^\circ(x)-S_\lambda^{\id}(x)|
\le
B_{\mathrm{bias}}(\lambda;x),
\end{equation}
where \(B_{\mathrm{bias}}\) is the explicit bias envelope from Equation \ref{eq:main-bias-envelope}.
\end{lemma}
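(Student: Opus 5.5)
The plan is to view $S_\lambda^\circ(x)$ and $S_\lambda^{\id}(x)$ as the scores of two block-constant reference problems, $(M_n^\circ,m_x^\circ)$ and $(M_n,m_a)$. The degree-action stability estimate, Lemma~\ref{thm:degcert-global-degree-action-transfer}, then applies with $G=M_n^\circ$, $k=m_x^\circ$, $M=M_n$ and $m=m_a$. Both Gram matrices are positive semidefinite: $\bN^\circ$ by Lemma~\ref{lem:bfdec-corrected-target} and $\bN$ by Lemma~\ref{lem:template-kernel-aux}. Moreover, $|m^\circ_{x,b}|\le K_*$ by Cauchy--Schwarz, so $\|m_x^\circ\|_2\le K_*\sqrt n$. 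The lemma gives
\[
  |S_\lambda^\circ(x)-S_\lambda^{\id}(x)|
  \le
  {K_*\over4\lambda^2}{\|(M_n^\circ-M_n)\mathbf Yc_M\|_2\over n^{3/2}}
  +{1\over\lambda}{|(m_x^\circ-m_a)^\top\mathbf Yc_M|\over n},
\]
where $c_M=c_{M_n}=P\theta$ comes from Theorem~\ref{thm:idealreduction}.

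For the first term, write $M_n^\circ-M_n=P(\bN^\circ-\bN)P^\top$ and $\mathbf Yc_M=P\beta$ with $|\beta_b|\le1$. Then $P^\top P\beta=nQ\beta$. For any $u$ we have $\|Pu\|_2=\sqrt n\,\|Q^{1/2}u\|_2$, which gives
\[
  \|(M_n^\circ-M_n)\mathbf Yc_M\|_2
  = n^{3/2}\|Q^{1/2}(\bN^\circ-\bN)Q^{1/2}\,Q^{1/2}\beta\|_2 .
\]
Since $\|Q^{1/2}\beta\|_2\le(\sum_b\widehat p_b)^{1/2}=1$, this is at most $n^{3/2}\|Q^{1/2}(\bN^\circ-\bN)Q^{1/2}\|_{\op}$. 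The entrywise envelope $|\mu^{\lit}_{bc}|\le L_*q_{bc}$ from \eqref{eq:bfdec-mu-nu-envelope} gives, for unit $u,v$,
\[
  |u^\top Q^{1/2}(\bN^\circ-\bN)Q^{1/2}v|\le L_*\,|u|^\top Q^{1/2}Q_qQ^{1/2}|v| .
\]
Because $Q^{1/2}Q_qQ^{1/2}$ has nonnegative entries, its operator norm is attained on nonnegative vectors. The first term is therefore at most $\frac{K_*L_*}{4\lambda^2}\|Q^{1/2}Q_qQ^{1/2}\|_{\op}$.

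For the test term, apply the block-vector identity \eqref{eq:blockproj-block-vector-identity}:
\[
  n^{-1}(m_x^\circ-m_a)^\top P\beta=\sum_b\widehat p_b\,\nu^{\lit}_{b,x}\beta_b .
\]
Cauchy--Schwarz with weights $\widehat p_b$, together with $|\beta_b|\le1$ and $|\nu^{\lit}_{b,x}|\le L_*q_{bx}$, bounds this by $L_*(\sum_b\widehat p_bq_{bx}^2)^{1/2}$. Adding the two pieces gives exactly $B_{\mathrm{bias}}(\lambda;x)$.

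The main point needing care is the first term. The paper's crude row-action bound would lose the $Q^{1/2}$ weighting, so the block-bilinear computation has to be carried out exactly. Sign issues in passing to $Q_q$ are handled by nonnegativity. Empty blocks cause no trouble, because they carry weight $\widehat p_b=0$.
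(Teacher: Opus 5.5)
Your argument is correct and is essentially the paper's proof. Both apply a dual-stability estimate to $(M_n^\circ,m_x^\circ)$ versus $(M_n,m_a)$, then use $M_n^\circ-M_n=P(\bN^\circ-\bN)P^\top$, $P^\top P=nQ$, and the entrywise envelopes \eqref{eq:bfdec-mu-nu-envelope}. The only difference is the stability lemma: the paper invokes the cruder Lemma~\ref{thm:detperturb} directly, via $\|M_n^\circ-M_n\|_{\op}=n\|Q^{1/2}(\bN^\circ-\bN)Q^{1/2}\|_{\op}$ and $\|m_x^\circ-m_a\|_2^2=n\sum_b\widehat p_b(\nu^{\lit}_{b,x})^2$, whereas you go through Lemma~\ref{thm:degcert-global-degree-action-transfer} and the signed action on $P\beta$, which is slightly sharper before you relax it to the same $B_{\mathrm{bias}}(\lambda;x)$.
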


\begin{proof}
Apply the deterministic dual-stability bound \(Theorem \ref{thm:detperturb}\) to the two ideal sample-level problems
\[
(G,M,k,m)=(M_n^\circ,M_n,m_x^\circ,m_a).
\]
Both \(M_n^\circ\) and \(M_n\) are positive semidefinite, and all test-vector coordinates are bounded by \(K_*\).  Since
\[
M_n^\circ-M_n
=
P(\bN^\circ-\bN)P^\top
\]
and \(P^\top P=nQ\),
\[
\|M_n^\circ-M_n\|_{\op}
=
n\left\|Q^{1/2}(\bN^\circ-\bN)Q^{1/2}\right\|_{\op}.
\]
Similarly,
\[
\|m_x^\circ-m_a\|_2^2
=
n\sum_b \widehat p_b(\nu_{b,x}^{\lit})^2.
\]
Thus
\[
|S_\lambda^\circ(x)-S_\lambda^{\id}(x)|
\le
\frac{K_*}{4\lambda^2}
\left\|Q^{1/2}(\bN^\circ-\bN)Q^{1/2}\right\|_{\op}
+
\frac1\lambda
\left(\sum_b\widehat p_b(\nu_{b,x}^{\lit})^2\right)^{1/2}.
\]
Using \eqref{eq:bfdec-mu-nu-envelope} gives exactly the envelope
\(B_{\mathrm{bias}}(\lambda;x)\).
\end{proof}

We now center the stochastic comparison around the corrected target.  Define
\begin{equation}
\label{eq:bfdec-centered-HG-def}
H_{bc}^\circ(u,v)
:=
K\bigl(\subop(z_b,u),\subop(z_c,v)\bigr)-N_{bc}^\circ,
\qquad
G_{b,x}^\circ(u)
:=
K\bigl(x,\subop(z_b,u)\bigr)-m_{x,b}^\circ.
\end{equation}
Then
\[
\E H_{bc}^\circ(U_b,V_c)=0,
\qquad
\E G_{b,x}^\circ(U_b)=0.
\]
Moreover,
\begin{equation}
\label{eq:bfdec-centered-bounds}
|H_{bc}^\circ|\le L_*,
\qquad
|G_{b,x}^\circ|\le L_*,
\end{equation}
and
\begin{equation}
\label{eq:bfdec-centered-variance-envelope}
\Var(H_{bc}^\circ(U_b,V_c))\le L_*^2q_{bc},
\qquad
\Var(G_{b,x}^\circ(U_b))\le L_*^2q_{bx}.
\end{equation}

For \(b\ne c\), take the Hoeffding decomposition
\[
H_{bc}^\circ(u,v)
=
h_{b\to c}^\circ(u)
+
h_{c\to b}^\circ(v)
+
h_{bc}^{\circ\circ}(u,v),
\]
where the first two terms are the conditional expectations and the last term is canonical.  Let
\[
(\sigma_{b\to c}^{\circ})^2,
\quad
(\sigma_{c\to b}^{\circ})^2,
\quad
(\omega_{bc}^{\circ})^2
\]
be the corresponding component variances, and let
\[
M_{b\to c}^\circ:=\|h_{b\to c}^\circ\|_\infty,
\qquad
M_{c\to b}^\circ:=\|h_{c\to b}^\circ\|_\infty.
\]
Then
\[
(\sigma_{b\to c}^{\circ})^2,
(\sigma_{c\to b}^{\circ})^2,
(\omega_{bc}^{\circ})^2
\le
L_*^2q_{bc},
\qquad
M_{b\to c}^\circ,M_{c\to b}^\circ\le L_*,
\qquad
\|h_{bc}^{\circ\circ}\|_\infty\le 3L_*.
\]
The same one-sample Hoeffding decomposition is used when \(b=c\), with notation
\[
h_b^\circ,\quad
h_{bb}^{\circ\circ},\quad
(\sigma_b^\circ)^2,\quad
(\omega_{bb}^\circ)^2,\quad
M_b^\circ.
\]
For the test vector, write
\[
(\sigma_{b,x}^{\circ})^2:=\Var(G_{b,x}^\circ(U_b)),
\qquad
M_{b,x}^\circ:=\|G_{b,x}^\circ\|_\infty.
\]

As in the Hoeffding--ANOVA section, keep the canonical quantiles explicit.  Let
\(Q_{bc}^{00}(u;n_b,n_c)\) denote the \(e^{-u}\)-tail envelope for the rectangular canonical average when \(b\ne c\), and let \(Q_{bb}^{00}(u;n_b)\) denote the analogous ordered off-diagonal same-block canonical envelope.  Define
\begin{align}
\label{eq:bfdec-block-envelope-off}
A_{bc}^{\bfdec}(u)
:=
&
\sqrt{\frac{2(\sigma_{b\to c}^{\circ})^2u}{n_b}}
+
\frac{2M_{b\to c}^{\circ}u}{3n_b}
+
\sqrt{\frac{2(\sigma_{c\to b}^{\circ})^2u}{n_c}}
+
\frac{2M_{c\to b}^{\circ}u}{3n_c}
+
Q_{bc}^{00}(u;n_b,n_c),
\qquad b\ne c,
\end{align}
and, for \(n_b\ge2\),
\begin{align}
\label{eq:bfdec-block-envelope-same}
A_{bb}^{\bfdec}(u)
:=
&
\frac{\delta_*^\circ}{n_b}
+
2\sqrt{\frac{2(\sigma_b^\circ)^2u}{n_b}}
+
\frac{4M_b^\circ u}{3n_b}
+
Q_{bb}^{00}(u;n_b),
\end{align}
with \(A_{bb}^{\bfdec}(u):=\delta_*^\circ\) when \(n_b=1\).  The test envelope is
\begin{equation}
\label{eq:bfdec-test-envelope}
T_{bx}^{\bfdec}(u)
:=
\sqrt{\frac{2(\sigma_{b,x}^{\circ})^2u}{n_b}}
+
\frac{2M_{b,x}^{\circ}u}{3n_b}.
\end{equation}

\begin{theorem}[Centered corrected-target envelopes]
\label{thm:bfdec-centered-block-bound}
\label{thm:bfdec-centered-test-bound}
Conditional on the template assignment, with probability at least
\[
1-(5r^2+2r)e^{-u},
\]
the centered corrected block and test averages satisfy
\begin{equation}
\label{eq:bfdec-centered-block-bound}
\left|
\frac1{n_bn_c}
\sum_{i\in I_b}\sum_{j\in I_c}
\bigl(K(X_i,X_j)-N_{bc}^{\circ}\bigr)
\right|
\le
A_{bc}^{\bfdec}(u)
\qquad(1\le b,c\le r),
\end{equation}
and
\begin{equation}
\label{eq:bfdec-centered-test-bound}
\left|
\frac1{n_b}
\sum_{i\in I_b}
\bigl(K(x,X_i)-m_{x,b}^{\circ}\bigr)
\right|
\le
T_{bx}^{\bfdec}(u)
\qquad(1\le b\le r).
\end{equation}
\end{theorem}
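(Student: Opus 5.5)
The proof will mirror Theorem~\ref{thm:hanova-hanova-block}, but now for the corrected centered kernels $H_{bc}^\circ$ and $G_{b,x}^\circ$. These have mean zero by construction, so no bias term appears. We condition on the template assignment. Then the substitutions $S_i\sim\mu_{\mathrm{sub},\tau(i)}$ are independent, and we may relabel the block variables as $U_{b,i}$, $i\in I_b$.

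\textbf{Off-diagonal blocks.} Fix $b\ne c$ and plug the Hoeffding decomposition
\[
H_{bc}^\circ(u,v)=h_{b\to c}^\circ(u)+h_{c\to b}^\circ(v)+h_{bc}^{\circ\circ}(u,v)
\]
into the rectangular block average. As in Lemma~\ref{lem:hanova-two-sample-decomp}, the projection terms collapse to one-block averages:
\[
\frac1{n_bn_c}\sum_{i,j}H_{bc}^\circ
=\frac1{n_b}\sum_i h^\circ_{b\to c}(U_{b,i})+\frac1{n_c}\sum_j h^\circ_{c\to b}(V_{c,j})+\frac1{n_bn_c}\sum_{i,j}h^{\circ\circ}_{bc}.
\]
Each projection average is a mean of i.i.d.\ centered variables bounded by $M^\circ$ with variance $(\sigma^\circ)^2$. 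Bernstein's inequality \eqref{eq:hanova-bernstein-template} therefore bounds each one with failure probability $2e^{-u}$. The canonical average is bounded by $Q^{00}_{bc}(u;n_b,n_c)$ with failure probability $e^{-u}$, directly from the definition of this quantile. The triangle inequality then gives $A^{\bfdec}_{bc}(u)$ with total failure probability at most $5e^{-u}$.

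\textbf{Diagonal blocks.} For $b=c$ and $n_b\ge2$, separate the diagonal terms $i=j$. Each has $|K(X_i,X_i)-N^\circ_{bb}|=|D_b-N^\circ_{bb}|\le\delta_*^\circ$, so together they contribute at most $\delta_*^\circ/n_b$.

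The normalization needs care. The stated sum is divided by $n_b^2$, whereas the U-statistic is naturally normalized by $n_b(n_b-1)$. The off-diagonal part is $(n_b-1)/n_b\le1$ times the ordered U-statistic, so this prefactor can only shrink the bound. For the U-statistic itself, the one-sample decomposition \eqref{eq:hanova-same-block-decomp} gives twice the projection average plus a canonical part. Bernstein controls the projection term, producing $2\sqrt{2(\sigma_b^\circ)^2u/n_b}+4M_b^\circ u/(3n_b)$ with failure probability $2e^{-u}$. The quantile $Q^{00}_{bb}$ controls the canonical part with failure probability $e^{-u}$.

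When $n_b=1$, only the diagonal term remains, and it is bounded by $\delta_*^\circ$.

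\textbf{Test blocks.} The average $n_b^{-1}\sum_{i\in I_b}G^\circ_{b,x}(U_{b,i})$ is an i.i.d.\ centered mean, because $x$ is fixed. Bernstein gives $T^{\bfdec}_{bx}(u)$ with failure probability $2e^{-u}$.

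\textbf{Union bound.} Summing over at most $r^2$ ordered block pairs at cost $5e^{-u}$ each and $r$ test blocks at cost $2e^{-u}$ each gives total failure probability $(5r^2+2r)e^{-u}$.

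\textbf{Main obstacle.} The only delicate step is the bookkeeping that verifies the centering identities and the $L^2$ structure of the corrected Hoeffding decomposition. Specifically, one must check that:
\begin{itemize}
\item $\E[h^{\circ\circ}_{bc}(U,V)\mid U]=0$;
\item in the same-block case, each projection $h^\circ_b$ appears twice, which produces the factor $2$;
\item the diagonal charge uses $\delta_*^\circ$ rather than $\delta_*$.
\end{itemize}
All of these follow exactly as in Lemma~\ref{lem:hanova-two-sample-decomp}, because $\E H^\circ_{bc}=0$ by definition of $N^\circ_{bc}$.
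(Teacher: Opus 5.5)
Your proposal is correct and follows essentially the same route as the paper's proof: a Hoeffding decomposition of the mean-zero corrected kernel $H^\circ_{bc}$, Bernstein's inequality for the two projection averages, the defining quantile $Q^{00}_{bc}$ for the canonical part, the $\delta_*^\circ/n_b$ diagonal charge in the same-block case, Bernstein for the test blocks, and a union bound giving $(5r^2+2r)e^{-u}$. The paper leaves implicit the $(n_b-1)/n_b$ factor relating the $n_b^{-2}$-normalized off-diagonal sum to the ordered $U$-statistic; your explicit treatment of it is right and fills that small bookkeeping step.
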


\begin{proof}
For \(b\ne c\), average the Hoeffding decomposition of \(H_{bc}^\circ\) over \(I_b\times I_c\).  The two projection averages are controlled by Bernstein's inequality, and the canonical average is controlled by the definition of \(Q_{bc}^{00}\).  The failure probability for a fixed ordered pair is at most \(5e^{-u}\).

For \(b=c\), use the ordered off-diagonal one-sample Hoeffding decomposition and add the diagonal contribution \(\delta_*^\circ/n_b\).  This gives the bound \(A_{bb}^{\bfdec}(u)\).  A union bound over all ordered template pairs gives the block failure probability \(5r^2e^{-u}\).

For the test average, the summands \(G_{b,x}^\circ(U_{b,i})\) are independent, centered, bounded by \(M_{b,x}^\circ\), and have variance \((\sigma_{b,x}^\circ)^2\).  Bernstein's inequality gives the displayed test bound with failure probability \(2e^{-u}\) per block.  A union bound over \(b\) completes the proof.
\end{proof}

Let \(c_{M_n^\circ}=P\theta^\circ\) be the dual minimizer for the corrected block problem and set
\[
  \beta^\circ:=y\odot\theta^\circ,
  \qquad
  m_x^{\rm blk}:=(m_{x,1}^\circ,\ldots,m_{x,r}^\circ)^\top .
\]
Define the corrected finite-dimensional Hessian and test representer by
\[
  A_{\lambda,\widehat p}^{\circ}
  :=
  \diag\bigl(\varphi''(\theta_1^\circ),\ldots,\varphi''(\theta_r^\circ)\bigr)
  +{1\over\lambda}Y_r\bN^\circ QY_r,
  \qquad
  r_x^\circ
  :=
  Y_r(A_{\lambda,\widehat p}^{\circ})^{-1}Y_rm_x^{\rm blk}.
\]
The weighted centered envelopes used by the bias--fluctuation certificate are
\begin{align}
\label{eq:bfdec-weighted-lit-cent-def}
D_{\bfdec}(u)
&:=
\sum_{b,c}\widehat p_b\widehat p_c
|r_{xb}^{\circ}|\,|\beta_c^{\circ}|\,A_{bc}^{\bfdec}(u),\\
Z_{\bfdec}(u)
&:=
\sum_b\widehat p_b|\beta_b^{\circ}|\,T_{bx}^{\bfdec}(u).
\end{align}
Let \(R_{\degact}^\circ\) be the corrected degree-action remainder.

\begin{corollary}[Bias--fluctuation transfer]
\label{cor:bfdec-corrected-transfer}
Conditional on the template assignment, with probability at least
\[
1-(5r^2+2r)e^{-u},
\]
\begin{equation}
\label{eq:bfdec-fresh-target-transfer}
|\widehat f_\lambda(x)-S_\lambda^{\id}(x)|
\le
\frac{1}{\lambda^2}D_{\bfdec}(u)
+
\frac1\lambda Z_{\bfdec}(u)
+
R_{\degact}^{\circ}(\lambda;x)
+
B_{\mathrm{bias}}(\lambda;x).
\end{equation}
Equivalently,
\[
|\widehat f_\lambda(x)-S_\lambda^{\id}(x)|
\le
B_{\bfdec}(\lambda;x,u),
\]
with \(B_{\bfdec}\) as defined in Equation \ref{eq:five-certificates-compact}.
\end{corollary}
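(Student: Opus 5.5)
We prove Corollary~\ref{cor:bfdec-corrected-transfer} by a triangle inequality through the corrected sample-level score:
\[
  |\widehat f_\lambda(x)-S_\lambda^{\id}(x)|
  \le
  |\widehat f_\lambda(x)-S_\lambda^\circ(x)|
  +
  |S_\lambda^\circ(x)-S_\lambda^{\id}(x)|.
\]
The second term is deterministic and is at most \(B_{\mathrm{bias}}(\lambda;x)\) by Lemma~\ref{lem:bfdec-corrected-vs-fresh-bias}. Everything random is therefore concentrated in the first term, which compares the empirical problem with the block-constant corrected reference \((M_n^\circ,m_x^\circ)\).

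For the first term we use the final clause of Corollary~\ref{thm:degcert-deterministic-sharp-transfer}: it holds for any block-constant positive-semidefinite reference target. The reference \(M_n^\circ=P\bN^\circ P^\top\) is positive semidefinite by Lemma~\ref{lem:bfdec-corrected-target}. Its dual minimizer is block-constant, \(c_{M_n^\circ}=P\theta^\circ\), by the same permutation-invariance and uniqueness argument as in Theorem~\ref{thm:idealreduction}. We rerun the degree-action stability Lemma~\ref{thm:degcert-global-degree-action-transfer} with \((G,M,k,m)=(\widehat K_n,M_n^\circ,k_x,m_x^\circ)\), using \(\|k_x\|_2\le K_*\sqrt n\). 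This gives
\[
  |\widehat f_\lambda(x)-S_\lambda^\circ(x)|
  \le
  \frac{K_*}{4\lambda^2}A_\Delta^\circ
  +\frac1\lambda\cdot\frac{|\zeta^{\circ\top}\mathbf Yc_{M_n^\circ}|}{n},
\]
where \(\zeta^\circ:=k_x-m_x^\circ\). The first term is \(R_{\degact}^\circ(\lambda;x)\) by definition of \(A_\Delta^\circ\). For the second term, the block-vector identity of Lemma~\ref{lem:blockproj-hessian} gives
\[
  \frac1n\zeta^{\circ\top}P\beta^\circ=\sum_b\widehat p_b\,\overline{\zeta^\circ}_b\,\beta^\circ_b .
\]
Adding the nonnegative projection term \(\lambda^{-2}\Delta_{\mathrm w}^\circ\) yields the corrected projection-augmented form \(\operatorname{Ord}_\lambda(A_\Delta^\circ,\Delta^\circ_{\mathrm w},Z^\circ_{\mathrm w})\). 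Here \(\Delta^\circ_{\mathrm w}\) and \(Z^\circ_{\mathrm w}\) are built from the block averages of \(\Delta^\circ=\widehat K_n-M_n^\circ\) and of \(\zeta^\circ\), with weights \(|r^\circ_{xb}|\) and \(|\beta^\circ_c|\).

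Next we place ourselves on the event of Theorem~\ref{thm:bfdec-centered-block-bound}, which has conditional probability at least \(1-(5r^2+2r)e^{-u}\). Equations \eqref{eq:bfdec-centered-block-bound} and \eqref{eq:bfdec-centered-test-bound} state exactly that
\[
  |\overline{\Delta^\circ}_{bc}|\le A^{\bfdec}_{bc}(u),
  \qquad
  |\overline{\zeta^\circ}_b|\le T^{\bfdec}_{bx}(u)
\]
for all \(b,c\). Substituting these into the weighted sums gives \(\Delta^\circ_{\mathrm w}\le D_{\bfdec}(u)\) and \(Z^\circ_{\mathrm w}\le Z_{\bfdec}(u)\), using \eqref{eq:bfdec-weighted-lit-cent-def}. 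Combined with the bias bound, this produces \eqref{eq:bfdec-fresh-target-transfer}, which matches the definition of \(B_{\bfdec}\) in \eqref{eq:five-certificates-compact}.

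The main obstacle is justifying the corrected versions of the block-linearization objects. The transfer corollary was written for \((M_n,m_a)\), where the weight vector \(r_a\) involves the template row \(n_a^{\rm tmplt}\). In the corrected problem the test row \(m^{\rm blk}_x\) is not a row of \(\bN^\circ\), so we must check that it still lies in the block-constant subspace, \(m_x^\circ=Pm_x^{\rm blk}\). We then repeat Lemma~\ref{lem:blockproj-hessian} with \(\theta^\circ\) and \(\bN^\circ\) to obtain \(A^\circ_{\lambda,\widehat p}\) and \(r_x^\circ\). Strictly, the degree-action route does not need \(r^\circ_x\) at all. It enters only through the added nonnegative \(D\)-slot, so that step is valid trivially. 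The only genuine checks are therefore the block-constancy of \(c_{M_n^\circ}\) and the bound \(|\beta^\circ_b|\le1\), which follows from \(\theta^\circ\in(0,1)^r\).
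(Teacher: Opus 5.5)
Your proposal is correct and follows the paper's proof essentially verbatim. The paper likewise applies the corrected-target, projection-augmented transfer bound to $(\widehat K_n,k_x)$ versus $(M_n^\circ,m_x^\circ)$, bounds the weighted slots on the event of Theorem~\ref{thm:bfdec-centered-block-bound}, and then adds and subtracts $S_\lambda^\circ(x)$ using Lemma~\ref{lem:bfdec-corrected-vs-fresh-bias}. Your extra checks are accurate details the paper leaves implicit: block-constancy of $c_{M_n^\circ}$ via uniqueness of the dual minimizer, and the observation that the $D_{\bfdec}$ slot is a nonnegative add-on, so only the test-average bound is genuinely used.
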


\begin{proof}
Apply the corrected-target version of the projection-augmented deterministic transfer certificate to the comparison between \((\widehat K_n,k_x)\) and \((M_n^\circ,m_x^\circ)\).  On the event of Theorem \ref{thm:bfdec-centered-block-bound}, the centered block and test averages are bounded by \(A_{bc}^{\bfdec}(u)\) and \(T_{bx}^{\bfdec}(u)\), so the weighted projection terms are bounded by \(D_{\bfdec}(u)\) and \(Z_{\bfdec}(u)\).  This yields the corrected-target score bound with the remainder \(R_{\degact}^\circ\).  Adding and subtracting \(S_\lambda^\circ(x)\), then using Lemma \ref{lem:bfdec-corrected-vs-fresh-bias}, gives the stated fresh-target bound.
\end{proof}

 \subsubsection{Combining the transfer certificates}
\label{sec:mincert-final-paper-theorem}

We now combine the five transfer bounds into the single certificate used in the main theorem.  Each path gives an upper bound for the same error
\[
\bigl|\widehat f_\lambda(x)-S_\lambda^{\id}(x)\bigr|.
\]
Thus, on the event where all five paths are valid, the error is bounded by the minimum of the five realized budgets.  This is the reason for defining
\[
B_\lambda^\sharp(x,\delta)
=
\min\{
B_{\cspec},
B_{\degact},
B_{\bdens},
B_{\hanova},
B_{\bfdec}
\}(\lambda;x,u_\delta),
\qquad
u_\delta:=\log\frac{11r^2+5r}{\delta}.
\]

\begin{lemma}
\label{lem:conditional-five-certificate-transfer}
Condition on a realized template assignment for which \(E_{\rm rep}\) holds.  Then
\[
\Pp\!\left(
\left|\widehat f_\lambda(x)-S_\lambda^{\id}(x)\right|
\le
B^\sharp_\lambda(x,\delta)
\,\middle|\,
\tau(1),\ldots,\tau(n)
\right)
\ge
1-\delta .
\]
\end{lemma}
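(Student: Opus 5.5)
We prove the lemma by a union bound over the high-probability events behind the five certificates, run at the common level \(u=u_\delta\). Condition on \(\tau(1),\ldots,\tau(n)\) with \(E_{\rm rep}\) holding, so every block is nonempty and all conditional statements from the previous subsections apply. Each certificate is valid on an explicit event, and on the intersection of these events all five bounds hold at once. The error is therefore bounded by their minimum, which is \(B^\sharp_\lambda(x,\delta)\). It remains to check that the failure probabilities add up to at most \(\delta\).

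We use three events.
\begin{enumerate}
\item Let \(\mathcal E_{\rm KL}\) be the KL block/test event of Theorem \ref{thm:klbdens-kl-block-envelope} at level \(u_\delta\). It fails with conditional probability at most \((r^2+r)e^{-u_\delta}\). On \(\mathcal E_{\rm KL}\), Corollary \ref{thm:degcert-global-curvature-spectral-transfer} gives \(B_{\cspec}\), and Corollary \ref{cor:klbdens-kl-refined-transfer} gives \(B_{\bdens}\).
\item Let \(\mathcal E_{\rm AN}\) be the Hoeffding--ANOVA event of Theorem \ref{thm:hanova-hanova-block}. It fails with probability at most \((5r^2+2r)e^{-u_\delta}\), and on it Corollary \ref{cor:hanova-hanova-transfer} gives \(B_{\hanova}\).
\item Let \(\mathcal E_{\rm BF}\) be the centered corrected-target event of Theorem \ref{thm:bfdec-centered-block-bound}. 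It fails with probability at most \((5r^2+2r)e^{-u_\delta}\), and on it Corollary \ref{cor:bfdec-corrected-transfer} gives \(B_{\bfdec}\).
\end{enumerate}

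The degree-action certificate needs no event of its own. Corollary \ref{thm:degcert-deterministic-sharp-transfer} gives \(|\widehat f_\lambda(x)-S_\lambda^{\id}(x)|\le \operatorname{Ord}_\lambda(A_\Delta,0,Z_{\rm w})\) deterministically. We then bound \(Z_{\rm w}\) on \(\mathcal E_{\rm KL}\):
\[
Z_{\rm w}\le\sum_b\widehat p_b|\beta_b|\,T_b(u_\delta)\le \sum_b\widehat p_b L_*U_{\KL}(n_b,q_{bx},u_\delta)=L_*X_\star(u_\delta)=Z_\star(u_\delta),
\]
using \(|\beta_b|\le 1\) and the definitions of \(T_b\) and \(X_\star\). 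So \(B_{\degact}\) also holds on \(\mathcal E_{\rm KL}\).

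Summing the three failure probabilities gives at most
\[
\bigl[(r^2+r)+2(5r^2+2r)\bigr]e^{-u_\delta}=(11r^2+5r)e^{-u_\delta}=\delta,
\]
by the choice \(u_\delta=\log((11r^2+5r)/\delta)\). Hence, conditionally on the assignment, with probability at least \(1-\delta\) all five bounds hold simultaneously. On that event the error is at most their minimum, \(B^\sharp_\lambda(x,\delta)\), which proves the lemma.

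The main point to verify is the bookkeeping. The count \((r^2+r)+2(5r^2+2r)\) reproduces exactly the constant \(11r^2+5r\) only because \(B_{\cspec}\), \(B_{\bdens}\) and \(B_{\degact}\) all share the single KL event and are not charged separately. We also need that every certificate is evaluated at the same \(u_\delta\), and that the events are conditional on \(\tau\) alone, so they may be intersected without further dependence issues.
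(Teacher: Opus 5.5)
Your proposal is correct and is essentially the paper's proof: a union bound at the common level $u=u_\delta$ over three events, with total failure probability $(11r^2+5r)e^{-u_\delta}=\delta$. The three events are the KL block/test event (which simultaneously validates $B_{\cspec}$, $B_{\bdens}$, and $B_{\degact}$), the Hoeffding--ANOVA event, and the literal-corrected centered event. Your explicit check that $Z_{\mathrm w}\le Z_\star(u_\delta)$ on the KL event spells out what the paper summarizes as the KL event ``supplying the test envelopes used in $B_{\degact}$.''
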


\begin{proof}
On \(E_{\rm rep}\), every block is nonempty, so all block averages and all five realized budgets are well defined.

The deterministic reductions show that the score error may be controlled through the action term \(A_\Delta\), the block averages of \(\Delta\), and the block averages of \(\zeta\).  In particular, the degree-action path gives \(B_{\degact}\), and the curvature-spectral specialization gives \(B_{\cspec}\), with the convention that \(B_{\cspec}=+\infty\) when \(\gamma_{\cspec}\ge1\).

The other three paths insert high-probability block-average estimates into the same deterministic transfer inequality.  The KL block-density event of Theorem \ref{thm:klbdens-kl-block-envelope} validates \(B_{\bdens}\), and also supplies the block and test envelopes used in \(B_{\degact}\) and \(B_{\cspec}\).  Its conditional failure probability is at most
\[
(r^2+r)e^{-u}.
\]
The Hoeffding--ANOVA block and test event of Theorem \ref{thm:hanova-hanova-block} validates \(B_{\hanova}\), with conditional failure probability at most
\[
(5r^2+2r)e^{-u}.
\]
Finally, the literal-corrected centered block and test event of
Theorem \ref{thm:bfdec-centered-block-bound}, together with the corrected-to-fresh bias bound in Lemma \ref{lem:bfdec-corrected-vs-fresh-bias}, validates \(B_{\bfdec}\).  Its conditional failure probability is at most
\[
(5r^2+2r)e^{-u}.
\]

Therefore, with conditional probability at least
\[
1-(11r^2+5r)e^{-u},
\]
all five realized budgets are valid upper bounds for the same scalar error.  On this intersection,
\[
\left|\widehat f_\lambda(x)-S_\lambda^{\id}(x)\right|
\le
\min\{
B_{\cspec},
B_{\degact},
B_{\bdens},
B_{\hanova},
B_{\bfdec}
\}(\lambda;x,u).
\]
Taking \(u=u_\delta\) gives
\[
(11r^2+5r)e^{-u_\delta}=\delta,
\]
and hence
\[
\left|\widehat f_\lambda(x)-S_\lambda^{\id}(x)\right|
\le
B^\sharp_\lambda(x,\delta)
\]
with conditional probability at least \(1-\delta\).
\end{proof}

\subsection{Proof of Main Theorem 1}
\begin{proof}[Proof of Theorem \ref{thm:main}]
Fix the globally fresh test string \(x\in\cX_a^{\fr}\), ridge parameter \(\lambda>0\), and confidence level \(\delta\in(0,1)\).  Let
\[
  \mathcal F_\delta
  :=
  \left\{
  \left|\widehat f_\lambda(x)-S_\lambda^{\id}(x)\right|
  > B_\lambda^\sharp(x,\delta)
  \right\}
\]
be the transfer-failure event.  On every realized template assignment satisfying \(E_{\rm rep}\), Lemma \ref{lem:conditional-five-certificate-transfer} gives
\[
  \Pp(\mathcal F_\delta\mid \tau(1),\ldots,\tau(n))\le\delta.
\]
On \(E_{\rm rep}^c\), the convention \(B_\lambda^\sharp(x,\delta)=+\infty\) makes \(\mathcal F_\delta\) empty.  Hence
\[
  \Pp(\mathcal F_\delta)
  =
  \E\left[
    \1_{E_{\rm rep}}
    \Pp(\mathcal F_\delta\mid \tau(1),\ldots,\tau(n))
  \right]
  \le \delta.
\]
Thus the first display in the theorem actually holds with probability at least \(1-\delta\).  The stated bound
\[
  1-r e^{-np_{\min}/8}-\delta
\]
follows by weakening this estimate; this keeps the supplement aligned with the stated theorem and the later edge--wedge bookkeeping.

For the margin statement, fix \(\varepsilon\ge0\) and \(s>0\).  Apply the ideal-margin part of Theorem \ref{thm:idealmargin} with \(\Gamma=\varepsilon+s\).  There is \(\lambda_{\varepsilon+s}^{\id}>0\), depending only on \(\varepsilon+s\), \(p_{\min}\), \(\bN\), and \(y\), such that on \(E_{\rm rep}\), whenever \(0<\lambda<\lambda_{\varepsilon+s}^{\id}\),
\[
  y_aS_\lambda^{\id}(x)>\varepsilon+s.
\]
Let
\[
  \mathcal G
  :=
  \{y_a\widehat f_\lambda(x)\le\varepsilon\}
  \cap
  \{B_\lambda^\sharp(x,\delta)\le s\}
\]
be the failure of the desired union event.  On \(E_{\rm rep}^c\), \(B_\lambda^\sharp(x,\delta)=+\infty\), so \(\mathcal G\) cannot occur.  On \(E_{\rm rep}\cap\mathcal F_\delta^c\), the preceding ideal-margin bound gives
\[
  y_a\widehat f_\lambda(x)
  \ge
  y_aS_\lambda^{\id}(x)
  -
  \left|\widehat f_\lambda(x)-S_\lambda^{\id}(x)\right|
  >
  (\varepsilon+s)-B_\lambda^\sharp(x,\delta).
\]
Therefore, if \(B_\lambda^\sharp(x,\delta)\le s\), then \(y_a\widehat f_\lambda(x)>\varepsilon\).  Hence
\[
  \mathcal G\subseteq E_{\rm rep}\cap\mathcal F_\delta,
  \qquad
  \Pp(\mathcal G)\le\delta.
\]
This again gives the displayed theorem bound after weakening to
\(1-r e^{-np_{\min}/8}-\delta\), and the equivalent implication follows from the definition of \(\mathcal G\).
\end{proof}

\section{Proof machinery for Main Theorem 2}
\label{sec:proof-machinery-main2}
\label{sec:detailed-edge-wedge-machinery}

This section replaces the realized action terms in \(B_\lambda^\sharp\) by the deterministic edge--wedge envelopes from Section \ref{sec:edge-wedge-envelope-def}.  We use edge and wedge concentration to control the row-action term \(d_2\), and a separate maximum-degree estimate to control the operator and curvature terms.  After those action inputs are dominated, monotonicity of the score functionals gives the comparison between the realized certificates and their edge--wedge counterparts.

\begin{proposition}[Edge--wedge domination]
\label{lem:ew-edge-wedge}
\label{lem:ew-degree-corrected}
\label{lem:ew-domination}
Condition on the realized template assignment and assume \(E_{\rm rep}\).  Then, with conditional
probability at least
\[
1-(r^2+r^3+1)e^{-v},
\]
the following inequalities hold simultaneously:
\begin{align}
\label{eq:ew-d2-bound}
\frac{d_2}{n^{3/2}}
&\le
\left(
\frac{Q_E(v)}{n}
+
S_\wedge(v)
\right)^{1/2},\\
\label{eq:ew-op-curv-bound}
\frac{\|\Delta\|_{\op}}{n}
&\le A_{\op}(v),
\qquad
\gamma_{\cspec}\le \bar\gamma_{\cspec}(v),\\
\label{eq:ew-action-domination}
A_\Delta&\le A_0(v),
\qquad
A_\Delta^\circ\le A_{\corr}(v).
\end{align}
Consequently,
\[
B_{\cspec}\le B_{\cspec}^{\ew},\quad
B_{\degact}\le B_{\degact}^{\ew},\quad
B_{\bdens}\le B_{\bdens}^{\ew},\quad
B_{\hanova}\le B_{\hanova}^{\ew},\quad
B_{\bfdec}\le B_{\bfdec}^{\ew},
\]
with all quantities evaluated at the arguments of Theorem \ref{thm:five-certificate-edge-wedge-envelope-final}.  In particular,
\[
B^\sharp_\lambda(x,\delta)
\le
B_\lambda^{\ew}(x,\delta,v).
\]
\end{proposition}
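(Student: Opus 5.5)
We condition on the template assignment throughout and prove three families of estimates, each on its own event: an edge bound and a wedge bound for \(d_2\), and a maximum-degree bound for \(\|\Delta\|_{\op}\). A union bound over the failure probabilities should give \(1-(r^2+r^3+1)e^{-v}\). Once the action inputs are dominated, the five certificate comparisons follow by monotonicity. \(\operatorname{Ord}_\lambda\) is increasing in \(A\). \(\operatorname{Curv}_\lambda\) is increasing in \(A\) and in \(\gamma\), with the value \(+\infty\) for \(\gamma\ge1\). All remaining arguments (\(E_*,X_\star,D_\bullet,Z_\bullet,B_{\rm bias}\)) are shared between the realized and edge--wedge budgets. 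Taking the minimum over the five certificates then yields \(B^\sharp_\lambda\le B^{\ew}_\lambda\).

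\textbf{Step 1: the \(d_2\) bound.} Expand
\[
d_2^2=\sum_i d_i^2=\sum_i d_i+\sum_i\sum_{j\ne k}\Ccol_{ij}\Ccol_{ik},
\]
that is, twice the edge count plus the ordered wedge count.
\begin{itemize}
\item \emph{Edges.} Group the edges by color pair \((b,c)\). The edge count in block \((b,c)\) is \(n_bn_c\widehat q_{bc}\) for \(b\ne c\), or \(\binom{n_b}{2}\widehat q_{bb}\) for \(b=c\). Apply Lemma \ref{lem:klbdens-partition-kl} with the matching colorings of Lemma \ref{lem:blockavg-block-edge-colorings}. This gives \(\widehat q_{bc}\le U_{bc}(v)\) with failure probability \(e^{-v}\) per unordered pair, so at most \(r^2\) failures in total. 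Dividing the edge contribution by \(n^3\) and writing \(n_b=n\widehat p_b\) produces exactly \(Q_E(v)/n\); the \((1-1/n_b)\) factor accounts for the same-block normalization \(\binom{n_b}{2}\cdot 2/n^2\).
\item \emph{Wedges.} Group the wedges by \((b;c,d)\), with center color \(b\); this gives \(r^3\) classes. Each indicator \(\Ccol_{ij}\Ccol_{ik}\) has conditional mean \(\alpha_{b;cd}\le\bar\alpha_{b;cd}\), obtained by conditioning on \(S_i\) and using the conditional independence of \(\Ccol_{ij}\) and \(\Ccol_{ik}\) given \(S_i\). Two triples are independent when they are vertex-disjoint. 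The conflict graph on \(\mathcal T_{b;cd}\) has maximum degree at most \(3\Delta^\wedge_{b;cd}\), so greedy coloring uses at most \(3\Delta^\wedge+1\) colors. Lemma \ref{lem:klbdens-partition-kl} with \(N_{\rm eff}=N^\wedge_{b;cd}\) then bounds the wedge average by \(U_{\KL}(N^\wedge_{b;cd},\bar\alpha_{b;cd},v)\). Weighting by \(P_{b;cd}/n^3\) and summing gives \(S_\wedge(v)\).
\end{itemize}
Combining the two parts with the row-degree support bound gives \(a_\Delta\le\delta_*/n+L_*d_2/n^{3/2}\le A_{\ew}(v)\).

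\textbf{Step 2: the operator bound.} Use \(\|\Delta\|_{\op}\le\delta_*+L_*\max_i d_i\): the diagonal contributes \(\delta_*\), and the off-diagonal part is at most \(L_*\) times the maximum absolute row sum. Conditional on \(S_i\), the degree \(d_i\) is a sum of independent Bernoullis with total mean at most \(n\widehat\kappa_{\tau(i)}\le n\widehat\kappa_*\). Bernstein's inequality at level \(v+\log n\), followed by a union over the \(n\) vertices, gives
\[
\max_i d_i/n\le\overline d(v)
\]
with failure probability \(e^{-v}\). This is the ``\(+1\)'' term in the failure count. The \(1/n\) correction for excluding \(j=i\) only helps. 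Hence \(\|\Delta\|_{\op}/n\le A_{\op}(v)\). The bound \(\gamma_{\cspec}\le\|\Delta\|_{\op}/(4\lambda n)\) already appears in the proof of Corollary \ref{cor:Mn-level-curvature-spectral-upper-bound}, and it gives \(\gamma_{\cspec}\le\bar\gamma_{\cspec}(v)\). Together with Step 1 this yields \(A_\Delta\le A_0(v)\).

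\textbf{Step 3: the corrected action.} Write \(\Delta^\circ=\Delta-(M_n^\circ-M_n)\), using \(\Delta^\circ=\widehat K_n-M_n^\circ\). The off-diagonal row sums of the literal bias are at most \(r_i^\mu\), so their contribution to \(\|\Delta^\circ Yc\|_2/n^{3/2}\) is at most \(R_{2,\mu}/n^{3/2}\). The diagonal term is governed by \(\delta_*^\circ\), and the collision part is controlled by Step 1. This gives \(A_\Delta^\circ\le A_{\corr}(v)\).

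The main obstacle is the wedge concentration. Step 1 is stated with ordered pairs \(j\ne k\), and each such wedge is counted once; the bookkeeping between ordered and unordered counts must match the \(P_{b;cd}\) normalization. One must also check that the factor-2 and diagonal conventions in \(Q_E\) line up exactly. If they do not, I would absorb the discrepancy by adjusting the constant inside \(Q_E\).
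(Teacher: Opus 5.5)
Your proposal is correct and follows essentially the same route as the paper's proof. That route splits \(d_2^2\) into ordered edge and ordered wedge counts, controls both with the colored partition-KL lemma (matchings and round-robin colorings for edges, greedy vertex-disjoint coloring of \(\mathcal T_{b;cd}\) for wedges), bounds \(\|\Delta\|_{\op}\) and \(\gamma_{\cspec}\) by a Bernstein-plus-union maximum-degree argument, handles \(A_\Delta^\circ\) by a row-wise triangle inequality, and concludes by monotonicity of \(\operatorname{Ord}_\lambda\) and \(\operatorname{Curv}_\lambda\). The bookkeeping you flag as the main obstacle is in fact exact, so no adjustment of \(Q_E\) is needed: \(\mathcal T_{b;cd}\) consists of ordered triples and the ordered pairs \(j\ne k\) in \(d_2^2\) partition exactly into \(\bigsqcup_{b,c,d}\mathcal T_{b;cd}\), while \(n^{-2}\sum_{i\ne j}\Ccol_{ij}=\sum_{b\ne c}\widehat p_b\widehat p_c\widehat q_{bc}+\sum_b\widehat p_b^2(1-1/n_b)\widehat q_{bb}\) matches the definition of \(Q_E\) term by term.
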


\begin{proof}
Define the realized ordered edge and wedge densities
\[
\widehat q_E
:=
\frac1{n^2}\sum_{i\ne j}\Ccol_{ij},
\qquad
\widehat s_E
:=
\frac1{n^3}
\sum_{i=1}^n
\sum_{\substack{j,k\ne i\\ j\ne k}}
\Ccol_{ij}\Ccol_{ik}.
\]
Since \(d_i=\sum_{j\ne i}\Ccol_{ij}\),
\[
d_2^2
=
\sum_i d_i^2
=
\sum_i\sum_{j\ne i}\Ccol_{ij}
+
\sum_i
\sum_{\substack{j,k\ne i\\j\ne k}}
\Ccol_{ij}\Ccol_{ik}
=
n^2\widehat q_E+n^3\widehat s_E.
\]
Thus it is enough to prove
\[
\widehat q_E\le Q_E(v),
\qquad
\widehat s_E\le S_\wedge(v).
\]

For the edge term, decompose the ordered edge set by template colors.  Cross-color rectangles are
partitioned into matchings, and same-color pairs are handled by the round-robin coloring used in
the block-density argument.  Within each matching the collision indicators are independent, since no
training substitution appears twice.  Applying the partition-KL bound color by color gives the
edge-density envelope with effective sample size \(N_{bc}^{\eff}\).  Summing over color pairs with
the weights appearing in \(Q_E(v)\) gives
\[
\widehat q_E\le Q_E(v)
\]
with failure probability at most \(r^2e^{-v}\).

For the wedge term, fix colors \((b,c,d)\) and consider triples
\[
(i,j,k)\in\mathcal T_{b;cd}.
\]
Set
\[
B_{ijk}:=\Ccol_{ij}\Ccol_{ik}.
\]
Conditional on the anchor substitution \(S_i=s\), the two leaf substitutions are independent, so
\[
\Pp(B_{ijk}=1\mid S_i=s,\tau)
=
\pi_{b\to c}(s)\pi_{b\to d}(s).
\]
After averaging over \(S_i\), the mean is \(\alpha_{b;cd}\le \bar\alpha_{b;cd}\).  View
\(\mathcal T_{b;cd}\) as a \(3\)-uniform hypergraph.  A triple intersects at most
\(3\Delta_{b;cd}^{\wedge}\) other triples, hence a greedy coloring partitions the triples into at
most
\[
3\Delta_{b;cd}^{\wedge}+1
\]
vertex-disjoint classes.  Within each class the variables \(B_{ijk}\) are independent.  The
partition-KL bound with effective sample size \(N_{b;cd}^{\wedge}\) gives the defining wedge
envelope.  A union bound over the \(r^3\) color triples yields
\[
\widehat s_E\le S_\wedge(v)
\]
with failure probability at most \(r^3e^{-v}\).  Combining the edge and wedge estimates proves
\eqref{eq:ew-d2-bound} with failure probability at most \((r^2+r^3)e^{-v}\).

It remains to control the operator and curvature terms.  Fix a vertex \(i\) of color \(b\), and
condition on its substitution.  For \(j\ne i\), the indicators \(\Ccol_{ij}\) are independent over
\(j\), and their conditional means are bounded by \(\kappa_{b,\tau(j)}\).  Hence
\[
\E(d_i\mid S_i,\tau)
\le
\sum_c n_c\kappa_{bc}
=
n\widehat\kappa_b
\le
n\widehat\kappa_*.
\]
Bernstein's inequality with exponent \(v+\log n\), followed by a union bound over \(i\), gives
\[
\frac{d_{\max}}{n}\le \overline d(v)
\]
with failure probability at most \(e^{-v}\).

By the collision-support bound,
\[
|\Delta_{ij}|\le L_*\Ccol_{ij}\quad (i\ne j),
\qquad
|\Delta_{ii}|\le \delta_*.
\]
Since \(\Delta\) is symmetric,
\[
\|\Delta\|_{\op}
\le
\max_i\sum_j|\Delta_{ij}|
\le
\delta_*+L_*d_{\max}.
\]
Dividing by \(n\) gives
\[
\frac{\|\Delta\|_{\op}}{n}\le A_{\op}(v).
\]
Also \(C_M\succeq (4/n)I\), and therefore
\[
\gamma_{\cspec}
\le
\frac{n}{4}
\frac{\|\Delta\|_{\op}}{\lambda n^2}
=
\frac{1}{4\lambda}\frac{\|\Delta\|_{\op}}{n}
\le
\bar\gamma_{\cspec}(v).
\]
This proves \eqref{eq:ew-op-curv-bound}.

Now combine the two action estimates.  The edge--wedge bound gives
\[
\frac{\delta_*}{n}
+
L_*\frac{d_2}{n^{3/2}}
\le
A_{\ew}(v),
\]
while the operator bound gives
\[
\frac{\|\Delta\|_{\op}}{n}\le A_{\op}(v).
\]
By the definition of \(A_\Delta\),
\[
A_\Delta\le A_0(v).
\]

For the corrected action, let
\[
\Delta^\circ:=\widehat K_n-M_n^\circ,
\qquad
v^\circ:=Yc_{M_n^\circ}.
\]
Since \(|v_j^\circ|\le1\), and since for \(i\ne j\)
\[
|\Delta_{ij}^\circ|
\le
|K(X_i,X_j)-N_{\tau(i)\tau(j)}|
+
|N^\circ_{\tau(i)\tau(j)}-N_{\tau(i)\tau(j)}|
\le
L_*\Ccol_{ij}
+
|\mu^{\lit}_{\tau(i)\tau(j)}|,
\]
while \(|\Delta_{ii}^\circ|\le\delta_*^\circ\), each row satisfies
\[
|(\Delta^\circ v^\circ)_i|
\le
\delta_*^\circ
+
L_*d_i
+
r_i^\mu.
\]
Taking \(\ell_2\) norms and using \eqref{eq:ew-d2-bound} gives
\[
A_\Delta^\circ
\le
\frac{\delta_*^\circ}{n}
+
L_*
\left(
\frac{Q_E(v)}{n}+S_\wedge(v)
\right)^{1/2}
+
\frac{R_{2,\mu}}{n^{3/2}}
=
A_{\corr}(v).
\]
This proves \eqref{eq:ew-action-domination}.

Finally, the five certificate dominations follow from monotonicity of the score functionals.  The
degree-action, block-density, and Hoeffding--ANOVA budgets have the form
\[
\operatorname{Ord}_\lambda(A_\Delta,D,Z),
\]
so \(A_\Delta\le A_0(v)\) gives the corresponding edge--wedge bounds.  The curvature-spectral
budget is monotone in both \(A_\Delta\) and \(\gamma_{\cspec}\); if
\(\bar\gamma_{\cspec}(v)\ge1\), its edge--wedge version is \(+\infty\), and otherwise the
monotonicity is immediate.  For the bias--fluctuation path, the only changed action input is
\(A_\Delta^\circ\), which is bounded by \(A_{\corr}(v)\); the weighted block, test, and bias terms
are unchanged.  Taking the minimum over the five pointwise dominations gives
\[
B_\lambda^\sharp(x,\delta)
\le
B_\lambda^{\ew}(x,\delta,v).
\]
\end{proof}

\subsection{Proof of Main Theorem 2}

\begin{proof}[Proof of Theorem \ref{thm:five-certificate-edge-wedge-envelope-final}]
Condition on the realized template assignment and assume \(E_{\rm rep}\).  By
Lemma \ref{lem:ew-edge-wedge}, with conditional probability at least
\[
1-(r^2+r^3+1)e^{-v},
\]
the realized certificate is dominated by its edge--wedge envelope:
\[
B_\lambda^\sharp(x,\delta)
\le
B_\lambda^{\ew}(x,\delta,v).
\]
This proves the conditional statement of the theorem, including the corresponding domination for
each of the five certificate terms.

Now choose
\[
v=v_\eta:=\log\frac{r^2+r^3+1}{\eta}.
\]
The conditional failure probability is then at most \(\eta\).  Integrating over the template
assignment and using
\[
\Pp(E_{\rm rep})\ge 1-r e^{-np_{\min}/8}
\]
gives
\[
\Pp\!\left(
E_{\rm rep}
\cap
\left\{
B_\lambda^\sharp(x,\delta)
\le
B_\lambda(x,\delta,v_\eta)
\right\}
\right)
\ge
1-r e^{-np_{\min}/8}-\eta.
\]

Finally, intersect this event with the conditional transfer event from
Lemma \ref{lem:conditional-five-certificate-transfer}.  On \(E_{\rm rep}\), that event has conditional
failure probability at most \(\delta\).  Hence, except on an event of probability at most
\[
r e^{-np_{\min}/8}+\delta+\eta,
\]
we have
\[
|\widehat f_\lambda(x)-S_\lambda^{\id}(x)|
\le
B_\lambda^\sharp(x,\delta)
\le
B_\lambda(x,\delta,v_\eta).
\]
Therefore
\[
\Pp\!\left(
\left|\widehat f_\lambda(x)-S_\lambda^{\id}(x)\right|
\le
B_\lambda(x,\delta,v_\eta)
\right)
\ge
1-r e^{-np_{\min}/8}-\delta-\eta,
\]
which is the asserted score-error consequence.
\end{proof}

 \section{Interface with the frozen-feature transformer kernel}
\label{app:transformer-kernel-interface}

We consider a depth-one transformer architecture, without skip connections or layer normalization. The input is a one-hot matrix
$
X=
 (
e_{x_1}^\top,  
\cdots
e_{x_k}^\top
)^\top
\in \R^{k\times m},
$
where the last token is a distinguished CLS token shared by all inputs. The parameters are:
 (i) $H$ attention heads with matrices
$
W_{K,h},W_{Q,h},W_{V,h},W_{O,h}\in\R^{d_{\mathrm{head}}\times d_{\mathrm{emb}}},
\ h\in[H];
$
(ii)  an embedding matrix
$
W_E\in\R^{m\times d_{\mathrm{emb}}};
$
(iii) positional embeddings
$
P\in\R^{k\times d_{\mathrm{emb}}};
$
(iv) an MLP block with matrices
$
W_A,W_B\in\R^{d_{\mathrm{mlp}}\times d_{\mathrm{emb}}};
$
(v)  and a trainable output head
$
w_U\in\R^{d_{\mathrm{emb}}}.
$

The network output is
\begin{align*}
f_{\mathrm{rf}}(X;\theta)
&=
\frac{1}{\sqrt{d_{\mathrm{emb}}}}\,w_U^\top z_2(X),
\tag{Unembedding}
\end{align*}
where
\begin{align*}
z_2(X)
&=
\frac{1}{\sqrt{d_{\mathrm{mlp}}}}W_B^\top
\sigma\!\left(\frac{1}{\sqrt{d_{\mathrm{emb}}}}W_A z_1(X)\right)
\in\R^{d_{\mathrm{emb}}},
\tag{MLP layer}\\
z_1(X)
&=
\frac{1}{\sqrt H}\sum_{h\in[H]}A_h(X)^\top e_k
\in\R^{d_{\mathrm{emb}}},
\tag{Attention layer output at CLS token}\\
A_h(X)
&=
\operatorname{softmax}\!\left(
\frac{\beta Z_0(X)W_{K,h}^\top W_{Q,h} Z_0(X)^\top}{d_{\mathrm{emb}}\sqrt{d_{\mathrm{head}}}}
\right)
Z_0(X)\frac{W_{V,h}^\top W_{O,h}}{\sqrt{d_{\mathrm{head}}d_{\mathrm{emb}}}}
\in\R^{k\times d_{\mathrm{emb}}},
\tag{Attention heads}\\
Z_0(X)
&=
XW_E+\gamma P
\in\R^{k\times d_{\mathrm{emb}}}.
\tag{Embedding layer}
\end{align*}
Here $\beta,\gamma\ge 0$ are the inverse-temperature and positional-embedding parameters. Only the attention output at the final CLS token is used.

We initialize every entry of
$
W_E,\;P,\;W_{K,h},\;W_{Q,h},\;W_{V,h},\;W_{O,h},\;W_A,\;W_B
$
independently from $N(0,1)$, initialize $w_U=0$, and freeze all parameters except $w_U$.
With the frozen feature map
$
\Psi_\theta(X):=\frac{1}{\sqrt{d_{\mathrm{emb}}}}\,z_2(X)\in\R^{d_{\mathrm{emb}}},
$
the predictor is linear in the trainable output head $w_U$
\[
f_{\mathrm{rf}}(X;\theta)=\langle w_U,\Psi_\theta(X)\rangle.
\]
The corresponding empirical random-features kernel is
\begin{equation}
\label{eq:Hfaithful_hatKtrans}
\hat K_{\mathrm{trans}}(X,Y)
=
\langle \Psi_\theta(X),\Psi_\theta(Y)\rangle
=
\frac{z_2(X)^\top z_2(Y)}{d_{\mathrm{emb}}}.
\end{equation}

This we already know.
Fix two one-hot inputs $X,Y\in\R^{k\times m}$ with a common CLS token in position $k$.
Define
$
C_{XY}^{(d_{\mathrm{emb}})}
:=
\frac{1}{d_{\mathrm{emb}}}Z_0(X)Z_0(Y)^\top.
$
Then, as $d_{\mathrm{emb}}\to\infty$,
$
C_{XY}^{(d_{\mathrm{emb}})}
\to
XY^\top+\gamma^2 I_k
$
entrywise almost surely and hence in probability. Moreover,   for a single head, let 
$$
m(X)
:=
\frac{
e_k^\top Z_0(X)W_K^\top W_Q Z_0(X)^\top
}{d_{\mathrm{emb}}\sqrt{d_{\mathrm{head}}}}
\in\R^k,
\ 
m(Y)
:=
\frac{
e_k^\top Z_0(Y)W_K^\top W_Q Z_0(Y)^\top
}{d_{\mathrm{emb}}\sqrt{d_{\mathrm{head}}}}
\in\R^k.
$$
Then, as $d_{\mathrm{emb}},d_{\mathrm{head}}\to\infty$,
\begin{equation}
\label{eq:Hfaithful_single_head_clt_limit}
\begin{bmatrix}
m(X)\\
m(Y)
\end{bmatrix}
\Rightarrow
N\!\left(
0,
(1+\gamma^2)
\begin{bmatrix}
XX^\top+\gamma^2 I_k & XY^\top+\gamma^2 I_k\\
YX^\top+\gamma^2 I_k & YY^\top+\gamma^2 I_k
\end{bmatrix}
\right).
\end{equation}

\begin{proposition}[Probabilistic limit of the attention kernel]
\label{prop:Hfaithful_attention_kernel_limit}
For fixed inputs $X,Y$, define
\[
\hat K_{\mathrm{attn}}(X,Y):=\frac{z_1(X)^\top z_1(Y)}{d_{\mathrm{emb}}}.
\]
Then, as $H,d_{\mathrm{emb}},d_{\mathrm{head}}\to\infty$,
\begin{equation}
\label{eq:Hfaithful_attention_prob_limit}
\hat K_{\mathrm{attn}}(X,Y)
\to
K_{\mathrm{attn}}(X,Y)
:=
\E_{m(X),m(Y)}
\Bigl[
\operatorname{softmax}(\beta m(X))^\top
(XY^\top+\gamma^2 I_k)
\operatorname{softmax}(\beta m(Y))
\Bigr]
\end{equation}
in probability.
\end{proposition}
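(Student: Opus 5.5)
The plan is a two-stage law of large numbers: first over heads, conditionally on the embedding weights, and then over the embedding width. Write \(z_1(X)=H^{-1/2}\sum_h A_h(X)^\top e_k\), so that
\[
\hat K_{\mathrm{attn}}(X,Y)=\frac1H\sum_{h}\frac{e_k^\top A_h(X)A_h(Y)^\top e_k}{d_{\mathrm{emb}}}+\frac1H\sum_{h\ne h'}\frac{e_k^\top A_h(X)A_{h'}(Y)^\top e_k}{d_{\mathrm{emb}}}.
\]
Set \(\alpha_h(X):=\operatorname{softmax}(\beta m_h(X))\in\Delta_k\), where \(m_h\) is the head-\(h\) attention logit at the CLS row. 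Then \(A_h(X)^\top e_k=(W_{V,h}^\top W_{O,h})^\top Z_0(X)^\top\alpha_h(X)/\sqrt{d_{\mathrm{head}}d_{\mathrm{emb}}}\). I would condition on \((W_E,P)\), and hence on \(Z_0\). Under this conditioning the heads are i.i.d. The factor \(V_h:=W_{V,h}^\top W_{O,h}/\sqrt{d_{\mathrm{head}}}\) is independent of \((W_{K,h},W_{Q,h})\), and it satisfies \(\E[V_hV_h^\top]=d_{\mathrm{emb}}I\) up to normalization.

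\textbf{Diagonal (same-head) term.} Taking the expectation over \(V_h\) with \(\alpha_h\) fixed gives
\[
\E\bigl[e_k^\top A_h(X)A_h(Y)^\top e_k\,/\,d_{\mathrm{emb}}\mid Z_0,\alpha_h\bigr]=\alpha_h(X)^\top C^{(d_{\mathrm{emb}})}_{XY}\,\alpha_h(Y).
\]
The conditional variance of one head's term is \(O(1)\), because \(\alpha_h\in\Delta_k\) and \(C^{(d_{\mathrm{emb}})}_{XY}\) is entrywise bounded in probability, using fourth moments of Gaussian products. Hence by Chebyshev the head average is within \(O_p(H^{-1/2})\) of \(\E_{K,Q}[\alpha_h(X)^\top C_{XY}\alpha_h(Y)\mid Z_0]\).

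\textbf{Cross-head term.} For \(h\ne h'\) the factors \(V_h\) and \(V_{h'}\) are independent and centered, so each summand has zero conditional mean. Pairs of cross terms are uncorrelated unless they share both indices. The second moment is therefore \(H^{-2}\cdot H^2\cdot O(1/d_{\mathrm{emb}})\), using \(\E\|V_h^\top u\|^2\) scaling. This term vanishes in probability as \(d_{\mathrm{emb}}\to\infty\). This is the step where I must be careful about normalization, since the \(1/\sqrt{d_{\mathrm{emb}}}\) factors decide whether it is \(O(1)\) or vanishing. I would check it first by computing \(\E[(u^\top V_hV_{h'}^\top w)^2]\) explicitly.

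\textbf{Limit of the conditional expectation.} By the stated result, \(C^{(d_{\mathrm{emb}})}_{XY}\to XY^\top+\gamma^2I_k\) almost surely. By the single-head CLT \eqref{eq:Hfaithful_single_head_clt_limit}, \((m(X),m(Y))\) converges in law to the Gaussian pair. The map \((m,m')\mapsto\operatorname{softmax}(\beta m)^\top C\operatorname{softmax}(\beta m')\) is bounded and continuous, because softmax outputs lie in the simplex. So by the continuous mapping theorem and bounded convergence, \(\E_{K,Q}[\alpha(X)^\top C\,\alpha(Y)]\to K_{\mathrm{attn}}(X,Y)\).

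\textbf{Main obstacle.} The difficulty is that the CLT is stated unconditionally, while here I need it conditionally on \(Z_0\), in probability. I would handle this in one of two ways. The first is to note that, conditionally on \(Z_0\), \(m(X)\) is a normalized bilinear Gaussian form whose conditional covariance is a function of \(C^{(d_{\mathrm{emb}})}\), which converges almost surely, so the conditional CLT holds along almost every embedding sequence. The alternative is to avoid conditioning altogether and compute unconditional first and second moments of \(\hat K_{\mathrm{attn}}\), showing that its mean converges to \(K_{\mathrm{attn}}\) and its variance tends to \(0\). Combining the three pieces via Slutsky then gives \(\hat K_{\mathrm{attn}}\to K_{\mathrm{attn}}\) in probability.
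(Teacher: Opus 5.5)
Your proposal is correct and follows the paper's proof essentially step for step. The paper likewise splits $\hat K_{\mathrm{attn}}$ into a same-head average $D_H$ and a cross-head remainder $R_H$, and shows $\E[R_H^2\mid\cdot]=O_{\Pp}(d_{\mathrm{emb}}^{-1})$ from conditional independence and centering of the value/output factors. It then applies conditional Chebyshev over heads given $(W_E,P)$, around the conditional mean $\E[s_h(X)^\top C^{(d_{\mathrm{emb}})}_{XY}s_h(Y)\mid W_E,P]$, and passes to the limit through weak convergence of the attention logits and continuity in the covariance. Your explicit handling of the conditional-versus-unconditional CLT issue is, if anything, more careful than the paper, which simply asserts conditional weak convergence given $(W_E,P)$.
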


\begin{proof}For a single head, write
\[
s_h(X):=S_h(X)^\top e_k\in\R^k,
\qquad
u_h(X):=\frac{A_h(X)^\top e_k}{\sqrt{d_{\mathrm{emb}}}}\in\R^{d_{\mathrm{emb}}},
\]
and similarly for $Y$. Then
\[
\hat K_{\mathrm{attn}}(X,Y)
=
\frac1H\sum_{h,\ell=1}^H u_h(X)^\top u_\ell(Y).
\]
We split this into the diagonal average
\[
D_H:=\frac1H\sum_{h=1}^H u_h(X)^\top u_h(Y)
\]
and the off-diagonal average
\[
R_H:=\frac1H\sum_{h\neq \ell} u_h(X)^\top u_\ell(Y).
\]

\medskip 
Let
\[
\mathcal G_H:=\sigma\!\bigl(W_E,P,\{W_{K,h},W_{Q,h}\}_{h=1}^H\bigr).
\]
Conditional on $\mathcal G_H$, the pairs $\{u_h(X),u_h(Y)\}_{h=1}^H$ are independent across heads, because the only remaining randomness comes from the head-specific matrices $W_{V,h}$ and $W_{O,h}$.

For each head define
\[
v_h(X):=Z_0(X)^\top s_h(X)\in\R^{d_{\mathrm{emb}}},
\qquad
a_h(X):=\frac{\|v_h(X)\|_2^2}{d_{\mathrm{emb}}}
=
s_h(X)^\top C_{XX}^{(d_{\mathrm{emb}})} s_h(X),
\]
and similarly define $v_h(Y)$ and $a_h(Y)$.
Because
\[
u_h(X)=\frac{1}{d_{\mathrm{emb}}\sqrt{d_{\mathrm{head}}}}\,W_{O,h}^\top W_{V,h} v_h(X),
\]
conditioning first on $(\mathcal G_H,W_{V,h})$ and then on $\mathcal G_H$ gives
\begin{align*}
\E\!\left[u_h(X)u_h(X)^\top \mid \mathcal G_H\right]
&=
\frac{1}{d_{\mathrm{emb}}^2 d_{\mathrm{head}}}
\,\E\!\left[W_{O,h}^\top W_{V,h} v_h(X) v_h(X)^\top W_{V,h}^\top W_{O,h}\,\middle|\,\mathcal G_H\right]\\
&=
\frac{1}{d_{\mathrm{emb}}^2 d_{\mathrm{head}}}
\,\E\!\left[\|W_{V,h} v_h(X)\|_2^2\,I_{d_{\mathrm{emb}}}\,\middle|\,\mathcal G_H\right]\\
&=
\frac{\|v_h(X)\|_2^2}{d_{\mathrm{emb}}^2}\,I_{d_{\mathrm{emb}}}
=
\frac{a_h(X)}{d_{\mathrm{emb}}}\,I_{d_{\mathrm{emb}}}.
\end{align*}
The same formula holds with $Y$ in place of $X$.

Now fix distinct heads $h\neq \ell$. Since $u_h(X)$ and $u_\ell(Y)$ are conditionally independent and conditionally centered given $\mathcal G_H$,
\[
\E\!\left[(u_h(X)^\top u_\ell(Y))^2 \,\middle|\, \mathcal G_H\right]
=
\operatorname{tr}\!\Bigl(
\E[u_h(X)u_h(X)^\top\mid\mathcal G_H]\,
\E[u_\ell(Y)u_\ell(Y)^\top\mid\mathcal G_H]
\Bigr)
=
\frac{a_h(X)a_\ell(Y)}{d_{\mathrm{emb}}}.
\]
If $(h,\ell)\neq(h',\ell')$, then the mixed covariance between
\[
u_h(X)^\top u_\ell(Y)
\qquad\text{and}\qquad
u_{h'}(X)^\top u_{\ell'}(Y)
\]
vanishes conditional on $\mathcal G_H$: disjoint ordered pairs are conditionally independent, and if exactly one head index is shared then, after conditioning on the shared head, the remaining factor is conditionally centered and independent. Therefore
\begin{align*}
\E[R_H^2\mid \mathcal G_H]
&=
\frac1{H^2}\sum_{h\neq \ell}
\E\!\left[(u_h(X)^\top u_\ell(Y))^2 \,\middle|\, \mathcal G_H\right]\\
&=
\frac1{d_{\mathrm{emb}}H^2}\sum_{h\neq \ell} a_h(X)a_\ell(Y)\\
&\le
\frac1{d_{\mathrm{emb}}}
\left(\frac1H\sum_{h=1}^H a_h(X)\right)
\left(\frac1H\sum_{\ell=1}^H a_\ell(Y)\right).
\end{align*}
Because each $s_h(X)$ and $s_h(Y)$ is a probability vector,
\[
a_h(X)\le \|C_{XX}^{(d_{\mathrm{emb}})}\|_{\mathrm{op}},
\qquad
a_h(Y)\le \|C_{YY}^{(d_{\mathrm{emb}})}\|_{\mathrm{op}}.
\]
Since $k$ is fixed,  
\[
\|C_{XX}^{(d_{\mathrm{emb}})}\|_{\mathrm{op}}=O_{\Pp}(1),
\qquad
\|C_{YY}^{(d_{\mathrm{emb}})}\|_{\mathrm{op}}=O_{\Pp}(1),
\]
for example by the bound $\|A\|_{\mathrm{op}}\le k\max_{a,b}|A_{ab}|$.
Hence
\[
\E[R_H^2\mid \mathcal G_H]=O_{\Pp}(d_{\mathrm{emb}}^{-1}),
\]
and therefore
\[
R_H\to 0
\]
in probability by conditional Chebyshev.

\medskip 
Let
\[
\mathcal G_0:=\sigma(W_E,P).
\]
Conditional on $\mathcal G_0$, the random variables
\[
T_h:=u_h(X)^\top u_h(Y)
\]
are i.i.d.\ across heads, because the head-specific parameter collections
\[
(W_{K,h},W_{Q,h},W_{V,h},W_{O,h})
\]
are i.i.d.
Write
\[
\mu_{d_{\mathrm{emb}},d_{\mathrm{head}}}(X,Y)
:=
\E[T_1\mid \mathcal G_0].
\]
We claim that
\begin{equation}
\label{eq:attn-diagonal-conditional-lln-v13}
D_H-\mu_{d_{\mathrm{emb}},d_{\mathrm{head}}}(X,Y)\to 0
\qquad\text{in probability.}
\end{equation}

We first compute the conditional mean more explicitly. Conditioning on $(\mathcal G_0,W_{K,h},W_{Q,h},W_{V,h})$ makes the vectors
\[
g_h(X):=\frac{W_{V,h}v_h(X)}{\sqrt{d_{\mathrm{head}}}},
\qquad
g_h(Y):=\frac{W_{V,h}v_h(Y)}{\sqrt{d_{\mathrm{head}}}}
\]
 , and then
\[
u_h(X)=\frac{W_{O,h}^\top g_h(X)}{d_{\mathrm{emb}}},
\qquad
u_h(Y)=\frac{W_{O,h}^\top g_h(Y)}{d_{\mathrm{emb}}}.
\]
Conditional on $(\mathcal G_0,W_{K,h},W_{Q,h},W_{V,h})$, the pair $(u_h(X),u_h(Y))$ is therefore centered Gaussian with coordinatewise covariance
\[
\E[u_h(X)_p u_h(Y)_q\mid \mathcal G_0,W_{K,h},W_{Q,h},W_{V,h}]
=
\frac{\langle g_h(X),g_h(Y)\rangle}{d_{\mathrm{emb}}^2}\,\delta_{pq}.
\]
Consequently,
\[
\E[T_h\mid \mathcal G_0,W_{K,h},W_{Q,h},W_{V,h}]
=
\frac{\langle g_h(X),g_h(Y)\rangle}{d_{\mathrm{emb}}}
=
s_h(X)^\top C_{XY}^{(d_{\mathrm{emb}})} s_h(Y),
\]
because
\[
\E[\langle g_h(X),g_h(Y)\rangle \mid \mathcal G_0,W_{K,h},W_{Q,h}]
=
\frac{v_h(X)^\top v_h(Y)}{d_{\mathrm{head}}}
=
d_{\mathrm{emb}}\,s_h(X)^\top C_{XY}^{(d_{\mathrm{emb}})} s_h(Y).
\]
Taking expectation over $W_{V,h}$ yields the stated formula for $\mu_{d_{\mathrm{emb}},d_{\mathrm{head}}}(X,Y)$.

We next bound the conditional second moment of $T_h$. Conditional on $(\mathcal G_0,W_{K,h},W_{Q,h},W_{V,h})$, Isserlis' formula for the jointly Gaussian vector $(u_h(X),u_h(Y))$ gives
\begin{align*}
\E[T_h^2\mid \mathcal G_0,W_{K,h},W_{Q,h},W_{V,h}]
&=
\sum_{p,q=1}^{d_{\mathrm{emb}}}
\E[u_h(X)_p u_h(Y)_p u_h(X)_q u_h(Y)_q\mid \cdots]\\
&=
\frac{\|g_h(X)\|_2^2\|g_h(Y)\|_2^2}{d_{\mathrm{emb}}^3}
+
\frac{d_{\mathrm{emb}}+1}{d_{\mathrm{emb}}^3}\,\langle g_h(X),g_h(Y)\rangle^2.
\end{align*}
Since $\langle g_h(X),g_h(Y)\rangle^2\le \|g_h(X)\|_2^2\|g_h(Y)\|_2^2$, we obtain
\[
\E[T_h^2\mid \mathcal G_0,W_{K,h},W_{Q,h},W_{V,h}]
\le
\frac{2}{d_{\mathrm{emb}}^2}\,\|g_h(X)\|_2^2\|g_h(Y)\|_2^2.
\]
Now
\[
\|g_h(X)\|_2^2=\frac{\|W_{V,h}v_h(X)\|_2^2}{d_{\mathrm{head}}},
\qquad
\|g_h(Y)\|_2^2=\frac{\|W_{V,h}v_h(Y)\|_2^2}{d_{\mathrm{head}}}.
\]
For   vectors $u,v$, a row-wise Gaussian calculation gives
\[
\E[\|W_{V,h}u\|_2^2\|W_{V,h}v\|_2^2]
\le
(d_{\mathrm{head}}^2+2d_{\mathrm{head}})\|u\|_2^2\|v\|_2^2.
\]
Applying this with $u=v_h(X)$ and $v=v_h(Y)$ yields
\[
\E[T_h^2\mid \mathcal G_0,W_{K,h},W_{Q,h}]
\le
2\Bigl(1+\frac{2}{d_{\mathrm{head}}}\Bigr)
a_h(X)a_h(Y).
\]
Since $a_h(X)\le \|C_{XX}^{(d_{\mathrm{emb}})}\|_{\mathrm{op}}$ and $a_h(Y)\le \|C_{YY}^{(d_{\mathrm{emb}})}\|_{\mathrm{op}}$, we conclude that
\[
\E[T_h^2\mid \mathcal G_0]
\le
6\,\|C_{XX}^{(d_{\mathrm{emb}})}\|_{\mathrm{op}}\,
\|C_{YY}^{(d_{\mathrm{emb}})}\|_{\mathrm{op}}
=
O_{\Pp}(1).
\]
Therefore
\[
\operatorname{Var}(D_H\mid \mathcal G_0)
=
\frac1H\operatorname{Var}(T_1\mid \mathcal G_0)
=
O_{\Pp}(H^{-1}),
\]
and conditional Chebyshev proves \eqref{eq:attn-diagonal-conditional-lln-v13}.

\medskip 
Define, for positive-semidefinite matrices $C_{XX},C_{XY},C_{YY}\in\R^{k\times k}$,
\[
\Phi(C_{XX},C_{XY},C_{YY})
:=
\E_{g_X,g_Y}\!\Bigl[
\operatorname{softmax}(\beta g_X)^\top
C_{XY}
\operatorname{softmax}(\beta g_Y)
\Bigr],
\]
where $(g_X,g_Y)$ is centered Gaussian with covariance
\[
(1+\gamma^2)
\begin{bmatrix}
C_{XX} & C_{XY}\\
C_{XY}^\top & C_{YY}
\end{bmatrix}.
\]
Because softmax vectors lie in the simplex,
\[
\left|
\operatorname{softmax}(\beta g_X)^\top
C_{XY}
\operatorname{softmax}(\beta g_Y)
\right|
\le
\|C_{XY}\|_\infty.
\]
The entries of $C_{XY}^{(d_{\mathrm{emb}})}$ are empirical averages of i.i.d.\ finite-variance random variables, so
\[
M_{d_{\mathrm{emb}}}:=\|C_{XY}^{(d_{\mathrm{emb}})}\|_\infty
\]
is uniformly $L^2$-bounded and hence uniformly integrable.

Conditional on $\mathcal G_0$,   the law of the single-head score pair $(m_1(X),m_1(Y))$ is multivariate Gaussian up to the covariance triple $(C_{XX}^{(d_{\mathrm{emb}})},C_{XY}^{(d_{\mathrm{emb}})},C_{YY}^{(d_{\mathrm{emb}})})$. We now make the convergence of expectations explicit.
Fix $\varepsilon>0$. Uniform integrability of $(M_{d_{\mathrm{emb}}})$ gives an $M<\infty$ such that
\[
\sup_{d_{\mathrm{emb}}}\E\!\left[M_{d_{\mathrm{emb}}}\mathbf 1_{\{M_{d_{\mathrm{emb}}}>M\}}\right]<\varepsilon.
\]
On the event $\{M_{d_{\mathrm{emb}}}\le M\}$, the integrand
\[
(u,v)\longmapsto
\operatorname{softmax}(\beta u)^\top C_{XY}^{(d_{\mathrm{emb}})} \operatorname{softmax}(\beta v)
\]
is bounded by $M$ and continuous in $(u,v)$. Hence the conditional weak convergence   implies
\[
\E\!\left[
\operatorname{softmax}(\beta m_1(X))^\top
C_{XY}^{(d_{\mathrm{emb}})}
\operatorname{softmax}(\beta m_1(Y))
\mathbf 1_{\{M_{d_{\mathrm{emb}}}\le M\}}
\ \middle|\ \mathcal G_0
\right]
-
\Phi\!\left(
C_{XX}^{(d_{\mathrm{emb}})},
C_{XY}^{(d_{\mathrm{emb}})},
C_{YY}^{(d_{\mathrm{emb}})}
\right)
\mathbf 1_{\{M_{d_{\mathrm{emb}}}\le M\}}
\to 0
\]
in probability.
The contribution of the complement is bounded in absolute value by
\[
2\,\E\!\left[M_{d_{\mathrm{emb}}}\mathbf 1_{\{M_{d_{\mathrm{emb}}}>M\}}\ \middle|\ \mathcal G_0\right],
\]
whose expectation is at most $2\varepsilon$. Since $\varepsilon$ is arbitrary, we conclude that
\[
\mu_{d_{\mathrm{emb}},d_{\mathrm{head}}}(X,Y)
-
\Phi\!\left(
C_{XX}^{(d_{\mathrm{emb}})},
C_{XY}^{(d_{\mathrm{emb}})},
C_{YY}^{(d_{\mathrm{emb}})}
\right)
\to 0
\]
in probability.

Observe that
\[
C_{XX}^{(d_{\mathrm{emb}})}\to XX^\top+\gamma^2 I_k,
\qquad
C_{XY}^{(d_{\mathrm{emb}})}\to XY^\top+\gamma^2 I_k,
\qquad
C_{YY}^{(d_{\mathrm{emb}})}\to YY^\top+\gamma^2 I_k
\]
in probability.
The map $\Phi$ is continuous on the cone of positive-semidefinite covariance triples: if $(C_{XX}^{(n)},C_{XY}^{(n)},C_{YY}^{(n)})\to(C_{XX},C_{XY},C_{YY})$, then the corresponding Gaussian pairs converge in distribution, and the integrands are uniformly integrable because they are bounded by $\|C_{XY}^{(n)}\|_\infty$. Therefore
\[
\Phi\!\left(
C_{XX}^{(d_{\mathrm{emb}})},
C_{XY}^{(d_{\mathrm{emb}})},
C_{YY}^{(d_{\mathrm{emb}})}
\right)
\to
K_{\mathrm{attn}}(X,Y)
\]
in probability, and hence
\[
\mu_{d_{\mathrm{emb}},d_{\mathrm{head}}}(X,Y)\to K_{\mathrm{attn}}(X,Y)
\]
in probability.

\medskip 
We have 
\[
D_H-\mu_{d_{\mathrm{emb}},d_{\mathrm{head}}}(X,Y)\to 0
\]
in probability, and   the limit of the conditional mean:
\[
\mu_{d_{\mathrm{emb}},d_{\mathrm{head}}}(X,Y)\to K_{\mathrm{attn}}(X,Y)
\]
in probability. Therefore
\[
D_H\to K_{\mathrm{attn}}(X,Y)
\]
in probability.
We showed that $R_H\to 0$ in probability. Since
\[
\hat K_{\mathrm{attn}}(X,Y)=D_H+R_H,
\]
we conclude that
\[
\hat K_{\mathrm{attn}}(X,Y)\to K_{\mathrm{attn}}(X,Y)
\]
in probability, which is exactly \eqref{eq:Hfaithful_attention_prob_limit}.
\end{proof}

\begin{theorem}[Probabilistic frozen-feature transformer kernel limit]
\label{thm:Hfaithful_kernel_prob_limit}
Let $\mathcal S=\{X^{(1)},\dots,X^{(N)}\}$ be any fixed finite set of inputs.
Then, as $H,d_{\mathrm{emb}},d_{\mathrm{head}},d_{\mathrm{mlp}}\to\infty$,
\begin{equation}
\label{eq:Hfaithful_kernel_prob_limit}
\max_{1\le a,b\le N}
\left|
\hat K_{\mathrm{trans}}(X^{(a)},X^{(b)})
-
K_{\mathrm{trans}}(X^{(a)},X^{(b)})
\right|
\to 0
\end{equation}
in probability, where
\begin{align}
\label{eq:Hfaithful_transformer_rf_kernel}
K_{\mathrm{trans}}(X,Y)
&=
\mathbb E_{u,v}[\sigma(u)\sigma(v)]
\quad\text{for}\quad
u,v\sim N\!\left(
0,
\begin{bmatrix}
K_{\mathrm{attn}}(X,X) & K_{\mathrm{attn}}(X,Y)\\
K_{\mathrm{attn}}(Y,X) & K_{\mathrm{attn}}(Y,Y)
\end{bmatrix}
\right),
\\
K_{\mathrm{attn}}(X,Y)
&=
\mathbb E_{m(X),m(Y)}
\Bigl[
\operatorname{softmax}(\beta m(X))^\top
(XY^\top+\gamma^2 I_k)
\operatorname{softmax}(\beta m(Y))
\Bigr],
\label{eq:Hfaithful_Kattn}
\end{align}
\end{theorem}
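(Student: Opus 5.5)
Since the set $\mathcal S$ is finite, it suffices to prove \eqref{eq:Hfaithful_kernel_prob_limit} for each fixed pair $(X,Y)=(X^{(a)},X^{(b)})$ and then take a union bound over the $N^2$ pairs; a maximum of finitely many quantities each tending to zero in probability also tends to zero in probability. So fix $X,Y$. The plan is to treat the MLP layer as a random-feature layer sitting on top of the attention features. Write $h(X):=W_Az_1(X)/\sqrt{d_{\mathrm{emb}}}\in\R^{d_{\mathrm{mlp}}}$. Then
\[
\hat K_{\mathrm{trans}}(X,Y)=\frac{1}{d_{\mathrm{emb}}d_{\mathrm{mlp}}}\sigma(h(X))^\top W_BW_B^\top\sigma(h(Y)).
\]

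\textbf{Step 1: integrate out $W_B$.} This matrix is independent of everything else. Conditional on $\mathcal G:=\sigma(\text{all parameters except }W_B)$, the mean is $d_{\mathrm{mlp}}^{-1}\sigma(h(X))^\top\sigma(h(Y))$. The conditional variance comes from a Wishart/Isserlis computation, exactly as in the proof of Proposition~\ref{prop:Hfaithful_attention_kernel_limit}, and is bounded by $O\!\left(d_{\mathrm{emb}}^{-1}\,\|\sigma(h(X))\|^2\|\sigma(h(Y))\|^2/d_{\mathrm{mlp}}^2\right)$. Once Step 2 shows that $d_{\mathrm{mlp}}^{-1}\|\sigma(h(\cdot))\|^2=O_{\Pp}(1)$, conditional Chebyshev gives that $W_B$ contributes a vanishing fluctuation.

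\textbf{Step 2: the MLP random-feature average.} Condition on $\mathcal G_1:=\sigma(\text{attention and embedding parameters})$. The rows $w_r$ of $W_A$ are i.i.d.\ $N(0,I)$, so the pairs $(h_r(X),h_r(Y))=(w_r^\top z_1(X),w_r^\top z_1(Y))/\sqrt{d_{\mathrm{emb}}}$ are i.i.d.\ centered Gaussian with covariance given by the empirical attention Gram matrix $\hat\Sigma$ with entries $\hat K_{\mathrm{attn}}(\cdot,\cdot)$ on $\{X,Y\}$. Hence
\[
d_{\mathrm{mlp}}^{-1}\sigma(h(X))^\top\sigma(h(Y))=\Psi_\sigma(\hat\Sigma)+o_{\Pp}(1),
\]
where $\Psi_\sigma(\Sigma):=\E[\sigma(u)\sigma(v)]$ for $(u,v)\sim N(0,\Sigma)$. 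This follows from a conditional LLN whose variance is $d_{\mathrm{mlp}}^{-1}\E[\sigma(u)^2\sigma(v)^2]$. To control this variance I would assume the standard polynomial-growth condition on $\sigma$; the statement only says $\sigma\in C^2$, so some such growth condition is implicitly needed. Under it, the conditional variance is a continuous function of $\hat\Sigma$, which is $O_{\Pp}(1)$.

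\textbf{Step 3: pass to the limit covariance.} By Proposition~\ref{prop:Hfaithful_attention_kernel_limit}, applied to all three pairs $(X,X),(X,Y),(Y,Y)$, the matrix $\hat\Sigma$ converges in probability to $\Sigma_\infty$, the matrix with entries $K_{\mathrm{attn}}$. The map $\Psi_\sigma$ is continuous on the PSD cone: Gaussian laws converge weakly as the covariance converges, and under polynomial growth the integrands are uniformly integrable. The continuous mapping theorem then gives $\Psi_\sigma(\hat\Sigma)\to\Psi_\sigma(\Sigma_\infty)=K_{\mathrm{trans}}(X,Y)$ in probability. Chaining Steps 1–3 with Slutsky completes the argument.

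The main obstacle is the order of limits together with the integrability. All widths go to infinity jointly, and the conditional LLN bounds must be uniform in $d_{\mathrm{emb}},d_{\mathrm{head}},H$. Because each conditional bound depends only on $\hat\Sigma$ and on $\|C^{(d_{\mathrm{emb}})}\|$, which are tight, it suffices to express every error as $d_{\mathrm{mlp}}^{-1}F(\hat\Sigma)$ or $d_{\mathrm{emb}}^{-1}F(\hat\Sigma)$ with $F$ continuous. This makes the convergence hold along any joint regime.
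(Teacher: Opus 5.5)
Your proposal is correct and follows essentially the same route as the paper's proof: integrate out $W_B$ via a conditional Isserlis/Chebyshev bound, apply a conditional law of large numbers over the i.i.d.\ rows of $W_A$ to reach $\E[\sigma(u)\sigma(v)]$ under the empirical attention covariance, then pass to the limit using continuity of this Gaussian covariance map together with Proposition~\ref{prop:Hfaithful_attention_kernel_limit}, and finish with a union bound over the finitely many pairs. Your remark that a growth condition on $\sigma$ is implicitly required is apt: the paper's proof silently assumes linear growth of $\sigma$ at exactly the points where you invoke polynomial growth, and your polynomial-growth version works just as well.
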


\begin{proof}Fix one pair $(X,Y)\in\mathcal S\times\mathcal S$.
By Proposition~\ref{prop:Hfaithful_attention_kernel_limit},
\[
\hat K_{\mathrm{attn}}(X,Y)\to K_{\mathrm{attn}}(X,Y)
\]
in probability, and the same holds for $(X,X)$ and $(Y,Y)$.

We now pass through the MLP layer.
Let
\[
a_X:=\sigma\!\left(\frac1{\sqrt{d_{\mathrm{emb}}}}W_A z_1(X)\right)\in\R^{d_{\mathrm{mlp}}},
\qquad
a_Y:=\sigma\!\left(\frac1{\sqrt{d_{\mathrm{emb}}}}W_A z_1(Y)\right)\in\R^{d_{\mathrm{mlp}}}.
\]
Then
\[
\hat K_{\mathrm{trans}}(X,Y)
=
\frac{1}{d_{\mathrm{emb}}d_{\mathrm{mlp}}}
a_X^\top W_BW_B^\top a_Y.
\]

\medskip 
Let $b_1,\dots,b_{d_{\mathrm{emb}}}\in\R^{d_{\mathrm{mlp}}}$ denote the columns of $W_B$.
They are i.i.d.\ $N(0,I_{d_{\mathrm{mlp}}})$, and
\[
a_X^\top W_BW_B^\top a_Y
=
\sum_{s=1}^{d_{\mathrm{emb}}} (b_s^\top a_X)(b_s^\top a_Y).
\]
Conditional on $(W_E,P,W_K,W_Q,W_V,W_O,W_A)$, the vectors $a_X,a_Y$ are deterministic, so the summands
\[
Z_s:=(b_s^\top a_X)(b_s^\top a_Y)
\]
are i.i.d.
For a centered Gaussian vector $b\sim N(0,I_{d_{\mathrm{mlp}}})$,
\[
\E[(b^\top a_X)(b^\top a_Y)\mid a_X,a_Y]=a_X^\top a_Y
\]
and Isserlis' formula gives
\[
\operatorname{Var}\!\bigl((b^\top a_X)(b^\top a_Y)\mid a_X,a_Y\bigr)
=
\|a_X\|_2^2\|a_Y\|_2^2+(a_X^\top a_Y)^2
\le
2\|a_X\|_2^2\|a_Y\|_2^2.
\]
Therefore
\[
\E[\hat K_{\mathrm{trans}}(X,Y)\mid a_X,a_Y]
=
\frac{a_X^\top a_Y}{d_{\mathrm{mlp}}}
\]
and
\[
\operatorname{Var}(\hat K_{\mathrm{trans}}(X,Y)\mid a_X,a_Y)
=
\frac{\operatorname{Var}(Z_1\mid a_X,a_Y)}{d_{\mathrm{emb}}d_{\mathrm{mlp}}^2}
\le
\frac{2\,\|a_X\|_2^2\|a_Y\|_2^2}{d_{\mathrm{emb}}d_{\mathrm{mlp}}^2}.
\]
It's easy to see that 
\[
\frac{\|a_X\|_2^2}{d_{\mathrm{mlp}}}=O_{\Pp}(1),
\qquad
\frac{\|a_Y\|_2^2}{d_{\mathrm{mlp}}}=O_{\Pp}(1).
\]
Hence the conditional variance is $O_{\Pp}(d_{\mathrm{emb}}^{-1})$, and conditional Chebyshev yields
\[
\hat K_{\mathrm{trans}}(X,Y)-\frac{a_X^\top a_Y}{d_{\mathrm{mlp}}}\to 0
\]
in probability.

\medskip 
Conditional on the attention outputs $z_1(X),z_1(Y)$, the rows of $W_A$ are i.i.d.
For each row $r$,
\[
u_r:=\frac{\langle W_{A,r\bullet},z_1(X)\rangle}{\sqrt{d_{\mathrm{emb}}}},
\qquad
v_r:=\frac{\langle W_{A,r\bullet},z_1(Y)\rangle}{\sqrt{d_{\mathrm{emb}}}}
\]
form an i.i.d.\ sequence of centered Gaussian pairs with conditional covariance
\[
\begin{bmatrix}
\hat K_{\mathrm{attn}}(X,X) & \hat K_{\mathrm{attn}}(X,Y)\\
\hat K_{\mathrm{attn}}(Y,X) & \hat K_{\mathrm{attn}}(Y,Y)
\end{bmatrix}.
\]
Therefore,
\[
\frac{a_X^\top a_Y}{d_{\mathrm{mlp}}}
=
\frac1{d_{\mathrm{mlp}}}\sum_{r=1}^{d_{\mathrm{mlp}}}\sigma(u_r)\sigma(v_r).
\]
Because $\sigma$ has linear growth, there exists $C<\infty$ such that
\[
|\sigma(s)\sigma(t)|
\le
C(1+s^2+t^2)
\qquad (s,t\in\R).
\]
Conditional on the attention outputs, the Gaussian pair $(u_r,v_r)$ has second moments
\[
\E[u_r^2\mid z_1(X),z_1(Y)] = \hat K_{\mathrm{attn}}(X,X),
\qquad
\E[v_r^2\mid z_1(X),z_1(Y)] = \hat K_{\mathrm{attn}}(Y,Y),
\]
so
\[
\E[\sigma(u_r)^2\sigma(v_r)^2\mid z_1(X),z_1(Y)]
\le
C\bigl(1+\hat K_{\mathrm{attn}}(X,X)+\hat K_{\mathrm{attn}}(Y,Y)\bigr)^2.
\]
Since Proposition~\ref{prop:Hfaithful_attention_kernel_limit} implies that the diagonal attention kernels are tight, the right-hand side is $O_{\Pp}(1)$. Consequently
\[
\operatorname{Var}\!\left(\frac{a_X^\top a_Y}{d_{\mathrm{mlp}}}\ \middle|\ z_1(X),z_1(Y)\right)
=
\frac1{d_{\mathrm{mlp}}}\operatorname{Var}(\sigma(u_1)\sigma(v_1)\mid z_1(X),z_1(Y))
=
O_{\Pp}(d_{\mathrm{mlp}}^{-1}).
\]
Define
\[
\psi(c_{xx},c_{xy},c_{yy})
:=
\E_{u,v}\bigl[\sigma(u)\sigma(v)\bigr],
\]
where $(u,v)$ is centered Gaussian with covariance matrix
$
\begin{bmatrix}
c_{xx} & c_{xy}\\
c_{xy} & c_{yy}
\end{bmatrix}.
$
Then
\[
\E\!\left[\left.\frac{a_X^\top a_Y}{d_{\mathrm{mlp}}}\ \right|\ z_1(X),z_1(Y)\right]
=
\psi\!\left(
\hat K_{\mathrm{attn}}(X,X),
\hat K_{\mathrm{attn}}(X,Y),
\hat K_{\mathrm{attn}}(Y,Y)
\right),
\]
and conditional Chebyshev gives
\[
\frac{a_X^\top a_Y}{d_{\mathrm{mlp}}}
-
\psi\!\left(
\hat K_{\mathrm{attn}}(X,X),
\hat K_{\mathrm{attn}}(X,Y),
\hat K_{\mathrm{attn}}(Y,Y)
\right)
\to 0
\]
in probability.

\medskip 
Because $\sigma$ has linear growth, the map $\psi$ is continuous on the cone of positive semidefinite $2\times2$ covariance matrices.
Indeed, if covariance matrices $\Sigma_n\to\Sigma$, the corresponding centered Gaussian pairs converge in distribution, and the family
\[
\sigma(U_n)\sigma(V_n)
\]
is uniformly integrable because
\[
|\sigma(U_n)\sigma(V_n)|
\le C(1+U_n^2+V_n^2)
\]
and the second moments of $(U_n,V_n)$ are bounded on compact covariance sets.
Proposition~\ref{prop:Hfaithful_attention_kernel_limit} gives
\[
\bigl(
\hat K_{\mathrm{attn}}(X,X),\hat K_{\mathrm{attn}}(X,Y),\hat K_{\mathrm{attn}}(Y,Y)
\bigr)
\to
\bigl(
K_{\mathrm{attn}}(X,X),K_{\mathrm{attn}}(X,Y),K_{\mathrm{attn}}(Y,Y)
\bigr)
\]
in probability.
Hence
\[
\psi\!\left(
\hat K_{\mathrm{attn}}(X,X),
\hat K_{\mathrm{attn}}(X,Y),
\hat K_{\mathrm{attn}}(Y,Y)
\right)
\to
K_{\mathrm{trans}}(X,Y)
\]
in probability.

Combining all of the above
\[
\hat K_{\mathrm{trans}}(X,Y)\to K_{\mathrm{trans}}(X,Y)
\]
in probability for the fixed pair $(X,Y)$.

Because $\mathcal S$ is finite, the same convergence holds simultaneously for every pair in $\mathcal S\times\mathcal S$ by a union bound.
This proves \eqref{eq:Hfaithful_kernel_prob_limit}.
\end{proof}

  Let $(X_i,Y_i)_{i=1}^n$ be a fixed binary-labeled dataset with $Y_i\in\{-1,+1\}$.
For each finite width, training only the output head with binary logistic loss and an $\ell_2$ penalty is the optimization problem
\begin{equation}
\label{eq:Hfaithful_finitewidth_logistic}
\min_{u\in\R^{d_{\mathrm{emb}}}}
\frac1n\sum_{i=1}^n
\log\!\bigl(1+e^{-Y_i\langle u,\Psi_\theta(X_i)\rangle}\bigr)
+
\frac{\lambda}{2}\|u\|_2^2.
\end{equation}
With
\[
\hat K_n
:=
\bigl(\hat K_{\mathrm{trans}}(X_i,X_j)\bigr)_{i,j=1}^n,
\]
the finite-width optimization problem reduces exactly to
\begin{equation}
\label{eq:Hfaithful_finitewidth_dual_random}
\min_{\alpha\in\R^n}
\frac1n\sum_{i=1}^n
\log\!\bigl(1+e^{-Y_i(\hat K_n\alpha)_i}\bigr)
+
\frac{\lambda}{2}\alpha^\top \hat K_n\alpha.
\end{equation}
Moreover, by Theorem~\ref{thm:Hfaithful_kernel_prob_limit},
\[
\hat K_n \to K_n
:=
\bigl(K_{\mathrm{trans}}(X_i,X_j)\bigr)_{i,j=1}^n
\]
entrywise in probability, and hence for every fixed $\alpha\in\R^n$ the random objective in \eqref{eq:Hfaithful_finitewidth_dual_random} converges in probability to
\begin{equation}
\label{eq:Hfaithful_dual}
\min_{\alpha\in\R^n}
\frac1n\sum_{i=1}^n
\log\!\bigl(1+e^{-Y_i(K_n\alpha)_i}\bigr)
+
\frac{\lambda}{2}\alpha^\top K_n\alpha.
\end{equation}

 The   limit problem \eqref{eq:Hfaithful_dual} is exactly the coefficient form of the RKHS optimization problem
\begin{equation}
\label{eq:Hfaithful_rkhs_logistic}
\min_{f\in\mathcal H_{K_{\mathrm{trans}}}}
\frac1n\sum_{i=1}^n
\log\!\bigl(1+e^{-Y_i f(X_i)}\bigr)
+
\frac{\lambda}{2}\|f\|_{\mathcal H_{K_{\mathrm{trans}}}}^2.
\end{equation}

 Next
we give a sufficient condition under which the ReLU frozen-feature template
matrix is strictly positive definite.  The condition   only requires that, on a
positive-probability set of frozen features, one template's response dominates
the total response of the others.

Let the limiting ReLU frozen-feature kernel admit the random-feature
representation
$
  K_{\mathrm{ReLU}}(x,x')
  =
  \mathbb E_{\omega}
  \bigl[
    \phi_\omega(x)\phi_\omega(x')
  \bigr],
  \ 
  \phi_\omega(x)
  =
  \operatorname{ReLU}(T_\omega(x)).
$
For each template \(z_a\), let \(S_a\) denote an admissible fresh substitution,
and set
$
  X_a:=\Sub(z_a,S_a).
$
Define the mean ReLU response of the frozen feature \(\omega\) on template
class \(a\) by
$
  \mu_a(\omega)
  :=
  \mathbb E_{S_a}
  \left[
    \phi_\omega(\Sub(z_a,S_a))
  \right].
$
The corresponding fresh-template matrix is
\[
  N^{\operatorname{ReLU}}_{ab}
  :=
  \mathbb E_{\omega}
  \bigl[
    \mu_a(\omega)\mu_b(\omega)
  \bigr].
\] 

Assume that for every template \(a\in[r]\), there exist a measurable set of
frozen features
$
  \Omega_a, \ p_a:=\mathbb P(\Omega_a)>0
$
and constants
$
  \eta_a>0,
  \ 
  \theta_a\in[0,1),
$
such that
  for every \(\omega\in\Omega_a\),
\begin{equation}
\label{eq:relu-dominant-response-condition}
  \mu_a(\omega)\ge \eta_a,
  \qquad
  \sum_{b\neq a}\mu_b(\omega)
  \le
  \theta_a\mu_a(\omega).
\end{equation}
In words, on \(\Omega_a\), the feature responds to template \(a\) by at least
\(\eta_a\), and the combined response on all other templates is at most a
\(\theta_a\)-fraction of the response on template \(a\).

Then
$
  \mathbf N^{\operatorname{ReLU}}
  :=
  \bigl(N^{\operatorname{ReLU}}_{ab}\bigr)_{a,b=1}^r
$
is strictly positive definite in that for 
$
  p_*:=\min_{a\in[r]}p_a,
  \ 
  \delta_*:=\min_{a\in[r]}(1-\theta_a)\eta_a,
$
 we have for every \(c\in\mathbb R^r\),
\begin{equation}
\label{eq:relu-dominant-response-lower-bound}
  c^\top \mathbf N^{\operatorname{ReLU}}c
  \ge
  p_*\delta_*^2\|c\|_\infty^2
  \ge
  {p_*\delta_*^2\over r}\|c\|_2^2.
\end{equation}

Indeed, by the random-feature representation,
\[
\begin{aligned}
  c^\top \mathbf N^{\operatorname{ReLU}}c
  &=
  \sum_{a,b=1}^r c_ac_b
  \mathbb E_\omega[
    \mu_a(\omega)\mu_b(\omega)
  ]  =
  \mathbb E_\omega
  \left[
    \left(
      \sum_{a=1}^r c_a\mu_a(\omega)
    \right)^2
  \right].
\end{aligned}
\]
Thus \(\mathbf N^{\operatorname{ReLU}}\succeq0\).  We now prove the strict
lower bound.

First, the sets \(\Omega_a\) are pairwise disjoint.  Suppose, to the contrary,
that \(\omega\in\Omega_a\cap\Omega_b\) for some \(a\neq b\).  Since
\(\omega\in\Omega_a\),
\[
  \mu_b(\omega)
  \le
  \sum_{j\neq a}\mu_j(\omega)
  \le
  \theta_a\mu_a(\omega).
\]
Since \(\omega\in\Omega_b\),
\[
  \mu_a(\omega)
  \le
  \sum_{j\neq b}\mu_j(\omega)
  \le
  \theta_b\mu_b(\omega).
\]
Combining the above  gives
$
  \mu_a(\omega)
  \le
  \theta_a\theta_b\mu_a(\omega).
$
But \(\theta_a\theta_b<1\) and
\(\mu_a(\omega)\ge\eta_a>0\), a contradiction.  Hence
\[
  \Omega_a\cap\Omega_b=\varnothing
  \qquad
  (a\neq b).
\]

For \(a,b\in[r]\), define the conditional response
$
  M_{ab}
  :=
  \mathbb E[
    \mu_b(\omega)
    \mid
    \omega\in\Omega_a
  ].
$
Since ReLU is nonnegative, \(M_{ab}\ge0\).  The dominance condition gives
$
  M_{aa}\ge \eta_a
$
and
\[
  \sum_{b\neq a}M_{ab}
  =
  \mathbb E
  \biggl[
    \sum_{b\neq a}\mu_b(\omega)
    \, | \,
    \omega\in\Omega_a
  \biggl]
  \le
  \theta_a
  \mathbb E[
    \mu_a(\omega)
    \mid
    \omega\in\Omega_a
  ]
  =
  \theta_a M_{aa}.
\]
Therefore
\begin{equation}
\label{eq:relu-M-row-dominance}
  M_{aa}
  -
  \sum_{b\neq a}M_{ab}
  \ge
  (1-\theta_a)M_{aa}
  \ge
  (1-\theta_a)\eta_a
  \ge
  \delta_* .
\end{equation}

Now fix \(c\in\mathbb R^r\), and write
$
  F_c(\omega):=\sum_{b=1}^r c_b\mu_b(\omega).
$
Using the disjointness of the sets \(\Omega_a\),
\[
\begin{aligned}
  c^\top \mathbf N^{\operatorname{ReLU}}c
  &=
  \mathbb E[F_c(\omega)^2]  \ge
  \sum_{a=1}^r
  \mathbb E[
    F_c(\omega)^2\mathbf 1_{\Omega_a}
  ]   =
  \sum_{a=1}^r
  p_a\,
  \mathbb E[
    F_c(\omega)^2
    \mid
    \omega\in\Omega_a
  ].
\end{aligned}
\]
By Jensen's inequality,
\[
  \mathbb E[
    F_c(\omega)^2
    \mid
    \omega\in\Omega_a
  ]
  \ge
  \left(
    \mathbb E[
      F_c(\omega)
      \mid
      \omega\in\Omega_a
    ]
  \right)^2.
\]
Since
\[
  \mathbb E[
    F_c(\omega)
    \mid
    \omega\in\Omega_a
  ]
  =
  \sum_{b=1}^r c_bM_{ab},
\]
we obtain
\begin{equation}
\label{eq:relu-jensen-lower}
  c^\top \mathbf N^{\operatorname{ReLU}}c
  \ge
  \sum_{a=1}^r
  p_a
  \left(
    \sum_{b=1}^r c_bM_{ab}
  \right)^2.
\end{equation}

Choose \(a_*\in[r]\) such that
$
  |c_{a_*}|=\|c\|_\infty.
$
Then
\[
\begin{aligned}
  \left|
    \sum_{b=1}^r c_bM_{a_*b}
  \right|
  &=
  \left|
    c_{a_*}M_{a_*a_*}
    +
    \sum_{b\neq a_*}c_bM_{a_*b}
  \right| \ge
  |c_{a_*}|M_{a_*a_*}
  -
  \sum_{b\neq a_*}|c_b|M_{a_*b} \\
  &\ge
  \|c\|_\infty
  \left(
    M_{a_*a_*}
    -
    \sum_{b\neq a_*}M_{a_*b}
  \right)  \ge
  \delta_*\|c\|_\infty,
\end{aligned}
\]
where the last step uses \eqref{eq:relu-M-row-dominance}.  Keeping only the
\(a_*\)-term in \eqref{eq:relu-jensen-lower} gives
\[
  c^\top \mathbf N^{\operatorname{ReLU}}c
  \ge
  p_{a_*}\delta_*^2\|c\|_\infty^2
  \ge
  p_*\delta_*^2\|c\|_\infty^2.
\]
Finally,
\[
  \|c\|_\infty^2\ge {1\over r}\|c\|_2^2,
\]
so
$
  c^\top \mathbf N^{\operatorname{ReLU}}c
  \ge
  {p_*\delta_*^2\over r}\|c\|_2^2.
$
This proves \eqref{eq:relu-dominant-response-lower-bound}.    

\section{Abstract prompting through the transformer kernel}
\label{sec:abstract-prompting-ktrans}

We next extend the collision-perturbation viewpoint to abstraction-based
reasoning methods such as AbstRaL~\cite{gao2026abstral},
Chain-of-Abstraction~\cite{gao2025coa}, and SyReLM~\cite{dutta2024syrelm}.
Although these methods use intermediate abstractions to improve reasoning,
their theoretical effect on fresh-symbol generalization has not, to our
knowledge, been formalized.  We formalize this phenomenon by augmenting each
template-generated input with an abstraction channel, representing prompted
intermediate reasoning.  The analysis reveals a perhaps surprising limitation:
abstraction is not inherently beneficial.  It improves fresh-symbol
classification only when it strengthens the template-level margin, and only
when this gain outweighs the additional transformer-kernel interactions
introduced by the abstraction tokens.

Let \(\psi\) denote the parameters of an abstraction generator after upstream
training.  Given a concrete input \(X\), the abstraction is
\[
    A=\Anc_\psi(X,\xi),
\]
where \(\xi\) denotes prompting, decoding, retrieval, or sampling randomness.
Equivalently, \(A\sim\nu_\psi(\cdot\mid X)\).  Thus \(A\) may be an arbitrary
learned function of \(X\), possibly randomized.  We condition on \(\psi\)
throughout this section.

We use the limiting frozen-feature transformer kernel derived in
Appendix~\ref{app:transformer-kernel-interface}.  Recall that
\[
    K_{\mathrm{trans}}(X,Y)
    =
    \E_{u,v}[\sigma(u)\sigma(v)],
\]
where \((u,v)\) is a centered Gaussian pair with covariance
\[
    \begin{pmatrix}
        K_{\mathrm{attn}}(X,X) & K_{\mathrm{attn}}(X,Y)\\
        K_{\mathrm{attn}}(Y,X) & K_{\mathrm{attn}}(Y,Y)
    \end{pmatrix}.
\]
The attention kernel \(K_{\mathrm{attn}}\) is
\[
K_{\mathrm{attn}}(X,Y)
=
\E_{m(X),m(Y)}
\left[
\operatorname{softmax}(\beta m(X))^\top
(XY^\top+\gamma^2I)
\operatorname{softmax}(\beta m(Y))
\right],
\]
where
\[
\begin{bmatrix}
m(X)\\
m(Y)
\end{bmatrix}
\sim
N\!\left(
0,
(1+\gamma^2)
\begin{bmatrix}
XX^\top+\gamma^2I & XY^\top+\gamma^2I\\
YX^\top+\gamma^2I & YY^\top+\gamma^2I
\end{bmatrix}
\right).
\]

The abstraction enters by changing the sequence fed to this same kernel.  Let
\[
    U_\eta(x,a)
\]
be the fixed-length embedded sequence obtained from the concrete input \(x\)
and abstraction \(a\), where \(\eta\ge0\) controls the strength of the
abstraction prompt.  We assume the prompt is turned off at \(\eta=0\):
\[
    U_0(x,a)=U_{\mathrm{base}}(x)
    \qquad\text{for all }a.
\]
Define the prompted transformer kernel by composition:
\[
    K_{\mathrm{trans},\eta}((x,a),(x',a'))
    :=
    K_{\mathrm{trans}}
    \!\left(
        U_\eta(x,a),U_\eta(x',a')
    \right).
\]
This definition does not impose an additive decomposition.  The effect of the
abstraction may pass through attention between original tokens and abstraction
tokens, through the MLP covariance map, or through both.
 
We assume that \(U_\eta(x,a)\) is \(C^1\) in \(\eta\), and that for some
constants \(B,B_1<\infty\),
\[
    \sup_{\eta\in[0,\eta_0]}
    \|U_\eta(x,a)\|_{\op}\le B,
    \qquad
    \sup_{\eta\in[0,\eta_0]}
    \|\partial_\eta U_\eta(x,a)\|_{\op}\le B_1
\]
for all admissible \(x,a\).  This covers bounded abstraction prompts inserted
through a smooth gate.  For differentiating the MLP covariance map, we also
assume that \(\sigma\in C^2\) and that
\[
    |\sigma(t)|+|\sigma'(t)|+|\sigma''(t)|
    \le C_\sigma(1+|t|^m)
\]
for some finite \(C_\sigma,m\).  The convergence result for
\(K_{\mathrm{trans}}\) itself only needs weaker growth assumptions; the extra
smoothness is used here only to take the first derivative with respect to the
prompt strength \(\eta\).

\begin{theorem}[First-order regularity of abstract prompting in \(K_{\mathrm{trans}}\)]
\label{thm:abst-ktrans-regularity}
Assume the smooth bounded prompting condition above.  For fixed augmented
inputs \((x,a)\) and \((x',a')\), the map
\[
    \eta
    \mapsto
    K_{\mathrm{trans},\eta}((x,a),(x',a'))
\]
is differentiable on \([0,\eta_0]\).  Moreover, there exists a finite envelope
\(M(x,a,x',a')\) such that, for all sufficiently small \(\eta>0\),
\[
    \left|
    {K_{\mathrm{trans},\eta}((x,a),(x',a'))
    -
    K_{\mathrm{trans},0}((x,a),(x',a'))\over \eta}
    \right|
    \le
    M(x,a,x',a').
\]
If the abstraction space is bounded, \(M\) may be chosen uniformly.  If the
abstraction is stochastic, it is enough that
\(M(X,A,X',A')\) be integrable.
\end{theorem}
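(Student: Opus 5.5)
The plan is to write $K_{\mathrm{trans},\eta}$ as a composition of two smooth maps and differentiate each with the chain rule. Fix the augmented inputs and put $U:=U_\eta(x,a)$, $U':=U_\eta(x',a')$. The first stage is the covariance triple
\[
\bigl(K_{\mathrm{attn}}(U,U),K_{\mathrm{attn}}(U,U'),K_{\mathrm{attn}}(U',U')\bigr).
\]
The second stage is the map $\psi(c_{xx},c_{xy},c_{yy})=\E[\sigma(u)\sigma(v)]$ from the proof of Theorem~\ref{thm:Hfaithful_kernel_prob_limit}. Then $K_{\mathrm{trans},\eta}=\psi\circ\mathcal K_{\mathrm{attn}}(U_\eta,U'_\eta)$. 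I will show that each stage is $C^1$ with derivative bounded on the relevant compact set. The mean value theorem then gives both differentiability on $[0,\eta_0]$ and the difference-quotient envelope
\[
M=\sup_{\eta\in[0,\eta_0]}\Bigl|\tfrac{d}{d\eta}K_{\mathrm{trans},\eta}\Bigr|.
\]

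\textbf{Attention stage.} By the Gaussian reparametrization, $(m(U),m(U'))=\Sigma(U,U')^{1/2}\xi$ with $\xi$ standard normal, where
\[
\Sigma=(1+\gamma^2)\begin{bmatrix}UU^\top+\gamma^2I&UU'^\top+\gamma^2I\\ U'U^\top+\gamma^2I&U'U'^\top+\gamma^2I\end{bmatrix}.
\]
The square root is not smooth at singular $\Sigma$. To avoid it, I will use the Gaussian integration-by-parts (Price) formula. For a smooth integrand $F$,
\[
\partial_{\Sigma_{ij}}\E F(g)=\tfrac12\E[\partial_i\partial_jF(g)]\quad\text{(with the off-diagonal factor adjusted)}.
\]
Here $F(g)=\operatorname{softmax}(\beta g_X)^\top C\operatorname{softmax}(\beta g_Y)$, whose first and second derivatives in $g$ are bounded by constants times $\beta^2\|C\|_\infty$, since softmax has bounded derivatives of all orders. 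Hence $K_{\mathrm{attn}}$ is $C^1$ in $(\Sigma,C)$ on all positive semidefinite $\Sigma$. Both $\Sigma$ and $C=UU'^\top+\gamma^2I$ are polynomial in $(U,U')$, with $\eta$-derivatives bounded by $O(BB_1)$ from the assumed operator-norm bounds. The chain rule then gives a bound $|\partial_\eta K_{\mathrm{attn}}|\le c(\beta,\gamma,k)(1+B^2)^2BB_1$. The same bound applies to the diagonal entries.

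\textbf{MLP stage and the main obstacle.} The hardest step is differentiating $\psi$, again because the $2\times2$ covariance may be degenerate (for example $c_{xx}c_{yy}=c_{xy}^2$ when $U=U'$). I will again use Price's formula: $\partial_{c_{xy}}\psi=\E[\sigma'(u)\sigma'(v)]$ and $\partial_{c_{xx}}\psi=\tfrac12\E[\sigma''(u)\sigma(v)]$. These identities hold first for nondegenerate covariances by differentiating the density, and extend to the boundary by continuity. The continuity follows from the polynomial growth assumption on $\sigma,\sigma',\sigma''$, which makes the integrands uniformly integrable on bounded covariance sets, exactly as in the continuity argument for $\psi$ already given. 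The covariance entries are bounded by $K_{\mathrm{attn}}\le\|C\|_\infty\le B^2+\gamma^2$, so all these moments are bounded by a constant $C_\sigma'(1+B^2+\gamma^2)^{m}$. Combining the two stages gives a derivative envelope $M(x,a,x',a')$ that depends on the inputs only through $B,B_1$.

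\textbf{Uniformity and the stochastic case.} The envelope depends on the inputs only through $B,B_1$, so it is uniform whenever these bounds hold uniformly, in particular for a bounded abstraction space. For stochastic abstractions, a pointwise envelope $M(X,A,X',A')$ that is merely integrable suffices for dominated convergence when passing the derivative inside the expectation defining $H$.
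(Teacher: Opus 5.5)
Your proposal is correct and follows essentially the same route as the paper. The kernel is written as the composition of the attention covariance stage and the MLP covariance stage, and each stage is differentiated with the Gaussian covariance-differentiation (Price / integration-by-parts) identity. Singular covariances are handled by passing from the nondegenerate case, the polynomial growth of $\sigma,\sigma',\sigma''$ supplies the moment bounds, and the mean value theorem gives the envelope; your explicit dependence of the bound on $B$ and $B_1$ only makes the uniformity claim a little more concrete.
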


\begin{proof}
Write
\[
    U_\eta:=U_\eta(x,a),
    \qquad
    V_\eta:=U_\eta(x',a').
\]
We first check differentiability of the attention kernel along this path.
Define
\[
    C_\eta^{UU}:=U_\eta U_\eta^\top+\gamma^2I,\qquad
    C_\eta^{UV}:=U_\eta V_\eta^\top+\gamma^2I,\qquad
    C_\eta^{VV}:=V_\eta V_\eta^\top+\gamma^2I.
\]
The bounded \(C^1\) assumption on \(U_\eta,V_\eta\) implies that these matrices
are \(C^1\) in \(\eta\), with uniformly bounded operator norms and uniformly
bounded derivatives.

Let
\[
    G_\eta=(G_{\eta,U},G_{\eta,V})
\]
be centered Gaussian with covariance
\[
    \Sigma_\eta^{\mathrm{attn}}
    :=
    (1+\gamma^2)
    \begin{pmatrix}
        C_\eta^{UU} & C_\eta^{UV}\\
        (C_\eta^{UV})^\top & C_\eta^{VV}
    \end{pmatrix}.
\]
Then
\[
    K_{\mathrm{attn}}(U_\eta,V_\eta)
    =
    \E\left[
        s(G_{\eta,U})^\top C_\eta^{UV}s(G_{\eta,V})
    \right],
    \qquad
    s(u):=\operatorname{softmax}(\beta u).
\]
Set
\[
    f_\eta(u,v):=s(u)^\top C_\eta^{UV}s(v).
\]
The softmax map has uniformly bounded first and second derivatives, and
\(s(u)\) lies in the probability simplex.  Hence
\[
    |\partial_\eta f_\eta(u,v)|
    \le
    \|\partial_\eta C_\eta^{UV}\|_{\op},
\]
and the Hessian blocks of \(f_\eta\) satisfy
\[
    \|\nabla^2_{uu}f_\eta(u,v)\|_{\op}
    +
    \|\nabla^2_{vv}f_\eta(u,v)\|_{\op}
    +
    \|\nabla^2_{uv}f_\eta(u,v)\|_{\op}
    \lesssim_\beta
    \|C_\eta^{UV}\|_{\op}.
\]
Both right-hand sides are uniformly bounded on \([0,\eta_0]\).

We use the standard Gaussian covariance differentiation identity: if
\(Z_\eta\sim N(0,\Sigma_\eta)\), \(\Sigma_\eta\) is \(C^1\), and \(f_\eta\)
has bounded \(\partial_\eta f_\eta\) and bounded Hessian, then
\[
    {d\over d\eta}\E f_\eta(Z_\eta)
    =
    \E[\partial_\eta f_\eta(Z_\eta)]
    +
    {1\over2}
    \operatorname{tr}
    \left(
        \partial_\eta\Sigma_\eta\,
        \E[\nabla^2 f_\eta(Z_\eta)]
    \right).
\]
For nonsingular covariances this follows by differentiating the Gaussian
density and applying Gaussian integration by parts; the singular case follows
by adding \(\varepsilon I\) and sending \(\varepsilon\downarrow0\).  Therefore
\[
    \eta\mapsto K_{\mathrm{attn}}(U_\eta,V_\eta)
\]
is differentiable and has a locally bounded derivative.  The same argument
applies to
\[
    K_{\mathrm{attn}}(U_\eta,U_\eta),
    \qquad
    K_{\mathrm{attn}}(V_\eta,V_\eta).
\]

Now pass through the MLP covariance map.  Define
\[
    \Sigma_\eta^{\mathrm{mlp}}
    :=
    \begin{pmatrix}
        K_{\mathrm{attn}}(U_\eta,U_\eta)
        &
        K_{\mathrm{attn}}(U_\eta,V_\eta)\\
        K_{\mathrm{attn}}(V_\eta,U_\eta)
        &
        K_{\mathrm{attn}}(V_\eta,V_\eta)
    \end{pmatrix}.
\]
The preceding paragraph shows that
\(\Sigma_\eta^{\mathrm{mlp}}\) is \(C^1\) with bounded derivative.  The full
transformer kernel is
\[
    K_{\mathrm{trans},\eta}((x,a),(x',a'))
    =
    \E[\sigma(Z_{\eta,1})\sigma(Z_{\eta,2})],
    \qquad
    (Z_{\eta,1},Z_{\eta,2})\sim N(0,\Sigma_\eta^{\mathrm{mlp}}).
\]
Apply the same Gaussian covariance differentiation identity to
\[
    h(z_1,z_2)=\sigma(z_1)\sigma(z_2).
\]
The Hessian of \(h\) is
\[
    \nabla^2 h(z_1,z_2)
    =
    \begin{pmatrix}
        \sigma''(z_1)\sigma(z_2)
        &
        \sigma'(z_1)\sigma'(z_2)\\
        \sigma'(z_1)\sigma'(z_2)
        &
        \sigma(z_1)\sigma''(z_2)
    \end{pmatrix}.
\]
By the polynomial-growth assumption on
\(\sigma,\sigma',\sigma''\), its entries have finite Gaussian moments under the
bounded covariance family \(\Sigma_\eta^{\mathrm{mlp}}\).  Hence
\[
    \eta\mapsto K_{\mathrm{trans},\eta}((x,a),(x',a'))
\]
is differentiable and has a locally bounded derivative.  The mean-value theorem
then gives
\[
    \left|
    {K_{\mathrm{trans},\eta}-K_{\mathrm{trans},0}\over \eta}
    \right|
    \le
    \sup_{t\in[0,\eta]}
    |\partial_t K_{\mathrm{trans},t}|,
\]
which is the desired envelope.  The uniform and stochastic-integrable versions
follow from the corresponding boundedness or integrability of the augmented
prompt map.
\end{proof}

For templates \(z_a,z_b\), let
\[
    X_a=\Sub(z_a,S_a),
    \qquad
    A_a=\Anc_\psi(X_a,\xi_a),
\]
and let \((X_b',A_b')\) be an independent fresh augmented instantiation of
\(z_b\).  Define
\[
    N_{\eta,ab}
    :=
    \E\!\left[
        K_{\mathrm{trans},\eta}
        ((X_a,A_a),(X_b',A_b'))
    \right].
\]
Since \(U_0(x,a)=U_{\mathrm{base}}(x)\), the abstraction is absent at
\(\eta=0\), and \(N_{0,ab}=N_{ab}\).

\begin{theorem}[Template expansion induced by abstract prompting]
\label{thm:abst-ktrans-template-expansion}
Under the conditions of Theorem~\ref{thm:abst-ktrans-regularity},
\[
    N_{\eta,ab}
    =
    N_{ab}
    +
    \eta H_{ab}
    +
    o(\eta),
\]
where
\[
    H_{ab}
    :=
    \E\!\left[
        \dot K_{\mathrm{trans},0}
        ((X_a,A_a),(X_b',A_b'))
    \right],
    \qquad
    \dot K_{\mathrm{trans},0}
    :=
    \left.{d\over d\eta}K_{\mathrm{trans},\eta}\right|_{\eta=0}.
\]
Equivalently,
\[
    \bN_\eta=\bN+\eta H+o(\eta).
\]
The matrix \(H\) is the first-order template-level effect of abstract prompting.
It includes abstraction-token effects, cross-attention effects, and the
downstream MLP covariance response.  In general \(H\) need not be positive
semidefinite.
\end{theorem}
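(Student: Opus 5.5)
The statement is a first-order expansion of an expectation, so the natural route is to differentiate under the expectation using the envelope from Theorem~\ref{thm:abst-ktrans-regularity} and dominated convergence. Fix templates \(a,b\). For each realization of the augmented pair \(\omega=((X_a,A_a),(X_b',A_b'))\), define
\[
  F_\eta(\omega):=K_{\mathrm{trans},\eta}((X_a,A_a),(X_b',A_b')).
\]
Then \(N_{\eta,ab}=\E F_\eta\). The abstraction is absent at \(\eta=0\), since \(U_0(x,a)=U_{\mathrm{base}}(x)\). Hence \(F_0(\omega)=K_{\mathrm{trans}}(U_{\mathrm{base}}(X_a),U_{\mathrm{base}}(X_b'))\), and its expectation over the fresh pair equals \(N_{ab}\). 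Here I use token symmetry of \(K_{\mathrm{trans}}\): on a fresh pair, \(K\) is the constant \(N_{ab}\), so averaging changes nothing.

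Next I form the difference quotient \(D_\eta(\omega):=(F_\eta(\omega)-F_0(\omega))/\eta\). By Theorem~\ref{thm:abst-ktrans-regularity}, \(\eta\mapsto F_\eta(\omega)\) is differentiable on \([0,\eta_0]\) for each \(\omega\). Therefore \(D_\eta(\omega)\to\dot K_{\mathrm{trans},0}(\omega)\) pointwise as \(\eta\downarrow0\). The same theorem gives \(|D_\eta(\omega)|\le M(\omega)\) for all small \(\eta\). The function \(M\) is bounded when the abstraction space is bounded, and it is integrable under the stochastic hypothesis. Dominated convergence then yields
\[
  \frac{N_{\eta,ab}-N_{ab}}{\eta}=\E D_\eta\to \E\dot K_{\mathrm{trans},0}(\omega)=H_{ab}.
\]
Integrability of the limit follows from \(|\dot K_{\mathrm{trans},0}|\le M\). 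Rewriting this limit gives \(N_{\eta,ab}=N_{ab}+\eta H_{ab}+o(\eta)\). Since there are finitely many pairs \((a,b)\), the matrix statement \(\bN_\eta=\bN+\eta H+o(\eta)\) follows entrywise.

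The remaining claims are remarks. \(H\) is symmetric because \(K_{\mathrm{trans},\eta}\) is symmetric and the two augmented instantiations can be exchanged. \(H\) need not be positive semidefinite: it is the derivative of a family of PSD matrices, and such a derivative can have either sign. I would justify this with a scalar example in which the prompt shrinks \(K_{\mathrm{attn}}\) along a direction, for instance by diluting attention mass onto abstraction tokens. That makes some \(H_{aa}<0\).

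The main obstacle is the measurability and domination step. The envelope \(M\) from Theorem~\ref{thm:abst-ktrans-regularity} is defined pointwise as \(\sup_{t\le\eta}|\partial_tK_{\mathrm{trans},t}|\), and it must be jointly measurable in \(\omega\) and integrable over the random abstraction \(A=\Anc_\psi(X,\xi)\). My first attempt would bound \(M\) uniformly in \(\omega\). The operator-norm bounds \(B,B_1\) on \(U_\eta,\partial_\eta U_\eta\) are uniform over admissible \((x,a)\). This gives uniform bounds on the covariances \(C^{UV}_\eta\), on their derivatives, and on the diagonal attention kernels. Together with the polynomial growth of \(\sigma,\sigma',\sigma''\), these make the Gaussian covariance-derivative formula bounded by a constant depending only on \(B,B_1,\beta,\gamma,C_\sigma,m\). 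Measurability comes from the continuity of \(F_t\) in \(t\): the supremum can be taken over rational \(t\).
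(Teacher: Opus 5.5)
Your proposal is correct and is essentially the paper's proof: use $U_0(x,a)=U_{\mathrm{base}}(x)$ to get $N_{0,ab}=N_{ab}$, move the difference quotient inside the expectation, apply dominated convergence with the pointwise derivative and envelope from Theorem~\ref{thm:abst-ktrans-regularity}, and conclude entrywise over the finitely many template pairs. Your additions are sound refinements of the same argument: the token-symmetry justification of $N_{0,ab}=N_{ab}$ on fresh pairs, and the observation that the uniform constants $B,B_1$ give a constant envelope (which makes the measurability concern unnecessary).
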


\begin{proof}
By definition,
\[
    {N_{\eta,ab}-N_{0,ab}\over\eta}
    =
    \E\!\left[
        {K_{\mathrm{trans},\eta}((X_a,A_a),(X_b',A_b'))
        -
        K_{\mathrm{trans},0}((X_a,A_a),(X_b',A_b'))\over\eta}
    \right].
\]
Since \(U_0(x,a)=U_{\mathrm{base}}(x)\), the abstraction is absent at
\(\eta=0\), so \(N_{0,ab}=N_{ab}\).  Theorem~\ref{thm:abst-ktrans-regularity}
gives pointwise convergence of the difference quotient to
\[
    \dot K_{\mathrm{trans},0}((X_a,A_a),(X_b',A_b')),
\]
and it also supplies an integrable envelope.  Dominated convergence gives
\[
    \lim_{\eta\downarrow0}
    {N_{\eta,ab}-N_{ab}\over\eta}
    =
    \E\!\left[
        \dot K_{\mathrm{trans},0}((X_a,A_a),(X_b',A_b'))
    \right]
    =
    H_{ab}.
\]
The template set is finite, so the entrywise expansion gives
\[
    \bN_\eta=\bN+\eta H+o(\eta).
\]
Symmetry of \(H\) follows from symmetry of \(K_{\mathrm{trans},\eta}\).  Since
\(H\) is a derivative direction of positive-semidefinite matrices, it need not
itself be positive semidefinite.
\end{proof}

For \(q\in\Delta_r\), define the ideal prompted template objective
\[
    \Phi_{\eta,q,\lambda}(g)
    :=
    \sum_{a=1}^r q_a\loss(y_ag_a)
    +
    {\lambda\over2}g^\top\bN_\eta^{-1}g,
    \qquad g\in\R^r,
\]
and let
\[
    g_\eta:=\arg\min_{g\in\R^r}\Phi_{\eta,q,\lambda}(g).
\]

\begin{theorem}[First-order value of abstract prompting]
\label{thm:abst-ktrans-first-order-value}
Assume \(\bN\succ0\), \(q_a>0\) for every \(a\), and \(\lambda>0\).  Let
\(g_0=g_\eta|_{\eta=0}\), and define
\[
    J_0
    :=
    \diag\!\left(
        q_a\loss''(y_a(g_0)_a)
    \right)_{a=1}^r
    +
    \lambda\bN^{-1}.
\]
Then
\[
    \left.{d\over d\eta}g_\eta\right|_{\eta=0}
    =
    \lambda J_0^{-1}\bN^{-1}H\bN^{-1}g_0.
\]
Consequently, the first-order change in the signed margin of template \(a\) is
\[
    \mathfrak C_a(H)
    :=
    y_a e_a^\top
    \lambda J_0^{-1}\bN^{-1}H\bN^{-1}g_0.
\]
If an augmented certificate \(B_{\lambda,\eta}\) is right-differentiable at
\(\eta=0\), then the certified margin
\[
    \mathcal M_a(\eta):=y_a(g_\eta)_a-B_{\lambda,\eta}
\]
satisfies
\[
    \left.{d_+\over d\eta}\mathcal M_a(\eta)\right|_{\eta=0}
    =
    \mathfrak C_a(H)
    -
    \left.{d_+\over d\eta}B_{\lambda,\eta}\right|_{\eta=0}.
\]
Thus abstract prompting improves the certified margin to first order exactly
when its template-margin gain exceeds the additional certificate cost.
\end{theorem}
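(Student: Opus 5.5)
The plan is an implicit-function-theorem argument applied to the first-order condition of \(\Phi_{\eta,q,\lambda}\), followed by differentiation of the certified margin. First I will make sure the objective family is smooth in \(\eta\). By Theorem~\ref{thm:abst-ktrans-template-expansion}, \(\bN_\eta=\bN+\eta H+o(\eta)\). Because \(\bN\succ0\), this gives \(\bN_\eta\succ0\) for all small \(\eta\). Inverting the expansion yields \(\bN_\eta^{-1}=\bN^{-1}-\eta\bN^{-1}H\bN^{-1}+o(\eta)\), which I will verify using the Neumann series together with continuity of inversion. For each such \(\eta\), \(\Phi_{\eta,q,\lambda}\) is strictly convex and coercive: the loss is convex and the quadratic term is positive definite. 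Hence \(g_\eta\) exists, is unique, and is characterized by
\[
F(g,\eta):=\bigl(q_a y_a\loss'(y_ag_a)\bigr)_{a=1}^r+\lambda\bN_\eta^{-1}g=0 .
\]

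Next I will compute the derivative of \(g_\eta\). The Jacobian in \(g\) at \((g_0,0)\) is \(\diag(q_a\loss''(y_a(g_0)_a))+\lambda\bN^{-1}=J_0\), using \(y_a^2=1\). This matrix is positive definite, hence invertible. The main obstacle is regularity: Theorem~\ref{thm:abst-ktrans-template-expansion} only gives differentiability of \(\eta\mapsto\bN_\eta\) at \(0\), not \(C^1\), so the standard implicit function theorem does not apply directly. I will handle this by hand.

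The first step is continuity. Strong monotonicity gives \(\|g_\eta-g_0\|\le\|J_{\min}^{-1}\|\,\|F(g_0,\eta)\|\). Here \(J_{\min}\) is the lower bound \(\lambda\lambda_{\min}(\bN_\eta^{-1})\), which is uniform for small \(\eta\). Since \(F(g_0,\eta)=\lambda(\bN_\eta^{-1}-\bN^{-1})g_0=O(\eta)\), this shows \(g_\eta-g_0=O(\eta)\).

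The second step is the derivative itself. I subtract the two optimality equations and Taylor-expand \(\loss'\), which is smooth with bounded \(\loss''\), around \(g_0\). This gives
\[
J_0(g_\eta-g_0)+\lambda(\bN_\eta^{-1}-\bN^{-1})g_0+o(\eta)=0 .
\]
The remainder is \(o(\eta)\) because \(\|g_\eta-g_0\|^2=O(\eta^2)\) and \(\lambda(\bN_\eta^{-1}-\bN^{-1})(g_\eta-g_0)=O(\eta^2)\). Dividing by \(\eta\) and letting \(\eta\downarrow0\) yields \(\frac{d}{d\eta}g_\eta|_0=\lambda J_0^{-1}\bN^{-1}H\bN^{-1}g_0\). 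The two-sided limit follows in the same way wherever \(\bN_\eta\) is defined.

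Finally, I obtain the margin and certificate statements. Applying \(y_ae_a^\top\) to the derivative gives \(\mathfrak C_a(H)\), because \(y_a\) is a constant. The certified margin \(\mathcal M_a(\eta)=y_a(g_\eta)_a-B_{\lambda,\eta}\) is a difference of two right-differentiable functions, so linearity of one-sided derivatives gives the displayed formula. The concluding sentence then follows directly: the sign of \(d_+\mathcal M_a/d\eta\) at \(0\) is positive exactly when \(\mathfrak C_a(H)>d_+B_{\lambda,\eta}/d\eta|_0\).
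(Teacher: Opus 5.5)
Your proposal is correct and follows the same route as the paper. Both arguments write the first-order condition $F(g,\eta)=q\odot y\odot\loss'(y\odot g)+\lambda\bN_\eta^{-1}g=0$, identify $D_gF(g_0,0)=J_0\succ0$, insert $\bN_\eta^{-1}=\bN^{-1}-\eta\bN^{-1}H\bN^{-1}+o(\eta)$, solve the linearized equation, and then subtract the certificate's right derivative.

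The one difference is in how the derivative of $g_\eta$ is justified. The paper simply invokes the implicit function theorem. You instead prove $\|g_\eta-g_0\|=O(\eta)$ by strong monotonicity and then Taylor-expand $\loss'$. This is a worthwhile tightening, because Theorem~\ref{thm:abst-ktrans-template-expansion} only gives one-sided differentiability of $\eta\mapsto\bN_\eta$ at $0$, not the $C^1$ regularity the standard implicit function theorem requires.
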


\begin{proof}
By Theorem~\ref{thm:abst-ktrans-template-expansion},
\[
    \bN_\eta=\bN+\eta H+o(\eta).
\]
Since \(\bN\succ0\), \(\bN_\eta\succ0\) for all sufficiently small \(\eta\).
The first-order condition for \(g_\eta\) is
\[
    F(g_\eta,\eta)=0,
\]
where
\[
    F(g,\eta)
    :=
    q\odot y\odot\loss'(y\odot g)
    +
    \lambda\bN_\eta^{-1}g .
\]
At \((g_0,0)\),
\[
    D_gF(g_0,0)
    =
    \diag\!\left(
        q_a\loss''(y_a(g_0)_a)
    \right)_{a=1}^r
    +
    \lambda\bN^{-1}
    =
    J_0 .
\]
Because \(\loss''\ge0\) and \(\bN^{-1}\succ0\), \(J_0\succ0\).  Hence the
implicit function theorem applies.

The inverse expansion
\[
    \bN_\eta^{-1}
    =
    \bN^{-1}
    -
    \eta\bN^{-1}H\bN^{-1}
    +
    o(\eta)
\]
gives
\[
    D_\eta F(g_0,0)
    =
    -\lambda\bN^{-1}H\bN^{-1}g_0 .
\]
Differentiating \(F(g_\eta,\eta)=0\) at \(\eta=0\) yields
\[
    J_0\dot g_0
    -
    \lambda\bN^{-1}H\bN^{-1}g_0
    =
    0.
\]
Therefore
\[
    \dot g_0
    =
    \lambda J_0^{-1}\bN^{-1}H\bN^{-1}g_0.
\]
Multiplying coordinate \(a\) by \(y_a\) gives \(\mathfrak C_a(H)\).  The
certified-margin derivative follows by subtracting the right derivative of
\(B_{\lambda,\eta}\).
\end{proof}

Any valid augmented certificate \(B_{\lambda,\eta}(x,\delta)\) now yields the
same margin-transfer statement as Theorem~\ref{thm:main}: if
\[
    y_a(g_\eta)_a>\varepsilon+s
    \qquad\text{and}\qquad
    B_{\lambda,\eta}(x,\delta)\le s,
\]
then
\[
    y_a\widehat f_{\eta,\lambda}(x,A_x)>\varepsilon
\]
on the corresponding high-probability event.  Canonical abstractions can be
handled by applying the original collision graph to the prompted transformer
kernel.  Stochastic abstractions contribute additional centered block-average
or bias--fluctuation terms, exactly as in the existing certificates.

% Supplementary writeup for reverse-generated template examples.
% This is intended to be pasted into the paper supplement.

\section{Details for  Figure~\ref{fig:reverse-template-certificates}}
\label{sec:supp-reverse-template-examples}

This section gives the complete finite-template construction used for
Figure~\ref{fig:reverse-template-certificates}.  The purpose of the construction is not to choose
certificate inputs by hand, but rather to generate concrete training examples first and then compute all
certificate quantities from the resulting collision graph.  Thus each case below is a genuine template
learning task: we specify the templates, instantiate their wildcard slots, build the colored collision graph,
and only then evaluate the five routes in Theorem~3.2.

The common template family is
\[
z_1=(\mathtt{LA},\alpha,\beta),\qquad
z_2=(\mathtt{LB},\alpha,\beta),\qquad
z_3=(\mathtt{LC},\alpha,\beta),
\]
with labels
\[
y_1=+1,\qquad y_2=-1,\qquad y_3=+1.
\]
The test point has template type $z_1$ and fresh wildcard tokens $\mathtt{TX}$ and $\mathtt{TY}$.
Vertices are indexed in block order: the first $n_{T1}$ vertices have type $z_1$, the next $n_{T2}$
have type $z_2$, and the final $n_{T3}$ have type $z_3$.  Unless explicitly listed below, every wildcard
slot receives a unique fresh token.

Edges are then induced directly by token collisions.  A train--train edge is added whenever two training
examples share a wildcard token, or whenever a wildcard token in one example equals the literal token of
the other example's template.  A train--test edge is added whenever a training wildcard equals one of the
test wildcards $\mathtt{TX},\mathtt{TY}$, or equals the test literal $\mathtt{LA}$.  The sign or magnitude
reported with each substitution is the corresponding signed collision weight used in the discrepancy
matrix.

From the resulting graph we form the signed train--train discrepancy matrix $\Delta$ and the signed
test-collision vector $\zeta$.  The script then computes the derived quantities used by Theorem~3.2,
namely $A_\Delta$, $A_\Delta^\circ$, $\gamma_{\rm cs}$, $E_*$, $X_*$, and the block/test slots
$(D_{\rm bd},Z_{\rm bd})$, $(D_{\rm anova},Z_{\rm anova})$, and $(D_{\rm bf},Z_{\rm bf})$.  Applying
the ordinary and curvature score maps gives the five candidate graph certificates
\[
B_{\rm cs},\qquad B_{\rm deg},\qquad B_{\rm bd},\qquad B_{\rm anova},\qquad B_{\rm bf}.
\]
We report
\[
B^\sharp_\lambda
  =\min\{B_{\rm cs},B_{\rm deg},B_{\rm bd},B_{\rm anova},B_{\rm bf}\},
\]
together with the selected minimizing route.  For comparison, we also report the scalar proxy
$B_\rho=w_{\max}/\rho$.

\begin{center}
\begin{tabular}{c|l|c|c|c|c}
Case & Regime & Train--train edges & Test edges & Selected route & $(B^\sharp_\lambda,B_\rho)$ \\
\hline
$C_1$ & sample-poor & $3$ & $2$ & BD & $(0.0056,0.5000)$ \\
$C_2$ & degree-dominated & $50$ & $0$ & DEG & $(0.0047,1.0000)$ \\
$C_3$ & sample-rich block-local & $13$ & $2$ & BD & $(0.0017,0.1250)$ \\
$C_4$ & row-balanced signed & $8$ & $2$ & ANOVA & $(0.0022,0.1667)$ \\
$C_5$ & curvature-benign & $1$ & $0$ & CS & $(0.0001,0.1429)$ \\
$C_6$ & pair-specific & $3$ & $1$ & BD & $(0.0011,0.1667)$ \\
$C_7$ & projection-degenerate & $12$ & $2$ & ANOVA & $(0.0022,0.1667)$ \\
$C_8$ & literal-correctable & $47$ & $3$ & BF & $(0.0054,0.3333)$
\end{tabular}
\end{center}

\paragraph{Case $C_1$: sample-poor.}
Set $n_{T1}=n_{T2}=n_{T3}=4$.  Starting from fresh wildcard tokens in all slots, impose the following
identifications:
\begin{itemize}
\item vertices $\{0,4\}$ share token $\mathtt{C1\_pair\_A}$ with weight $+1.0$, creating one isolated
cross-template collision;
\item vertices $\{1,8\}$ share token $\mathtt{C1\_pair\_B}$ with weight $+1.0$, creating a second
isolated cross-template collision;
\item vertices $\{5,9\}$ substitute the test token $\mathtt{TX}$ with weight $+1.0$, creating a small
test-star collision in a small sample.
\end{itemize}
This produces $3$ train--train edges and $2$ test edges.  The selected graph certificate is
$B^\sharp_\lambda=0.0056$, achieved by the BD route; the scalar proxy is $B_\rho=0.5000$.

\paragraph{Case $C_2$: degree-dominated.}
Set $n_{T1}=n_{T2}=n_{T3}=10$.  Impose the following identifications:
\begin{itemize}
\item vertex $\{0\}$ substitutes the literal token $\mathtt{LB}$ with weight $+1.0$, so a wildcard in a
$z_1$ example hits the $z_2$ literal and creates a literal-induced hub;
\item vertices $\{0,1,2,3,4,10,11,12,13,14\}$ share token $\mathtt{C2\_degree\_core}$ with weight
$+1.0$, forming a dense degree core with no train--test collision.
\end{itemize}
This produces $50$ train--train edges and no test edges.  The selected graph certificate is
$B^\sharp_\lambda=0.0047$, achieved by the DEG route; the scalar proxy is $B_\rho=1.0000$.

\paragraph{Case $C_3$: sample-rich block-local.}
Set $n_{T1}=n_{T2}=n_{T3}=16$.  Impose four balanced local cliques and one test-star component:
\begin{itemize}
\item vertices $\{0,16,32\}$ share token $\mathtt{C3\_local\_0}$ with weight $+1.0$;
\item vertices $\{2,18,34\}$ share token $\mathtt{C3\_local\_1}$ with weight $+1.0$;
\item vertices $\{4,20,36\}$ share token $\mathtt{C3\_local\_2}$ with weight $+1.0$;
\item vertices $\{6,22,38\}$ share token $\mathtt{C3\_local\_3}$ with weight $+1.0$;
\item vertices $\{9,25\}$ substitute the test token $\mathtt{TX}$ with weight $+1.0$.
\end{itemize}
This produces $13$ train--train edges and $2$ test edges.  The selected graph certificate is
$B^\sharp_\lambda=0.0017$, achieved by the BD route; the scalar proxy is $B_\rho=0.1250$.

\paragraph{Case $C_4$: row-balanced signed.}
Set $n_{T1}=n_{T2}=n_{T3}=12$.  Impose signed collisions with approximately balanced row contributions:
\begin{itemize}
\item vertices $\{0,12,24\}$ share token $\mathtt{C4\_plus\_1}$ with weight $+1.0$;
\item vertices $\{1,13,25\}$ share token $\mathtt{C4\_minus\_1}$ with weight $-1.0$;
\item vertices $\{2,14\}$ share token $\mathtt{C4\_plus\_2}$ with weight $+1.0$;
\item vertices $\{3,15\}$ share token $\mathtt{C4\_minus\_2}$ with weight $-1.0$;
\item vertex $\{4\}$ substitutes the test token $\mathtt{TX}$ with weight $+1.0$;
\item vertex $\{16\}$ substitutes the test token $\mathtt{TY}$ with weight $-1.0$.
\end{itemize}
This produces $8$ train--train edges and $2$ test edges.  The selected graph certificate is
$B^\sharp_\lambda=0.0022$, achieved by the ANOVA route; the scalar proxy is $B_\rho=0.1667$.

\paragraph{Case $C_5$: curvature-benign.}
Set $n_{T1}=n_{T2}=n_{T3}=14$.  Impose a single weak collision:
\begin{itemize}
\item vertices $\{14,28\}$ share token $\mathtt{C5\_weak}$ with weight $+0.6$.
\end{itemize}
This produces $1$ train--train edge and no test edges.  The selected graph certificate is
$B^\sharp_\lambda=0.0001$, achieved by the CS route; the scalar proxy is $B_\rho=0.1429$.

\paragraph{Case $C_6$: pair-specific.}
Set $n_{T1}=n_{T2}=n_{T3}=12$.  Concentrate the train--train collisions on the $z_2$--$z_3$ pair and
place the test hit in the same color:
\begin{itemize}
\item vertices $\{12,24\}$ share token $\mathtt{C6\_23\_0}$ with weight $+1.0$;
\item vertices $\{14,26\}$ share token $\mathtt{C6\_23\_1}$ with weight $+1.0$;
\item vertices $\{16,28\}$ share token $\mathtt{C6\_23\_2}$ with weight $+1.0$;
\item vertex $\{19\}$ substitutes the test token $\mathtt{TX}$ with weight $+1.0$.
\end{itemize}
This produces $3$ train--train edges and $1$ test edge.  The selected graph certificate is
$B^\sharp_\lambda=0.0011$, achieved by the BD route; the scalar proxy is $B_\rho=0.1667$.

\paragraph{Case $C_7$: projection-degenerate.}
Set $n_{T1}=n_{T2}=n_{T3}=12$.  Impose alternating signed cliques designed to cancel in the projected
ANOVA components, together with a signed test pair:
\begin{itemize}
\item vertices $\{0,12,24\}$ share token $\mathtt{C7\_cancel\_0}$ with weight $+1.0$;
\item vertices $\{1,13,25\}$ share token $\mathtt{C7\_cancel\_1}$ with weight $-1.0$;
\item vertices $\{2,14,26\}$ share token $\mathtt{C7\_cancel\_2}$ with weight $+1.0$;
\item vertices $\{3,15,27\}$ share token $\mathtt{C7\_cancel\_3}$ with weight $-1.0$;
\item vertex $\{5\}$ substitutes the test token $\mathtt{TX}$ with weight $+1.0$;
\item vertex $\{17\}$ substitutes the test token $\mathtt{TY}$ with weight $-1.0$.
\end{itemize}
This produces $12$ train--train edges and $2$ test edges.  The selected graph certificate is
$B^\sharp_\lambda=0.0022$, achieved by the ANOVA route; the scalar proxy is $B_\rho=0.1667$.

\paragraph{Case $C_8$: literal-correctable.}
Set $n_{T1}=n_{T2}=n_{T3}=12$.  Create a large literal-induced backbone, plus a small residual collision
after literal correction:
\begin{itemize}
\item vertices $\{0,1\}$ substitute the literal token $\mathtt{LB}$ with weight $+1.0$, so $z_1$
wildcards hit the $z_2$ literal;
\item vertices $\{12,13\}$ substitute the literal token $\mathtt{LA}$ with weight $+1.0$, so $z_2$
wildcards hit the $z_1$ literal;
\item vertices $\{16,28\}$ share token $\mathtt{C8\_resid}$ with weight $+0.7$;
\item vertices $\{12,28\}$ substitute the test token $\mathtt{TX}$ with weight $+1.0$.
\end{itemize}
This produces $47$ train--train edges and $3$ test edges.  The selected graph certificate is
$B^\sharp_\lambda=0.0054$, achieved by the BF route; the scalar proxy is $B_\rho=0.3333$.

 Figure~\ref{fig:reverse-template-certificates} shows that the active certificate route changes with the collision geometry: BD is selected for sparse or block-local structure, DEG for a dense degree core, ANOVA for signed/projection cancellations, CS for weak curvature, and BF for literal-correctable collisions.  Thus the best graph certificate \(B^\sharp_\lambda\) adapts to the mechanism present in the generated collision graph, rather than behaving like a single worst-case scalar bound.

 \begin{figure}[t]
    \centering
    \includegraphics[width=\textwidth]{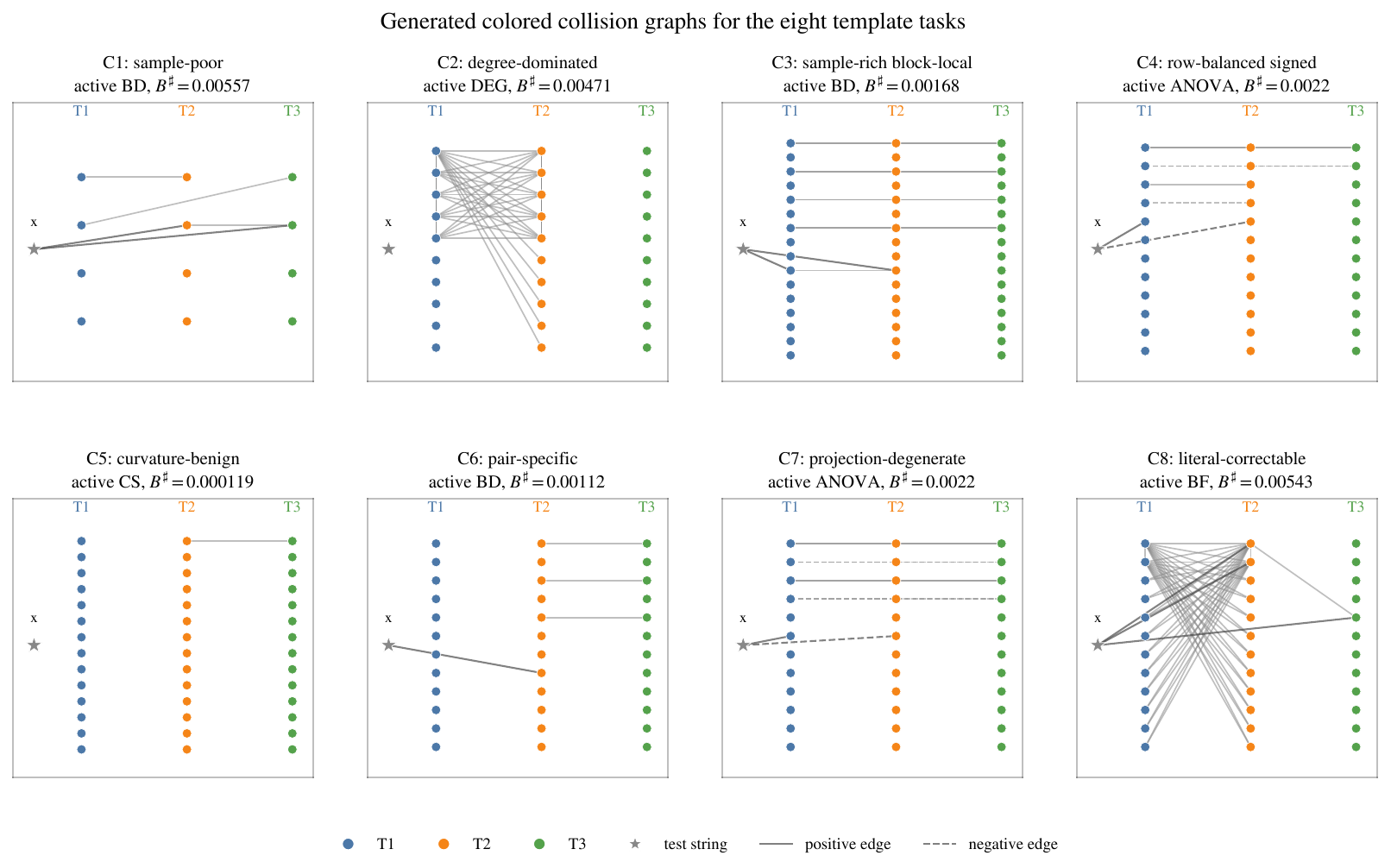}
    \caption{Generated colored collision graphs for the eight template tasks, showing how different wildcard and literal substitutions induce distinct graph geometries.}
    \label{fig:reverse-template-certificates2}
\end{figure}

 Figure~\ref{fig:reverse-template-certificates2} visualizes the colored collision graphs generated by the eight finite template tasks.  The panels show that the same template family can produce qualitatively different collision geometries, including isolated pairs, dense hubs, block-local components, signed cancellations, test-star interactions, projection-degenerate structure, and literal-induced backbones.  These examples illustrate that template collisions are graph-structured objects, not merely scalar worst-case events.

\section{Extensive simulation details}
\label{app:simulation}

The code for this simulation is available in the supplementary material, which is modified from \href{https://github.com/eboix/relational-reasoning}{this GitHub repository} for our purpose.

All experiments instantiate a common synthetic data-generation framework. A
\emph{task} is specified by a tuple $(\cZ, \cW,  \templ, \cY)$,
where $\cZ$ is a finite set of fixed-length \emph{templates} (strings of equal length $L$), $\cW$ is the \emph{wildcard alphabet}, $ \templ$ is the template distribution, and $\cY$ are template labels.  Here we describe the generation for general template tasks, where a sample is generated as follows. (i)~A template $z_a$ is drawn with probability $p_a = \templ(a)$. (ii)~For each \emph{distinct} wildcard symbol $w$ in
$z_a$, a fresh substitution token is drawn without replacement from a
substitution alphabet according to a substitution map, which is split disjointly into train,
validation, and test sub-alphabets so that test substitutions are never seen
during training. (iii)~Wildcard occurrences in $z_a$ are replaced consistently
using this map, while non-wildcard tokens are left intact. (iv)~The label is
either constant (i.e., $\cY = \{1, 2, \ldots, K \}$ for $K$-class tasks) or itself is a wildcard. When the label is a wildcard, then it should be the substituted token under the same map (allowing a ``copy-a-wildcard'' output
mode). The total vocabulary size is therefore
\begin{equation}
	V \;=\; |\mathcal{W}_{\text{used}}| + |\mathcal{R}_{\text{used}}|
	+ S_{\text{train}} + S_{\text{val}} + S_{\text{test}},
\end{equation}
where $\mathcal{R}_{\text{used}}$ collects the non-wildcard tokens that occur
in any template (or any non-wildcard label) and $S_{\text{split}}$ is the
substitution-alphabet size for the corresponding split. Throughout this
appendix we set $S_{\text{val}}=S_{\text{test}}=100$ and
$S_{\text{train}}=n$ (the number of training samples), with $100$ validation
and $100$ test samples. This decoupling separates \emph{identity} content
(the substitution tokens, which are unseen at test time) from \emph{relational}
content (the equality pattern between positions), so solving the task requires
position-wise relational features rather than memorization of substitution
tokens.

\subsection{Template Tasks}
\label{app:tasks}

We instantiate the framework with four template tasks. The symbols
\texttt{\#} and \texttt{\$} denote distinct wildcards, corresponding to $\alpha$ and $\beta$ in the main text; tokens at those positions are sampled independently per example.

\paragraph{Binary same/different
	(in \href{run:experiments/experiment_binary.ipynb}{\texttt{experiment\_binary.ipynb}}).}
\begin{equation*}
	\mathcal{T}=\{\texttt{\#\$\$},\,\texttt{\#\$\#}\},\quad
	\mathcal{W}=\{\texttt{\#},\texttt{\$}\},\quad
	\boldsymbol{\pi}=(0.5,0.5),\quad
	\mathcal{L}=(\texttt{A},\texttt{B}).
\end{equation*}
Each example is a length-3 sequence in which the first and second tokens are
always different. The model emits class \texttt{A} iff the third token equals
the second, and \texttt{B} iff it equals the first. With fresh substitution
tokens at every example, the task is purely relational.

\paragraph{Four-class relational task
	(in \href{run:experiments/experiment_multiclass.ipynb}{\texttt{experiment\_multiclass.ipynb}}).}
\begin{equation*}
	\mathcal{T}=\{\texttt{\#\#\#},\,\texttt{\#\#\$},\,\texttt{\#\$\#},\,\texttt{\#\$\$}\},\quad
	\mathcal{W}=\{\texttt{\#},\texttt{\$}\},\quad
	\boldsymbol{\pi}=(\tfrac14,\tfrac14,\tfrac14,\tfrac14),\quad
	\mathcal{L}=(\texttt{A},\texttt{B},\texttt{C},\texttt{D}).
\end{equation*}
The natural 4-way generalization of the binary task. The model must determine,
from a length-3 sequence with two distinct underlying tokens, which of the
four equality patterns produced it.

\paragraph{Majority / copy-wildcard task
	(in \href{run:experiments/experiment_new.ipynb}{\texttt{experiment\_new.ipynb}}).}
\begin{equation*}
	\mathcal{T}=\{\texttt{\#\#\#},\,\texttt{\#\#\$},\,\texttt{\#\$\#},\,\texttt{\#\$\$}\},\quad
	\mathcal{W}=\{\texttt{\#},\texttt{\$}\},\quad
	\boldsymbol{\pi}=(\tfrac14,\tfrac14,\tfrac14,\tfrac14),\quad
	\mathcal{L}=(\texttt{\#},\texttt{\#},\texttt{\#},\texttt{\$}).
\end{equation*}
Templates are identical to the four-class task, but the label is itself a
wildcard: the model must output the actual substitution token that occupies
the \emph{majority} position. Because the label varies per example with the
substitution and is drawn from a much larger token space than the four
templates, this task requires a contextual \emph{copy} on top of the
relational classification.

\paragraph{Variable-assignment / ``\texttt{print}'' task
	(\texttt{experiment\_new.ipynb}, \emph{teaser}).}
\begin{equation*}
	\mathcal{T}=\bigl\{
	\texttt{\#="\$";?="\%";print(\#)},\;
	\texttt{\#="\$";?="\%";print(?)}
	\bigr\},
\end{equation*}
\begin{equation*}
	\mathcal{W}=\{\texttt{\#},\texttt{?},\texttt{\$},\texttt{\%}\},\quad
	\boldsymbol{\pi}=(0.5,0.5),\quad
	\mathcal{L}=(\texttt{\$},\texttt{\%}).
\end{equation*}
A minimal code-interpretation task: two variables (\texttt{\#} and \texttt{?})
are assigned values (\texttt{\$} and \texttt{\%} respectively) and one of them
is printed. The label is the \emph{value} of the printed variable, drawn anew
at every example, so the model must perform a contextual copy through
substitution tokens it has never seen during training.

\subsection{Model Architectures}
\label{app:models}

\paragraph{Vanilla Transformer encoder.}
A standard pre-norm encoder: token embedding $W_E$, learned positional
embedding, a learnable \texttt{[CLS]} token appended to the sequence, $D$
blocks of multi-head self-attention with separated $QK$ and $V$ projections,
followed by a pre-norm GELU MLP with hidden width $2d$. Classification reads
the \texttt{[CLS]} position through an unaffine \texttt{LayerNorm} and a
linear head. Attention uses scaling $(d_h)^{-1/2}\cdot\texttt{attn\_mult}$. 

\paragraph{Tied embedding/unembedding Transformer.}
Identical to the vanilla encoder except (i) read-out is the final sequence
position rather than \texttt{[CLS]}, and (ii) the unembedding matrix is tied
to $W_E$, so the logits are $W_E\,\mathrm{LN}(h_L)$.

\paragraph{Trainable identity multipliers ($KQ$ and $VO$).}
Both architectures support an attention-identity augmentation that is the
focus of our theoretical results. For each head $h$, two trainable scalars
$\alpha_h^{KQ},\alpha_h^{VO}\in\mathbb{R}$ (initialized to zero) and two
fixed multipliers $M^{KQ},M^{VO}\in\mathbb{R}_{\ge 0}$ modify attention as
\begin{align}
	\mathrm{logits}_h
	&\;=\; \frac{(QK^\top)_h}{\sqrt{d_h}\cdot\texttt{attn\_mult}}
	\;+\; M^{KQ}\,\alpha_h^{KQ}\,X X^\top, \\
	\text{output}
	&\;=\; W_O\,\mathrm{Concat}_h\!\bigl(\mathrm{Attn}_h\,V_h\bigr)
	\;+\; M^{VO}\!\sum_h \alpha_h^{VO}\,\mathrm{Attn}_h\,X.
\end{align}
Setting $M^{KQ}=M^{VO}=0$ recovers the vanilla model. Setting $M^{KQ}=100$
(resp.\ $M^{VO}=100$) gives the \emph{KQ-identity} (resp.\ \emph{VO-identity})
variant; setting both yields the \emph{KQ + VO} variant. The trainable
scalars allow the model to dial these multipliers on or off, but their
\emph{scale} is set by the multiplier.

\subsection{Training Protocol}
\label{app:training}

All runs use the same training loop. The loss is cross-entropy for the
multiclass tasks of Sec.~\ref{app:tasks}. The optimizer is Adam with a single
learning rate (no scheduler, no weight decay); two parameter groups
(\texttt{to\_qk} parameters vs.\ all others) are exposed but receive the
same learning rate. Mini-batches of size $1024$ are drawn from a shuffled
training set; each epoch performs a full pass. Train, validation, and test
metrics are recorded every epoch. We report
\begin{itemize}\itemsep0pt
	\item \textbf{test accuracy:} test accuracy at the epoch where validation
	loss is minimal;
	\item \textbf{test error:} $1$ minus test accuracy at the epoch where
	validation accuracy is maximal;
	\item \textbf{train accuracy:} the maximum training accuracy across all
	epochs. We don't include training accuracy curves in the appendix but it is available in our code.
\end{itemize}
Means and standard errors of the mean are computed over the trial dimension
and rendered as shaded confidence bands in the plots. Every run is cached
on disk and reused on subsequent invocations to keep the entire suite
deterministic.

\subsection{Hyperparameter Settings}
\label{app:hparams}

We organize the experiments into two regimes that share the data-generation
procedure, optimizer, and loss but differ in which axes are swept.

\subsubsection{Architecture comparison}
\label{app:hparams-arch}

This regime fixes a single large model and sweeps the number of training
samples. It produces the per-architecture training/test plots (vanilla,
$+VO\!\times\!100$, $+KQ\!\times\!100$, $+KQ\!\times\!100 + VO\!\times\!100$,
their tied-embedding/unembedding counterparts, and the GPT-2 baseline) for
each of the binary, four-class, and majority tasks
(see Fig.~\ref{fig:binary-arch}, Fig.~\ref{fig:multiclass-arch},
Fig.~\ref{fig:majority-arch}).

\begin{table}[h]
	\centering
	\small
	\begin{tabular}{ll}
		\toprule
		\textbf{Hyperparameter} & \textbf{Value} \\
		\midrule
		Train sample sizes $n$ & $\{16,\,64,\,256,\,1024,\,4096\}$ \\
		$S_{\text{train}}$ & $n$ \\
		$S_{\text{val}}=S_{\text{test}}$ & $100$ \\
		$N_{\text{val}}=N_{\text{test}}$ & $100$ \\
		Embedding/residual dim $d$ & $2048$ (binary, multiclass), $1024$ (majority) \\
		MLP hidden width & $2d$ \\
		Depth $D$ & $2$ \\
		Heads $H$ & $16$ \\
		Per-head dim $d_h$ & $64$ \\
		Dropout / emb dropout & $0$ \\
		\texttt{attn\_mult} & $1.0$ \\
		Identity multipliers $(M^{KQ},M^{VO})$ &
		$\{(0,0),(100,0),(0,100),(100,100)\}$ \\
		Optimizer & Adam, lr $10^{-4}$ \\
		Batch size & $1024$ \\
		Epochs & $1000$ \\
		Trials & $2$ \\
		\bottomrule
	\end{tabular}
	\caption{Hyperparameters for the architecture-comparison regime
		(Sec.~\ref{app:hparams-arch}).}
	\label{tab:hparams-arch}
\end{table}

\subsubsection{Width-scaling regime}
\label{app:hparams-width}

This regime studies how test error decays with $n$ as the embedding
dimension $d$ varies, at fixed identity-multiplier settings, and produces
the width-sweep panels (Fig.~\ref{fig:binary-width-vanilla}, Fig.~\ref{fig:binary-width-tied}, 
Fig.~\ref{fig:multiclass-width-vanilla}, Fig.~\ref{fig:multiclass-width-tied}, Fig.~\ref{fig:majority-width-vanilla}, Fig.~\ref{fig:majority-width-tied}
Fig.~\ref{fig:teaser-width}).

\begin{table}[h]
	\centering
	\small
	\begin{tabular}{ll}
		\toprule
		\textbf{Hyperparameter} & \textbf{Value} \\
		\midrule
		Train sample sizes $n$ & $\{4,8,16,32,64,128,256,512,1024,2048\}$ \\
		& (print task additionally adds $\{4096,16384,65536,262144\}$) \\
		Embedding dim $d$ & $\{64,128,256,512,1024,2048,4096\}$ \\
		Learning rate & $10^{-3}$ if $d\le 512$,\quad $10^{-4}$ if $d\ge 1024$ \\
		Identity multipliers $(M^{KQ},M^{VO})$ &
		binary, four-class: $\{(0, 0), (0, 100), (100,0),(100,100)\}$; \\
		& majority, print: $\{(0,0),(100,100)\}$ \\
		Depth $D$ & $2$ \\
		Heads $H$ & $16$ \\
		Per-head dim $d_h$ & $64$ \\
		Dropout / \texttt{attn\_mult} & $0$ / $1.0$ \\
		Optimizer & Adam \\
		Batch size & $1024$ \\
		Epochs & $1000$ ($100$ for the print task) \\
		Trials & $5$ \\
		Architectures & vanilla and tied-embedding/unembedding Transformers \\
		Reporting metric & test error \\
		\bottomrule
	\end{tabular}
	\caption{Hyperparameters for the width-scaling regime
		(Sec.~\ref{app:hparams-width}).}
	\label{tab:hparams-width}
\end{table}

In this regime, vanilla Transformers (with both $M^{KQ}=M^{VO}=0$) fail to
generalize across a wide range of widths and sample sizes, whereas adding
identity multipliers $(M^{KQ},M^{VO})\in\{(100,0),(100,100)\}$ recovers the
predicted scaling-law behavior $\mathrm{err}(n)\to 0$ as $n$ grows.

\subsection{Figures}
\label{app:figures}

The simulation results are shown in the following figures.

% --- Binary task ---
\begin{figure}[h]
	\centering
	\begin{subfigure}{0.48\textwidth}
		\centering
		\includegraphics[width=\textwidth]{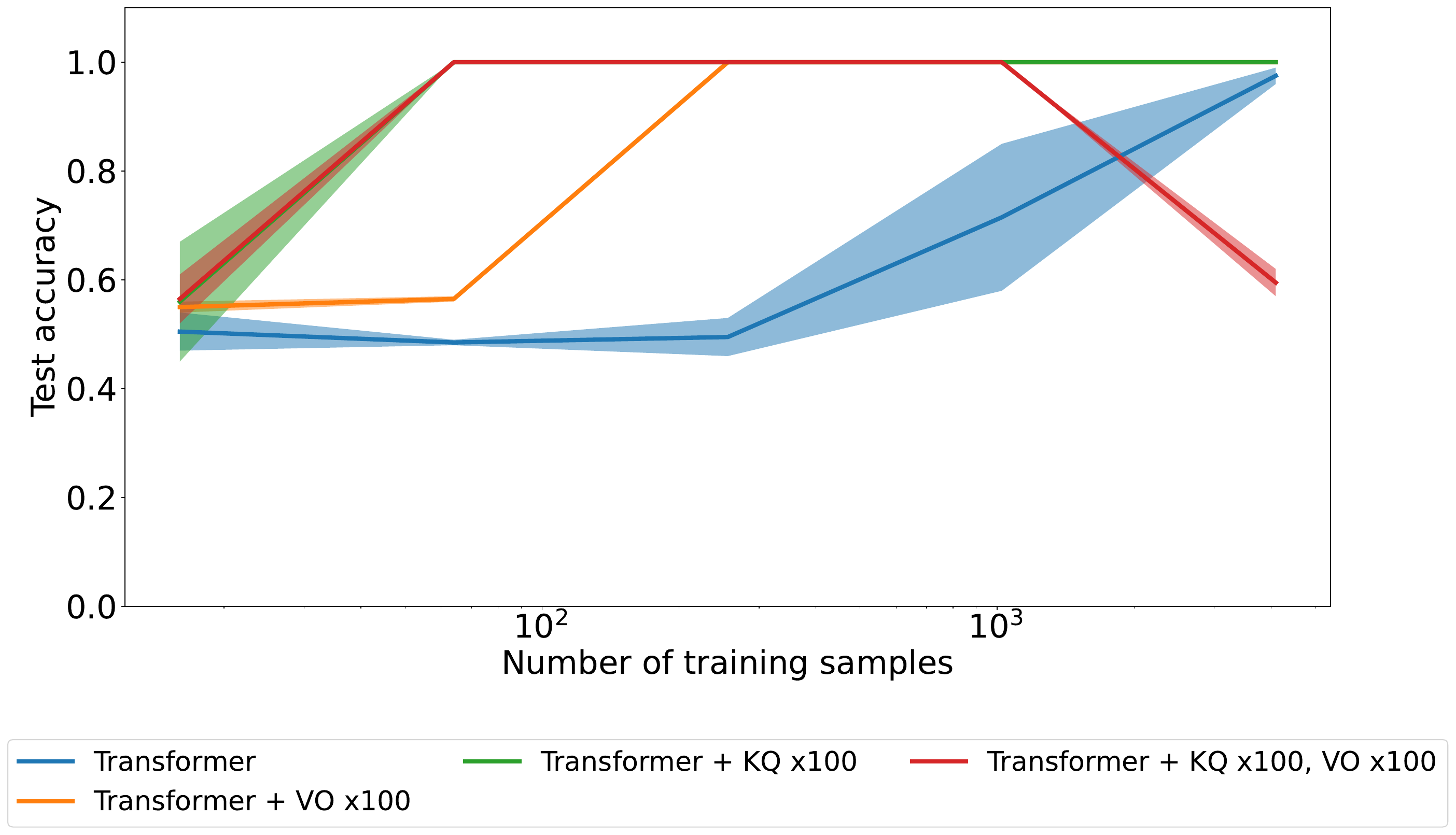}
	\end{subfigure}
	\hfill
	\begin{subfigure}{0.48\textwidth}
		\includegraphics[width=\linewidth]{figures/binary_tied_transformers_vs_gpt2_test.pdf}
	\end{subfigure}
	\caption{\textbf{Binary same/different task.} Architecture comparison
		under the regime of Sec.~\ref{app:hparams-arch}. Left: vanilla Transformers. Right: tied-embedding/unembedding transformers.}
	\label{fig:binary-arch}
\end{figure}

\begin{figure}[h]
	\centering
	\begin{subfigure}{0.48\textwidth}
		\centering
		\includegraphics[width=\textwidth]{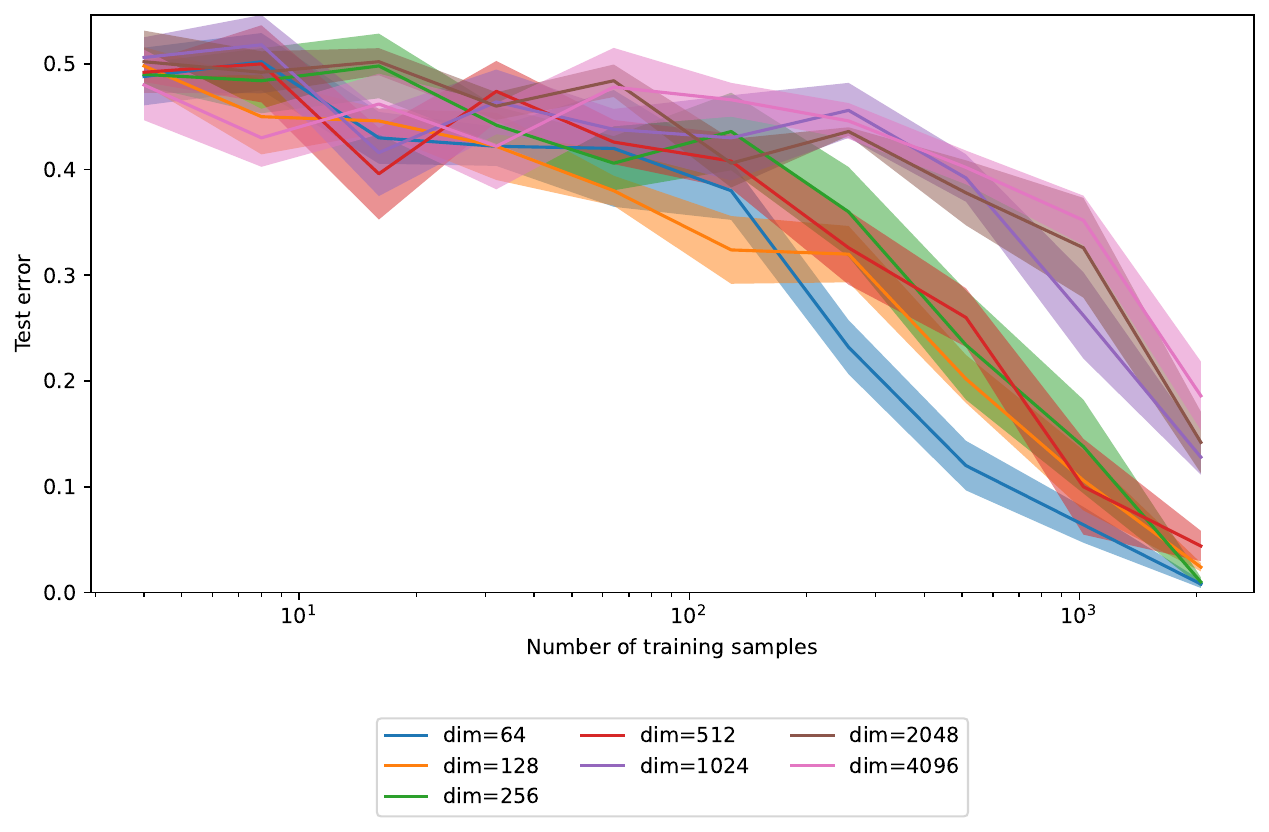}
		\caption{$(M^{KQ},M^{VO})=(0,0)$}
	\end{subfigure}
	\hfill
	\begin{subfigure}{0.48\textwidth}
		\centering
		\includegraphics[width=\linewidth]{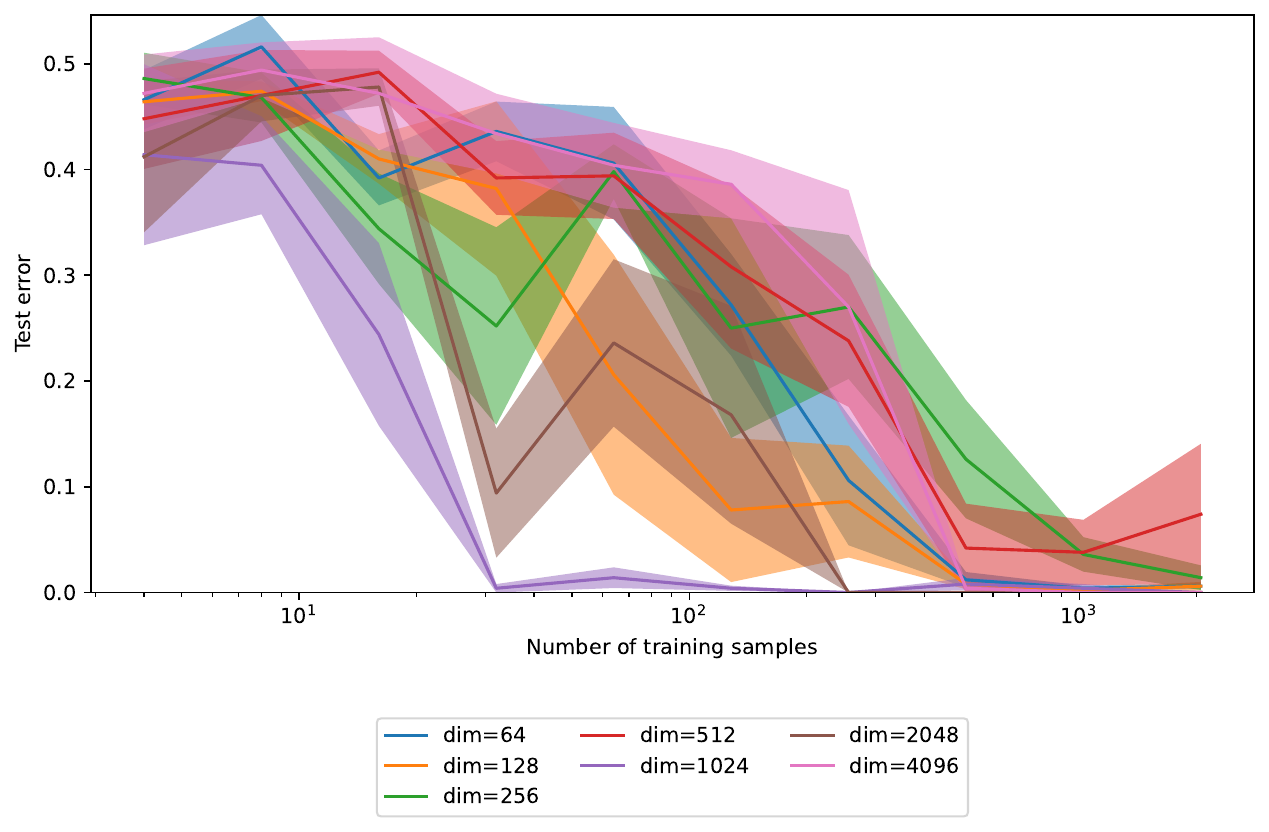}
		\caption{$(M^{KQ},M^{VO})=(0,100)$}
	\end{subfigure}
	\\
	\begin{subfigure}{0.48\textwidth}
		\centering
		\includegraphics[width=\textwidth]{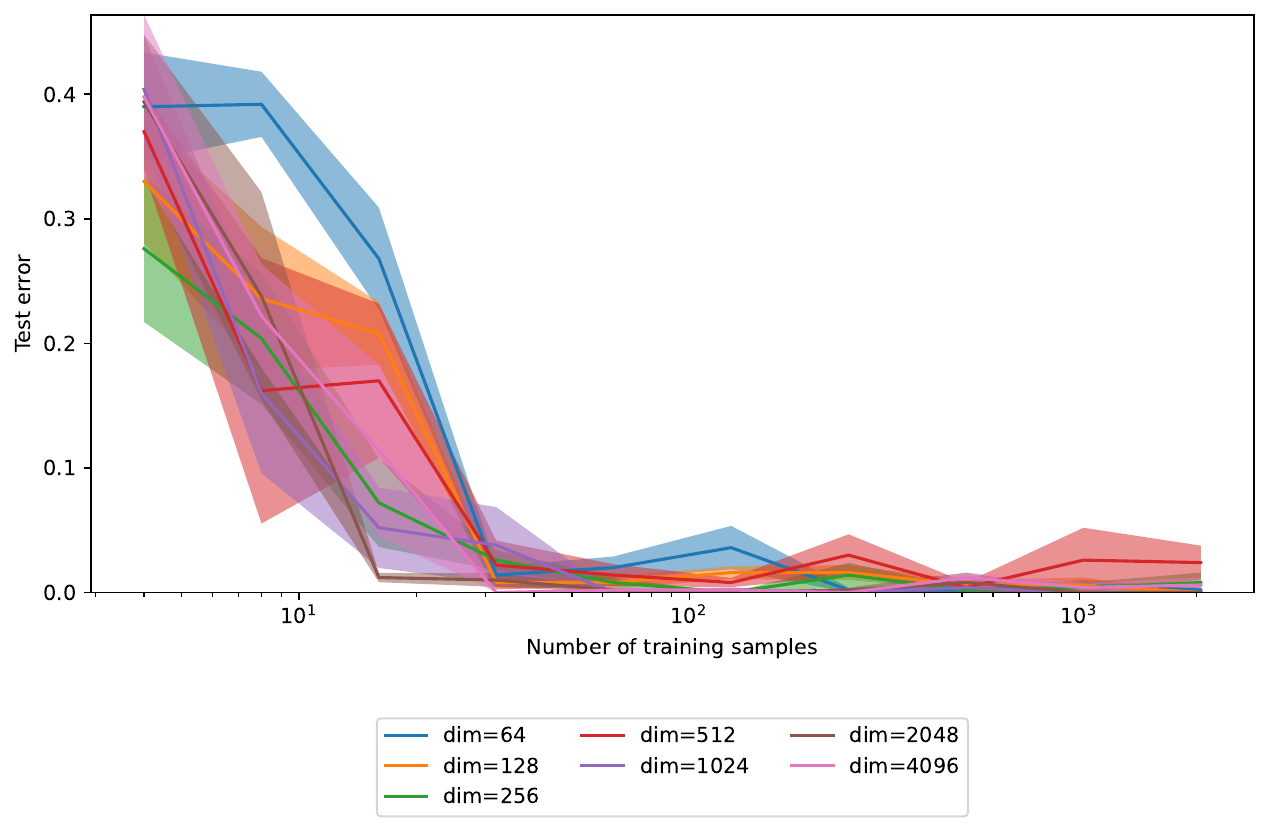}
		\caption{$(M^{KQ},M^{VO})=(100,0)$}
	\end{subfigure}
	\hfill
	\begin{subfigure}{0.48\textwidth}
		\centering
		\includegraphics[width=\textwidth]{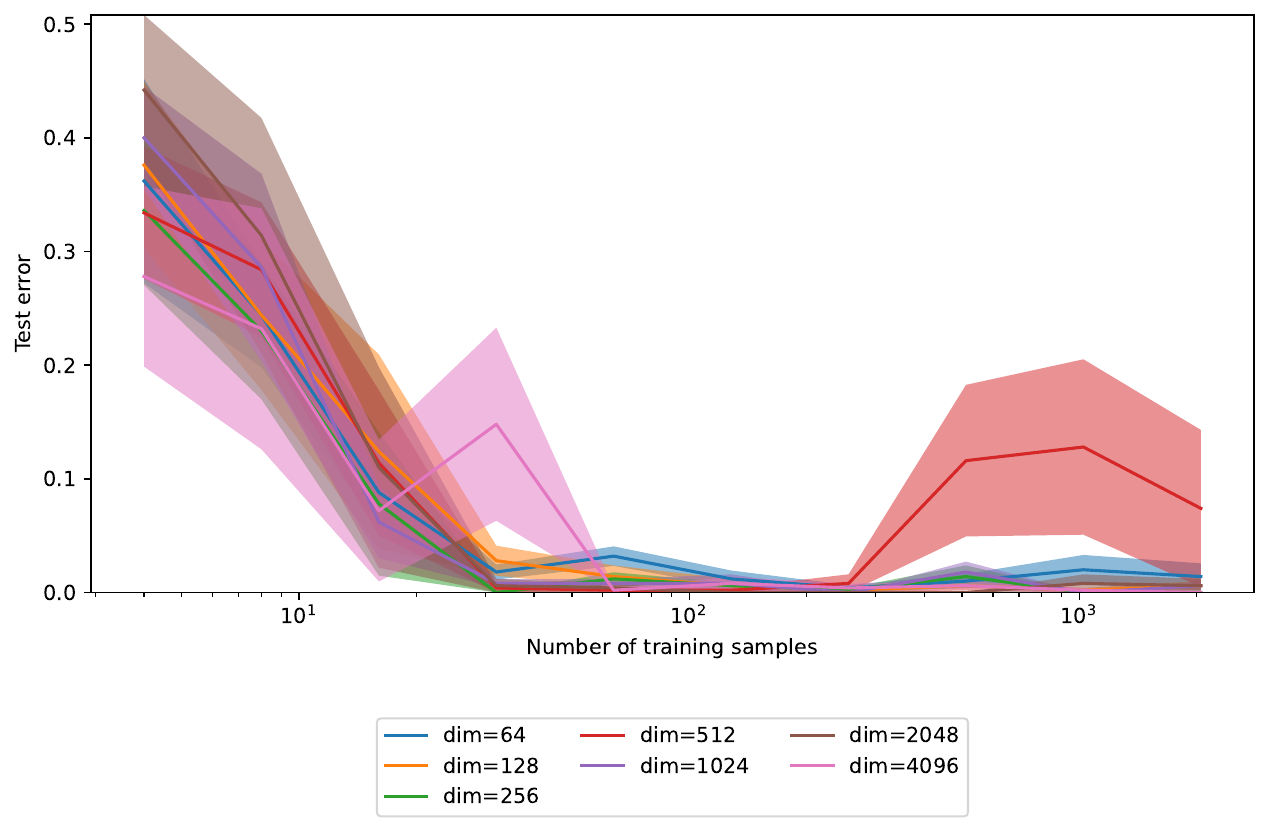}
		\caption{$(M^{KQ},M^{VO})=(100,100)$}
	\end{subfigure}
	\caption{\textbf{Binary task: width sweep.} Test error vs.\ $n$ for
		several embedding dimensions $d$ for different $KQ$ and $VO$ multipliers, for vanilla Transformers (Sec.~\ref{app:hparams-width}).}
	\label{fig:binary-width-vanilla}
\end{figure}

\begin{figure}[h]
	\centering
	\begin{subfigure}{0.48\textwidth}
		\centering
		\includegraphics[width=\textwidth]{figures/transformer_tiedembunemb_binary_fail.pdf}
		\caption{$(M^{KQ},M^{VO})=(0,0)$}
	\end{subfigure}
	\hfill
	\begin{subfigure}{0.48\textwidth}
		\centering
		\includegraphics[width=\linewidth]{figures/transformer_tiedembunemb_binary_success_0_100.pdf}
		\caption{$(M^{KQ},M^{VO})=(0,100)$}
	\end{subfigure}
	\\
	\begin{subfigure}{0.48\textwidth}
		\centering
		\includegraphics[width=\textwidth]{figures/transformer_tiedembunemb_binary_success_100_0.pdf}
		\caption{$(M^{KQ},M^{VO})=(100,0)$}
	\end{subfigure}
	\hfill
	\begin{subfigure}{0.48\textwidth}
		\centering
		\includegraphics[width=\textwidth]{figures/transformer_tiedembunemb_binary_success_100_100.pdf}
		\caption{$(M^{KQ},M^{VO})=(100,100)$}
	\end{subfigure}
	\caption{\textbf{Binary task: width sweep.} Test error vs.\ $n$ for
		several embedding dimensions $d$ for different $KQ$ and $VO$ multipliers, for tied-embedding/unembedding Transformers (Sec.~\ref{app:hparams-width}).}
	\label{fig:binary-width-tied}
\end{figure}

% --- Four-class task ---
\begin{figure}[h]
	\centering
	\begin{subfigure}{0.48\textwidth}
		\centering
		\includegraphics[width=\textwidth]{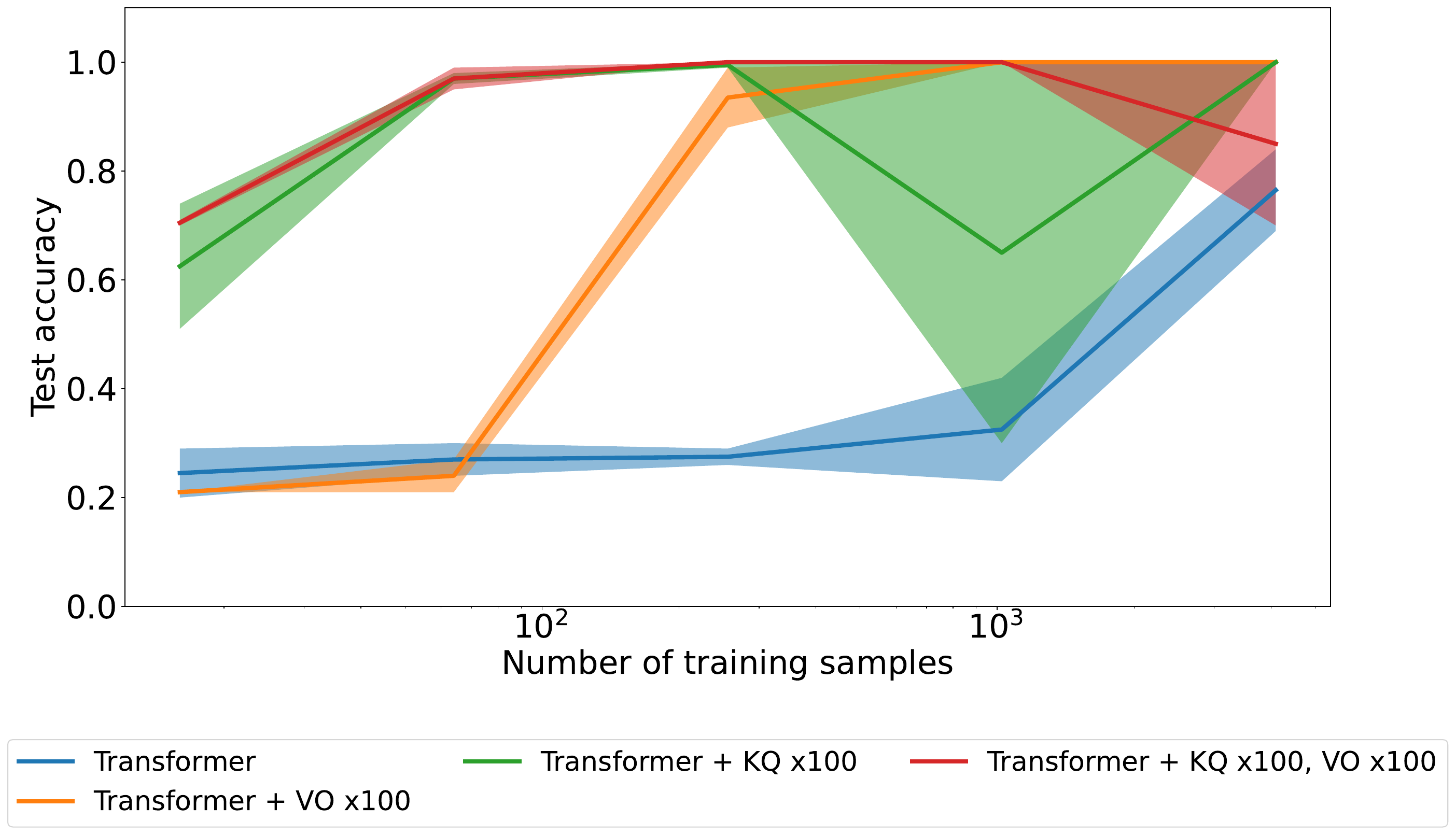}
	\end{subfigure}
	\hfill
	\begin{subfigure}{0.48\textwidth}
		\centering
		\includegraphics[width=\linewidth]{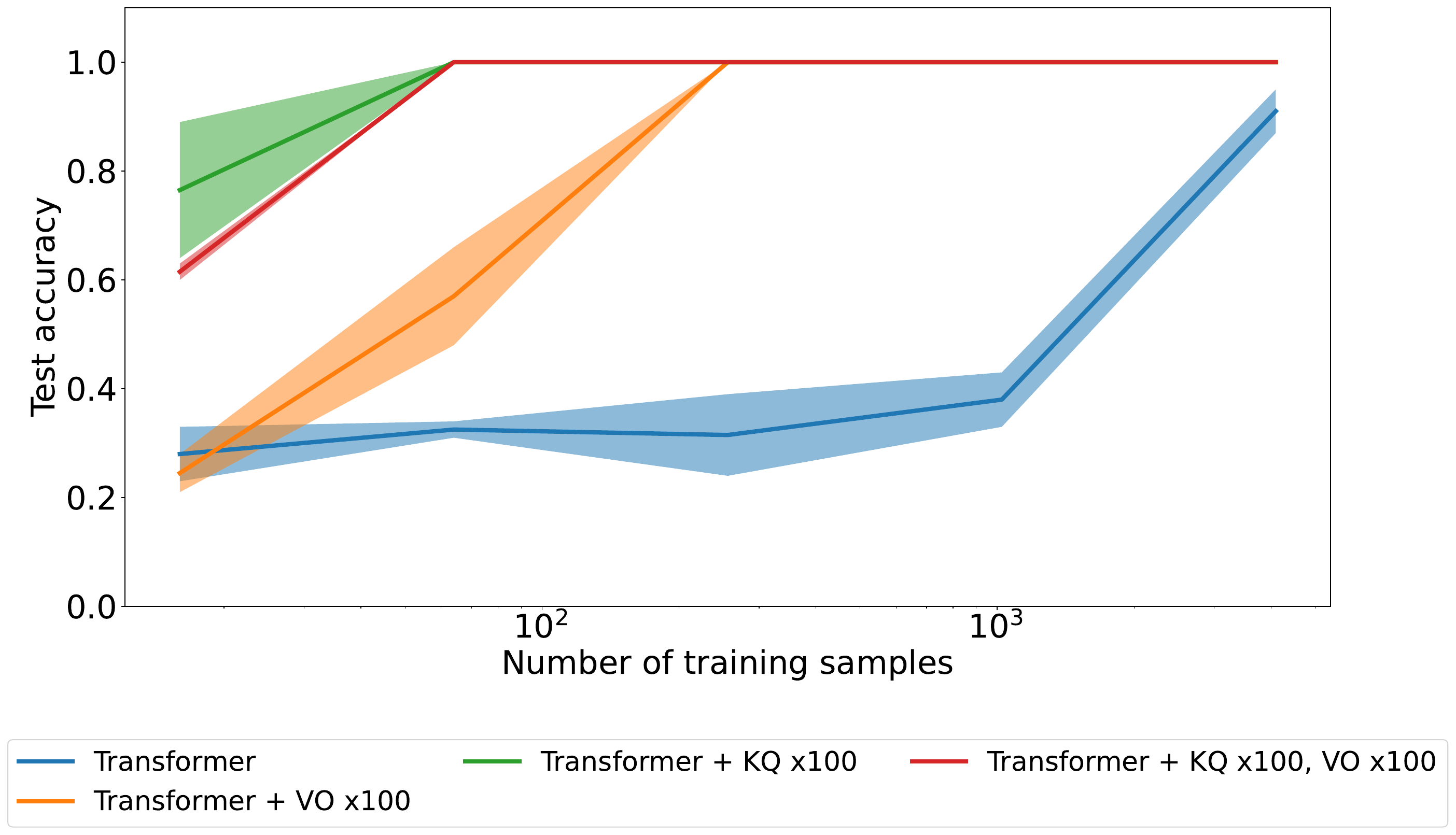}
	\end{subfigure}
	\caption{\textbf{Four-class relational task.} Architecture comparison. Left: vanilla Transformers. Right: tied-embedding/unembedding transformers.}
	\label{fig:multiclass-arch}
\end{figure}

\begin{figure}[h]
	\centering
	\begin{subfigure}{0.48\textwidth}
		\centering
		\includegraphics[width=\textwidth]{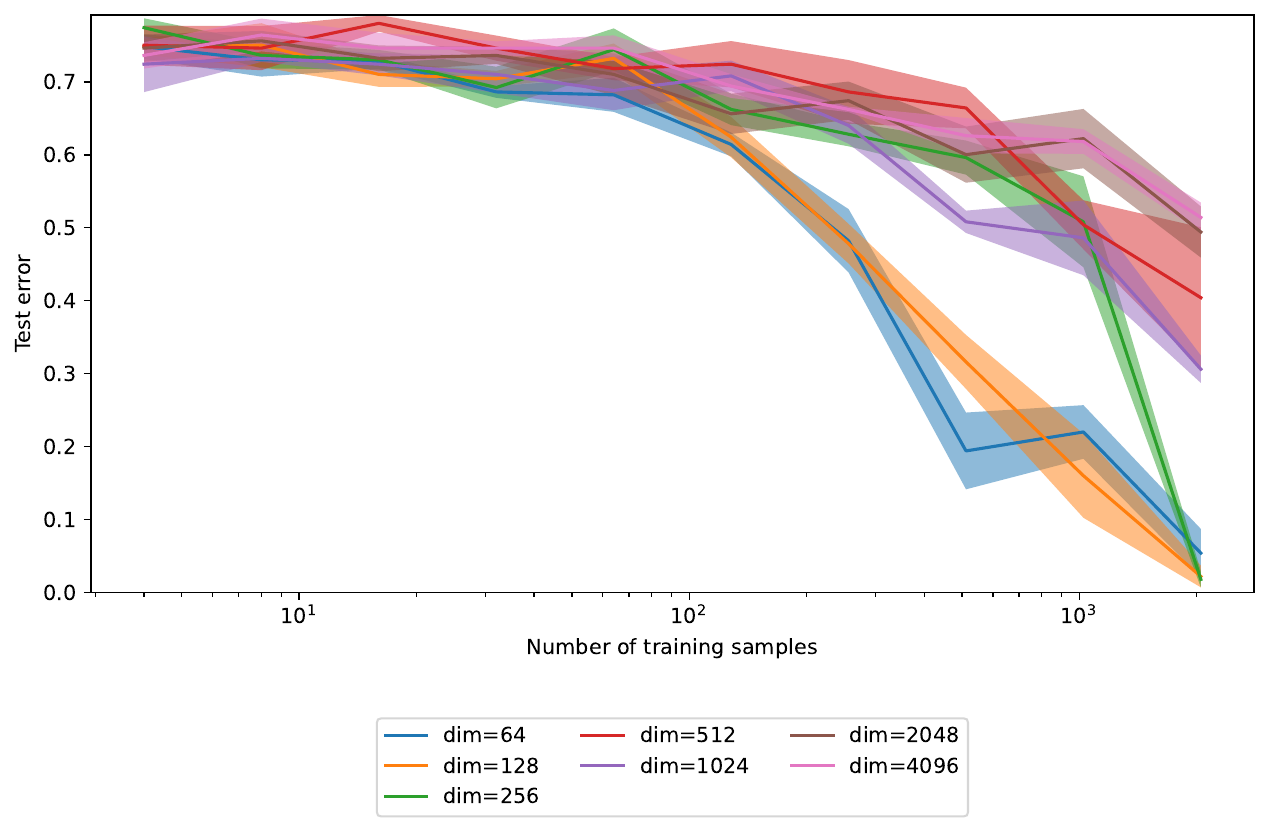}
		\caption{$(M^{KQ},M^{VO})=(0,0)$}
	\end{subfigure}
	\hfill
	\begin{subfigure}{0.48\textwidth}
		\centering
		\includegraphics[width=\linewidth]{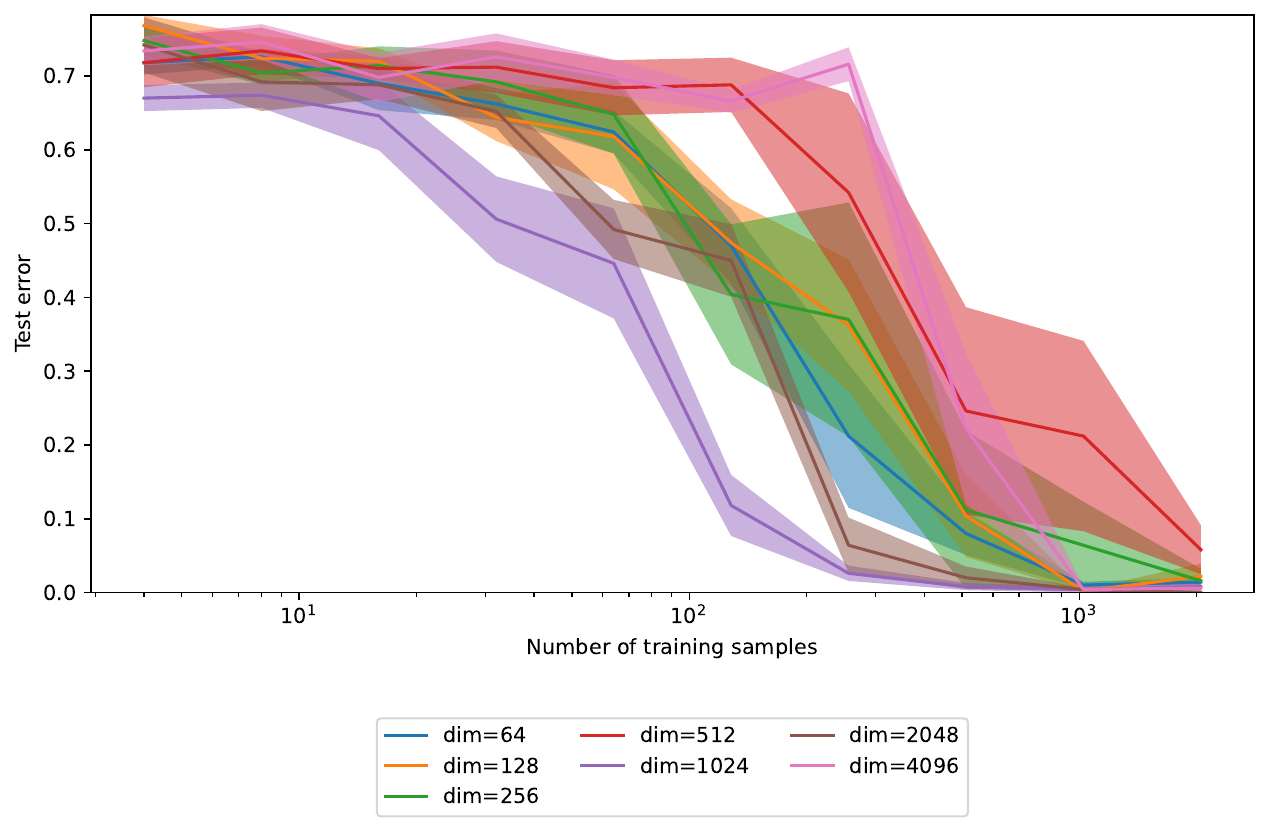}
		\caption{$(M^{KQ},M^{VO})=(0,100)$}
	\end{subfigure}
	\\
	\begin{subfigure}{0.48\textwidth}
		\centering
		\includegraphics[width=\textwidth]{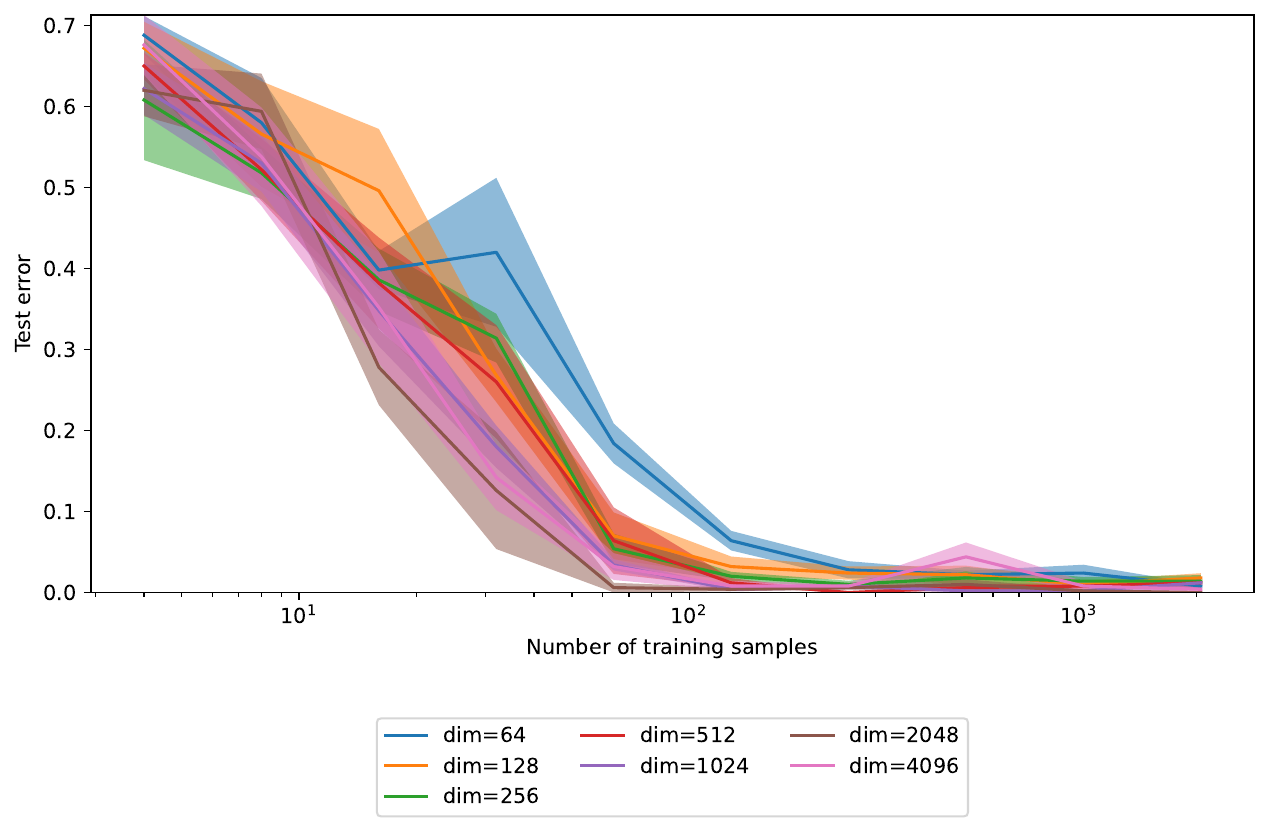}
		\caption{$(M^{KQ},M^{VO})=(100,0)$}
	\end{subfigure}
	\hfill
	\begin{subfigure}{0.48\textwidth}
		\centering
		\includegraphics[width=\textwidth]{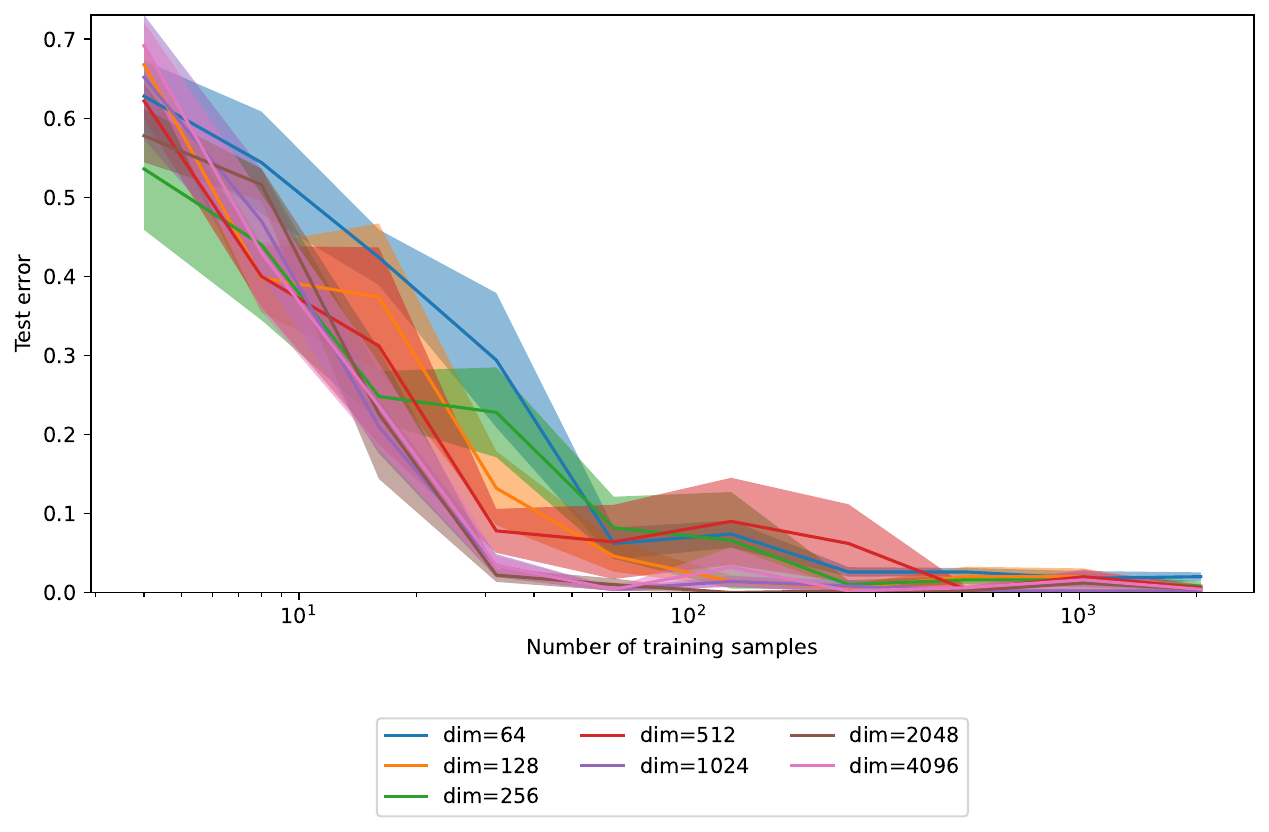}
		\caption{$(M^{KQ},M^{VO})=(100,100)$}
	\end{subfigure}
	\caption{\textbf{Multiclass task: width sweep.} Test error vs.\ $n$ for
		several embedding dimensions $d$ for different $KQ$ and $VO$ multipliers, for vanilla Transformers (Sec.~\ref{app:hparams-width}).}
	\label{fig:multiclass-width-vanilla}
\end{figure}

\begin{figure}[h]
	\centering
	\begin{subfigure}{0.48\textwidth}
		\centering
		\includegraphics[width=\textwidth]{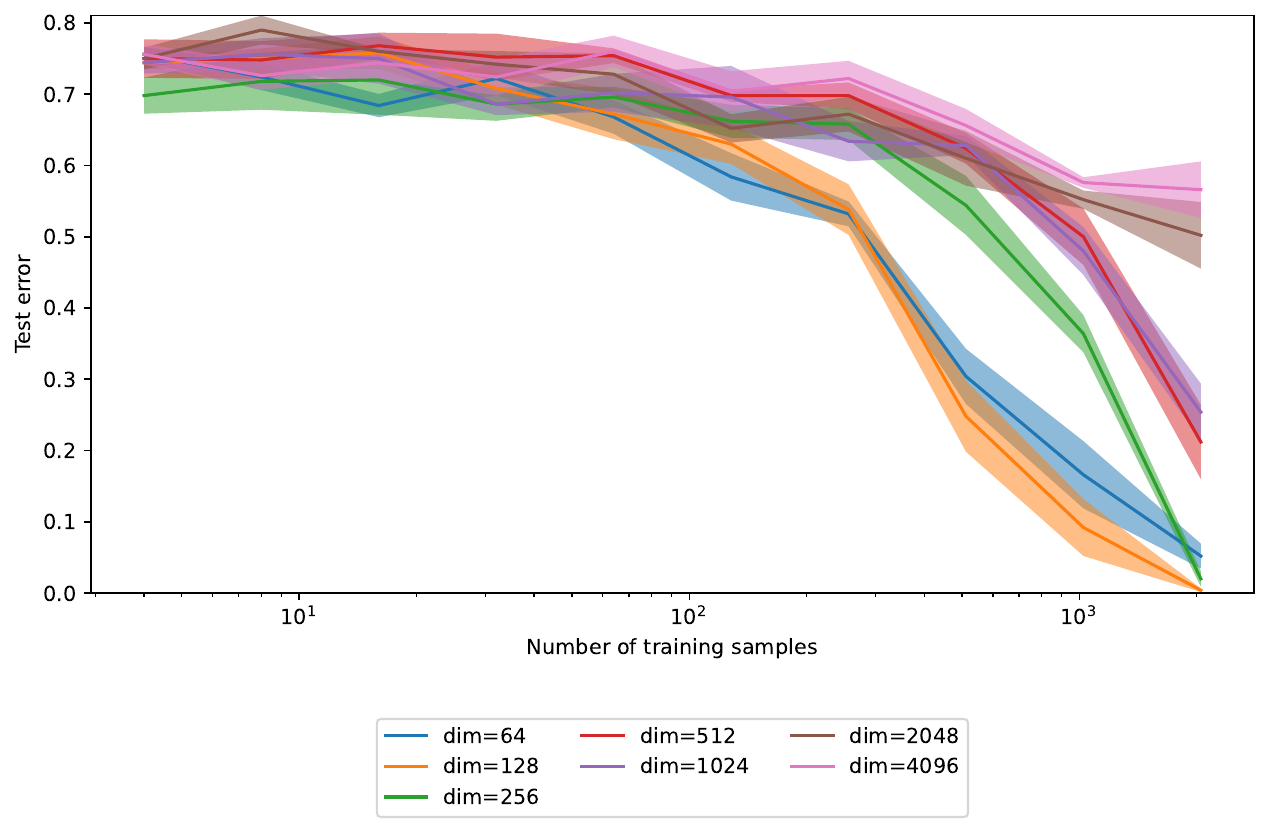}
		\caption{$(M^{KQ},M^{VO})=(0,0)$}
	\end{subfigure}
	\hfill
	\begin{subfigure}{0.48\textwidth}
		\centering
		\includegraphics[width=\linewidth]{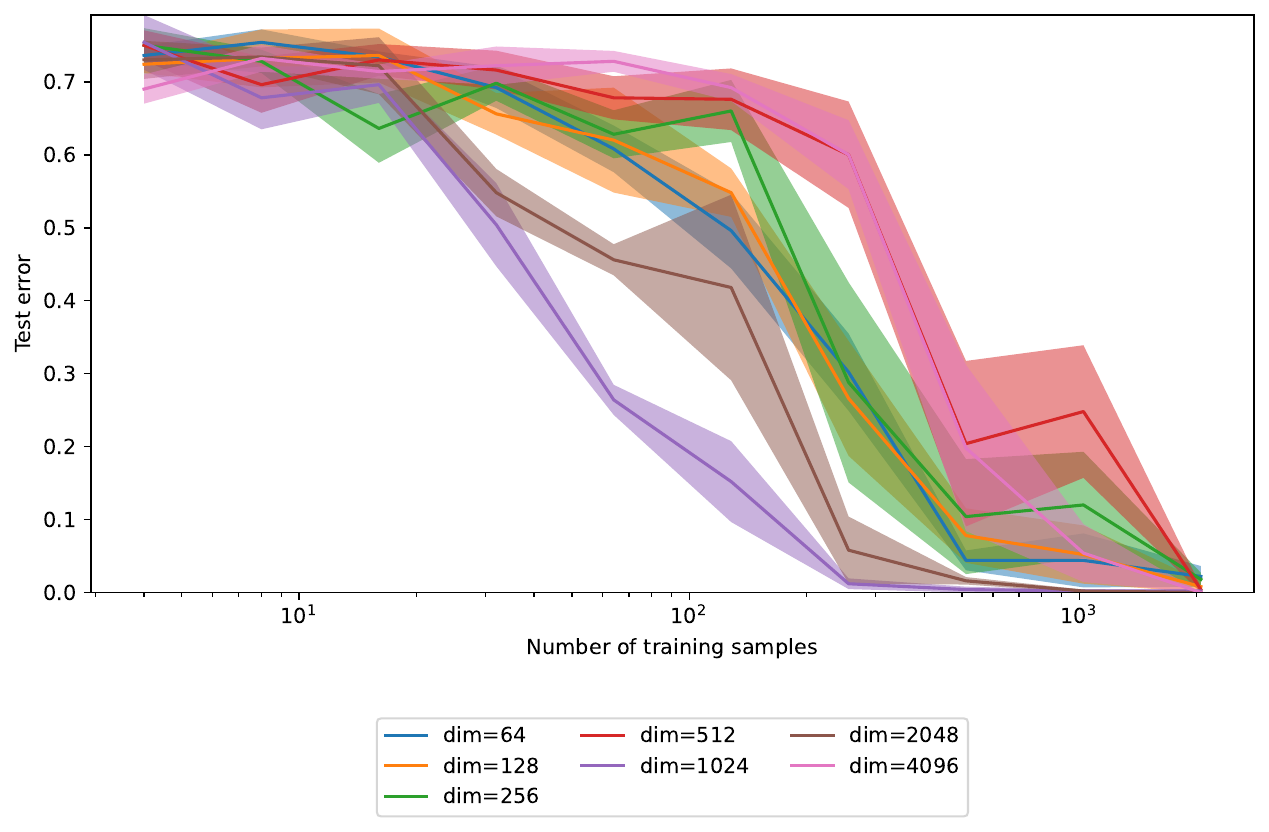}
		\caption{$(M^{KQ},M^{VO})=(0,100)$}
	\end{subfigure}
	\\
	\begin{subfigure}{0.48\textwidth}
		\centering
		\includegraphics[width=\textwidth]{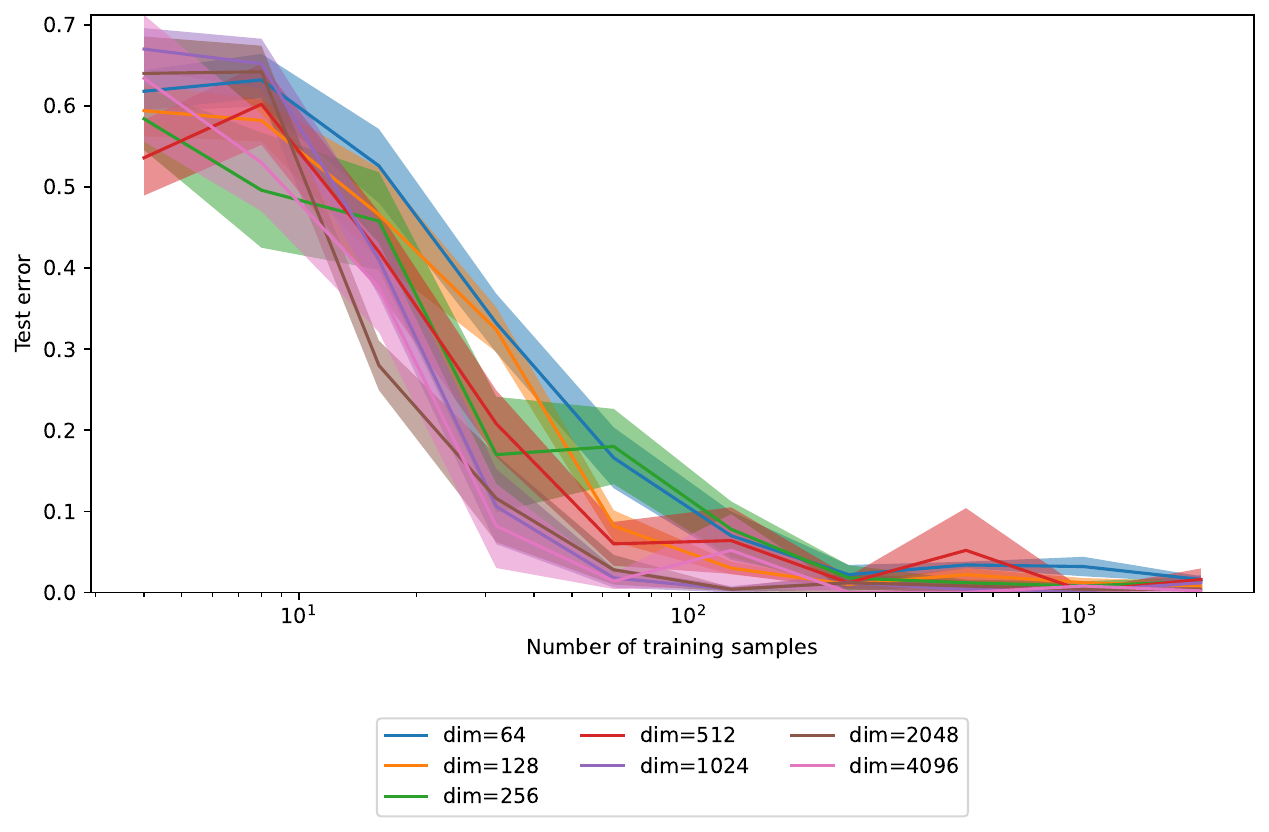}
		\caption{$(M^{KQ},M^{VO})=(100,0)$}
	\end{subfigure}
	\hfill
	\begin{subfigure}{0.48\textwidth}
		\centering
		\includegraphics[width=\textwidth]{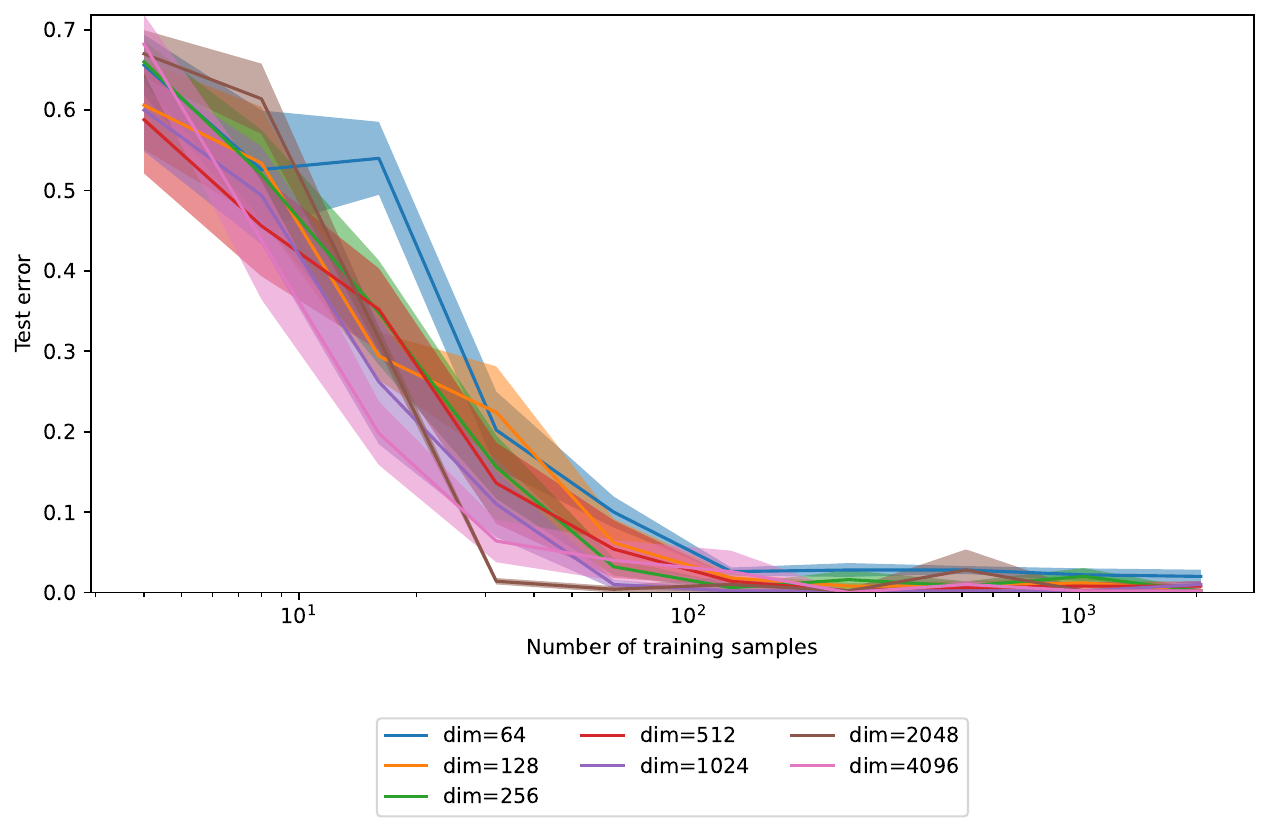}
		\caption{$(M^{KQ},M^{VO})=(100,100)$}
	\end{subfigure}
	\caption{\textbf{Multiclass task: width sweep.} Test error vs.\ $n$ for
		several embedding dimensions $d$ for different $KQ$ and $VO$ multipliers, for tied-embedding/unembedding Transformers (Sec.~\ref{app:hparams-width}).}
	\label{fig:multiclass-width-tied}
\end{figure}

% --- Majority task ---
\begin{figure}[h]
	\centering
	\begin{subfigure}{0.48\textwidth}
		\centering
		\includegraphics[width=\linewidth]{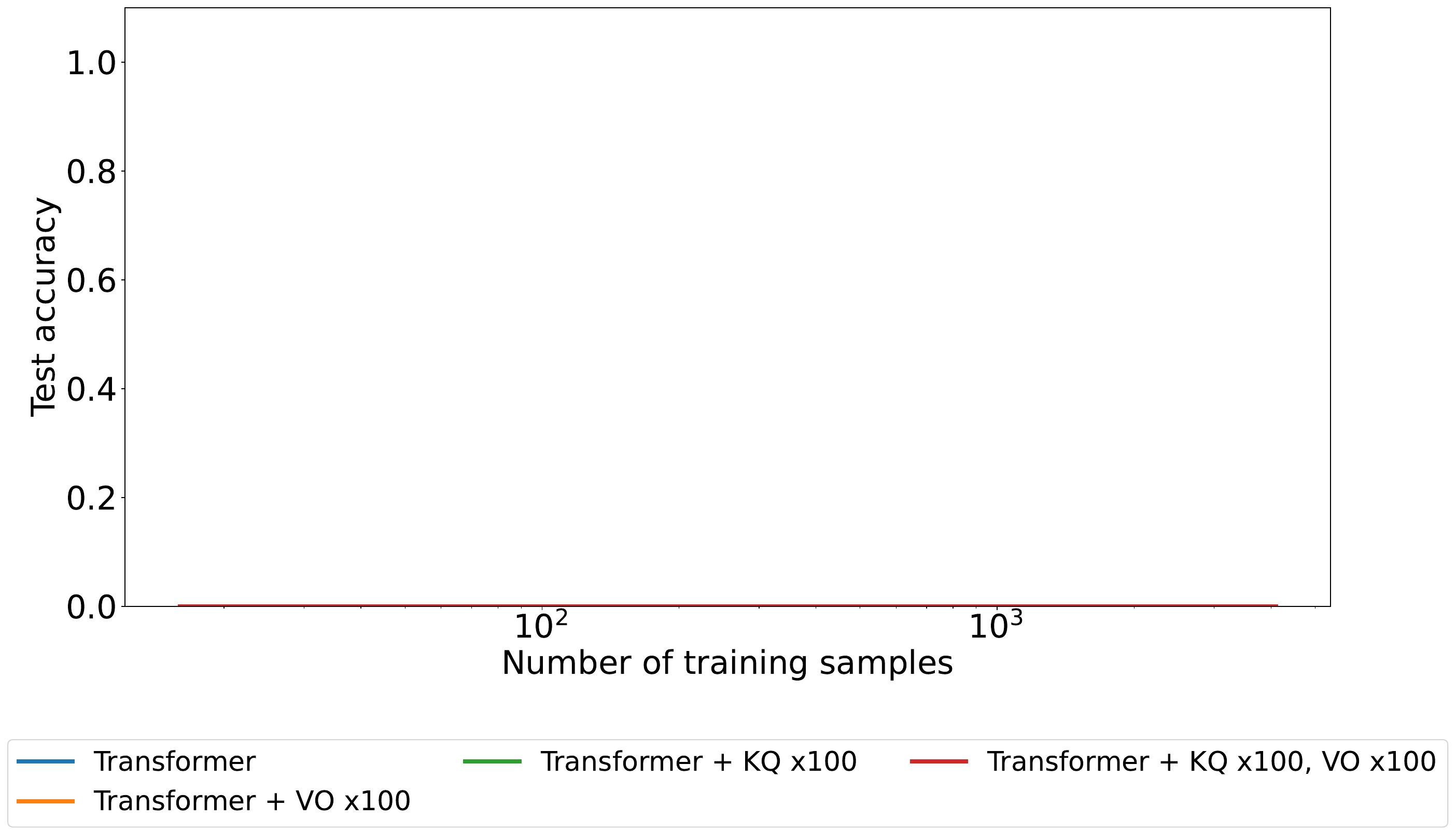}
	\end{subfigure}
	\hfill
	\begin{subfigure}{0.48\textwidth}
		\centering
		\includegraphics[width=\linewidth]{figures/majority_tied_transformers_vs_gpt2_test.pdf}
	\end{subfigure}
	\caption{\textbf{Majority / copy-wildcard task.} Architecture comparison. Left: vanilla Transformers. Right: tied-embedding/unembedding transformers.}
	\label{fig:majority-arch}
\end{figure}

\begin{figure}[h]
	\centering
	\begin{subfigure}{0.48\textwidth}
		\centering
		\includegraphics[width=\textwidth]{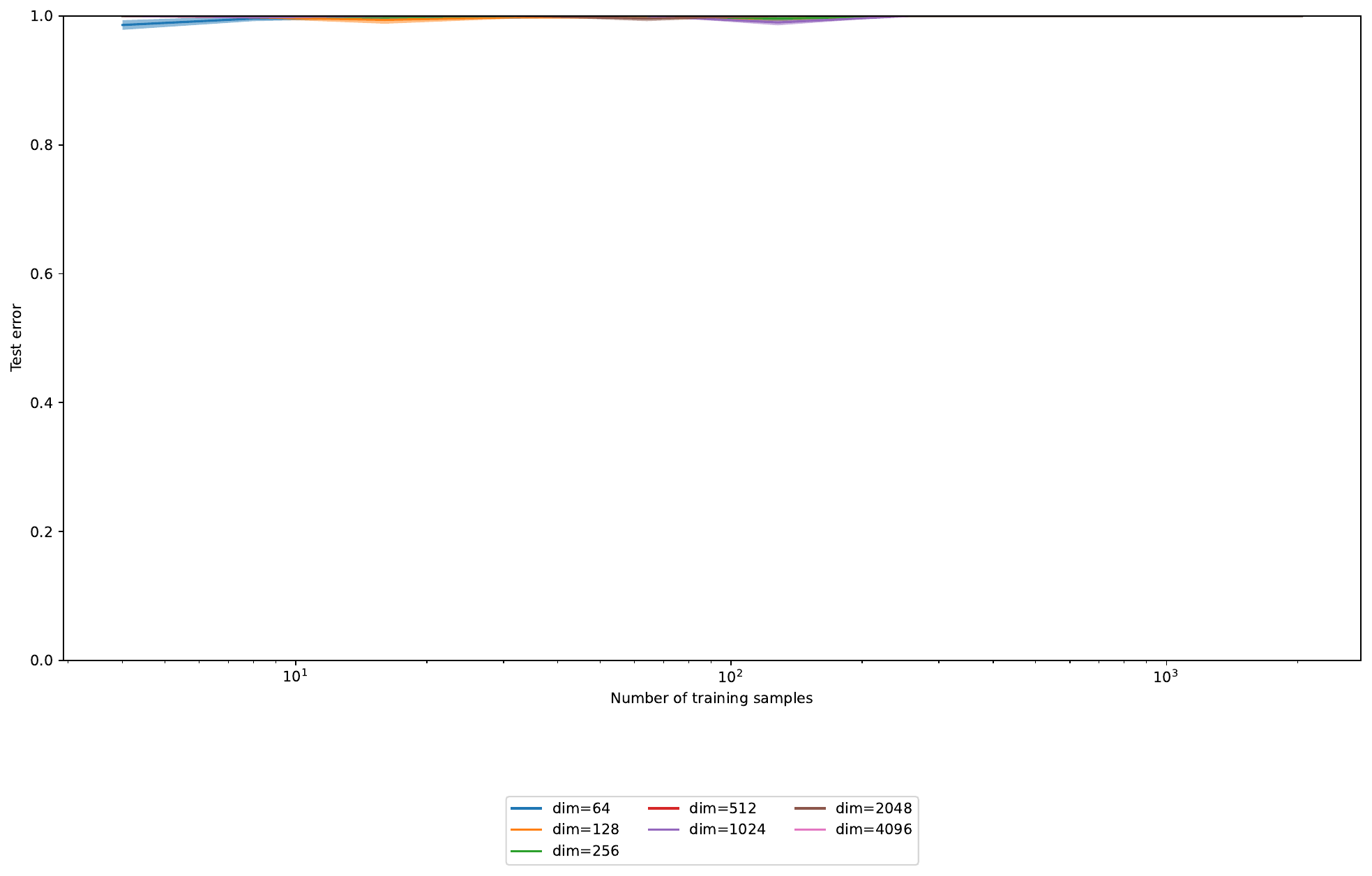}
		\caption{$(M^{KQ},M^{VO})=(0,0)$}
	\end{subfigure}
	\hfill
	\begin{subfigure}{0.48\textwidth}
		\centering
		\includegraphics[width=\textwidth]{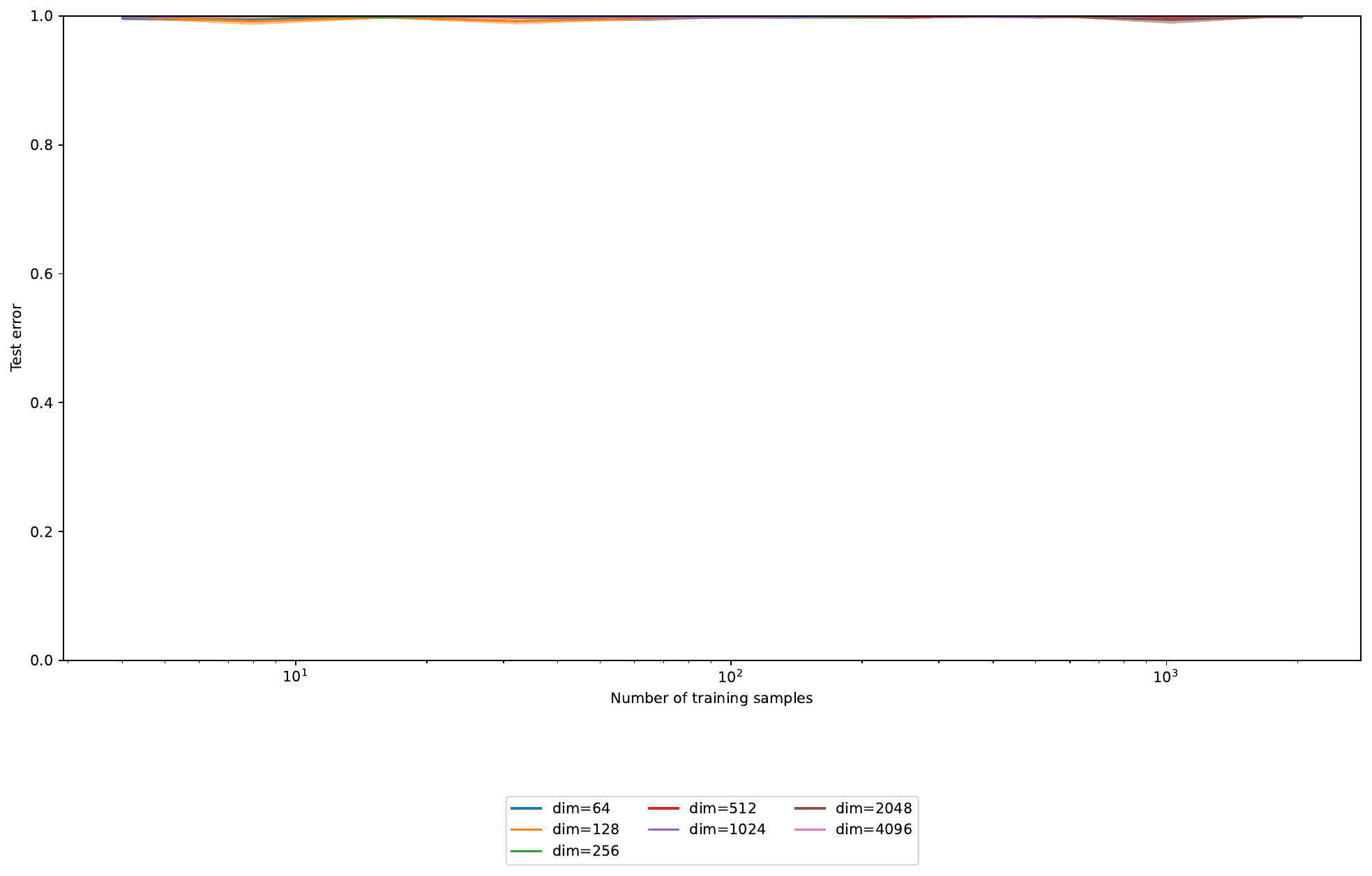}
		\caption{$(M^{KQ},M^{VO})=(100,100)$}
	\end{subfigure}
	\caption{\textbf{Majority task: width sweep.} For vanilla Transformers.}
	\label{fig:majority-width-vanilla}
\end{figure}

\begin{figure}[h]
	\centering
	\begin{subfigure}{0.48\textwidth}
		\centering
		\includegraphics[width=\textwidth]{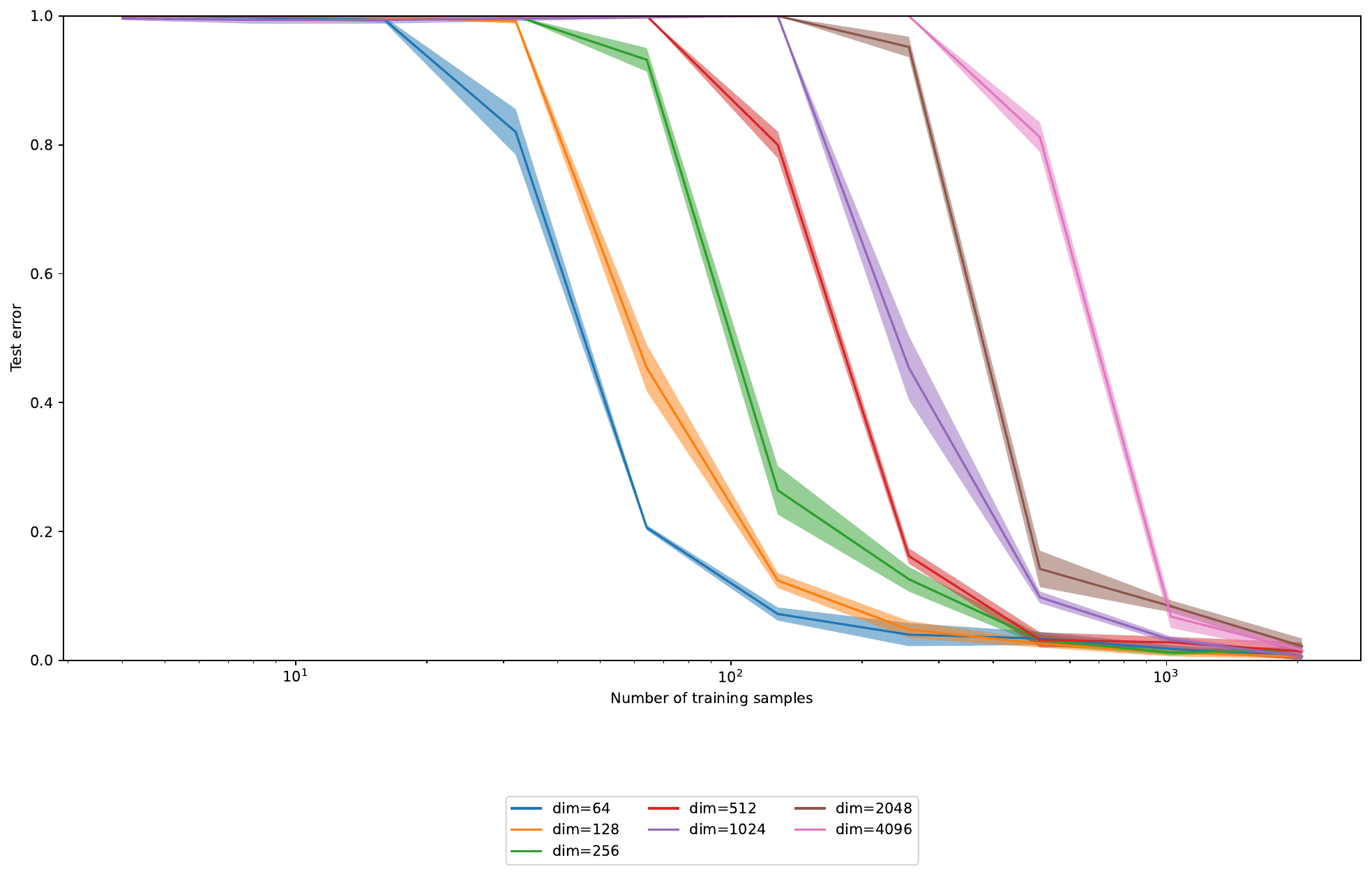}
		\caption{$(M^{KQ},M^{VO})=(0,0)$}
	\end{subfigure}
	\hfill
	\begin{subfigure}{0.48\textwidth}
		\centering
		\includegraphics[width=\textwidth]{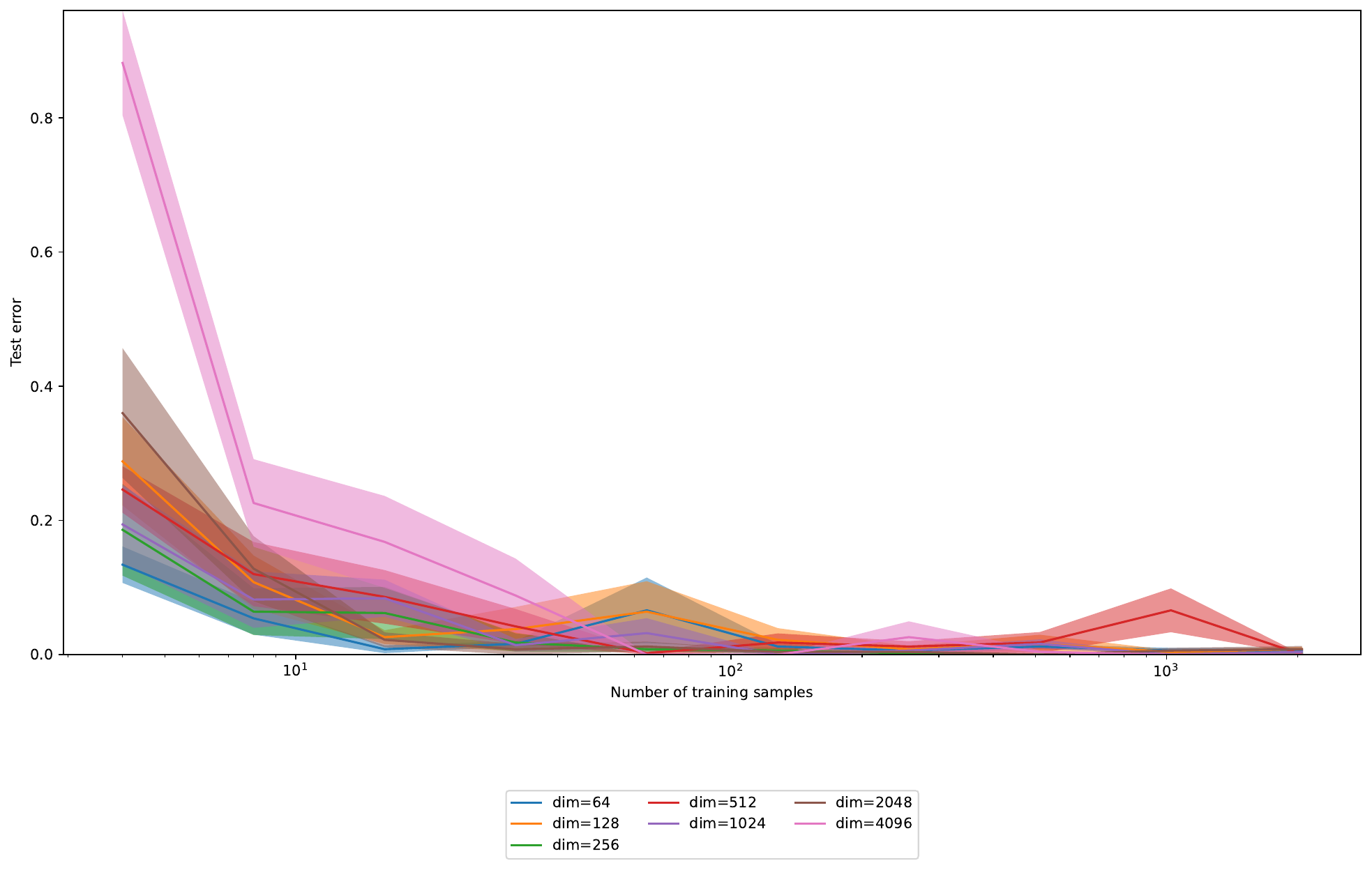}
		\caption{$(M^{KQ},M^{VO})=(100,100)$}
	\end{subfigure}
	\caption{\textbf{Majority task: width sweep.} For tied-embedding/unembedding Transformers.}
	\label{fig:majority-width-tied}
\end{figure}

% --- Print task (teaser) ---
\begin{figure}[h]
	\centering
	\includegraphics[width=0.75\linewidth]{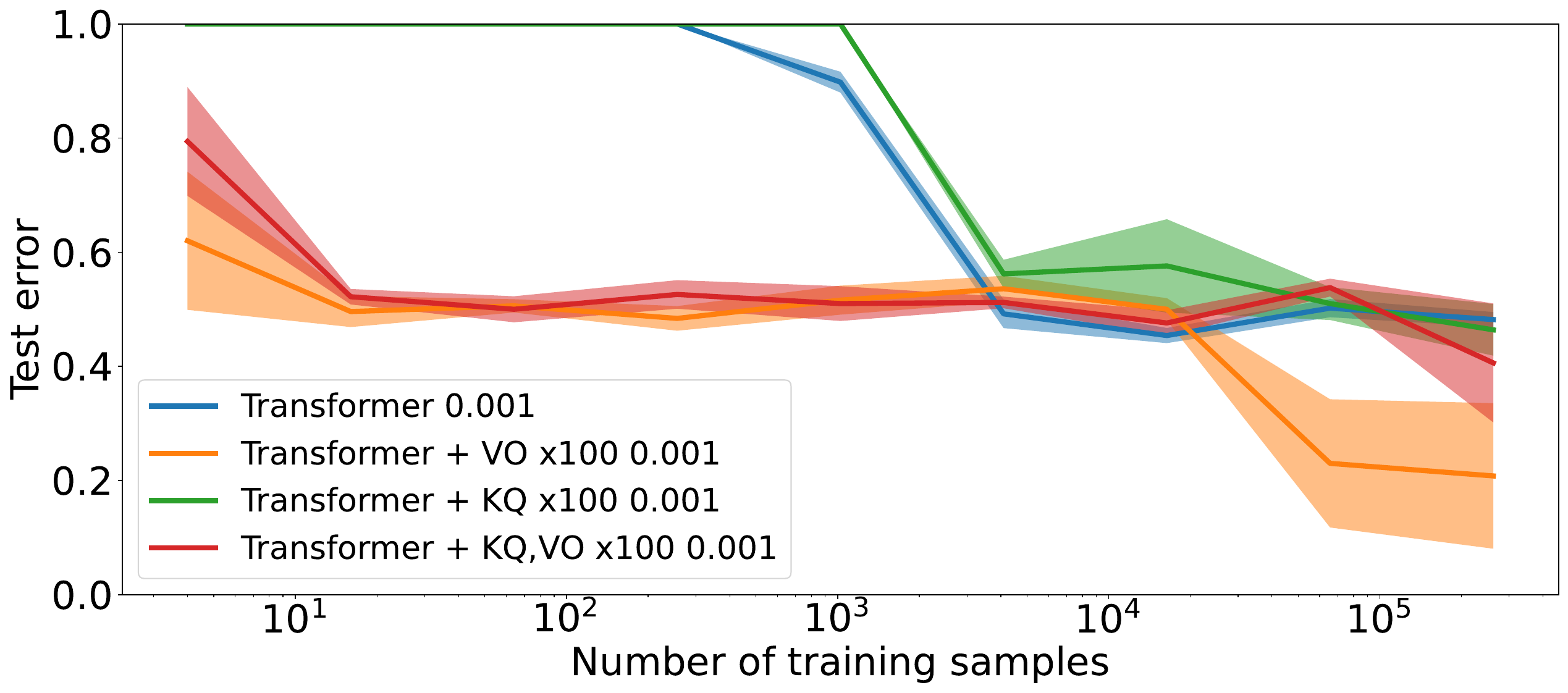}
	\caption{\textbf{Variable-assignment / \texttt{print} task.} Test error
		vs.\ number of training samples for vanilla Transformer and the three
		identity-multiplier variants, at $d=1024$, $D=2$
		(Sec.~\ref{app:hparams-width}).}
	\label{fig:teaser-width}
\end{figure}

% NeurIPS checklist removed for the AMS-style version.

\end{document}